
\documentclass[10pt,journal,twocolumn,cspaper,compsoc]{IEEEtran}
%


%

%
\ifCLASSOPTIONcompsoc
  \usepackage[nocompress]{cite}
\else
  \usepackage{cite}
\fi
%

%
\ifCLASSINFOpdf
\else
\fi

\usepackage[noend]{algpseudocode}
\usepackage{algorithm}

\usepackage{graphicx}
\usepackage{amsmath,amssymb} 
\usepackage{color}

\usepackage[utf8]{inputenc}
\usepackage{times}
\usepackage{epsfig}

\usepackage{amsmath}
\usepackage{amssymb}
\usepackage{bbold}
\usepackage{dsfont}
\usepackage{url}

\usepackage{enumerate}

\usepackage[font=small,labelfont=bf]{caption}

\usepackage{xcolor}
\usepackage{amsthm}
\usepackage{amsfonts}
\usepackage{setspace}
\usepackage{caption}
\usepackage{subcaption}
\usepackage{bm}
\usepackage{isomath}
\usepackage[numbers]{natbib}
\usepackage{bbold}
\usepackage{dsfont}
\usepackage{enumerate}

\usepackage[pagebackref=true,breaklinks=true,letterpaper=true,colorlinks,bookmarks=false]{hyperref}
\usepackage{footmisc}
\usepackage{array}
\usepackage{multirow}
\usepackage{ragged2e}
\usepackage{stmaryrd}
\usepackage{float}
\usepackage{fixltx2e}
\usepackage{dblfloatfix}
\usepackage{pbox}

\usepackage{xpatch}
\usepackage{xspace}
\usepackage{cleveref}
\usepackage{enumitem}
\usepackage{tcolorbox}
\usepackage{xcolor}
\usepackage{nameref}

\definecolor{beaublue}{rgb}{0.84, 0.9, 0.95}
\definecolor{blackish}{rgb}{0.2, 0.2, 0.2}

\definecolor{beaublue2}{rgb}{0.84, 0.9, 0.95}
\definecolor{blackish2}{rgb}{0.2, 0.2, 0.2}

\makeatletter
\newcommand\fs@nobottomruled{\def\@fs@cfont{\bfseries}\let\@fs@capt\floatc@ruled
  \def\@fs@pre{}
  \def\@fs@post{}
  \def\@fs@mid{\kern2pt\hrule\kern2pt}%
  \let\@fs@iftopcapt\iftrue}
\makeatother






\newcommand\revised[1]{{#1}}
\newcommand\revisedd[1]{{#1}}


\renewcommand\vec[1]{\ensuremath\boldsymbol{#1}}
\renewcommand\cdots{...}

\newcommand{\tF}{\vec{\mathcal{F}}}

\newcommand{\mY}{\mathbf{Y}}

\newcommand{\vy}{\mathbf{y}}

\newcommand{\mX}{\mathbf{X}}
\newcommand{\vx}{\mathbf{x}}

\newcommand{\mbr}[1]{\mathbb{R}^{#1}}

\newcommand{\vectorise}{\text{Vec}}

\newcommand{\vv}{\mathbf{v}}

\newcommand{\idx}[1]{\mathcal{I}_{#1}}
\newcommand{\semipd}[1]{\mathcal{S}_{+}^{#1}}
\newcommand{\spd}[1]{\mathcal{S}_{++}^{#1}}

\newcommand{\vu}{\mathbf{u}}

\newcommand{\vzeta}{\boldsymbol{\zeta}}
\newcommand{\vc}{\mathbf{c}}

\newcommand{\vphi}{\boldsymbol{\phi}}

\newcommand{\bigoh}{\mathcal{O}}
\newcommand{\mPsi}{\vec{\Psi}}
\newcommand{\vj}{\vec{j}}

\newcommand{\enorm}[1]{\left\|{#1}\right\|_2}

\newcommand{\set}[1]{\left\{#1\right\}}

\DeclareMathOperator*{\argmin}{arg\,min}

\DeclareMathOperator*{\sym}{Sym}

\DeclareMathOperator*{\trace}{Tr}

\DeclareMathOperator*{\diag}{diag}
\DeclareMathOperator*{\avg}{avg}

\DeclareMathOperator*{\logm}{Log}
\DeclareMathOperator*{\expm}{{Exp}}

\newcommand{\expl}[1]{\text{e}^{#1}}
\DeclareMathOperator*{\res}{Res}
\DeclareMathOperator*{\asinh}{Asinh}

\newtheorem{theorem}{Theorem}

\newtheorem{proposition}{Proposition}
\newtheorem{remark}{Remark}

\newcommand{\mLambda}{\bm{\lambda}}
\newcommand{\mU}{\bm{U}}

\newcommand{\vphibar}{\boldsymbol{\bar{\phi}}}

\def\eg{\emph{e.g.}}

\newcommand{\mygthree}[1]{\boldsymbol{\mathcal{G}}\!\left(\!#1\!\right)}

\newcommand{\tG}{\boldsymbol{\mathcal{G}}}
\newcommand{\tGhat}{\widehat{\boldsymbol{\mathcal{G}}}}

\newcommand{\mygthreee}[2]{\boldsymbol{\mathcal{G}}_{{\text{#1}}}\!\left(\!#2\!\right)}

\newcommand{\mygthreeehat}[2]{\boldsymbol{\widehat{\mathcal{G}}_{{\text{#1}}}}\!\left(\!#2\!\right)}

\newcommand{\mygthreeephat}[2]{\boldsymbol{\widehat{\mathcal{G}}'_{{\text{#1}}}}\!\left(\!#2\!\right)}

\newcommand{\vPhi}{\boldsymbol{\Phi}}

\newcommand{\mIdent}{\boldsymbol{\mathds{I}}}
\newcommand{\sIdent}{\mathds{I}}
\newcommand{\vOnes}{\mathbb{1}}

\newcommand{\mJ}{\mathbf{J}}

\newcommand{\mQ}{\mathbf{Q}}
\newcommand{\mK}{\mathbf{K}}

\newcommand{\mC}{\mathbf{C}}

\newcommand{\tS}{\vec{\mathcal{S}}}

\newcommand{\tNnb}{\mathcal{N}}

\newcommand{\mPhi}{\boldsymbol{\Phi}}
\newcommand{\mPhibar}{\boldsymbol{\bar{\Phi}}}

\newcommand{\mM}{\boldsymbol{M}}

\newcommand{\mW}{\boldsymbol{W}}
\newcommand{\mD}{\boldsymbol{D}}
\newcommand{\mT}{\boldsymbol{T}}
\newcommand{\mG}{\boldsymbol{G}}

\newcommand{\vmu}{\boldsymbol{\mu}}

\newcommand{\vvarphi}{\boldsymbol{\varphi}}

\newcommand{\stkout}[1]{{\ifmmode\text{\sout{\ensuremath{#1}}}\else\sout{#1}\fi}}

\newcommand{\mL}{\mathbf{L}}

\DeclareMathOperator*{\arcsinh}{arcsinh}
\renewcommand{\comment}[1]{}

\hyphenation{op-tical net-works semi-conduc-tor}

\makeatletter
\DeclareRobustCommand\onedot{\futurelet\@let@token\bmv@onedotaux}
\def\bmv@onedotaux{\ifx\@let@token.\else.\null\fi\xspace}
\def\eg{\emph{e.g}\onedot} 
\def\ie{\emph{i.e}\onedot} 
\def\cf{\emph{c.f}\onedot} 
\def\etc{\emph{etc}\onedot} \def\vs{\emph{vs}\onedot}
\def\wrt{w.r.t\onedot}

\def\bigoh{\mathcal{O}}
\makeatother

\begin{document}
%
\title{Power Normalizations in Fine-grained Image, Few-shot Image and Graph Classification}
%
%
%
%

\author{Piotr~Koniusz 
        and~Hongguang~Zhang 
\IEEEcompsocitemizethanks{\IEEEcompsocthanksitem P. Koniusz and H. Zhang are with Data61/CSIRO (formerly known as NICTA) and the Australian National University, Canberra, Australia, ACT2601. 
E-mail: see http://claret.wikidot.com
\IEEEcompsocthanksitem Equal contribution: P. Koniusz was mainly concerned with the mathematical analysis/modeling while H. Zhang with the deep learning modeling.
}
\thanks{Manuscript submitted Dec-2018. Manuscript accepted by TPAMI on 02-Jul-2020.}
}

%
%

\markboth{IEEE Transactions on Pattern Analysis and Machine Intelligence,~Submitted, December~2018,~Accepted, July~2020}%
{Shell \MakeLowercase{\textit{et al.}}: Bare Demo of IEEEtran.cls for Computer Society Journals}
%



\IEEEtitleabstractindextext{%
\vspace{-0.1cm}
\begin{justify}
\begin{abstract}
Power Normalizations ({\em PN}) are useful non-linear operators which tackle feature imbalances in classification problems. We study PNs in the deep learning setup via a novel PN layer pooling feature maps. 
Our layer combines the feature vectors and their respective spatial locations in the feature maps produced by the last convolutional layer of CNN into a positive definite matrix with second-order statistics 
to which PN operators are applied, forming so-called Second-order Pooling ({\em SOP}). 
\revisedd{
As the main goal of this paper is to study Power Normalizations,  we investigate the role and meaning of MaxExp and Gamma, two popular PN functions. 
To this end, we provide probabilistic interpretations of such element-wise operators and discover surrogates with well-behaved derivatives for end-to-end training. Furthermore, we look at the spectral applicability of MaxExp and Gamma by studying Spectral Power Normalizations ({\em SPN}). We show that SPN on the autocorrelation/covariance matrix and the Heat Diffusion Process (HDP) on a graph Laplacian matrix are closely related,  
thus sharing their properties. Such a finding leads us to the culmination of our work, a fast spectral MaxExp which is a variant of HDP for covariances/autocorrelation matrices. 
We evaluate our 
ideas on fine-grained recognition, scene recognition, and material classification, as well as in few-shot learning and graph classification. 
} 
\end{abstract}
\end{justify}
\vspace{-0.3cm}
\begin{IEEEkeywords}
CNN, Second-order Aggregation, Eigenvalue Power Normalization, Bilinear Pooling, Tensor Pooling, Heat Diffusion
\end{IEEEkeywords}
}

%

\maketitle


\IEEEpeerreviewmaketitle

\ifdefined\arxiv
\else
\begin{figure*}[t]
\vspace{-0.2cm}
\hspace{-1cm}
\centering
%
\begin{subfigure}[t]{0.545\linewidth}
\centering\includegraphics[trim=0 0 0 0, clip=true, width=9.0cm]{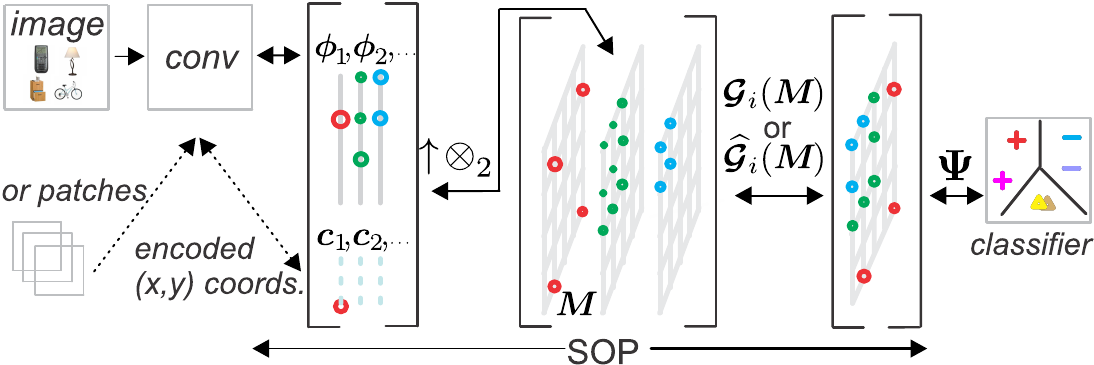}
\caption{\label{fig:princ1}}
\end{subfigure}
\begin{subfigure}[t]{0.45\linewidth}
\centering\includegraphics[trim=0 0 0 0, clip=true, width=9.0cm]{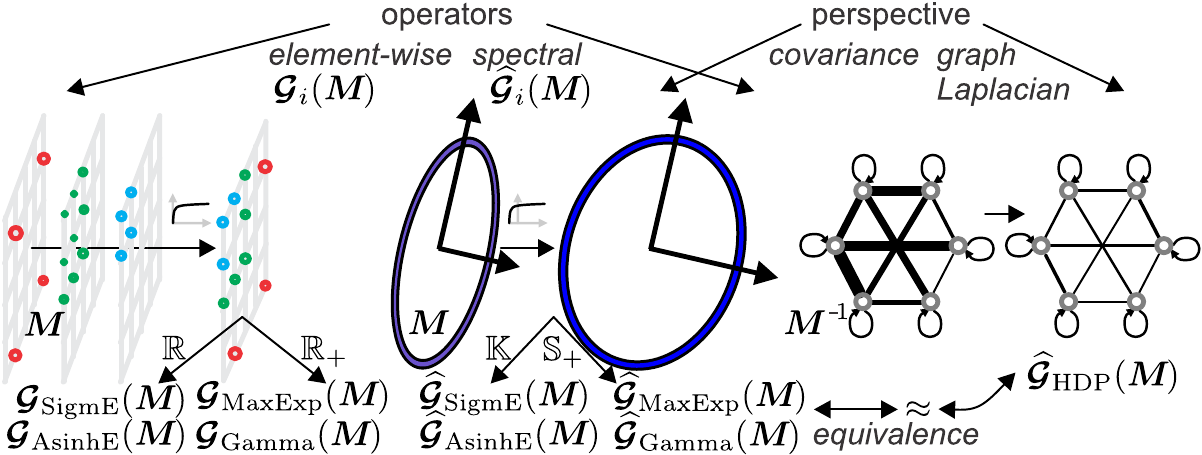}
\caption{\label{fig:whatwedo}}
\end{subfigure}
%
%
%
%
\vspace{-0.2cm}
\caption{\revisedd{Our end-to-end classification setting (Fig. \ref{fig:princ1}). We pass an image (or patches) via CNN and extract feature vectors $\vphi$ from its last conv. layer and augment them by encoded spatial coordinates $\vc$. We perform pooling on a second-order matrix $\mM$ by the Element-wise or Spectral Power Normalization $\tG_{\text{i}}$ or $\tGhat_{\text{i}}$, resp. Index $i$ indicates the operator \ie, MaxExp or Gamma. Figure \ref{fig:whatwedo} shows the taxonomy of pooling. We distinguish element-wise and spectral operators (which operate on autocorrelation/covariance or the graph Laplacian matrices). Element-wise and spectral operators can be further divided into working with non-negative  $\mathbb{R}_{+}$ and real $\mathbb{R}$ values, and $\mathbb{S}_{+}$/$\mathbb{S}_{++}$ and Krein ($\mathbb{K}$) spaces, resp. 
We derive so-called element-wise and spectral MaxExp operators, and we show that spectral MaxExp/Gamma (on autocorrelation/covariance) are approx. equivalent to the time-reversed Heat Diffusion Process (on the loopy graph Laplacian). 
MaxExp makes the underlying multivariate Gaussian closer to isotropic. Equivalently, the strong connections in the graph (thick edges) become weaker (thin lines) and more equalized.
%
}}\vspace{-0.3cm}
\label{fig:principle}
\end{figure*}
\fi

\section{Introduction}
\label{sec:intro}
Second-order statistics of data features are used in 
 object recognition, texture categorization, action representation, and human tracking \cite{tuzel_rc,porikli2006tracker,guo2013action,carreira_secord,me_tensor}. For example, the popular region covariance descriptors~\cite{tuzel_rc} compute a covariance matrix over multiple features extracted from image regions. 
Given Bag-of-Words histograms or local descriptors of an image, second-order co-occurrence pooling of such vectors captures correlations between pairs of features, and improves performance of semantic segmentation and visual recognition  compared to first-order methods~\cite{carreira_secord,me_tensor_tech_rep,me_tensor}. Extensions to higher-order descriptors~\cite{me_tensor_tech_rep,me_tensor,sparse_tensor_cvpr} improve results further. 

However, second- and higher-order statistics require robust aggregation/pooling mechanisms to obtain the best classification results \cite{carreira_secord,me_tensor_tech_rep,me_tensor}. Once the statistics are captured in the matrix form, they undergo  a non-linearity such as Power Normalization \cite{me_ATN} whose role is to reduce/boost contributions from frequent/infrequent visual stimuli in an image, respectively. The popular Bag-of-Words provide numerous insights into the role played by pooling during the aggregation step. The theoretical relation between Average and Max-pooling was studied in~\cite{boureau_midlevel}. A likelihood-inspired analysis of  pooling \cite{boureau_pooling}  led to a \emph{theoretical expectation of Max-pooling}. Max-pooling was recognized as a lower bound of the likelihood of \emph{`at least one particular visual word being present in an image'}~\cite{liu_sadefense} while Power Normalization was also applied to Fisher Kernels~\cite{perronnin_fisherimpr}. According to \cite{me_ATN}, pooling methods are closely related but \cite{me_ATN} does not study second-order pooling  or end-to-end training. Element-wise Power Normalization (PN) and Eigenvalue Power Normalization (EPN) were first applied to autocorrelation/covariance matrices and tensors  in \cite{me_tensor_tech_rep}.

In this paper, we 
revisit the above pooling methods 
in end-to-end setting and  interpret them in the context of second-order matrices. Firstly, we formulate a kernel which combines feature vectors collected from the last convolutional layer of ResNet-50 together with so-called spatial location vectors \cite{me_SCC,me_ATN,me_tensor_tech_rep} 
which contain Cartesian coordinates (spatial locations) of feature vectors in feature maps. We linearize such a kernel into  a second-order matrix 
to capture correlations of combined feature vectors. \revised{Next, we study the role of Power Normalizations in end-to-end setting. We show that  PNs have well-founded probabilistic interpretation in the context of second-order statistics. We propose PN surrogates with well-behaved derivatives for end-to-end training. Finally, we study PNs in the spectral domain, so-called Spectral Power Normalizations (SPN). We show that the Heat Diffusion Process (HDP) \cite{smola_graph} on a graph Laplacian is closely related to SPNs: HDP and  SPN play the same role for graph Laplacian and autocorrelation/covariance matrices, resp. As SPN and the HDP share properties, we propose a fast spectral MaxExp. 
To summarize:
\renewcommand{\labelenumi}{\roman{enumi}.}
\vspace{-0.3mm}
\begin{enumerate}[leftmargin=0.5cm]
\item We aggregate feature vectors extracted from CNNs and their spatial coordinates into a second-order matrix. 
\item We revisit Power Normalization functions, derive them for second-order representations and show how PNs emerge if we assume that features are drawn from the Bernoulli or Normal distributions. We also suggest PN surrogates with well-behaved derivatives for end-to-end training.
\item We show that Spectral Power Normalizations are in fact a time-reversed ($t\!<\!1$) Heat Diffusion Process, an important connection that explains the role of SPN. Thus, we propose a fast spectral MaxExp whose profile closely resembles HDP.
\item In addition to our standard fine-grained pipeline, we develop second-order relational representations for few-shot learning and we even consider graph classification.
\end{enumerate}
}


Figures \ref{fig:princ1} and \ref{fig:princ2} show our classification pipeline (we use ResNet-50 pre-trained on ImageNet) and our few-shot learning Second-order Similarity Learning Network ({\em SoSN}). We experiment on ImageNet, Flower102, MIT67, FMD and Food-101 (classification setting), \textit{mini}ImageNet, Flower102, Food-101 and Open MIC (few-shot setting), and MUTAG, PTC, PROTEINS, NCI1, COLLAB, REDDIT-BINARY/MULTI-5K (graph classification).

\ifdefined\arxiv
\begin{figure*}[t]
\centering
%
\centering\includegraphics[trim=0 0 0 0, clip=true, width=14.0cm]{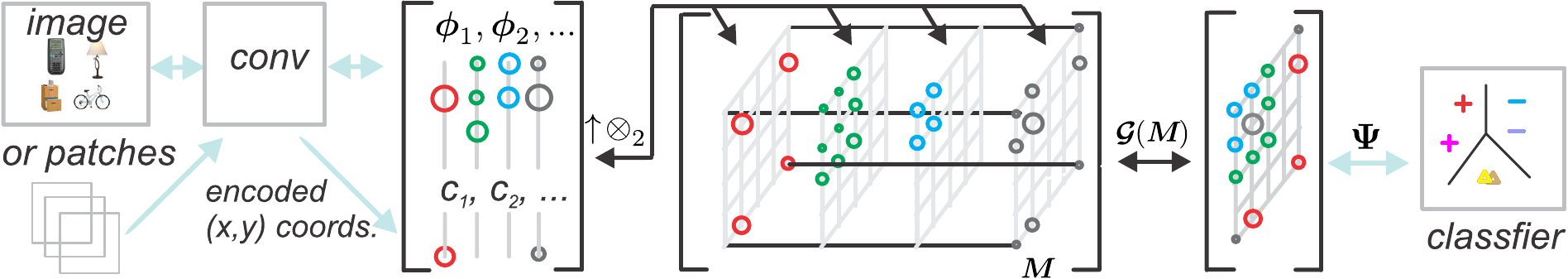}
%
%
\vspace{-0.2cm}
\caption{Our end-to-end pipeline. We pass an image (or patches) to CNN and extract feature vectors $\vphi$ from its last conv. layer and augment them by encoded spatial coordinates $\vc$. We perform pooling on second-order matrix $\mM$ by the Power Normalization function $\tG$.}\vspace{-0.3cm}
\label{fig:principle}
\end{figure*}
\fi

%
%
%
%

\revisedd{We explore Power Normalizing functions for second-order image \cite{koniusz2018deeper} and graph classification, and few-shot learning \cite{sosn}.

\begin{tcolorbox}[width=1.0\linewidth, colframe=blackish,colback=beaublue, boxsep=0mm, arc=3mm, left=1mm, right=1mm, top=1mm, bottom=1mm]
Motivated by the need to discuss PNs (\ie, see tutorial \cite{secordcv_tutorial_2019}), we propose a taxonomy in Fig. \ref{fig:whatwedo} and analyze (i) element-wise operators (fast but less robust) and (ii) spectral operators (exploit correlations between features but are slower). Second level branches of Fig. \ref{fig:whatwedo} consider non-negative  $\mathbb{R}_{+}$ and real $\mathbb{R}$ inputs to PN, and $\mathbb{S}_{+}$/$\mathbb{S}_{++}$ and Krein ($\mathbb{K}$) input spaces of SPN.
\end{tcolorbox}
\vspace{-0.10cm}

To interpret the statistical meaning of PNs \wrt inputs, 
we show that element-wise operators 
similar to Gamma \cite{me_ATN,me_tensor_tech_rep} emerge from statistical models assuming features being drawn from an i.i.d. Bernoulli or Normal distribution. This encourages us to look at PN as a wider family of functions (\cf Gamma/square root).  As the i.i.d. assumption in element-wise operators is limiting, 
we consider SPNs  \cite{me_tensor_tech_rep} due to their feature decorrelating properties. We show that SPNs are in fact an equivalent of  the Heat Diffusion Process \cite{smola_graph}, specifically  a time-reversed ($t\!<\!1$) HDP. As Fig. \ref{fig:whatwedo} shows, SPN and HDP operate on autocorrelation/covariance and the graph Laplacian matrix respectively (different theoretical perspectives). Finally, to tackle the speed bottleneck, we propose a fast SVD-free back-propagation through spectral MaxExp. 

\begin{tcolorbox}[width=1.0\linewidth, colframe=blackish, colback=beaublue, boxsep=0mm, arc=3mm, left=1mm, right=1mm, right=1mm, top=1mm, bottom=1mm]
Importantly, on few-shot learning, we find that SPNs benefit fine-tuning but not training from scratch. Second-order matrices alone cannot reduce the diffusion of signal between correlated features during fine-tuning, a cause of so-called catastrophic forgetting. 
However, SPNs limit the diffusion between features of second-order matrices and thus reduce catastrophic forgetting.
\end{tcolorbox}
\vspace{-0.1cm}
}

Sections \ref{sec:related_work} and \ref{sec:background} provide the related work, background and notations. Section \ref{sec:problem} and \ref{sec:spns_main} present our mathematical analysis. Sections \ref{sec:pipes}, \ref{sec:expts} and \ref{sec:conclude} present the pipeline, experiments and conclusions.

\vspace{-0.2cm}
\section{Related Work}
\label{sec:related_work}

Below we introduce  Region Covariance Descriptors (RCD) which are perhaps the oldest second-order descriptors \cite{tuzel_rc,tuzel2008detection}. 

\vspace{0.00cm}
\noindent{\textbf{Region Covariance Descriptors (RCD).}} 
RCDs typically capture co-occurrences of luminance, first- and/or second-order derivatives of texture patterns, 
and have been applied to tracking \cite{porikli2006tracker}, semantic segmentation \cite{carreira_secord} and object category recognition \cite{me_tensor_tech_rep,me_tensor}. 
%
As RCDs typically require a Non-Euclidean distance to compare positive (semi-)definite RCD datapoints, we list below popular choices.

\vspace{0.05cm}
\noindent{\textbf{Non-Euclidean distances.}} 
The distance between two positive definite datapoints is typically measured according to the Riemannian geometry while Power-Euclidean distances \cite{dryden_powereuclid} extend to positive semi-definite datapoints. 
In particular, Affine-Invariant Riemannian Metric \cite{PEN06,bhatia_pdm}, KL-Divergence Metric ({\em KLDM}) \cite{wang_jeffreys}, Jensen-Bregman LogDet Divergence ({\em JBLD}) \cite{anoop_logdet} and Log-Euclidean ({\em LogE}) \cite{arsigny2006log} have been used in diffusion imaging, RCD-based methods, 
%
dictionary and metric learning  \cite{mehrtash_dict_manifold,Roy_CVPR_2018,Kumar_CVPR_2018}. 


\revised{
\vspace{0.05cm}
\noindent{\textbf{Second-order Pooling in CNNs.}} 
%
%
%
%
There has  been a revived interest in co-occurrence patterns in CNN setting. 
Methods \cite{bilinear_finegrained,bilinear_pami,deep_cooc} fuse CNN streams via the outer product for the fine-grained image recognition. Approach \cite{face_cooc} uses co-occurrences of CNN feature vectors and facial attribute vectors for face recognition. 


We note that the Log-Euclidean distance and Power Normalization have recently been implemented in the CNN setting \cite{sminchisescu_matrix,vangol_riem_net,peihua_fast,lin2017improved} for the purpose of image classification. 
For instance, approaches \cite{peihua_fast,lin2017improved} build on so-called Eigenvalue Power Normalization \revisedd{as introduced by} \cite{me_tensor_tech_rep,me_tensor}, however, they extend it to end-to-end CNN setting with 
back-propagation via SVD explored in \cite{sminchisescu_matrix} which is both slow and unstable due to so-called non-simple eigenvalues. To this end, recent advances in \cite{peihua_fast,lin2017improved} propose so-called Newton-Schulz iterations to compute the square root of matrix (a special case of  Gamma \cite{me_ATN,me_tensor_tech_rep}). Authors of \cite{peihua_fast,lin2017improved} motivate the use of spectral Gamma with the notions of burstiness and whitening, something considered 
first in the context of Eigenvalue Power Normalization in early works \cite{me_tensor_tech_rep,me_tensor}. 
 A visualization approach into bilinear models including $\alpha$-pooling, a form of Power Normalization, is proposed in \cite{the_whole}.
%


}


\vspace{0.05cm}
\noindent{\textbf{Power Normalizations.}} 
Image representations suffer from the so-called burstiness which is `{\em the property that a given visual element appears more times in an image than a statistically independent model would predict}' \cite{jegou_bursts}. Power Normalization~\cite{boughorbel_intersect, perronnin_fisherimpr, jegou_bursts}  suppresses the burstiness which has been studied/evaluated in the context of Bag-of-Words \cite{me_ATN,me_tensor}. 
The theoretical study of Average and Max-pooling \cite{boureau_midlevel,boureau_pooling} highlighted their statistical models and a connection to Max-pooling. 
A relationship between the likelihood of `\emph{at least one particular visual word being present in an image}' and Max-pooling was studied in \cite{liu_sadefense}. Survey \cite{me_ATN} shows that all  Power Normalization functions are closely related.

\revised{
We show that MaxExp for element-wise co-occurrence pooling emerges from the Multinomial modeling while authors of \cite{boureau_midlevel,boureau_pooling} use a Binomial setting for first-order signals. To paraphrase, we show why it is theoretically meaningful to use MaxExp with co-occurrences, and by connecting  MaxExp to HDP in the spectral setting, we show why MaxExp can work with the spectrum.}


\vspace{0.05cm}
\noindent{\textbf{One- and Few-shot Learning}}, motivated by the human ability to learn from few samples, has been widely studied in both shallow \cite{miller_one_example,Li9596,NIPS2004_2576} 
 and deep learning scenarios \cite{vinyals2016matching,snell2017prototypical,finn2017model,sung2017learning}. 



%
%
Matching Network \cite{vinyals2016matching}, Prototypical Networks \cite{snell2017prototypical}, Model-Agnostic Meta-Learning (MAML) \cite{finn2017model} and Relation Net \cite{sung2017learning}  learn similarity between pairs of images rather than  class concepts. 
%
%
%
%
Our few-shot learning pipeline is similar to Relation Net \cite{sung2017learning} which uses first-order representations. However, we build on few-shot learning with second-order representations and Power Normalization as proposed in \cite{sosn}. Moreover, we consider element-wise and spectral operators, and we show a theoretical analysis that PN is especially beneficial in relation learning. Finally, we note that second-order representations are gaining momentum in few-shot learning \cite{wertheimer2019few,zhang2019few,Zhang_2020_ACCV} which use localization mechanisms while approach \cite{christian_subs} uses subspace-based class-wise prototypes.

\comment{
\vspace{0.05cm}
\noindent{\textbf{Zero-shot Learning}} can be implemented within the similarity learning frameworks which follow \cite{koch2015siamese,vinyals2016matching,snell2017prototypical,sung2017learning}.  
Methods such as Attribute Label Embedding (ALE), \cite{akata2013label} 
Embarrassingly Simple Zero-Shot Learning (ESZSL) \cite{romera2015embarrassingly},
Zero-shot Kernel Learning (ZSKL) \cite{zhang2018zero} learn a compatibility function between the feature  and attribute vectors. 
Feature Generating Networks \cite{xian2017feature} use GANs to generate additional training data for unobserved classes. 
A model selection method \cite{zhang2018model} can distinguish 
from seen and unseen classes, and apply separate classifiers for each. 
}

\vspace{-0.3cm}
\section{Background}
\label{sec:background}
\vspace{-0.1cm}

Below we review our notations, the background on kernel linearizations and the Power Normalization family. 

\vspace{-0.3cm}
\subsection{Notations}
\label{sec:notations}
%
Let $\vx\in\mbr{d}$ be a $d$-dimensional feature vector. 
$\idx{N}$ stands for the index set $\set{1, 2,\cdots,N}$. 
%
%
The spaces of symmetric positive semi-definite and definite matrices are $\semipd{d}$ and $\spd{d}$. $\sym(\mX)\!=\!\frac{1}{2}(\mX\!+\!\mX^T\!)$. 
A vector with all coefficients equal one is denoted by $\vOnes$, $\vj_k$ is a vector of all zeros except for the $k$-th coefficient which is equal one, and $\mJ_{kl}$ is a matrix of all zeros with a value of one at the position $(k,l)$. 
Moreover, $\odot$ is the Hadamard product, 
\revisedd{ $\vectorise(\mX)$ vectorizes matrix $\mX$ in analogy to $\mX(:)$ in Matlab and `$\dagger$' is the  Moore-Penrose pseudoinverse.} 
We use the Matlab notation $\vv\!=\![\text{begin}\!:\!\text{step}\!:\!\text{end}]$ to generate a  vector $\vv$ with elements starting as {\em begin}, ending as {\em end}, with stepping {\em step}. Operator `$;$' in $[\vx; \vy]$ is the concat. of vectors $\vx$ and $\vy$ (or scalars). 

%

\comment{
\subsection{Kernel Linearization}
\label{sec:kernel_linearization}
In the sequel, we will use kernel feature maps 
detailed below to embed  $(x,y)$  locations of feature vectors extracted from conv. CNN maps at $(x,y)$  into a non-linear Hilbert space. Such locations are called {\em spatial coordinates} \cite{me_SCC,me_tensor}.$\!\!$ 

\begin{proposition}
\label{pr:gaus_lin}
Let $G_{\sigma}(\vx\!-\!\vy)=\exp(-\!\enorm{\vx\!-\!\vy}^2/{2\sigma^2})$ denote a Gaussian RBF kernel centered at $\vy$ and having a bandwidth $\sigma$. Kernel linearization refers to rewriting $G_{\sigma}$ as an inner-product of two (in)finite-dimensional feature maps which we obtain via probability product kernels \cite{jebara_prodkers}.
Specifically, we employ the inner product of $d'$-dimensional isotropic Gaussians given $\vx,\vy\!\in\!\mbr{d'}\!$: 
\begin{align}
&\!\!\!\!\!\!\!G_{\sigma}\!\left(\vx\!-\!\vy\right)\!\!=\!\!\left(\frac{2}{\pi\sigma^2}\right)^{\!\!\frac{d'}{2}}\!\!\!\!\!\!\int\limits_{\vzeta\in\mbr{d'}}\!\!\!\!G_{\sigma/\sqrt{2}}\!\!\left(\vx\!-\!\vzeta\right)G_{\sigma/\sqrt{2}}(\vy\!\!-\!\vzeta)\,\mathrm{d}\vzeta.
\label{eq:gauss_lin}
\end{align}
Eq. \eqref{eq:gauss_lin} can be approximated by replacing the integral with the sum over $Z$ pivots $\vzeta_1,\cdots,\vzeta_Z$. Thus, we obtain: 
\begin{align}
&\!\!\!\!\!\!\!\vvarphi(\vx)=\left[{G}_{\sigma/\sqrt{2}}(\vx-\vzeta_1),\cdots,{G}_{\sigma/\sqrt{2}}(\vx-\vzeta_Z)\right]^T,\!\!\label{eq:gauss_lin2}\\
& \text{ and } G_{\sigma}(\vx\!-\!\vy)\approx\left<\sqrt{c}\vvarphi(\vx), \sqrt{c}\vvarphi(\vy)\right>,\label{eq:gauss_lin3}
\end{align}
where $c$ is a constant. We refer to \eqref{eq:gauss_lin2} as a (kernel) feature map{\color{red}\footnotemark[3]} and to \eqref{eq:gauss_lin3} as the linearization of the RBF kernel. 
\end{proposition}
\begin{proof}
The Gaussian kernel can be rewritten as a probability product kernel. See \cite{jebara_prodkers} (Section 3.1) for derivations.$\!\!\!$
\end{proof}
}

\vspace{-0.3cm}
\subsection{Autocorrelation matrix}
\label{sec:som}
Below we show that autocorrelation (second-order) matrices emerge from a linearization of sum of Polynomial kernels.
\begin{proposition}
\label{pr:linearize}
Let $\mPhi_A\equiv\{\vphi_n\}_{n\in\tNnb_{\!A}}$, $\mPhi_B\!\equiv\{\vphi^*_n\}_{n\in\tNnb_{\!B}}$ be datapoints from two images $\Pi_A$ and $\Pi_B$, and $N\!=\!|\tNnb_{\!A}|$ and $N^*\!\!=\!|\tNnb_{\!B}|$ be the numbers of data vectors \eg, obtained from the last convolutional feature map of CNN for images $\Pi_A$ and $\Pi_B$. Autocorrelation feature maps result from a linearization of the sum of Polynomial kernels of degree $2$:
\comment{
\begin{align}
& \!\!\!\text{\fontsize{8}{9}\selectfont$K(\mPhi_A, \mPhi_B)\!=\!\frac{1}{NN^*\!}\!\!\sum\limits_{
\!n\in \tNnb_{\!A}}\!\sum\limits_{\!n'\!\in \tNnb_{\!B}}\!\!\!\left<\vphi_n, \vphi^*_{n'}\right>^2\!$}\nonumber\\
&\quad\quad\quad\quad\text{\fontsize{8}{9}\selectfont$\,=\!\big<\vectorise\left(\mM(\mPhi_A)\right)\!, \vectorise\left(\mM(\mPhi_B)\right)\!\big>$}\nonumber\\
&\quad\quad\quad\quad\text{\fontsize{8}{9}\selectfont$\,=\!\Big\langle\vectorise\Big(\frac{1}{N}\!\!\sum\limits_{
n\in \tNnb_{\!A}}\!\!{\vphi_n\vphi_n^T}\Big), \vectorise\Big(\frac{1}{N^*\!}\!\!\!\sum\limits_{
n'\in \tNnb_{\!B}}\!\!{\vphi^*_{n'}{\vphi^*_{n'}}^{\!\!\!T}}\Big)\Big\rangle$}.\label{eq:hok1}\!\!\!\!
\end{align}
}
\begin{align}
& \!\!\!\text{\fontsize{8}{9}\selectfont$K(\mPhi_A, \mPhi_B)\!=\!\frac{1}{NN^*\!}\!\!\sum\limits_{
\!\!\!n\in \tNnb_{\!A}}\!\!\!\!\!\!\sum\limits_{\;\;\;\;n'\!\in \tNnb_{\!B}}\!\!\!\!\!\!\left<\vphi_n, \vphi^*_{n'}\right>^2\!\!\!=\!\big<\vectorise\left(\mM(\mPhi_A)\right)\!, \vectorise\left(\mM(\mPhi_B)\right)\!\big>$}\nonumber\\
&\quad\quad\quad\quad\text{\fontsize{8}{9}\selectfont$\,=\!\Big\langle\vectorise\Big(\frac{1}{N}\!\!\sum\limits_{
n\in \tNnb_{\!A}}\!\!{\vphi_n\vphi_n^T}\Big), \vectorise\Big(\frac{1}{N^*\!}\!\!\!\sum\limits_{
n'\in \tNnb_{\!B}}\!\!{\vphi^*_{n'}{\vphi^*_{n'}}^{\!\!\!T}}\Big)\Big\rangle$}.\label{eq:hok1}\!\!\!\!
\end{align}
\end{proposition}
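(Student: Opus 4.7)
The plan is to reduce the claim to a chain of elementary bilinear-algebra identities, so that the equality in \eqref{eq:hok1} is just a rearrangement. The key observation is that the squared inner product of two vectors in $\mathbb{R}^{d}$ admits the rank-one expansion
\begin{equation*}
\langle\vphi_n,\vphi^*_{n'}\rangle^{2} \;=\; \trace\!\bigl(\vphi_n^{T}\vphi^*_{n'}{\vphi^*_{n'}}^{\!T}\vphi_n\bigr) \;=\; \bigl\langle \vphi_n\vphi_n^{T},\,\vphi^*_{n'}{\vphi^*_{n'}}^{\!T}\bigr\rangle_{F},
\end{equation*}
and that the Frobenius inner product on matrices coincides with the Euclidean inner product of their vectorisations, $\langle \mX,\mY\rangle_{F}=\langle\vectorise(\mX),\vectorise(\mY)\rangle$. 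This identity is exactly the degree-2 case of the well-known linearisation of a polynomial kernel via outer-product feature maps.

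Given this, I would first substitute the identity into the left-hand side of \eqref{eq:hok1} to turn each scalar squared dot-product into an inner product of two rank-one matrices. Next, I would use bilinearity of the Frobenius inner product to pull both summations outside, so the double sum factorises:
\begin{equation*}
\sum_{n\in\tNnb_{\!A}}\sum_{n'\in\tNnb_{\!B}}\bigl\langle \vphi_n\vphi_n^{T},\vphi^*_{n'}{\vphi^*_{n'}}^{\!T}\bigr\rangle_{F} \;=\; \Bigl\langle \sum_{n\in\tNnb_{\!A}}\vphi_n\vphi_n^{T},\;\sum_{n'\in\tNnb_{\!B}}\vphi^*_{n'}{\vphi^*_{n'}}^{\!T}\Bigr\rangle_{F}.
\end{equation*}
Distributing the normalising factor $1/(NN^{*})$ as $1/N$ on the $\mPhi_A$ sum and $1/N^{*}$ on the $\mPhi_B$ sum produces the two autocorrelation matrices $\mM(\mPhi_A)$ and $\mM(\mPhi_B)$. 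Finally, converting the Frobenius inner product to a Euclidean inner product on the vectorisations yields precisely the right-hand side of \eqref{eq:hok1}, completing the argument.

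Since every step is a routine identity, there is no substantive obstacle; the only care needed is notational, specifically to keep the roles of the two index sets $\tNnb_{\!A},\tNnb_{\!B}$ and their cardinalities $N,N^{*}$ aligned so that the two sums factor cleanly rather than mixing. Essentially the same calculation generalises to higher orders by replacing $\vphi_n\vphi_n^{T}$ with the $r$-fold outer product, which underlies the tensor extensions mentioned in Section~\ref{sec:intro}.
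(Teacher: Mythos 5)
Your proposal is correct: the rank-one expansion $\langle\vphi_n,\vphi^*_{n'}\rangle^{2}=\langle \vphi_n\vphi_n^{T},\vphi^*_{n'}{\vphi^*_{n'}}^{T}\rangle_{F}$, bilinearity to factor the double sum, and the identification of the Frobenius inner product with the inner product of vectorisations is exactly the standard degree-2 polynomial-kernel linearisation. The paper itself only cites \cite{koniusz2017domain} for this expansion, so your self-contained derivation is the same argument, just written out.
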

\begin{proof}
See \cite{koniusz2017domain} for the details of such an expansion.
\end{proof}

\revised{Thus, we obtain the following (kernel) feature map 
 \revisedd{$\mM(\mPhi)$ on features $\mPhi$} which coincides with the autocorrelation matrix:
\begin{align}
& 
\mM(\mPhi)\!=\!\frac{1}{N}\sum_{n\in\mathcal{N}}\!\vphi_n\vphi_n^T.\label{eq:hok3}
\end{align}
\vspace{-0.2cm}
}

\revisedd{\noindent{For}  simplicity of notation, we drop $\mPhi\equiv\{\vphi_n\}_{n\in\mathcal{N}}$ from $\mM(\mPhi)$ where possible, that is, we often write $\mM$ rather than $\mM(\mPhi)$.}

\comment{
\begin{remark}
In what follows, we will use second-order matrices obtained from the above expansion for $r\!=\!2$, that is:
\begin{align}
& 
\!\!\!\!\frac{1}{NN^*\!}\!\!\sum\limits_{
n\in \tNnb_{\!A}}\sum\limits_{n'\!\in \tNnb_{\!B}}\!\!\!\left<\vphi_n, \vphi^*_{n'}\right>^2\!\!=\!
\Big\langle\frac{1}{N}\sum\limits_{
n\in \tNnb_{\!A}}{\vphi_n\vphi_n^T}, \frac{1}{N^*\!}\sum\limits_{
n\in \tNnb_{\!B}}{\vphi^*_{n'}{\vphi^*_{n'}}^{\!\!\!T}}\Big\rangle.\!\!\label{eq:hok2}
\end{align}
%
%
%
\revised{Thus, we obtain the following (kernel) feature map
 $\mM$ which coincides with the autocorrelation matrix:
\begin{align}
& 
\mM\!=\!\frac{1}{N}\sum_{n\in\mathcal{N}}\!\vphi_n\vphi_n^T.\label{eq:hok3}
\end{align}
Note that (kernel) feature maps are not conv. maps. They are two separate notions that happen to share the same name.
}
\end{remark}
}
\vspace{-0.3cm}
\subsection{Power Normalization Family (first-order variants)}
\label{sec:pn}

\revisedd{Pooling is an aggregation step of feature vectors $\mPhi$ that produces their signature used in for instance training an SVM. We study pooling of second-order representations to preserve second-order statistics of input vectors. For clarity, we firstly introduce highly-related first-order PNs, that is MaxExp and Gamma operators. Traditionally, PN is a function $g([0,\iota])\!\rightarrow\![0,\nu]$ such that $g(0)\!=\!0$, $g(\iota)\!=\!\nu$ (\ie, $\iota\!=\!\nu\!=\!1$), $g(p)$ is monotonically non-decreasing on $0\!\leq\!p\!\leq\!\iota$ and the slope of $g(p)$ rises fast/slow for $p\!\approx\!0$ and slow/fast for $p\!\approx\!\iota$, resp. Fig. \ref{fig:pow4} illustrates such properties.
}

\vspace{0.05cm}
\revisedd{\noindent{\textbf{MaxExp (first-order).}}}  Drawing features from the Bernoulli distribution under the i.i.d. assumption \cite{boureau_pooling}  leads to so-called {\em Theoretical Expectation of Max-pooling} ({\em MaxExp}) operator \cite{me_ATN} related to Max-pooling \cite{boureau_midlevel} and {\em Gamma} \cite{me_ATN}. The following proposition formalizes this process. 

\vspace{-0.1cm}
\begin{proposition}\label{prop:maxexp}
Assume a vector $\vphi\!\in\!\{0,1\}^{N}$ which stores $N$ outcomes of drawing from Bernoulli distribution under the i.i.d. assumption for which the probability $p$ of an event $(\phi_{n}\!=\!1)$ and $1\!-\!p$ for $(\phi_{n}\!=\!0)$ can be estimated as an expected value \eg, $p\!=\!\avg_n\phi_n$. Then the probability of at least one positive event in $\vphi$ from $N$ trials becomes $\psi\!=\!1\!-\!(1\!-\!p)^{N}$.
\comment{
\vspace{-0.2cm}
\begin{equation}
\label{eq:my_maxexp1}
\psi\!=\!1\!-\!(1\!-\!p)^{N}.
\end{equation}
}
\end{proposition}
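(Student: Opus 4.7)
The plan is to proceed via the complementary event, which is the cleanest way to handle an ``at least one'' probability. Let $A_n$ denote the event $(\phi_n=1)$, so by assumption $\Pr(A_n)=p$ and $\Pr(\overline{A_n})=1-p$ for every $n\in\idx{N}$. The quantity of interest $\psi$ is $\Pr\!\left(\bigcup_{n=1}^{N}A_n\right)$, which by inclusion-exclusion would give an unwieldy alternating sum; instead I would rewrite it as $\psi = 1 - \Pr\!\left(\bigcap_{n=1}^{N}\overline{A_n}\right)$.

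Next I would invoke the i.i.d.\ hypothesis explicitly: mutual independence of the $A_n$ (equivalently of their complements) implies that the probability of the joint event factorizes, giving
\begin{equation*}
\Pr\!\left(\bigcap_{n=1}^{N}\overline{A_n}\right)=\prod_{n=1}^{N}\Pr(\overline{A_n})=(1-p)^{N}.
\end{equation*}
Substituting back yields $\psi = 1-(1-p)^N$, which is the claimed formula.

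The only non-trivial subtlety worth spelling out is the estimator $p=\avg_n \phi_n$ mentioned in the statement. Strictly, this is a plug-in estimator of the true Bernoulli parameter obtained from the same sample $\vphi$, so a careful write-up should note that the identity $\psi=1-(1-p)^N$ is an algebraic statement about the Bernoulli parameter and holds unconditionally; using the empirical average merely provides a consistent estimate (unbiased and, by the law of large numbers, convergent to the true $p$ as $N$ grows), which does not affect the functional form. I do not anticipate any real obstacle here; the whole argument is one line of elementary probability, and the main value of the statement in context is to set up the subsequent interpretation of MaxExp, not the derivation itself.
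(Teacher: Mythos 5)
Your proof is correct and follows essentially the same route as the paper's: the paper's (referenced) argument likewise computes the probability that all $N$ outcomes are zero, namely $(1-p)^N$, and passes to the complement, optionally noting the equivalent binomial-sum form $\sum_{n=1}^{N}\binom{N}{n}p^n(1-p)^{N-n}$. Your additional remark about the plug-in estimator $p=\avg_n\phi_n$ is a sensible clarification but does not change the argument.
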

\begin{proof}
\label{pr:maxexp}
\revised{The proof can be found in \cite{koniusz2018deeper}.}
\comment{The proof follows the school syllabus for a fair coin toss. The probability of all $N$ outcomes to be $\{(\phi_{1}\!=\!0),\cdots,(\phi_{N}\!=\!0)\}$ amounts to $(1\!-\!p)^N$. 
The probability of at least one positive outcome 
$(\phi_{n}\!=\!1)$ amounts to applying the logical `or' $\{(\phi_{1}\!=\!1)\,|\cdots|\,(\phi_{N}\!=\!1)\}$ and leads to:
\vspace{-0.2cm}
\begin{equation}\label{eq:my_maxexp2}
1\!-\!(1\!-\!p)^{N}=\,\sum_{n=1}^{N} \binom{N}{n} p^n(1\!-\!p)^{N-n}.
\vspace{-0.9cm}
\end{equation}}
\end{proof}
%
%

\vspace{-0.1cm}
\noindent{A} practical implementation of this pooling strategy \cite{me_ATN} is given by $\psi_k\!=\!1\!-\!(1\!-\!\avg_n\phi_{kn})^{\eta}$, where $0\!<\!\eta\!\approx\!N$ is an adjustable parameter and $\phi_{kn}$ is a $k$-th feature of an $n$-th feature vector \eg, as defined in Prop. \ref{pr:linearize}, 
normalized to range 0--1.
%

\vspace{0.05cm}
\revisedd{\noindent{\textbf{Gamma (first-order).}}} 
\label{re:pn}
It was shown in \cite{me_ATN} that Power Normalization ({Gamma}) given by $\psi_k\!=\!(\avg_n\phi_{kn})^\gamma$, where $0\!<\!\gamma\!\leq\!1$ is an adjustable parameter, is in fact an approximation of MaxExp.
\vspace{-0.2cm}
\section{Problem Formulation}
\label{sec:problem}

\revisedd{The main goal of this paper is a theoretical study of Power Normalizations for second-order representations, their interpretation and theoretical connections. Fig. \ref{fig:whatwedo} introduces the taxonomy of operators we follow. We choose  existing PN operators Gamma and MaxExp (other operators are typically their modifications). We look at (i) element-wise PN (fast but suboptimal) and (ii) SPN (slow but exploiting feat. correlations). We then consider SPNs on autocorr./covariances and their connection to graph Laplacians.

{\begin{tcolorbox}[width=1.0\linewidth, colframe=blackish,colback=beaublue, boxsep=0mm, arc=3mm, left=1mm, right=1mm, right=1mm, top=1mm, bottom=1mm]
We show that the Heat Diffusion Process on graphs and SPN are two sides of the same mathematical process. Thus, SPNs reverse the diffusion of signal in autocorrelation/covarience matrices rather than just reduce the burstiness of features. The culmination of our work is MaxExp(F), the fast spectral MaxExp, (and a fast approx. HDP for completeness) whose runtime scales sublinearly \wrt its parameter. MaxExp(F) rivals the approx. matrix
square root via Netwon-Schulz iter. \cite{lin2017improved,peihua_fast}.
\end{tcolorbox}}
\vspace{-0.4cm}


\vspace{0.3cm}
Sec. \ref{sec:cooc} proposes a spatially augmented autocorrelation matrix (Fig. \ref{fig:princ1}) 
that can be seen as introducing spatially localized nodes into a graph. 
Sec. \ref{sec:well_mot} explains why MaxExp is applicable to co-occurrences. While MaxExp for first-order signals emerges from Binomial modeling of features, 
for co-occurrences the same operator emerges from Multinomial modeling. Following the taxonomy (Fig. \ref{fig:whatwedo}), in Sec. \ref{sec:pool_der} we generalize MaxExp/Gamma \cite{me_ATN,me_tensor} (work on $\mathbb{R}_{+}$) to Logistic a.k.a. Sigmoid ({\em SigmE}) and the Arcsin hyperbolic ({\em AsinhE}) functions (work on $\mathbb{R}$). SigmE emerges from modeling Normal distr. Both functions extend readily to Krein spaces (Table \ref{tab:smd2}). 
Sec. \ref{sec:spns} introduces spectral pooling. 
}

\ifdefined\arxiv
\newcommand{\PowH}{3.0cm}
\newcommand{\PowHB}{2.875cm}
\newcommand{\PowW}{3.65cm}
\else
\newcommand{\PowH}{3.6cm}
\newcommand{\PowHB}{3.4cm}
\newcommand{\PowW}{4.5cm}
\fi

\begin{figure*}[t]
\centering
\hspace{-0.3cm}
\begin{subfigure}[t]{0.24\linewidth}
\centering\includegraphics[trim=0 0 0 0, clip=true, height=\PowH, width=\PowW]{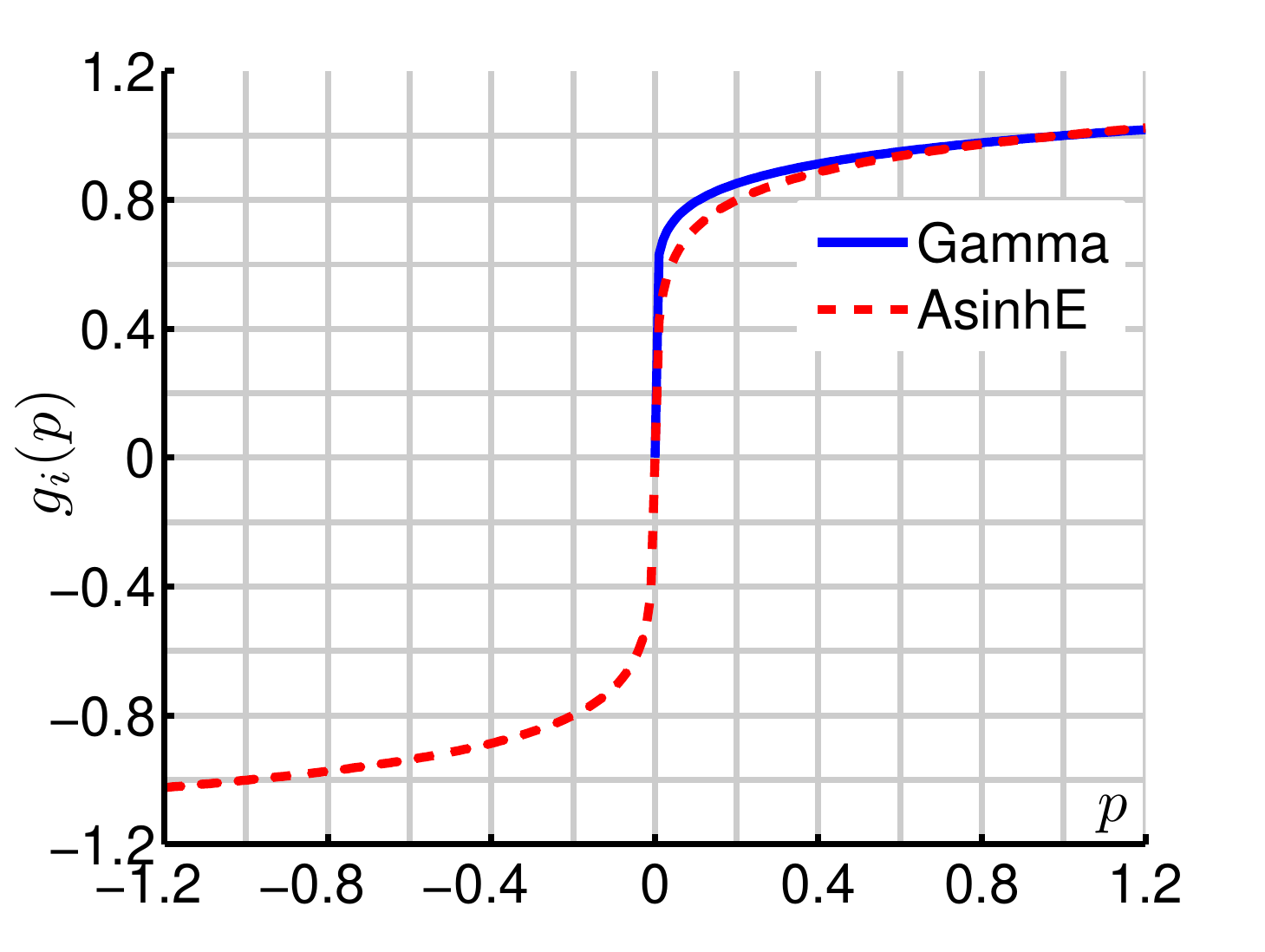}\vspace{-0.2cm}
\caption{\label{fig:pow1}}
\end{subfigure}
\begin{subfigure}[t]{0.24\linewidth}
\centering\includegraphics[trim=0 -18 0 15, clip=true, height=\PowHB, width=\PowW]{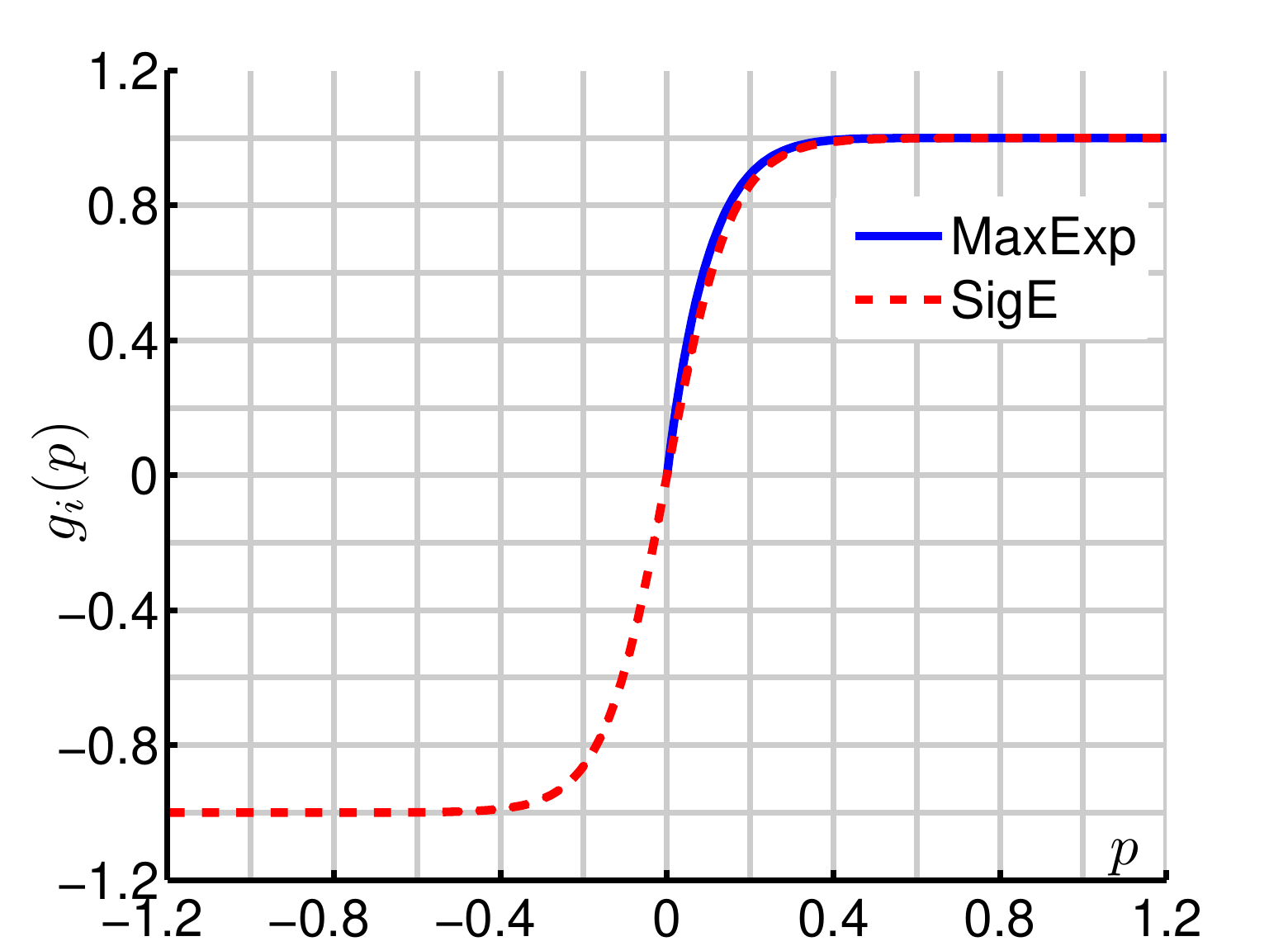}\vspace{-0.2cm}
\caption{\label{fig:pow2}}
\end{subfigure}
\begin{subfigure}[t]{0.24\linewidth}
\centering\includegraphics[trim=0 0 0 0, clip=true, height=\PowH, width=\PowW]{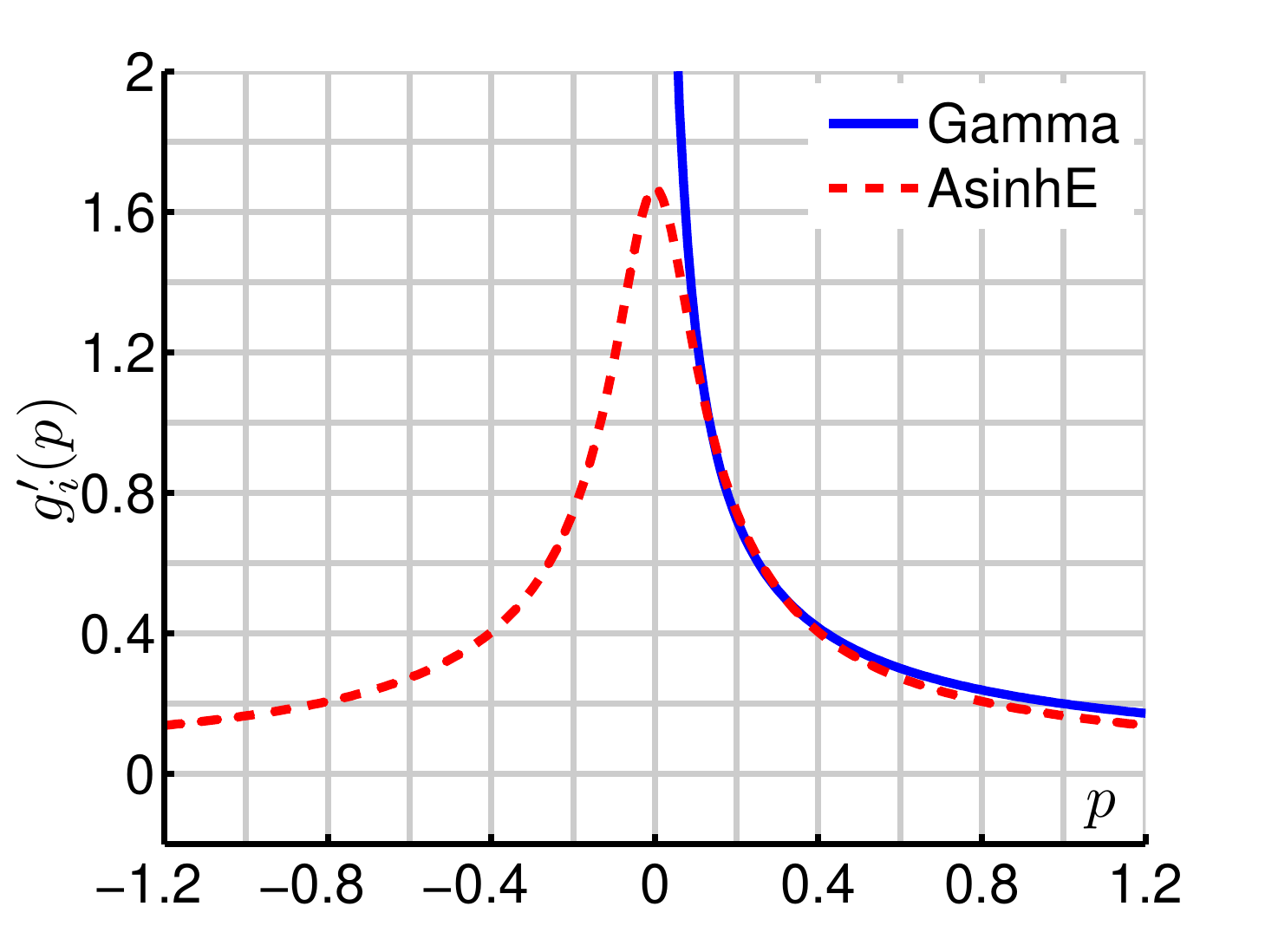}\vspace{-0.2cm}
\caption{\label{fig:pow3}}
\end{subfigure}
\begin{subfigure}[t]{0.24\linewidth}
\centering\includegraphics[trim=0 0 0 0, clip=true, height=\PowH, width=\PowW]{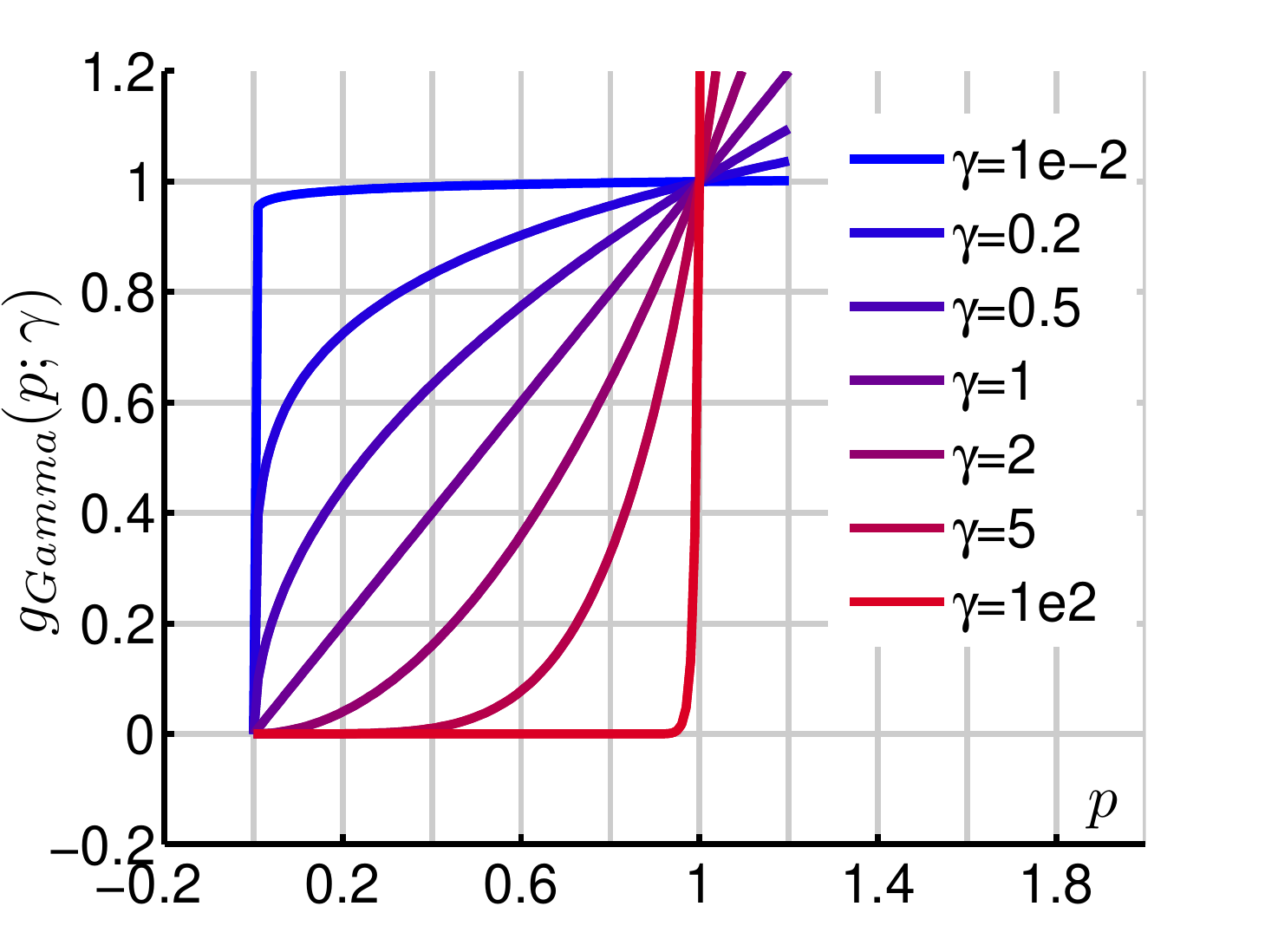}\vspace{-0.2cm}
\caption{\label{fig:pow4}}
\end{subfigure}
%
\caption{Gamma, AsinhE, MaxExp and SigmE are illustrated in Figures \ref{fig:pow1} and \ref{fig:pow2} while derivatives of Gamma and AsinhE are shown in Figure \ref{fig:pow3}. Lastly, Gamma for several values of $\gamma$ is shown in Figure \ref{fig:pow4} from which its similarity to MaxExp in range 0--1 is clear.}
\vspace{-0.3cm}
\label{fig:power-norms}
\end{figure*}

\subsection{Augmented autocorrelation matrix}
\label{sec:cooc}

\revisedd{The autocorrelation matrix defined in Section \ref{sec:som} is perfectly applicable to considerations in our paper. We enhance autocorrelation matrices by two steps detailed below for the best performance but are secondary to the main analysis of Power Normalizations.}
%


\vspace{0.05cm}
\noindent{\textbf{$\boldsymbol{\beta}$-centering.}} As in Eq. \eqref{eq:hok3}, let $\mPhi\equiv\{\vphi_n\}_{n\in\mathcal{N}}$ but $\vphi_n$ are rectified so that $\vphi_n\!\geq\!0$. Subsequently, $\beta$-centering \wrt data mean $\vmu\!=\!\avg_{n\in\tNnb}\vphi_n$ is obtained as $\vphi^{(\beta)}_n\!\!=\!\vphi_n\!-\!\beta\vmu$ 
for $0\!\leq\!\beta\!\leq\!1$. \revisedd{For brevity, we drop superscript ${(\beta)}$ from $\vphi^{(\beta)}_n$. 
} 

The role of $\beta$-centering is to address anti-occurrences \ie, some Bag-of-Word models use so-called negative visual words, the evidence that a given visual stimulus is missing from an image. Authors of \cite{negoccur} define it as `{\em the negative evidence \ie, a visual word that is mutually missing in two descriptions being compared}'. Lack of certain visual stimuli correlates with some visual classes \eg, lack of the sky may imply an indoor scene. Thus, we offset vectors $\vphi$ by  $\beta\vmu$ ($\vmu$ computed per-image) so that the positive/negative values yield correlations/anti-correlations. 

\vspace{0.05cm}
\noindent{\textbf{Positional embedding.}} 
\revised{As in Prop. \ref{pr:linearize}, let 
$\mPhi_A$ and $\mPhi_B$ be obtained from the last conv. feature maps of CNN for images $\Pi_A$ and $\Pi_B$. 
Then, Cartesian (spatial) coordinates \cite{me_SCC,me_tensor} at which feature vectors are extracted along the channel mode from conv. feature maps ($\mPhi_A$ and $\mPhi_B$) are normalized to range $[0,1]$ yielding $(x,y)$ and $(x^*,y^*)$ which are embedded into a non-linear Hilbert space. Firstly, the normalization is performed as $x_n\!=\!x'_n/(W\!-\!1)$ and $y_n\!=\!y'_n/(H\!-\!1)$ \wrt the width $W$ and height $H$ of conv. feature maps, where $x'_n$ and $y'_n$ are Cartesian coordinates. 
%
Then, we form the following sum kernel and its linearization: 
\begin{align}
\label{eq:encode_sc}
& \text{\fontsize{8}{9}\selectfont $K([x_n;y_n], [x_{n'}^*;y_{n'}^*])\!=\!\alpha^2G_{\sigma}(x_n\!-\!x_{n'}^*)\!+\!\alpha^2G_{\sigma}(y_n\!-\!y_{n'}^*)\approx$}\nonumber\\ 
& \text{\fontsize{8}{9}\selectfont $\left<\alpha\vvarphi(x_n,\vzeta), \alpha\vvarphi(x_{n'}^*\!,\vzeta)\right>\!+\!\left<\alpha\vvarphi(y_n,\vzeta), \alpha\vvarphi(y_{n'}^*\!,\vzeta)\right>$},
\end{align}
where $\vvarphi(x)$ and $\vvarphi(x^*\!)$ are feature maps linearizing an RBF kernel $G_{\sigma}(x\!-\!x^*\!)\!=\!\exp(-\!(x\!-\!x^*\!)^2/{2\sigma^2})\!\approx\!\left<\sqrt{c}\vvarphi(x), \sqrt{c}\vvarphi(x^*\!)\right>$: 
\vspace{-0.1cm}
\begin{align}
&\!\!\!\!\!\!\!\text{\fontsize{8}{9}\selectfont $\vvarphi(x)=\left[{G}_{\sigma/\sqrt{2}}(x-\zeta_1),\cdots,{G}_{\sigma/\sqrt{2}}(x-\zeta_Z)\right]^T$}\!\!\!\!,\label{eq:gauss_lin2}
\end{align}
where  $\sigma\!>\!0$ is the RBF bandwidth, const. $c\!=\!1$. 
For $Z$ pivots $\vzeta\!=\![\zeta_1;\cdots;\zeta_Z]$, we use $Z$ in range 3--10 and equally spaced intervals  $\vzeta\!=\![-0.2:1.4/(Z\!-\!1):1.2]$ to encode 
$x_n$ and $y_n$}. \revisedd{The derivation and the choice of pivots are explained in Appendix \hyperref[{app:kern_linear}]{L}.}

The above formulation extends to the aggregation over patches 
extracted 
from images as shown in Figure \ref{fig:princ1}. 
We form vectors $\vphibar_n\!=\![\vphi_n; \vc_n]$  augmented by encoded spatial coordinates $\vc_n\!=\![\alpha\vvarphi(x_n,\vzeta); \alpha\vvarphi(y_n,\vzeta)]$. Thus, we define the total length of $\vc_n$ as $Z'\!\!=\!2Z$. Then, we  define $\mPhibar\equiv\{\vphibar_n\}_{n\in\mathcal{N}}$.  
Combining augmented vectors with the Prop. \ref{pr:linearize} and Eq. \eqref{eq:hok3} yields $\mM(\mPhibar)$ (\cf $\mM(\mPhi)$). 
\comment{
\vspace{-0.4cm}
\begin{align}
%
& 
\mM(\mPhibar)\!=\!\frac{1}{N}\sum\limits_{n\in\mathcal{N}}\vphibar_n\vphibar_n^T.\label{eq:pn_simple1}
\end{align}
}


\revisedd{We note that the autocorrelation matrix is sometimes called as co-occurrence matrix in the literature. If $\{\vphibar_{n}, n\!\in\!\mathcal{N}\}$ contains only binary features $\{0,1\}$ then the autocorrelation matrix is a form of co-occurrence matrix normalized by $N$ that captures counts $M_{kl}\!=\!\frac{1}{N}\sum_{n\in\mathcal{N}}\bar{\phi}_{kn}\!\cdot\!\bar{\phi}_{ln}$.}

\revisedd{Our pipelines apply element-wise or spectral pooling $\mPsi\!=\!\mygthreee{i}{\mM(\mPhibar);\cdot\,}$ or $\mPsi\!=\!\mygthreeehat{i}{\mM(\mPhibar);\cdot\,}$, resp., where $\mM(\mPhibar)$ is an autocorrelation/covariance matrix defined above, $\mygthreee{i}{\mM(\mPhibar);\cdot\,}$ and $\mygthreeehat{i}{\mM(\mPhibar);\cdot\,}$ are element-wise and spectral Power Normalizations on $\mM(\mPhibar)$ with some parameter `$\cdot$' and $i$ is replaced by a specific name of PN. Finally, $\mPsi$ is the resulting PN feature map. For brevity, we often drop $\mPsi$, $i$ and `$\cdot$', and write $\mM$ (\cf $\mM(\mPhibar)$).}

\vspace{-0.3cm}
\subsection{Well-motivated Pooling Approaches (second-order element-wise variants on non-negative features: $M_{kl}\!\geq\!0$)}
\label{sec:well_mot}
\revisedd{Following the first branch of the taxonomy in Fig. \ref{fig:whatwedo} (element-wise operators), we extend PN operators MaxExp and Gamma (first-order variants) introduced in Section \ref{sec:pn} to their second-order element-wise counterparts (pooling acts on elements of autocorrelation matrix). As such models lack any previous analysis, we demonstrate that MaxExp for co-occurrences emerges naturally from the Multinomial modeling which also gives it a nice interpretation as a co-occurrence detector.}

\vspace{0.05cm}
\noindent{\textbf{Derivation.}} 
Prop. \ref{prop:maxexp} states that $1\!-\!(1\!-\!p)^N$ is the probability of at least one success being detected in the pool of the $N$ i.i.d. trials $\vphi\!\in\!\{0,1\}^{N}$ following the Bernoulli dist. (success prob. $p$). We extend Prop. \ref{prop:maxexp} to the case of co-occurrences as follows. 

\revisedd{
\begin{theorem}
\label{pr:cooc}
Let two event vectors $\vphi,\vphi'\!\!\in\!\{0,1\}^{N}$ store $N$ trials each, performed according to the Bernoulli distribution under i.i.d. assumption, 
 for which the probability $p$ of an event $((\phi_{n}\!=\!1)\!\wedge\!(\phi'_{n}\!=\!1))$ denotes a co-occurrence, and $1\!-\!p$ for $((\phi_{n}\!=\!0)\!\vee\!(\phi'_{n}\!=\!0))$ denotes the lack of it. Let $p$ be estimated as $p\!=\!\avg_n\phi_n\phi'_{n}$. Then the probability of at least one co-occurrence event $((\phi_{n}\!=\!1)\!\wedge\!(\phi'_{n}\!=\!1))$ in $\phi_n$ and $\phi'_n$ simultaneously in $N$ trials becomes:
\vspace{-0.1cm}
\begin{equation}
\label{eq:my_maxexp3}
\psi\!=\!1\!-\!(1\!-\!p)^{N}.
\vspace{-0.1cm}
\end{equation}
\end{theorem}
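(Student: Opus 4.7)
The plan is to reduce the co-occurrence setting back to the single-event Bernoulli setup of Proposition~\ref{prop:maxexp} by introducing a joint indicator. Define $\xi_n = \phi_n \phi'_n \in \{0,1\}$, which equals one exactly on the co-occurrence event $((\phi_n{=}1)\wedge(\phi'_n{=}1))$ and zero otherwise. Since the pairs $(\phi_n,\phi'_n)$ are i.i.d.\ across $n$ by assumption, the indicators $\xi_1,\ldots,\xi_N$ are themselves i.i.d.\ Bernoulli with success probability $p=\Pr(\xi_n{=}1)$, which matches the stated estimator $p=\avg_n \phi_n\phi'_n$.

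A complementary view, which also justifies the reference to the Multinomial modeling in the surrounding text, is that each trial $(\phi_n,\phi'_n)$ takes one of four outcomes in $\{(0,0),(0,1),(1,0),(1,1)\}$ with some categorical distribution, so over $N$ trials the counts form a Multinomial random vector. The event ``at least one co-occurrence'' is the complement of ``the category $(1,1)$ has count zero,'' and marginalizing out the three non-co-occurrence categories collapses the Multinomial to a Bernoulli$(p)$ experiment against ``$(1,1)$ vs.\ anything else.'' This is exactly the perspective in which MaxExp acquires its meaning as a co-occurrence detector.

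With either reduction in hand, I would invoke Proposition~\ref{prop:maxexp} directly on $\xi_1,\ldots,\xi_N$: the probability of observing at least one $\xi_n{=}1$ equals
\[
\psi \;=\; 1-\Pr(\xi_1{=}0,\ldots,\xi_N{=}0) \;=\; 1-(1-p)^N,
\]
which, if desired, expands as $\sum_{n=1}^{N}\binom{N}{n}p^n(1-p)^{N-n}$ to make explicit the enumeration over configurations with exactly $n$ co-occurrences. This recovers the formula and, composed with the estimator $p=\avg_n \phi_n\phi'_n$, justifies applying a MaxExp-type surrogate element-wise to the autocorrelation entries $M_{kl}=\avg_n\bar\phi_{kn}\bar\phi_{ln}$ defined in Section~\ref{sec:cooc}.

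The main subtlety, rather than any calculation, will be conceptual: the independence assumption is across trials $n$, not across the two coordinates $\phi_n$ and $\phi'_n$ within a trial. If one naively imposed coordinate-wise independence one would instead obtain $1-(1-p_\phi p_{\phi'})^N$ with $p_\phi=\avg_n\phi_n$, $p_{\phi'}=\avg_n\phi'_n$, which is a strictly different quantity and would miss the point of modeling co-occurrence. Making this distinction explicit in the write-up is what ensures that the theorem genuinely extends Proposition~\ref{prop:maxexp} to second-order statistics rather than reducing trivially to the product of two first-order MaxExp factors.
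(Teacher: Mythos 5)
Your proposal is correct and follows essentially the same route as the paper: the paper's proof likewise first argues via the complement that all $N$ trials lack a co-occurrence with probability $(1-p)^N$, and then offers a ``stricter'' version using the four-outcome Multinomial model, asserting (by algebraic/numerical verification) that the resulting sum equals $1-(1-p)^N$. Your observation that marginalizing the three non-co-occurrence categories collapses the Multinomial to a Bernoulli$(p)$ experiment on the indicator $\xi_n=\phi_n\phi'_n$ is a cleaner justification of that equivalence than the paper's unproved claim, and your remark distinguishing independence across trials from independence across the two coordinates within a trial is a worthwhile clarification.
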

\begin{proof}
The probability of all $N$ outcomes to be $\{\,((\phi_{1}\!=\!0)\!\vee\!(\phi'_{1}\!=\!0)),\cdots,((\phi_{N}\!=\!0)\!\vee\!(\phi'_{N}\!=\!0))\,\}$ is $(1\!-\!p)^N$. 
The probability of at least one positive outcome 
$((\phi_{n}\!=\!1)\!\wedge\!(\phi'_{n}\!=\!1))$ amounts to the probability of event $(\,((\phi_{1}\!=\!1)\!\wedge\!(\phi'_{1}\!=\!1))\,\vee\cdots \vee\,((\phi_{N}\!=\!1)\!\wedge\!(\phi'_{N}\!=\!1))\,)$ equal $1\!-\!(1\!-\!p)^{N}\!$, where $p\!=\!\avg_n\phi_n\phi'_{n}$.

A stricter proof uses a Multinomial distribution model with four events for $(\phi_{n})$ and $(\phi'_{n})$ which describe all possible outcomes. Let probabilities $p, q, s$ and $1\!-\!p\!-\!q\!-\!s$ add up to 1 and correspond to events $((\phi_{n}\!=\!1)\!\wedge\!(\phi'_{n}\!=\!1))$, $((\phi_{n}\!=\!1)\!\wedge\!(\phi'_{n}\!=\!0))$, $((\phi_{n}\!=\!0)\!\wedge\!(\phi'_{n}\!=\!1))$ and $((\phi_{n}\!=\!0)\!\wedge\!(\phi'_{n}\!=\!0))$. The first event is a co-occurrence, the latter two are occurrences only and the last event is the lack of the first three events. The probability of at least one co-occurrence $((\phi_{n}\!=\!1)\!\wedge\!(\phi'_{n}\!=\!1))$ in $N$ trials becomes:
\begin{align}
& \!\!\!\!\textstyle\sum\limits_{n=1}^{N}\sum\limits_{n'=0}^{N\!-n}\sum\limits_{n''\!=0}^{N\!-n-n'}\!\!\!\!\!\binom{N}{n,n'\!,n''\!,N\!-n-n'\!-n''\!} p^nq^{n'\!}s^{n''\!\!}(1\!\!-\!\!p\!\!-\!\!q\!\!-\!\!s)^{N\!-n-n'-n''}\!\!\!\!.\nonumber\\[-10pt]
& \label{eq:my_maxexppr2}
\end{align}
One can verify algebraically/numerically that Eq. \eqref{eq:my_maxexppr2} and \eqref{eq:my_maxexp3} are equivalent \wrt $p$ which completes the proof.
\end{proof}
}
\revisedd{
A proof with a Multinomial distribution shows that MaxExp for co-occurrences (Eq. \eqref{eq:my_maxexp3}) has exactly the same form as MaxExp for vectors (Prop. \ref{prop:maxexp}), and it acts as a co-occurrence detector. This justifies why Eq. \eqref{eq:my_maxexp3} is applicable both to first- and second-order representations in the element-wise pooling regime.} Below we explain practical details of this pooling \eg, how to apply it to an autocorrelation/covariance matrix and why this is meaningful.

\begin{table}[b]
\vspace{-0.3cm}
\begin{center}
{
\setlength{\tabcolsep}{0.3em}
\centering
\begin{tabular}{c | c c c c}
\kern-0.6em Pooling function\kern-0.3em & $g_i(p)$ & $g_i'(p)$ if & \multirow{2}{*}{$g_i(p)$} & \multirow{2}{*}{$g_i'(p)$} \\
(operator $i$) & if $p\!<\!0$ & $p\!\rightarrow\!0^+$ ($0^-$) & &\\
\hline
{\em Gamma}$\;$ \cite{me_tensor} & \multicolumn{1}{c}{inv.} & $\infty$ ({\fontsize{9}{8}\selectfont$-\infty\sqrt{-1}$}) & $p^{\gamma}$ & $\gamma p^{\gamma\!-\!1}$ \\
%
{\em MaxExp} \cite{me_tensor} &  inv. & fin.: $\eta$ & \kern-0.4em$1\!-\!(1\!-\!p)^{\eta}$ & \kern-0.3em$\eta(1\!-\!p)^{\eta\!-\!1}$ \\
\hline
{\em AsinhE} & ok & fin.: $\frac{\gamma'}{\sqrt{1+\gamma'^2}}$  & \kern-0.4em$\asinh(\gamma'p)$ & $\frac{\gamma'}{\sqrt{1+\gamma'^2p^2}}$\\
{\em SigmE} & ok & fin.: $0.5\eta'$ & \kern-0.4em $\frac{2}{1\!+\!e^{-\eta'p}}\!-\!1$ & $\frac{2\eta'\!e^{-\eta'p}}{(1+e^{-\eta'p})^2}$\\
{\em HDP} & inv. & 0 ($\infty$) & \kern-0.4em $e^{-t/p}$ & $\frac{te^{-t/p}}{p^2}$
\end{tabular}
}
\end{center}
\vspace{-0.3cm}
\caption{A collection of Power Normalization functions.
Variables $\gamma\!>\!0$, $\gamma'\!\!>\!0$, $\eta\!\geq\!1$, $\eta'\!\!\geq\!1$ and $t\!>\!0$ control the level of power normalization. For specific pooling type indicated by $i$, we indicate properties of $g_i(p)$ such as finite ({\em fin.}) or infinite ($\infty$) derivative $g_i'(p)$ at $p\!\rightarrow\!0^+$ ($0^-$), and invalid ({\em inv.}) or valid ({\em ok}) power norm. if $p\!<\!0$. HDP is a spectral operator (Sec. \ref{sec:hdp}) but it might act element-wise too.$\!\!$}
\label{tab:smd}
\vspace{-0.3cm}
\end{table}

\vspace{0.05cm}
\revisedd{
\noindent{\textbf{MaxExp pooling (second-order element-wise).}} 
%
In practice, we have $\psi_{kl}\!=\!1\!-\!(1\!-\!\tilde{p}/\kappa)^{\eta}$, where $0\!<\!\eta\!\approx\!N$ is an adjustable parameter, $\tilde{p}\!=\!\avg_n\phi_{kn}\phi_{ln}$, scalars $\phi_{kn}$ and $\phi_{ln}$ are $k$-th and $l$-th features of an $n$-th feature vector \eg, as defined in Prop. \ref{pr:linearize}, and $\kappa\!>\!0$ ensures that $0\!\leq\!\tilde{p}/\kappa\!\leq\!1$. 
As $\tilde{p}$ is an expected value over $N$ channel-wise correlations (product op.) of feature pairs from rectified CNN maps rather than co-occurrences of binary variables, we assume that $\phi_{kn}\phi_{ln}$ is proportional to the confidence that simultaneous detection of stimuli that channels $k$ and $l$ 
represent is correct, and $\kappa\!\geq\!\max_n\phi_{kn}\phi_{ln}$ is a value corresponding to the confidence equal one. We observe that:
%
\begin{equation}
1\!-\!\prod\limits_{n\in\mathcal{N}}(1\!-\!\phi_{kn}\phi_{ln}/\kappa)\!\geq\!1-(1\!-\!\alpha\tilde{p}/\kappa)^{\eta^*}\!\approx\!1-(1\!-\!p)^N\!,
\label{eq:adj1}
\vspace{-0.2cm}
\end{equation}

\noindent{where} $\eta^*\!\!=\!N$ and $\alpha\!=\!1$. The left-hand side eq. of \eqref{eq:adj1} is the likelihood of at least one co-occurrence if drawing from an unknown distribution under the i.i.d. assumption, thus we may desire to adjust the middle eq. in \eqref{eq:adj1} toward this upper bound. As the proportionality assumption may be violated in practice, $\alpha\!\neq\!1$ helps achieve a good estimate $p\!\approx\!\alpha\tilde{p}/\kappa$ and/or adjust the middle eq. in \eqref{eq:adj1} toward the left-hand eq. in \eqref{eq:adj1}.
Parameters $\eta^*$ and $\alpha$ may be tied together as we observe that $\log((1\!-\!\alpha\tilde{p}/\kappa)^{\eta^*\!})\!=\!\eta^*\!\log(1\!-\!\alpha\tilde{p}/\kappa)\!\!\approx\!-\eta^*\alpha\tilde{p}/\kappa\!=\!-\eta\tilde{p}/\kappa\!\approx\!\log((1\!-\!\tilde{p}/\kappa)^{\eta})$ as $\log(1\!-\!x)\!\approx\!-x$ and $\eta\!=\!\eta^*\alpha$ (tied parameter). Reversing the logarithm operation yields $1-(1\!-\!\tilde{p}/\kappa)^{\eta}\!\approx\!1-(1\!-\!\alpha\tilde{p}/\kappa)^{\eta^*}$ where $\eta\!\approx\!N$ refines 
$1-(1\!-\!\tilde{p}/\kappa)^{\eta}$ towards left- and right-hand side equations in \eqref{eq:adj1}. In matrix form, we have:
%
\begin{align}
& 
\mygthreee{MaxExp}{\,\mM;\eta\,}\!=\!1\!-\!\left(1\!-\!\mM/(\trace(\mM)\!+\!\varepsilon)\right)^\eta, 
\label{eq:my_maxexp4}
\end{align}
where $\kappa\!=\!\trace(\mM)\!+\!\varepsilon$ to ensure  $\kappa\!\geq\!\max_{k,l}M_{kl}$ is sufficiently large, 
 $\varepsilon\!\approx\!1e\!-\!6$, the global  param. $\eta$ is chosen via cross-validation to compensate for an estimate of $\kappa$, violation of the proportionality assumption, variations of distr., and the approx. of logarithm. 
%
%
%
}

\vspace{-0.2cm}
\begin{remark}
\label{re:maxexp_resid}
$\tG^{*}_{\text{MaxExp}}(\mM;\eta)\!=\!\tG_{\text{MaxExp}}(\mM;\eta)(\trace(\mM)\!+\!\varepsilon)^\gamma, \gamma\!\approx\!\frac{1}{2}$, compensates for the trace in \eqref{eq:my_maxexp4} which affected the input-output ratio of norms. $\tG^{\ddagger}_{\text{MaxExp}}(\mM,\eta)\!=\!\tG_{\text{MaxExp}}(\mM;\eta)\!+\!\kappa\mM$ prevents vanishing gradients in pooling. Both terms can be combined. \revised{
We suggest adding a small linear slope by $\tG^{\ddagger}\!$ ($\kappa\!\!>\!\!0$) if one experiences vanishing gradients. We use $\tG^{*}\!$ only for fine-tuning on off-the-shelf pre-trained CNN which produces feature vectors with the $\ell_1$ norms varying from region to region and/or image to image. As these norms have an impact on the quality of separation between different class concepts (because originally they were not excluded from training), they need to be adapted to the new dataset. 
}
\end{remark}

\vspace{-0.2cm}
\vspace{0.05cm}
\begin{tcolorbox}[width=1.0\linewidth, colframe=blackish2,colback=beaublue2, boxsep=0mm, arc=3mm, left=1mm, right=1mm, right=1mm, top=1mm, bottom=1mm]
\noindent{\textbf{Gamma pooling (second-order element-wise).}} \revisedd{For completeness, we present Gamma pooling based on the def. in Sec. \ref{sec:pn}:}
\begin{align}
& 
\mygthreee{Gamma}{\,\mM;\gamma\,}\!=(\mM\!+\!\varepsilon)^\gamma.
\label{eq:my_gamma1}
\end{align}

\noindent{Rising}  $\mM$ to the power of $\gamma$ is element-wise, $\varepsilon$ is a small reg. constant. Appendix \hyperref[{app:der}]{B} lists derivatives of Gamma and MaxExp. 
%
\end{tcolorbox}
\vspace{-0.4cm}

\vspace{-0.1cm}
\subsection{From MaxExp to MaxExp$\,$($\pm$) to SigmE (motivating second-order element-wise variants for $M_{kl}\!\in\!\mbr{}$).}
\label{sec:pool_der}
\revisedd{
Moving one branch down in the taxonomy from Fig. \ref{fig:whatwedo}, we note that matrix $\mM$  is built from features that may have negative values ($\beta$-centering or non-rectified $\mM$). Negative entries of $M_{kl}$ break MaxExp/Gamma. 
%
%
%
Thus, we extend Prop. \ref{pr:cooc} to MaxExp$\,$($\pm$) pooling which works also with negative co-occurrences interpreted by us as two anti-correlating visual words. 
To interpret such a pooling variant, we show that if trials follow a mixture of two Normal distributions, we obtain SigmE pooling (a zero-centered sigmoid function) which acts as a detector of co-occurrence/negative co-occurrence hypothesis. To establish affinity between MaxExp$\,$($\pm$) and SigmE, we show that both functions are identical if their parameters $\eta,\eta'\!\!\rightarrow\!\infty$.
However, the derivative of MaxExp$\,$($\pm$) is non-smooth at $0$ while SigmE has an almost identical profile to MaxExp$\,$($\pm$) but its der. is smooth (important in optimization). 
}

%
%
%
\revisedd{
\vspace{0.15cm}
\vspace{0.05cm}
\noindent{\textbf{Derivation.}} MaxExp$\,$($\pm$) and SigmE are derived in Proposition \ref{pr:cooc_anti} and Theorem \ref{pr:sigmoids} while a non-essential Remark \ref{re:maxexp_sigmoid} proves their affinity.$\!\!\!\!\!\!$
\vspace{-0.15cm}
\begin{proposition}
\label{pr:cooc_anti}
Let event vectors $\vphi,\vphi'\!\!\in\!\{-1,0,1\}^{N}$ 
store $N$ trials performed according to the Multinomial distr. under i.i.d. assumption, 
 for which we have the probability $p$ of a co-occurrence event $(\,((\phi_{n}\!=\!1)\!\wedge\!(\phi'_{n}\!=\!1))\vee((\phi_{n}\!=\!-1)\!\wedge\!(\phi'_{n}\!=\!-1))\,)$, the probability $q$ of a anit-correlating co-occurrence event $(\,((\phi_{n}\!=\!1)\!\wedge\!(\phi'_{n}\!=\!-1))\vee((\phi_{n}\!=\!-1)\!\wedge\!(\phi'_{n}\!=\!1))\,)$, and $1\!-\!p\!-\!q$ for $((\phi_{n}\!=\!0)\!\vee\!(\phi'_{n}\!=\!0))$ which denotes the lack of the first two events. Then the prob. difference between at least one co-occurrence event and at least one anti-correlating co-occurrence event in $N$ trials, encoded by $\vphi,\vphi'$ becomes 
\comment{
\vspace{-0.2cm}
\begin{equation}
\psi\!=\!(1\!-\!q)^{N}\!-\!(1\!-\!p)^{N}.
\label{eq:my_maxexp33}
\end{equation}
}
$\psi\!=\!(1\!-\!q)^{N}\!-\!(1\!-\!p)^{N}$.
\end{proposition}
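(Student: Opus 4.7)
The plan is to mirror the approach used in Theorem \ref{pr:cooc}, where a Bernoulli-style complement argument collapsed a seemingly Multinomial computation into the clean form $1-(1-p)^N$. Here the state space per trial has three aggregated outcomes: a (signed) co-occurrence with probability $p$, an anti-correlating co-occurrence with probability $q$, and neither with probability $1-p-q$. Because the quantity of interest is a difference of two marginal ``at least one'' probabilities, the trinomial structure can be handled by two separate Bernoulli-style collapses.

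First I would compute $P_{+}\!=\!P(\text{at least one co-occurrence in } N \text{ trials})$. The complementary event lumps the two non-co-occurrence outcomes (anti-correlating, or neither) into a single ``not-$p$'' outcome of probability $1-p$; by independence across trials this yields $P_{+}\!=\!1-(1-p)^N$. The same argument with the roles of $p$ and $q$ swapped gives $P_{-}\!=\!P(\text{at least one anti-correlating co-occurrence})\!=\!1-(1-q)^N$. Subtracting, $\psi\!=\!P_{+}-P_{-}\!=\!(1-q)^N-(1-p)^N$, as claimed.

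For rigor, I would then verify, in analogy with the stricter proof given after Theorem \ref{pr:cooc}, that the Bernoulli-style collapses are faithful to the underlying Multinomial model. Concretely, $P_{+}$ can be written as the sum over all Multinomial configurations $(n,n',N\!-\!n\!-\!n')$ with $n\!\geq\!1$ of $\binom{N}{n,n',N-n-n'}p^{n}q^{n'}(1-p-q)^{N-n-n'}$. Using the multinomial theorem, summing over $n'$ and $N-n-n'$ for fixed $n$ gives $\binom{N}{n}p^n(1-p)^{N-n}$, and summing over $n\!\geq\!1$ yields $1-(1-p)^N$. A symmetric argument handles $P_{-}$.

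The only potential subtlety, and the step I would be most careful with, is confirming that the two ``at least one'' events being subtracted are well-defined as probabilities on the same product space and that no double-counting occurs: since $P_{+}$ and $P_{-}$ are not disjoint (a trial sequence can contain both a co-occurrence and an anti-correlating co-occurrence), $\psi$ is a signed difference of marginals rather than a probability of a composite event. This matches the intended interpretation needed in Theorem \ref{pr:sigmoids}, so no further adjustment is required, and the result follows.
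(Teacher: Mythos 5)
Your proposal is correct and follows essentially the same route as the paper's Appendix A, which expresses the difference as a single Multinomial sum $\sum_{n\geq 1}\sum_{n'}\binom{N}{n,n',N-n-n'}(p^nq^{n'}\!-\!p^{n'}q^{n})(1-p-q)^{N-n-n'}$ and asserts its equivalence to $(1-q)^N-(1-p)^N$. Your decomposition into $P_{+}-P_{-}$ with the explicit multinomial-theorem marginalization (summing over $n'$ to lump $q$ and $1-p-q$ into $1-p$) is precisely the algebraic verification the paper leaves to the reader, so no gap remains.
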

\begin{proof}
\vspace{-0.2cm}
See  Appendix \hyperref[{app:eq14}]{A}.
\comment{One can derive the above difference of probabilities by directly applying the Multinomial calculus as follows:
%
\begin{align}
& \!\!\!\!\textstyle\sum\limits_{n=1}^{N}\sum\limits_{n'=0}^{N\!-n}\!\!\binom{N}{n,n'\!,N\!-n-n'\!-n''\!}\!\!\left(p^nq^{n'\!}\!-\!p^{n'\!}q^{n}\right)\!(1\!\!-\!\!p\!\!-\!\!q)^{N\!-n-n'}\!.
\label{eq:my_maxexppr22}
\end{align}
%
One can verify algebraically/numerically that Eq. \eqref{eq:my_maxexppr22} and \eqref{eq:my_maxexp33} are equivalent.
}
\end{proof}

%
\vspace{-0.2cm}
\vspace{0.05cm}
\noindent{\textbf{MaxExp$\,$($\pm$) (second-order element-wise).}} 
\noindent{In} practice, we simply estimate $p$ and $q$ as $p\!=\!\max(0,p^*\!)$ and $q\!=\!\max(0,-p^*\!)$, $p^*\!\!=\!\avg_n\phi^*_n$. Note $p^*\!\!<\!0$ or $p^*\!\!>\!0$ if the majority of co-occurrences between event vectors $\vphi, \vphi'$ captured as $\vphi^*\!\!=\!\vphi\odot\vphi\!\in\!\{-1,0,1\}^N$ are anti-correlating or correlating, respectively. 
}

\vspace{0.2cm}
\revisedd{
Below we show that SigmE 
has a derivation that follows a slightly different statistical interpretation than MaxExp, which explains that SigmE is a likelihood-based detector of co-occurrence \vs negative co-occurrence hypothesis.
\begin{theorem}
\label{pr:sigmoids}
\vspace{-0.15cm}
Assume an event vector $\vphi^*\!\in\mbr{N}$ whose coefficients $\phi_n^*\!$ represent anti-occurrences or occurrences drawn from $\mathcal{N}(-1,2)$ or $\mathcal{N}(1,2)$, resp. Then the probability of an event $\phi_n^*\!$ being an anti-occurrence and occurrence is given by $\psi^{(-)}(\phi_n^*)\!=\!\frac{G_{\sigma}(\phi_n^*\!+1)}{G_{\sigma}(\phi_n^*\!-1)\!+\!G_{\sigma}(\phi_n^*\!+1)}$ and $\psi^{(+)}(\phi_n^*)\!=\!\frac{G_{\sigma}(\phi_n^*\!-1)}{G_{\sigma}(\phi_n^*\!-1)\!+\!G_{\sigma}(\phi_n^*\!+1)}$, resp. We note that $\psi^{(-)}(0)\!=\!\psi^{(+)}(0)\!=0.5$ which means that for $\phi_n^*\!\!=\!0$, the determination of event type that the feature represents cannot be made in the statistical sense. As we want to factor out such cases, we simply set $\psi\!=\!\psi^{(+)}(\phi_n^*)\!-\!\psi^{(-)}(\phi_n^*)$ which reduces to $\frac{2}{1\!+\!\expl{-\eta'\phi_n^*\!}}\!-\!1$ for $\eta'\!\!=\!2/\sigma^2$. Thus, if $p^*\!\!=\!\avg_n\phi^*_n$,  SigmE given by $\frac{2}{1\!+\!\expl{-\eta'p^*\!}}\!-\!1$ (see Eq. \eqref{eq:sigme1}) simply tells, on average, whether events $\phi_n$ came more likely from $\mathcal{N}(-1,2)$ or $\mathcal{N}(1,2)$.
\end{theorem}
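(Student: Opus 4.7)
My plan is to treat the claim as a standard Bayesian two-class decision problem with equal priors, and then collapse the resulting posterior difference into a sigmoidal form by algebraic manipulation of Gaussians. First I would introduce the generative model: under the assumption that $\phi_n^*$ is drawn from a mixture of $\mathcal{N}(-1,\sigma^2)$ and $\mathcal{N}(+1,\sigma^2)$ with equal prior weights $1/2$ (where the ``$2$'' in the theorem statement is the variance, so the kernel bandwidth $\sigma$ satisfies $\sigma^2=2$ in the canonical case but we keep $\sigma$ symbolic for generality). Bayes' rule with equal priors immediately gives
\begin{equation*}
\psi^{(+)}(\phi_n^*)=\frac{G_{\sigma}(\phi_n^*\!-1)}{G_{\sigma}(\phi_n^*\!-1)+G_{\sigma}(\phi_n^*\!+1)},
\end{equation*}
and analogously for $\psi^{(-)}$, because the priors cancel and only the unnormalized Gaussian densities $G_{\sigma}(\,\cdot\,)$ survive in the ratio.

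Next I would verify the boundary condition $\psi^{(-)}(0)=\psi^{(+)}(0)=1/2$ by noting that $G_{\sigma}$ is even, so $G_{\sigma}(-1)=G_{\sigma}(+1)$ and both fractions reduce to $G_{\sigma}(1)/(2G_{\sigma}(1))=1/2$. This provides the stated symmetry point and justifies removing this ambiguous case by looking at the signed difference $\psi=\psi^{(+)}-\psi^{(-)}$.

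The core computation is then to simplify
\begin{equation*}
\psi=\frac{G_{\sigma}(\phi_n^*\!-1)-G_{\sigma}(\phi_n^*\!+1)}{G_{\sigma}(\phi_n^*\!-1)+G_{\sigma}(\phi_n^*\!+1)}.
\end{equation*}
Dividing numerator and denominator by $G_{\sigma}(\phi_n^*\!-1)$, the key identity is
\begin{equation*}
\frac{G_{\sigma}(\phi_n^*\!+1)}{G_{\sigma}(\phi_n^*\!-1)}=\exp\!\Bigl(\tfrac{(\phi_n^*\!-1)^2-(\phi_n^*\!+1)^2}{2\sigma^2}\Bigr)=\exp(-\eta'\phi_n^*),
\end{equation*}
where the cross terms cancel after expanding the squares, leaving $-4\phi_n^*/(2\sigma^2)=-\eta'\phi_n^*$ with $\eta'=2/\sigma^2$. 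Substituting this back yields $\psi=(1-e^{-\eta'\phi_n^*})/(1+e^{-\eta'\phi_n^*})$, and a one-line rewrite $\frac{1-e^{-\eta'\phi_n^*}}{1+e^{-\eta'\phi_n^*}}=\frac{2}{1+e^{-\eta'\phi_n^*}}-1$ (verified by putting both sides over a common denominator $1+e^{-\eta'\phi_n^*}$) matches the claimed SigmE form.

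Finally, for the aggregation statement I would simply apply the same pointwise detector to the empirical mean $p^*=\avg_n \phi_n^*$, replacing $\phi_n^*$ by $p^*$ in the sigmoid, observing that averaging the raw evidence before pushing it through the nonlinearity gives a single aggregate likelihood-ratio-style score whose sign matches the majority hypothesis. I do not expect any real obstacle in the derivation itself: all steps are elementary. The only delicate point is the averaging step in the last sentence, which is really an interpretation rather than a theorem---we are applying the per-sample detector to the pooled statistic $p^*$, and the justification is heuristic (it mirrors how MaxExp is applied to $\avg_n\phi_{kn}\phi_{ln}$ in Sec.~\ref{sec:well_mot}) rather than a genuine probabilistic identity, so I would state it as such.
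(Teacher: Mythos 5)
Your proposal is correct and follows exactly the route the paper intends: Appendix J's proof is just the one-line remark that $\psi^{(\pm)}$ are the standard equal-prior mixture-component posteriors followed by ``simple algebraic manipulations,'' and your Bayes-rule setup, the Gaussian ratio $G_{\sigma}(\phi_n^*\!+\!1)/G_{\sigma}(\phi_n^*\!-\!1)=e^{-\eta'\phi_n^*}$ with $\eta'\!=\!2/\sigma^2$, and the rewrite to $\frac{2}{1+e^{-\eta'\phi_n^*}}-1$ are precisely those manipulations spelled out. Your closing caveat that applying the detector to the pooled mean $p^*$ is an interpretation rather than a probabilistic identity is also accurate and consistent with how the paper presents that final sentence.
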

\begin{proof}
\vspace{-0.1cm}
Appendix \hyperref[{app:mix_proof}]{J} is the proof.
\end{proof}

\begin{remark}
\vspace{-0.05cm}
\label{re:maxexp_sigmoid}
One can verify that for $\eta\!\rightarrow\!\infty$ and $\eta'\!\!\rightarrow\!\infty$, we have $\lim_{\eta,\eta'\!\rightarrow\!\infty}\int_0^1|\frac{2}{1\!+\!\expl{-\eta'p\!}}\!-\!1\!-\!(1\!-\!(1\!-\!p)^\eta)|\,\mathrm{d}p\!=\!0$. In the limit, both formulations are identical on interval $[0,1]$. For the finite $\eta$ and $\eta'\!$, minimizing the above integral has no closed form but parametrizations $\eta'(\eta)\!=\!\frac{\log(\sqrt{3}\!+\!2)}{1\!-\!((4\!-\!2\sqrt{3})/3)^{1/(2\!\eta)}}$ and $\eta(\eta'\!)\!=\!\frac{\log((4\!-\!2\sqrt{3})/3)}{2\log(1\!-\!\log(\sqrt{3}\!+\!2)/\eta')}$ yield a low approx. error due to aligning SigmE with MaxExp at a point $p''(\eta')\!=\!\log(\sqrt{3}\!+\!2)/\eta'$ for which the concavity of SigmE on interval $[0,1]$ is at its maximum. 
\end{remark}
\begin{proof}
\vspace{-0.1cm}
Appendix \hyperref[{app:sigme_align_maxexp}]{K} is the proof. 
\end{proof}


\vspace{-0.2cm}
\vspace{0.05cm}
\begin{tcolorbox}[width=1.0\linewidth, colframe=blackish2,colback=beaublue2, boxsep=0mm, arc=3mm, left=1mm, right=1mm, right=1mm, top=1mm, bottom=1mm]
\noindent{\textbf{SigmE pooling (second-order element-wise).}} In practice, zero-centered Logistic a.k.a. Sigmoid ({\em SigmE}) functions below may be used in lieu of MaxExp$\,$($\pm$) and MaxExp (Eq. \eqref{eq:my_maxexp4}):
\begin{align}
& \!\!\!\!\!
\mygthreee{SigmE}{\,\mM;\eta'\,}\!=\!\frac{2}{1\!+\!\expl{-\eta'\mM}}\!-\!1\text{ or }\frac{2}{1\!+\!\expl{\frac{-\eta'\mM}{\trace(\mM)+\varepsilon}}}\!-\!1.\!
\label{eq:sigme1}
\end{align}
\end{tcolorbox}
\vspace{-0.2cm}

\vspace{0.05cm}
\noindent{\textbf{AsinhE pooling (second-order element-wise).}} For completeness,  we present AsinhE, an alternative to Gamma in Eq. \eqref{eq:my_gamma1} as Gamma has an infinite derivative for $M_{kl}\!\rightarrow\!0$ and $\varepsilon\!\rightarrow\!0$, and assumes $M_{kl}\!\geq\!0$. Its reg. $\varepsilon\!>\!0$ may affect results as Gamma magnifies signals close to $0$ \ie, $\varepsilon\!\approx\!1e\!-\!3$ will mask smaller signals. 

\begin{tcolorbox}[width=1.0\linewidth, colframe=blackish2,colback=beaublue2, boxsep=0mm, arc=3mm, left=1mm, right=1mm, right=1mm, top=1mm, bottom=1mm]
\textbf{AsinhE pooling} (Arcsin hyperbolic) by contrast has an almost identical profile with Gamma, it has a finite/smooth derivative and operates on $M_{kl}\!\in\!\mbr{}$ w/o regularization:
\begin{align}
& \!\!\!\!
\mygthreee{AsinhE}{\,\mM;\gamma'\,}\!=
\!\log(\gamma'\!\mM+\sqrt{1+{\gamma'}^2\!\mM^2}),
\label{eq:asinhe1}
\end{align}

Table~\ref{tab:smd}  lists  properties of Power Normalizations. Figure \ref{fig:power-norms} illustrates MaxExp/Gamma and their extensions SigmE/AsinhE (for $p\!\in\!\mbr{}$) whose derivatives (Appendix \hyperref[{app:der2}]{C}) are smooth/finite. 
\end{tcolorbox}
\vspace{-0.2cm}

\begin{figure}[t]
\centering
\vspace{-0.3cm}
\hspace{-0.3cm}
\begin{subfigure}[t]{0.495\linewidth}
\centering\includegraphics[trim=0 0 0 0, clip=true, height=\PowH, width=\PowW]{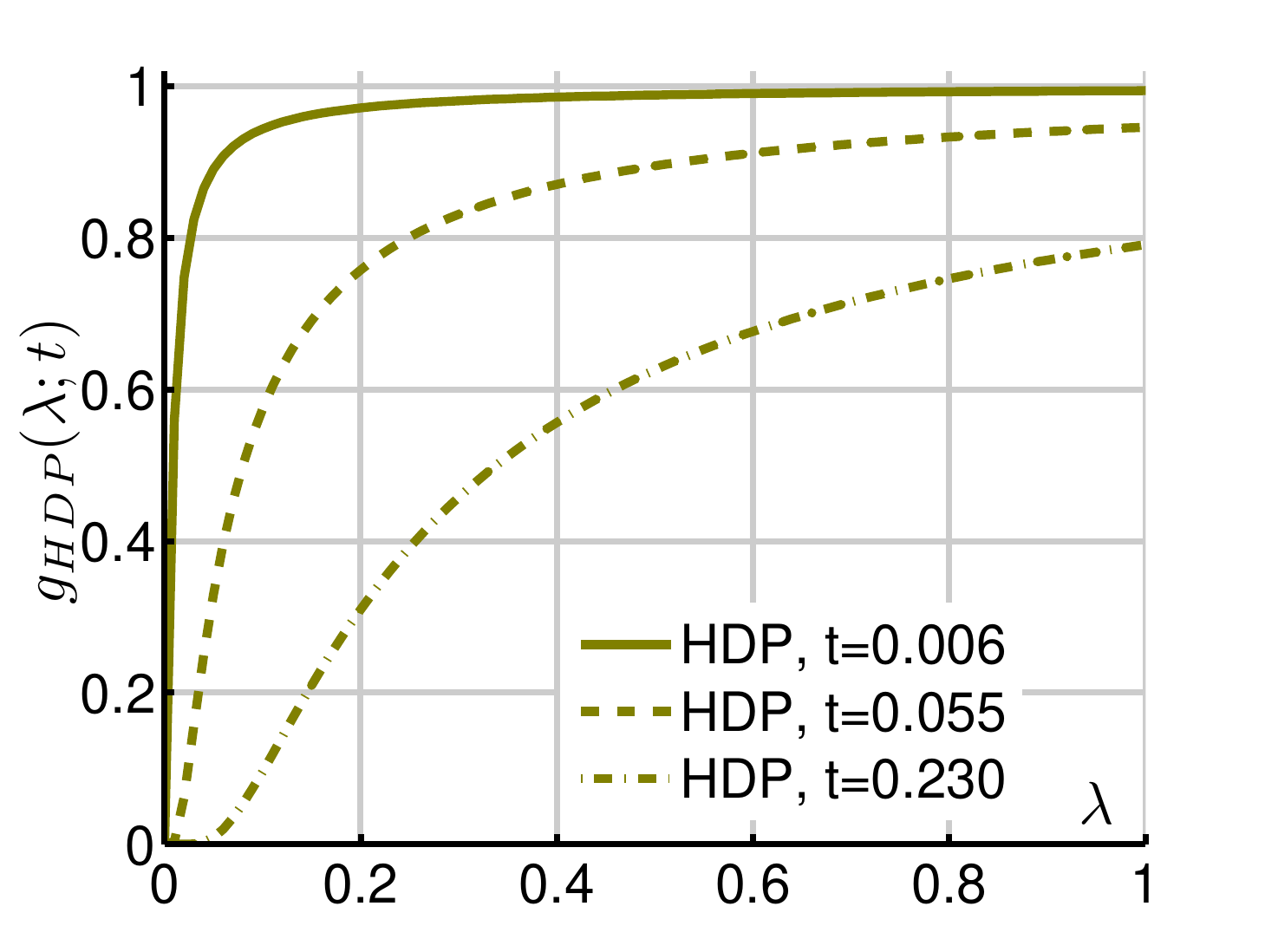}
\caption{HDP$\!\!\!\!\!\!\!\!$}\label{fig:pow33}
\end{subfigure}
\begin{subfigure}[t]{0.495\linewidth}
\centering\includegraphics[trim=0 0 0 0, clip=true, height=\PowH, width=\PowW]{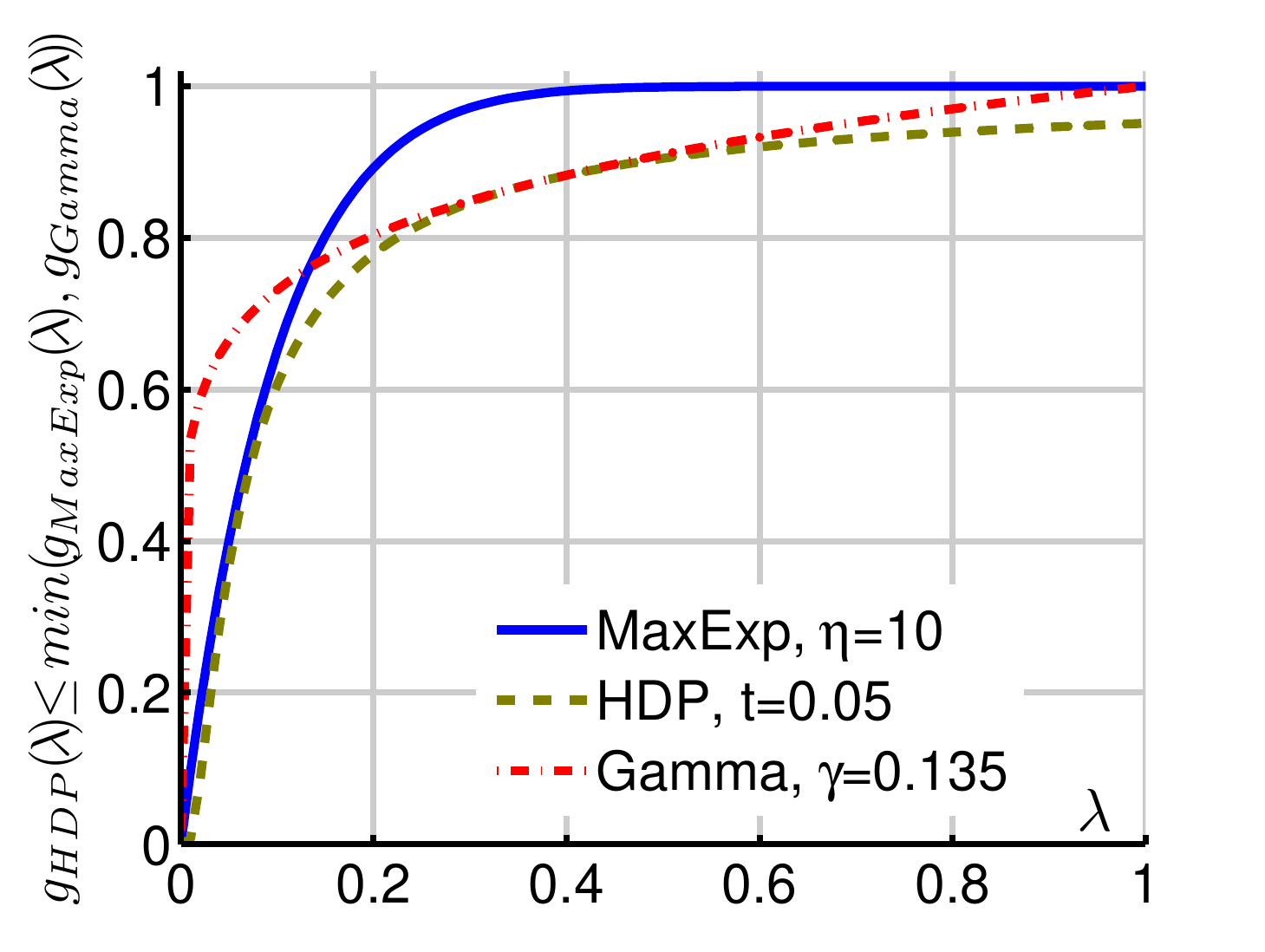}
\caption{\!\!\!\!\!\!\!\!}\label{fig:pow5}
\end{subfigure}
\comment{
\begin{subfigure}[t]{0.245\linewidth}
\centering\includegraphics[trim=0 0 0 0, clip=true, height=\PowH, width=\PowW]{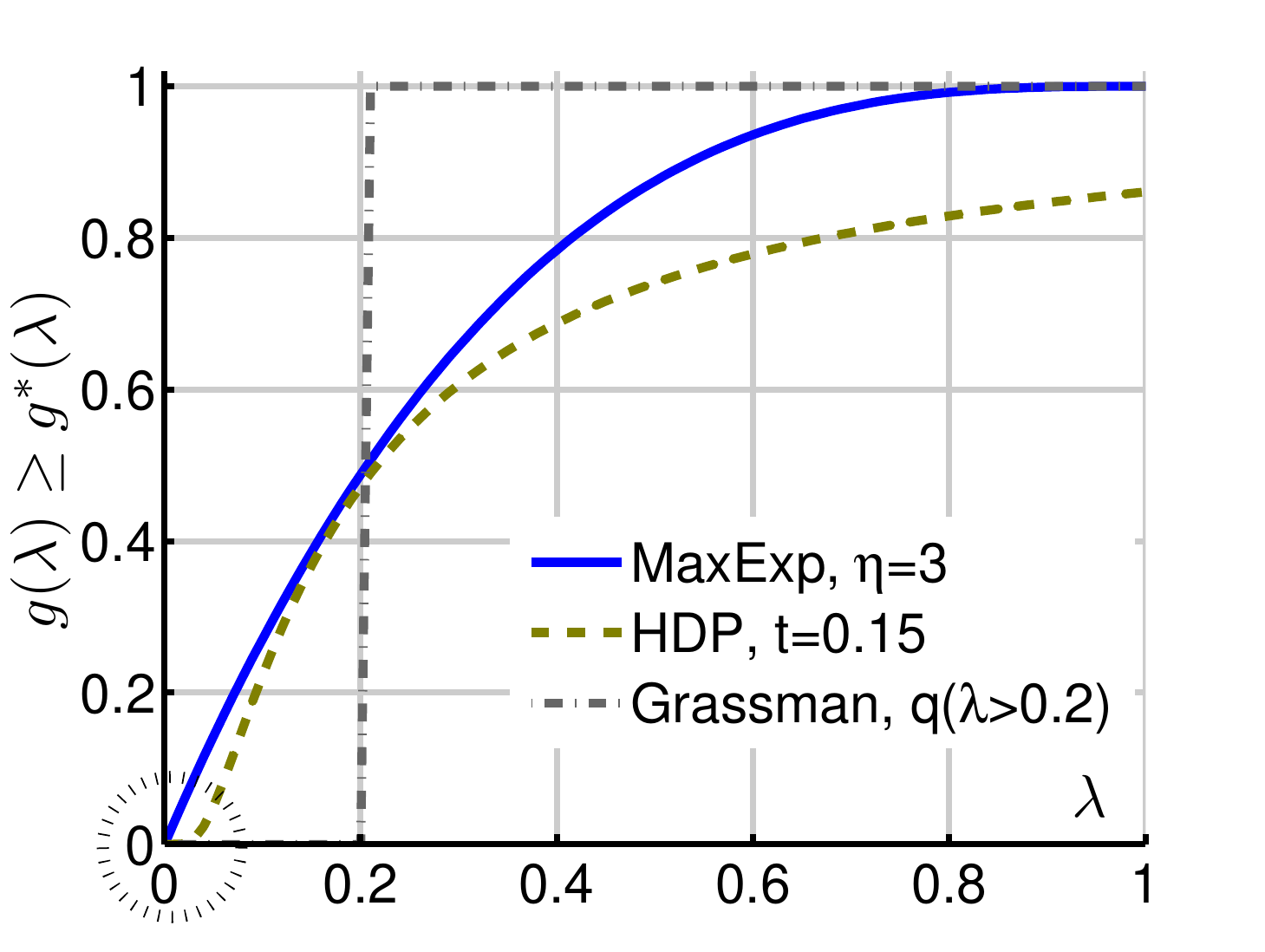}
\caption{Grassmann$\!\!\!\!\!\!\!\!$}\label{fig:pow44}
\end{subfigure}
\begin{subfigure}[t]{0.245\linewidth}
\centering\includegraphics[trim=0 0 0 0, clip=true, height=\PowH, width=\PowW]{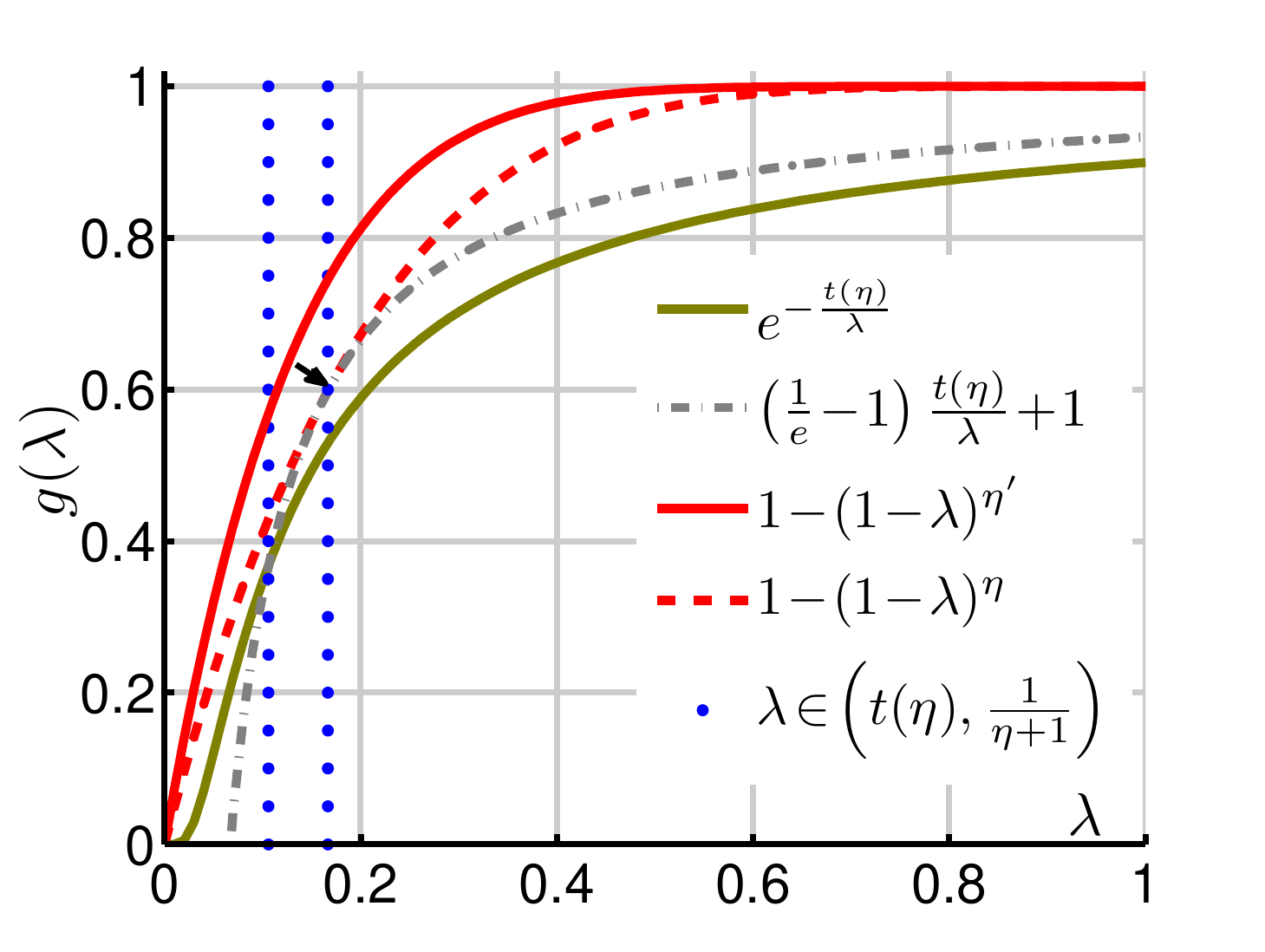}
\caption{$\!\!\!\!\!\!\!\!$}\label{fig:pown3}
\end{subfigure}
}
%
\caption{\revised{The profile of HDP 
is shown in Fig. \ref{fig:pow33}. Note the similarity of HDP to MaxExp. Fig. \ref{fig:pow5} shows that MaxExp and Gamma given by $g_{\text{MaxExp}}(\lambda)$ and $g_{\text{Gamma}}(\lambda)$ are in fact upper bounds of HDP ($g_{\text{HDP}}(\lambda)$). 
}}
\vspace{-0.5cm}
\label{fig:power-norms2}
\end{figure}

\revised{
\vspace{0.05cm}
\noindent{\textbf{Discussion.}} 
 %
Pooling functions with similar profiles can be formulated in numerous ways. MaxExp is shown to act as a burstiness-suppressing detector of at least one event in the collection of events drawn from the Bernoulli distribution. The Gamma operator can be viewed as a function that whitens signal. SigmE simply tells, on average, whether a collection of events $\phi_n$ is more likely to come from the Normal distribution representing anti-occurrences ($\mu\!=\!-1$) or its counterpart representing occurrences ($\mu\!=\!1$). The unifying factor for MaxExp and SigmE is how events are modeled \eg, according to the Bernoulli or Normal distributions. As element-wise operators do not take into account correlations between features, below we introduce and study Spectral Power Normalizations (second main taxonomy branch of Fig. \ref{fig:whatwedo}). 
}
}

\comment{
\begin{algorithm}[b]
\caption{\revised{Generalized Spectral Power Normalization}}
{\bf Input:} $\mM$ for a given forward pass,
$\varepsilon'\!$: minimum spectral gap, $d$: side dim. of $\mM$, $i\!\in\!\{\text{Gamma, MaxExp, AsinhE, SigmE, HDP}\}$
\begin{algorithmic}[1]
\State{$\mU\mLambda\mU^T\!\!=\!\mM$}\\
/*prevent non-simple eigenvalues*/
\While{$|\lambda_{i'}+\xi_{i'}\!-\!(\lambda_j+\xi_j)|\!\leq\!\varepsilon': \forall {i'}\!\neq\!j$}
	\State $\xi_{i'}\!\sim\!\mathcal{U}(\varepsilon',2d\varepsilon'), \forall i'$
\EndWhile\State \textbf{end}
\State{$\mLambda\gets\mLambda+\vec{\xi}$}
\State{Compute $\mygthreeehat{i}{\,\mM}$ and $\mygthreeephat{i}{\,\mM}$ as in Eq. \eqref{eq:spec}, Eq. \eqref{eq:eig_chain} and \eqref{eq:eig_der}}
\end{algorithmic}
{\bf Output:} $\mygthreeehat{i}{\,\mM}$ and $\mygthreeephat{i}{\,\mM}$
\label{code:algorithm}
\end{algorithm}
}

\comment{
\algblock[TryCatchFinally]{try}{endtry}
\algcblock[TryCatchFinally]{TryCatchFinally}{finally}{endtry}
\algcblockdefx[TryCatchFinally]{TryCatchFinally}{catch}{endtry}
	[1]{\textbf{except} #1}
	{\textbf{end}}
\algcblockdefx[TryCatchFinally]{TryCatchFinally}{elsee}{endtry}
	[1]{\textbf{else} #1}
	{\textbf{end}}
}

\algblock{while}{endwhile}
\algblock[TryCatchFinally]{try}{endtry}
\algcblockdefx[TryCatchFinally]{TryCatchFinally}{catch}{endtry}
	[1]{\textbf{except}#1}{}
\algcblockdefx[TryCatchFinally]{TryCatchFinally}{elsee}{endtry}
	[1]{\textbf{else}#1}{}
\algtext*{endwhile}
\algtext*{endtry}

\algblockdefx{ifff}{endifff}
	[1]{\textbf{if}#1}{}
\algtext*{endifff}
	
\begin{algorithm}[b]
\caption{\revised{Generalized Spectral Power Normalization}}
\label{code:algorithm}
\vspace{-0.2cm}
\begin{tcolorbox}[width=1.01\linewidth, colframe=blackish,colback=beaublue, boxsep=0mm, arc=3mm, left=1mm, right=1mm, right=1mm, top=1mm, bottom=1mm]
{\bf Input:} $\mM$ for a given forward pass,
$\varepsilon'\!$: desired spectral gap, $d$: side dim. of $\mM$, $i\!\in\!\{\text{Gamma, MaxExp, AsinhE, SigmE, HDP}\}$
\begin{algorithmic}[1]
\State{$k\!=\!0$} /*prevent non-simple eigenvalues*/
\while{ True:}
	\State $k\!\gets\!k\!+\!1,\;\{\xi_{i'}\!\sim\!\mathcal{U}(\varepsilon',\varepsilon'\!\!+\!d\varepsilon')\}_{i'\!=\!1,\cdots,d}$
	\try{:}
		\State{$\mU\mLambda\mU^T\!\!=\!\mM\!+\!\diag(k\boldsymbol{\xi})$}
		\State{get $\mygthreeehat{i}{\,\mM}$ and $\mygthreeephat{i}{\,\mM}$ by Eq. \eqref{eq:spec}, \eqref{eq:eig_chain} and \eqref{eq:eig_der}}
	\elsee{:}
		\State{break}
	\endtry
\endwhile
\end{algorithmic}
{\bf Output:} $\mygthreeehat{i}{\,\mM}$ and $\mygthreeephat{i}{\,\mM}$
\end{tcolorbox}
\vspace{-0.2cm}
\end{algorithm}


\revised{
\vspace{-0.3cm}
\section{Spectral Power Normalizations}
\label{sec:spns_main}
\revisedd{According to the main branch two in Fig. \ref{fig:whatwedo}, Spectral Power Normalizations act  on the spectrum of autocorrelation/covariance matrices instead of individual pairs of coefficients in order to respect the data correlation between principal directions of the  multivariate Normal Distribution represented by $\mM$. In this section, we provide a generalized recipe on computations of SPNs, Moreover, we show that Gamma/MaxExp are upper bounds of the time-reversed ($t\!<\!1$) Heat Diffusion Process \cite{smola_graph} which explains their good performance in classification (fourth main taxonomy branch of Fig. \ref{fig:whatwedo}). From these considerations emerges our culminating contribution, a fast spectral MaxExp that rivals recent Newton-Schulz iterations \cite{peihua_fast,lin2017improved}.}

\vspace{-0.2cm}
\subsection{Generalized SPNs}
\label{sec:spns}
\begin{tcolorbox}[width=1.0\linewidth, colframe=blackish2,colback=beaublue2, boxsep=0mm, arc=3mm, left=1mm, right=1mm, right=1mm, top=1mm, bottom=1mm]
\textbf{Spectral pooling} versions of our pooling operators can be obtained via SVD given as $\mU\mLambda\mU^T\!\!=\!\mM$. Let us define:
\vspace{-0.1cm}
\begin{align}
\mygthreeehat{i}{\,\mM}\!=\!\mU{\diag}^{\dagger}\!\left(\vec{g}_i\left(\vec{p}(\mLambda)\right)\right)\mU^T\!\!, 
\label{eq:spec}
\end{align}

\vspace{-0.1cm}
\noindent{where} Spectral Power Normalizations are realized by operators $g_i(\cdot)$ in Table \ref{tab:smd}, $\vec{p}(\mLambda)\!=\![p_1,\cdots,p_K]\!=\!\frac{\diag(\mLambda)}{\trace(\mLambda)\!+\!\varepsilon}$, 
the normalization by $\trace(\mLambda)\!+\!\varepsilon$ applies to MaxExp and SigmE. A small constants  $\varepsilon\!\approx\!1e\!-\!6$ prevents the vanishing trace, $\diag(\cdot)$ extracts the diagonal from $\mLambda$ into a vector while ${\diag}^{\dagger}(\cdot)$ places a vector into the diagonal of a matrix (off-diagonal coeffs. equal zero).
\end{tcolorbox}
\vspace{-0.15cm}

For a generic back-propagation through Eq. \eqref{eq:spec}, one relies on the back-propagation through SVD and the following chain rule:

{

\vspace{-0.3cm}
\begin{align}
&\!\!\!\text{\fontsize{7}{8}\selectfont$\frac{\mygthreeehat{i}{\,\mM}}{\partial M_{kl}}\!=\!\frac{\partial \mU{\diag}^{\dagger}\!\left(\vec{g}_i\left(\vec{p}(\mLambda)\right)\right)\mU^T}{\partial M_{kl}}\!=\!$}\nonumber\\
&\!\!\!\text{\fontsize{7}{8}\selectfont$\quad 2\sym\left(\frac{\partial \mU}{\partial M_{kl}}{\diag}^{\dagger}\!\left(\vec{g}_i\left(\vec{p}(\mLambda)\right)\right)\mU^T\!\right)
\!+\!\mU{\diag}^{\dagger}\!\left(\frac{\partial \vec{g}_i\left(\vec{p}(\mLambda)\right)}{\partial M_{kl}}\right)\mU^T\!\!.$}\label{eq:eig_chain}
\end{align}
\vspace{-0.3cm}
}
%
%
%

\noindent{The} back-propagation through eigenvectors and eigenvalues of SVD is a well-studied problem \cite{rogers_der,rudsil_der,magnus_der}:
%
\begin{align}
& \frac{\partial\lambda_{ii}}{\partial M}\!=\!\vu_i\!\vu_i^T,\quad
\frac{\partial u_{ij}}{\partial M}\!=\!u_{ij}(\lambda_{jj}\mIdent\!-\!M)^{\dagger}. 
\label{eq:eig_der}
\end{align}
\vspace{-0.3cm}
%

\noindent{As} the back-propagation through SVD breaks down in the presence of so-called non-simple eigenvalues, that is $\lambda_i\!=\!\lambda_j\!:i\!\neq\!j$, Algorithm \ref{code:algorithm} ensures that we draw small regularization coefficients $\xi_i$ from the uniform distribution until the spectral gap $\varepsilon'\!$ is ensured.

\begin{table}[b]
\vspace{-0.3cm}
\hspace{0.0cm}
\setlength{\tabcolsep}{0.3em}
\hspace{-0.2cm}
\begin{tabular}{c | c c}
\kern-0.6em {\scriptsize oper. $i$}\kern-0.2em  & \kern-0.2em\scriptsize$\!\!\!\mygthreeehat{i}{\mM}$\kern-0.2em & \kern-0.6em {\scriptsize type of back-propagation (speed)}\kern-0.6em \\ 
\hline
\kern-0.6em {\scriptsize\em Gamma}\kern-0.2em  & \kern-2.7em \scriptsize$\mM^\gamma$ (\eg, $\mM^\frac{1}{2}$)\kern-0.2em  & \kern-0.8em$\!\!\!\!\!\!\!\!\!\!\!\!$\pbox{5.7cm}{\scriptsize Sylv. Eq. \eqref{eq:sylv} of App. \hyperref[{app:der3}]{D} (extremely slow) / Sylv. Eq. \eqref{eq:sylv} via Bartels-Stewart alg. \cite{lin2017improved} (slow) / SVD back-prop. Eq. \eqref{eq:eig_chain} (slow) / Newton-Schulz \cite{lin2017improved} (fast)}\kern-0.1em\\
\hline
\kern-0.4em {\scriptsize\em MaxExp}\kern-0.2em  & \kern-2.7em\scriptsize$\mIdent\!-\!(\mIdent\!-\!\!\frac{\mM}{{\trace(\mM)+\varepsilon}})^\eta$\kern-0.3em  & \kern-3.0em\scriptsize SVD Eq. \eqref{eq:eig_chain} (slow) / Eq. \eqref{eq:der_maxexp} (fast) / Alg. \ref{code:exp_sqr} (very fast)\kern-0.8em\\
\kern-0.6em {\scriptsize\em AsinhE}\kern-0.2em  & \kern-0.1em \scriptsize$\logm\!\Big(\!\gamma'\!\mM\!\!+\!(\mIdent\!\!+\!\!\gamma'^2\!\mM^2)^{\frac{1}{2}}\!\Big)$   & $\!\!\!\!\!\!$\scriptsize SVD back-prop. Eq. \eqref{eq:eig_chain} (slow)\\
\kern-0.6em {\scriptsize\em SigmE}\kern-0.2em  & \kern-0.65em\scriptsize$2\Big(\mIdent\!\!+\!\!\expm({\!\frac{-\eta'\mM}{{\trace(\mM)+\varepsilon}}})\Big)^{\!-\!1}\!\!\!\!\!-\!\mIdent$\kern-0.6em  & $\!\!\!\!\!\!$\scriptsize SVD back-prop. Eq. \eqref{eq:eig_chain} (slow)\\
\kern-0.6em {\scriptsize\em HDP}\kern-0.2em  & \kern-0.2em \scriptsize$\expm(\!-t\mM^{\dagger})$\kern-0.2em  & $\!\!\!\!\!\!$\scriptsize SVD back-prop. Eq. \eqref{eq:eig_chain} (slow)\\
\hline
\end{tabular}
\caption{Spectral Power Normalizations. The square, square root, power, log and exp are matrix operations. The type of back-propagation available and speed are indicated. Gamma is the only well-explored Power Norm. yet other SPNs also work  well. They are all approx. of the time-reversed Heat Diffusion Process (Sec. \ref{sec:hdp}).}
\label{tab:smd2}
\vspace{-0.1cm}
\end{table}

Combining Eq. \eqref{eq:eig_chain} and \eqref{eq:eig_der} yields the same back-propagation equation as in \cite{sminchisescu_matrix}. Table \ref{tab:smd2} shows closed-form expressions for Power Normalizations realized via operators in Table \ref{tab:smd}.

\begin{figure*}[t]
\centering
\vspace{-0.3cm}
\hspace{-0.3cm}
\begin{subfigure}[t]{0.248\linewidth}
\centering\includegraphics[trim=0 0 0 0, clip=true, height=\PowH]{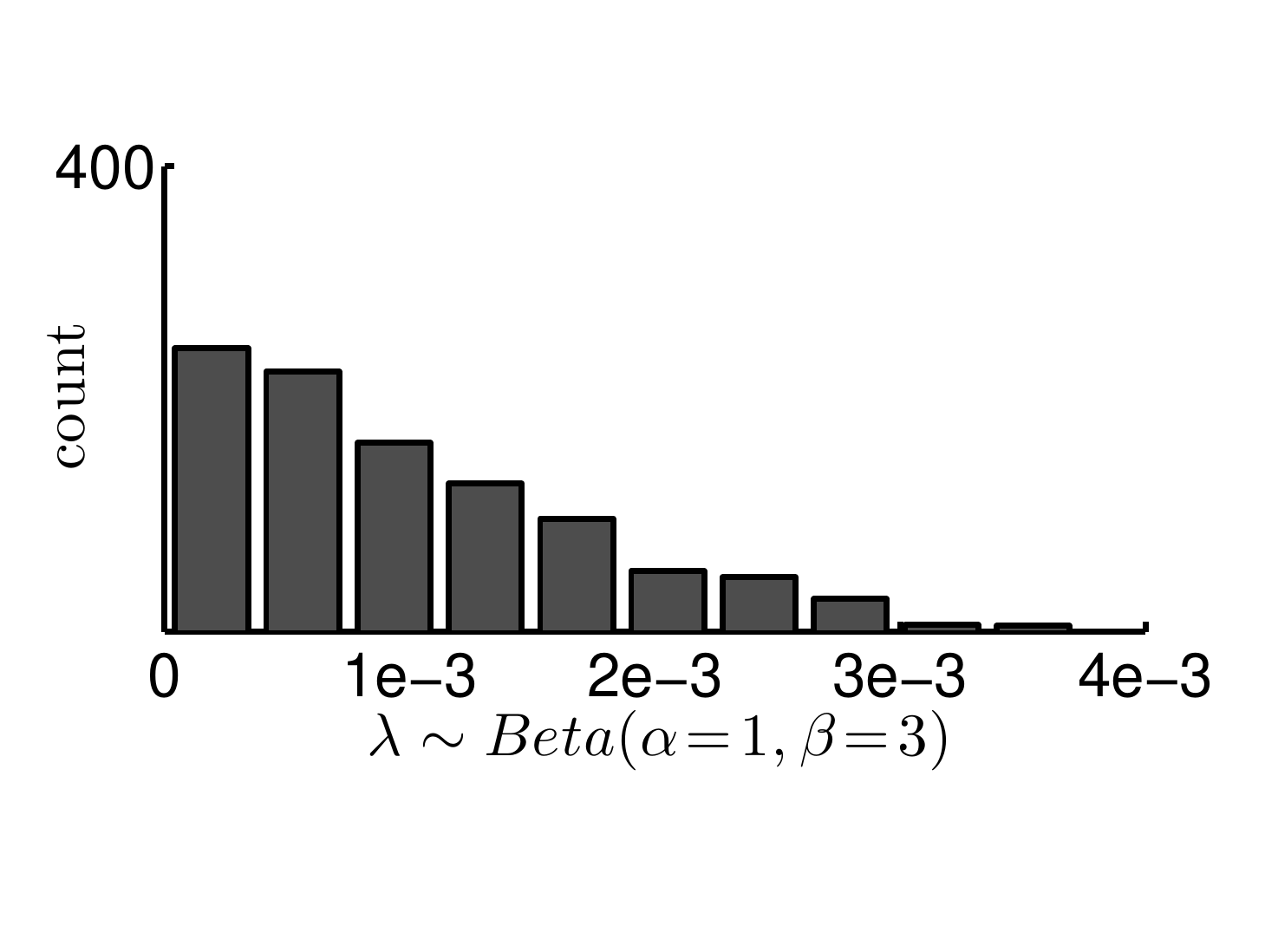}
\caption{Initial spectral dist.$\!\!\!\!\!\!\!\!$}\label{fig:dist1}
\end{subfigure}
\begin{subfigure}[t]{0.248\linewidth}
\centering\includegraphics[trim=0 0 0 0, clip=true, height=\PowH]{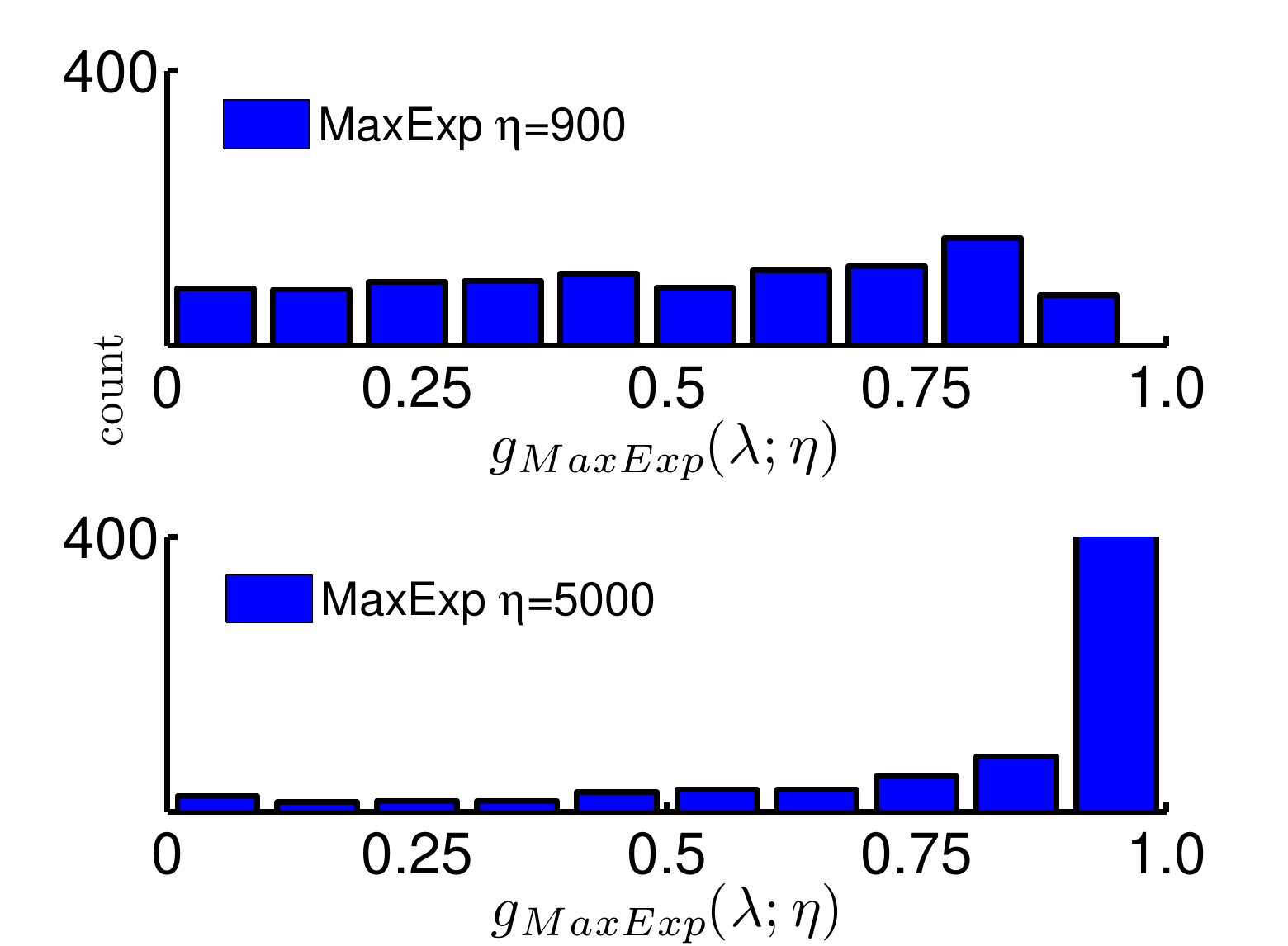}
\caption{Push-forward (MaxExp)$\!\!\!\!\!\!\!\!$}\label{fig:dist2}
\end{subfigure}
\begin{subfigure}[t]{0.248\linewidth}
\centering\includegraphics[trim=0 0 0 0, clip=true, height=\PowH]{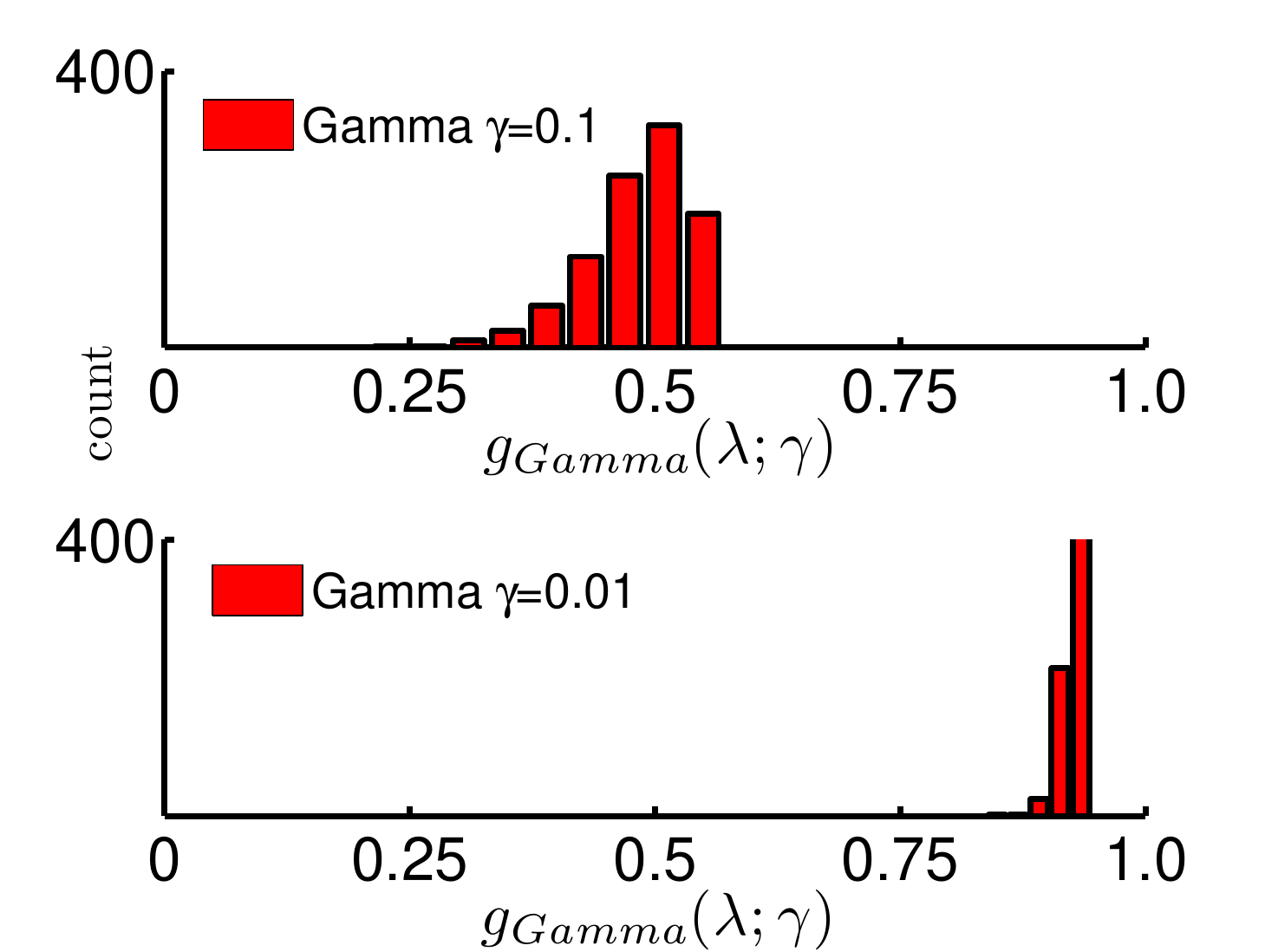}
\caption{Push-forward (Gamma)$\!\!\!\!\!\!\!\!$}\label{fig:dist3}
\end{subfigure}
\begin{subfigure}[t]{0.248\linewidth}
\centering\includegraphics[trim=0 0 0 0, clip=true, height=\PowH]{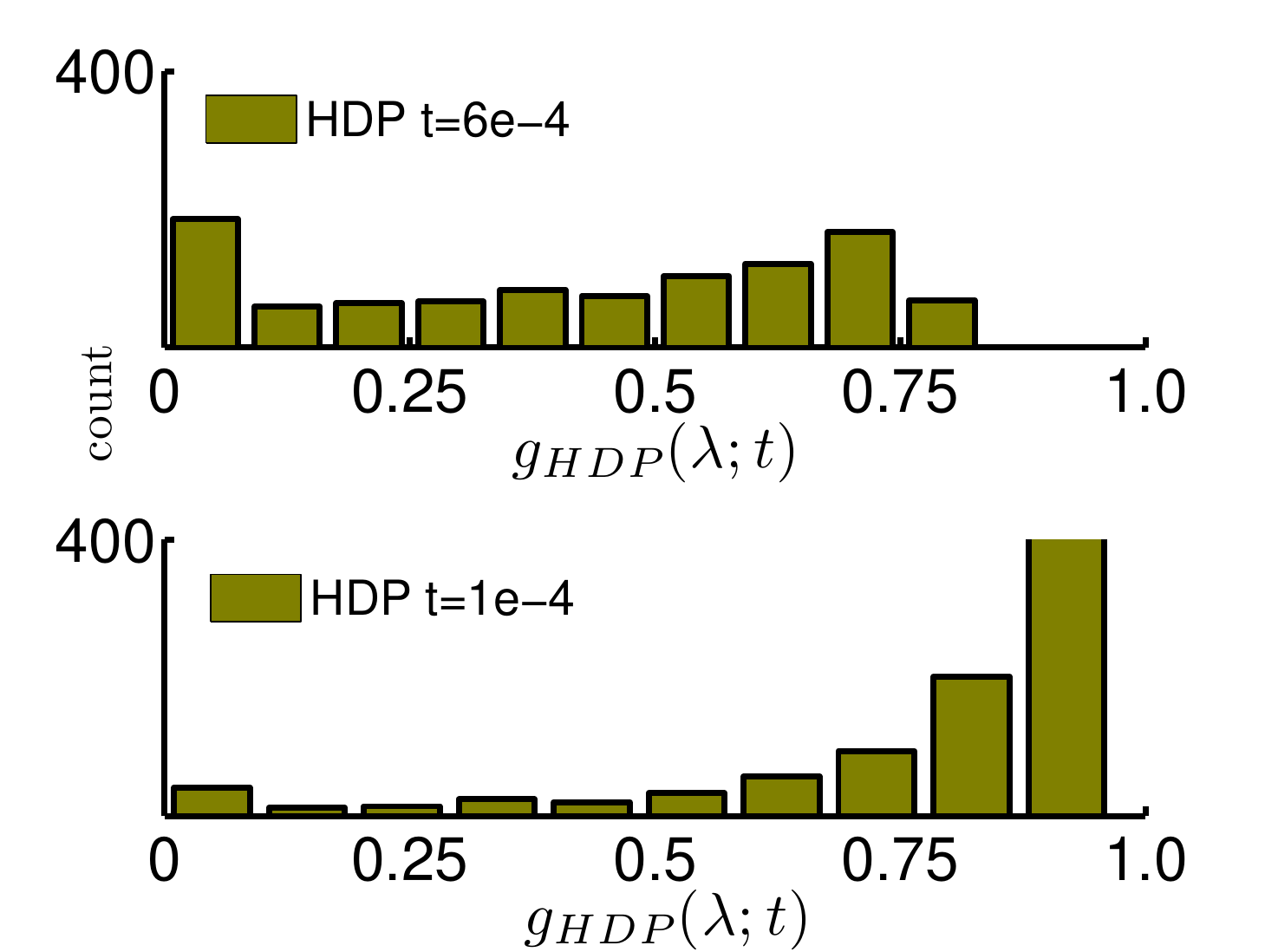}
\caption{Push-forward (HDP)$\!\!\!\!\!\!\!\!$}\label{fig:dist4}
\end{subfigure}
\caption{\revised{The intuitive principle of the SPN. Given a discrete eigenspectrum following a Beta distribution in Fig. \ref{fig:dist1}, 
the push-forward distributions of MaxExp and HDP in \cref{fig:dist2,fig:dist4} are very similar. For small $\gamma$, Gamma in Fig. \ref{fig:dist3} is also similar to MaxExp and HDP. Note that all three SPN functions whiten the spectrum (map the majority of values to be $\sim$1) thus reversing diffusion (acting as a spectral detector).}}
\vspace{-0.3cm}
\label{fig:epn-dist}
\end{figure*}

\vspace{-0.2cm}
\subsection{Fast Spectral MaxExp.}
\label{sec:fast_maxexp} 
\revisedd{
Below we present our culminating contribution: the fast spectral MaxExp. 
Running SVD is slow and back-prop. via SVD suffers from large errors as the spectral gap between eigenvalues narrows. Thus, a very recent trend in spectral pooling is to use Newton-Schulz iterations \cite{lin2017improved} to obtain a fast stable approximate matrix square root and its derivative, a special case of spectral Gamma with $\gamma\!=\!0.5$ which  is often argued to be a close-to-optimal parameter for Power Normalization. However, contradictory observations come from \cite{me_tensor,pk_maji} (see Table 1 in \cite{pk_maji}). To this end, we point that our spectral MaxExp has a nice property--its forward pass and its derivative can be computed fast for any $\eta\!\geq\!1$ with matrix-matrix multiplications (no SVD is needed). 

{
\footnotetext[1]{\label{foot:hdpcont}Application of HDP to fine-grained classification is also our contribution.}
}

\vspace{-0.1cm}
\begin{tcolorbox}[width=1.0\linewidth, colframe=blackish, colback=beaublue, boxsep=0mm, arc=3mm, left=1mm, right=1mm, right=1mm, top=1mm, bottom=1mm]
Without the loss of generality, if $\mM$ is trace-normalized then spectral MaxExp
\vspace{-0.4cm}
\begin{align}
& \;\;\mygthreeehat{MaxExp}{\mM}\!=\!\mIdent\!-\!\left(\mIdent\!-\!\mM\right)^\eta 
\label{eq:maxexp_spectral}
\end{align}

\vspace{-0.2cm}
\noindent{has} the closed-form derivative 
defined as:
\vspace{-0.1cm}
\begin{align}
& \!\!\!\!\frac{\partial\mygthreeehat{MaxExp}{\mM}}{\partial\ M_{kl}}\!=\!-\!\sum\limits_{n\!=\!0}^{{\eta}-1}\left(\mIdent\!-\!\mM\right)^n\!\mJ_{kl}\left(\mIdent\!-\!\mM\right)^{{\eta}-1-n}, 
\label{eq:binom_mat_der}
\end{align}

\vspace{-0.2cm}
\noindent{whose} detailed form given in Appendix \hyperref[{app:maxexp_fast_spec}]{N} scales linearly (runtime) \wrt $\eta$. However, evaluation time of MaxExp and its derivative can scale sublinearly \wrt $\eta$ given Algorithm \ref{code:exp_sqr}.
\end{tcolorbox}
\vspace{-0.3cm}
%

\vspace{0.15cm}
\noindent{\textbf{Forward pass.}} Given an integer $\eta\!\geq\!1$, computing $\left(\mIdent\!-\!\mM\right)^\eta$ has a lightweight complexity $\bigoh(m\log({\eta}))$, where $m$ is the cost of the matrix-matrix multiplication in exponentiation by squaring \cite{expsquare} whose cost is $\log({\eta})$. In contrast, the matrix square root via $k$ Newton-Schulz iterations has complexity $\bigoh(mk)$. Head-to-head, our subroutine performed $8$ matrix-matrix multiplications for $\eta\!=\!50$ (typically $20\!\leq\!\eta\!\leq\!80$ as in Fig. \ref{fig:eval4}). Newton-Schulz iter. performed $3k\!=\!60$ matrix-matrix mult. for $k\!=\!20$ set in \cite{lin2017improved}.

\vspace{0.05cm}
\noindent{\textbf{Backward pass.}} Given an integer $\eta\!\geq\!1$, the required powers of $\left(\mIdent\!-\!\mM\right)$ come from the forward-pass. Auto-differentiation runs  
along the recursion path of exponentiation by squaring whose complexity is $\bigoh(m\log({\eta}))$. In contrast, der. of the matrix square root via Newton-Schulz iterations has  complexity $\bigoh(mk)$ ($k\!\approx\!20$ \cite{lin2017improved}). Head-to-head, we required $11$ matrix-matrix multiplications ($3\!\times\!2$ and $5\!\times\!1$ for the derivative in line 4 and 8 respectively of Alg. \ref{code:exp_sqr}) for $\eta\!=\!50$. Newton-Schulz iter. require $6k\!=\!120$ matrix-matrix mult. for $k\!=\!20$. Memory-wise, MaxExp and Newton-Schulz iter. need to store  $\sim\!2\log{\eta}$ and $2k$ matrices, respectively. 

Finally, the complexity of an SVD is $\bigoh(d^\omega)$ with  $2\!<\!\omega\!<\!2.376$. 
Non-spectral PNs are faster by the order of magnitude.}

\begin{algorithm}[b]
\caption{\revisedd{Exponentiation by Squaring in Fast Spectral MaxExp}}
\label{code:exp_sqr}
\vspace{-0.2cm}
\begin{tcolorbox}[width=1.01\linewidth, colframe=blackish,colback=beaublue, boxsep=0mm, arc=3mm, left=1mm, right=1mm, right=1mm, top=1mm, bottom=1mm]
{\bf Input:} $\mM$ for a given forward pass, $\eta\!\geq\!1$
\begin{algorithmic}[1]
\State{$\mG_1\!=\!\mIdent,\; \mM^*_1\!\!=\!\mIdent\!-\!\mM,\; n\!=\!\text{int}(\eta),\;t\!=\!1,\;q\!=\!1$}
\while{ $n\!\neq\!0$:}
	\ifff{ $n\&1$:}
	\State $\mG_{t+1}\!=\!\mG_{t}\mM^*_{q},\;\;\frac{\partial\mG_{t+1}}{\partial\mM}\!=\!\frac{\partial\mG_{t}}{\partial\mM}\mM^*_{q}\!+\!\mG_{t}\frac{\partial\mM^*_{q}}{\partial\mM}$
	%
	%
	\State $n\!\gets\!n\!-\!1,\;t\!\gets\!t\!+\!1$
	\endifff
	\State{$n\!\gets\!\text{int}(n/2)$}
	\ifff{ $n\!>\!0$:}
	\State $\mM^*_{q+1}\!=\mM^*_{q}\!\mM^*_{q},\;\;\frac{\partial\mM_{q+1}}{\partial\mM}\!=2\sym\left(\frac{\partial\mM^*_{q}}{\partial\mM}\mM^*_{q}\right)$
	\State $q\!\gets\!q\!+\!1$
	\endifff
\endwhile
\end{algorithmic}
{\bf Output:} $\mygthreeehat{MaxExp}{\,\mM}\!=\!\mIdent\!-\!\mG_t$ and $\mygthreeephat{MaxExp}{\,\mM}=\!-\frac{\partial\mG_{t}}{\partial\mM}$
\end{tcolorbox}
\vspace{-0.2cm}
\end{algorithm}


\vspace{-0.2cm}
\subsection{Spectral MaxExp is a Time-reversed Heat Diffusion. }
\label{sec:hdp} 
\revisedd{Our key theoretical contribution below shows that the Heat Diffusion Process{\color{red}\footnotemark[1]} (HDP) on a graph Laplacian is closely related to the Spectral Power Normalization (SPN) of the autocorrelation/covariance matrix (see second/fourth main taxonomy branch of Fig. \ref{fig:whatwedo}), whose inverse forms a loopy graph Laplacian.} 
To this end, we firstly explain the relation between an autocorrelation/covariance and a graph Laplacian matrices. Subsequently, we establish that for HDP with $t\!<\!1$, MaxExp and Gamma functions are tight upper bounds of HDP  for some parametrizations $\eta(t)$ and $\gamma(t)$. Finally, we reconsider the system of Ordinary Differential Equations describing HDP and show that MaxExp and Gamma correspond to modified heat diffusion ODEs. 
Figure \ref{fig:epn-dist} shows that HDP, spectral MaxExp and Gamma play the same role \ie, they boost or dampen the magnitudes of the eigenspectrum to concentrate eigenvalues around a single peak, thus 
reversing the diffusion of signal in autocorrelation/covariance matrices.

\vspace{-0.1cm}
\begin{theorem}\label{th:heatdiff}
A structured multivariate Gaussian distribution with a covariance matrix $\mM$ is associated
with a weighted graph such that the precision matrix $\!\mQ\!\equiv\!\mM^{\dagger}\!\!$ corresponds to the loopy graph Laplacian \cite{GMRF-graph}.
Let $\mM\!$ be trace-normalized so that $0\!<\!\sum_i\lambda_i\!\leq\!1$,
and $
\mygthreeehat{MaxExp}{\mM; \eta}\!=\!\mU\diag(1\!-\!(1\!-\!\vec{\lambda})^\eta)\mU^T$ be our spectral MaxExp operator. 
Let $\mK(\mQ; t)\!=\!\expm(-t\mQ)$ with $t\!>\!0$ be the Heat Diffusion Process on the graph. 
Then $\forall{\eta}>1$, $\mygthreeehat{MaxExp}{\mM; \eta}$ can be well approximated by some $\mK(\mQ; t)$, $0\!<\!t\!<\!1$. Similarly, $\mK(\mQ; t)$ can approximate Gamma.
\end{theorem}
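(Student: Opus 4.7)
The plan is to reduce the matrix statement to a pointwise comparison of three scalar spectral transforms on $(0,1]$, and then exhibit explicit parameter maps $t(\eta)$ and $t(\gamma)$ under which these transforms coincide at an anchor point and bound HDP tightly elsewhere.

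First I would diagonalize. Writing $\mM=\mU\diag(\boldsymbol\lambda)\mU^T$ with $\lambda_i>0$ and $\sum_i\lambda_i\le 1$, the pseudoinverse $\mQ=\mM^\dagger$ shares eigenvectors with $\mM$ and has eigenvalues $1/\lambda_i$ on the support, so $\mK(\mQ;t)=\mU\diag(e^{-t/\lambda_i})\mU^T$ is simultaneously diagonal with $\mygthreeehat{MaxExp}{\mM;\eta}=\mU\diag(1-(1-\lambda_i)^\eta)\mU^T$ and with the spectral Gamma. The matrix-level claim therefore collapses to comparing the three scalar maps $f_{\text{ME}}(\lambda;\eta)=1-(1-\lambda)^\eta$, $f_{\Gamma}(\lambda;\gamma)=\lambda^\gamma$ and $f_{\text{HDP}}(\lambda;t)=e^{-t/\lambda}$ on $(0,1]$.

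Next I would establish the one-sided bounding relation sketched in Fig.~\ref{fig:pow5}. For MaxExp, the elementary inequality $(1-\lambda)^\eta \le e^{-\eta\lambda}$ (from $\log(1-\lambda)\le -\lambda$) yields $f_{\text{ME}}(\lambda;\eta)\ge 1-e^{-\eta\lambda}$, so it suffices to choose $t=t(\eta)\in(0,1)$ with $1-e^{-\eta\lambda}\ge e^{-t/\lambda}$ on $(0,1]$. Taking logarithms and exploiting the concavity of $\lambda\mapsto\log(1-e^{-\eta\lambda})$ reduces this to a single anchor condition, which I would pin at the inflection point $\lambda_\ast=t/2$ of $f_{\text{HDP}}$; the resulting equation gives a closed-form $t(\eta)$ with $0<t(\eta)<1$ for every $\eta>1$. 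For Gamma an analogous template applies: since $\lambda\mapsto -t/\lambda-\gamma\log\lambda$ is strictly convex on $(0,1]$, requiring its minimum to be nonnegative pins $t(\gamma)$ and gives $f_{\Gamma}(\lambda;\gamma)\ge f_{\text{HDP}}(\lambda;t(\gamma))$ globally with equality at the minimizer.

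The hard part will be quantifying ``well approximated'' rigorously, because $f_{\text{HDP}}$ is exponentially flat as $\lambda\to 0^+$ while $f_{\text{ME}}$ is linear there, so a uniform sup-norm bound on $(0,1]$ is impossible. I would therefore measure the approximation in a distributional sense, as hinted by Fig.~\ref{fig:epn-dist}: trace normalization forces the smallest eigenvalues to carry vanishing mass, and the step-above bounds then imply that the push-forward measures $\mu\circ f_{\text{ME}}^{-1}$, $\mu\circ f_{\Gamma}^{-1}$ and $\mu\circ f_{\text{HDP}}^{-1}$ of the empirical spectral measure $\mu=\sum_i\delta_{\lambda_i}$ are close in Wasserstein (or any $\ell^p$) distance. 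This is the operationally relevant notion of closeness for the downstream operator and matches the qualitative picture in Fig.~\ref{fig:epn-dist}.
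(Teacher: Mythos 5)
Your overall route is the same as the paper's: diagonalize, observe that $\mM$, $\mM^{\dagger}$ and hence $\mK(\mQ;t)$ share eigenvectors, reduce everything to comparing the scalar profiles $1-(1-\lambda)^\eta$, $\lambda^\gamma$ and $e^{-t/\lambda}$ on $(0,1]$, and then exhibit parametrizations $t(\eta)$, $t(\gamma)$ under which MaxExp and Gamma are tight upper bounds of HDP. Be aware that the paper's own proof of this particular theorem (Appendix E) is only an outline by ``visual inspection''; the quantitative content is deferred to Theorems \ref{th:param_bounds} and \ref{th:gamma_bounds} (Appendices F and G), and your proposal is really a sketch of those. Your Gamma argument essentially reproduces the paper's: the function $\phi(\lambda)=\gamma\log\lambda+t/\lambda$ has a unique interior minimum at $\lambda=t/\gamma$ (read off from the sign of $\phi'$), and forcing $\phi(t/\gamma)=0$ gives exactly $t(\gamma)=\gamma/e$. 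Note, however, that your stated justification is wrong: $-t/\lambda-\gamma\log\lambda$ is \emph{not} strictly convex on $(0,1]$ (its second derivative changes sign at $\lambda=2t/\gamma<1$); the argument survives only because the first derivative still has a single zero.

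The genuine gap is in the MaxExp bound. You minorize MaxExp by $1-e^{-\eta\lambda}$ and then claim that ``the concavity of $\lambda\mapsto\log(1-e^{-\eta\lambda})$ reduces this to a single anchor condition'' at the inflection point $\lambda_\ast=t/2$ of HDP. That reduction does not follow: $\log(1-e^{-\eta\lambda})$ and $\log(e^{-t/\lambda})=-t/\lambda$ are \emph{both} concave on $(0,1]$, so verifying the inequality at one point (or even matching tangents there) gives no global conclusion, and the resulting tangency system for your exponential intermediate is transcendental. This is precisely why the paper reparametrizes by $y=t/\lambda$ and sandwiches through the \emph{linear} function $(\tfrac{1}{e}-1)y+1$, which is an upper bound of $e^{-y}$ on $(0,1)$ and can be ``touched'' by MaxExp via a tangency system that solves in closed form, yielding $t(\eta)=\tfrac{e}{e-1}\tfrac{\eta^\eta}{(\eta+1)^{\eta+1}}$ together with the check that the touching point stays in $(0,1)$; Appendix F explicitly notes that the direct route you propose leads to LambertW-type equations. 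Finally, your claim that a uniform sup-norm bound is ``impossible'' is overstated: the gap cannot be driven to zero (the profiles differ qualitatively near the origin), but it is uniformly bounded by an explicit small constant, which is exactly what the paper's $\epsilon_1\le\epsilon_2$ estimates provide and what ``well approximated'' means here. Your Wasserstein/push-forward reformulation is a legitimate alternative notion of closeness (and matches Fig.~\ref{fig:epn-dist}), but it is not needed once the pointwise bounds are in place.
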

\begin{proof}
\vspace{-0.1cm}
Appendix \hyperref[{app:graph}]{E} is the proof. 
\end{proof}

\revisedd{\vspace{-0.2cm}The loopy graph \cite{GMRF-graph} Laplacian (see Fig. \ref{fig:whatwedo}) represents each feature (that corresponds to a given CNN filter)  by a node while  edges quantify the similarity between features. The loopy graph is  characterized by: (i) symmetric positive definite matrix (\cf semi-definite Laplacian) due to self-loops of each node, (ii) dense connectivity (each node may connect with other nodes), (iii) the underlying multivariate Gaussian distribution, (iv) structural design (\ie, we augment the autocorrelation matrix with spatial coordinates making some nodes location-specific).
}

As the time parameter $t$ is in the range $(0,1)$, we call 
$\mK(\mQ; t)$ a time-reversed Heat Diffusion Process. This means that rather than diffusing the heat between the nodes ($t\!>\!1$), the model reverses the process in the direction of the identity matrix, that is $\lim_{t\to 0}\mK(\mQ; t)\!\rightarrow\!\mIdent$, $\lim_{\eta\to \infty}\mygthreeehat{MaxExp}{\mM; \eta}\!\rightarrow\!\mIdent$ and $\lim_{\gamma\to 0}\mM^\gamma\!\rightarrow\!\mIdent$. This coincides with so-called eigenspectrum whitening which prevents burstiness \cite{koniusz2018deeper}.
As Theorem \ref{th:heatdiff} does not state any  approximation results, we  have following theorems.

\begin{theorem}\label{th:param_bounds}
\vspace{-0.1cm}
$\forall\eta\!>\!1$, $\exists{}t(\eta)$ such that 
MaxExp function is an upper bound of HDP: $1\!-\!(1\!-\!\lambda)^{\eta}\!\geq\!\exp(-t(\eta)/\lambda)$,
$\forall\lambda\!\in\![0,1]$,
and gaps $\epsilon_1$ and $\epsilon_2$ between these two functions
at $\lambda\!=\!t(\eta)$ and $\lambda\!=\!\frac{1}{\eta+1}$, where the auxiliary bound $(\frac{1}{e}\!-\!1)\frac{t(\eta)}{\lambda}\!+\!1$ (Appendix \hyperref[{app:pr_bound}]{F}) touches HDP and MaxExp as in Fig. \ref{fig:bounds-zoom} (supp. mat.), resp.,
satisfy:
%
\vspace{-0.2cm}
\begin{equation}
\fontsize{8}{9}\selectfont
\!\!\!\frac{e\!-\!1}{e}-\left(1\!-\!\frac{e}{e\!-\!1}\frac{\eta^\eta}{(\eta\!+\!1)^{\eta+1}}\right)^\eta\!\!\!\!=\! \epsilon_1\!\leq\!\epsilon_2\!=\!1\!-\! {\left(\frac{\eta}{\eta\!+\!1}\right)^\eta}\!\!\!-\, e^{\!\!\displaystyle-\frac{e}{e\!-\!1}\left(\frac{\eta}{\eta\!+\!1}\right)^\eta}\!\!\!,
\label{eq:bounds1}
\end{equation}
%
%
One possible parametrization $t(\eta)$ satisfying the above condition is:
\vspace{-0.2cm}
\begin{equation}
t(\eta)\!=\!\frac{e}{e\!-\!1}\frac{\eta^\eta}{(\eta\!+\!1)^{\eta+1}},
\label{eq:eta_par}
\vspace{-0.3cm}
\end{equation}
and conversely:
\vspace{-0.2cm}
%
\begin{equation}
\eta(t)\!\approx\!0.5\sqrt{1\!+\!4/(t^2(e\!-\!1)^2)}\!-\!0.5.
\label{eq:t_par}
\end{equation}
\end{theorem}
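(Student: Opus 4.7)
The plan is to prove the upper-bound inequality $1-(1-\lambda)^\eta \ge e^{-t(\eta)/\lambda}$ on $[0,1]$ by sandwiching both curves between the auxiliary hyperbola $L(\lambda) := 1 + (\tfrac{1}{e}-1)\,t(\eta)/\lambda$ referenced in the statement. I would first verify the two geometric facts that motivate the construction: $L$ is tangent to \textsc{MaxExp} at $\lambda^\star = 1/(\eta+1)$ (matching both value and slope), while $L$ meets \textsc{HDP} at $\lambda_\bullet = t(\eta)$ with the common value $1/e$. The tangency at $\lambda^\star$ is exactly what pins down the parametrization: enforcing $L(\lambda^\star) = 1-(\eta/(\eta+1))^\eta$ and $L'(\lambda^\star) = \eta(\eta/(\eta+1))^{\eta-1}$ yields the closed form $t(\eta)=\tfrac{e}{e-1}\tfrac{\eta^\eta}{(\eta+1)^{\eta+1}}$.

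The direction $L \le \textsc{MaxExp}$ reduces by rearrangement to the one-variable inequality $\lambda(1-\lambda)^\eta \le \eta^\eta/(\eta+1)^{\eta+1}$, which follows immediately since the left-hand side is a single-peak function maximized at $\lambda^\star$ with that exact maximal value. The direction $L \ge \textsc{HDP}$ I would prove only on $[t(\eta),1]$ using the substitution $u = t(\eta)/\lambda \in [t(\eta),1]$: this converts the inequality into $h(u) := (\tfrac{1}{e}-1)u + 1 - e^{-u} \ge 0$, which has $h(1)=0$ and whose sign is controlled by the single critical point $u^\circ = \ln\tfrac{e}{e-1}$ via elementary monotonicity. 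On the remaining initial interval $[0,t(\eta)]$ the hyperbola $L$ fails to dominate \textsc{HDP}, so I would switch to a direct comparison: \textsc{HDP} is uniformly bounded by $1/e$ there, while \textsc{MaxExp} reaches $1/e$ already at $\lambda_0(\eta)=1-((e-1)/e)^{1/\eta}$, and a short calculus check shows $\lambda_0(\eta)<t(\eta)$ for every $\eta>1$; combined with \textsc{MaxExp}$(\lambda)\ge \eta\lambda(1+o(1))$ near the origin versus the super-exponentially small \textsc{HDP}, this closes the gap on $[0,\lambda_0(\eta)]$.

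With the two-sided bound established, the gap formulas follow by direct substitution. Plugging $\lambda=t(\eta)$ makes \textsc{HDP} exactly $1/e$, giving $\epsilon_1=(e-1)/e-(1-t(\eta))^\eta$; plugging $\lambda=1/(\eta+1)$ gives \textsc{MaxExp}$=1-(\eta/(\eta+1))^\eta$, and the identity $t(\eta)(\eta+1)=\tfrac{e}{e-1}(\eta/(\eta+1))^\eta$ simplifies \textsc{HDP} to the stated exponential, producing $\epsilon_2$. The inequality $\epsilon_1 \le \epsilon_2$ is the non-routine step and the main obstacle: my approach is to reparametrize via $x=(\eta/(\eta+1))^\eta \in (1/e,1)$ so that both $\epsilon_1$ and $\epsilon_2$ become explicit functions of $x$ only (using $t(\eta)=\tfrac{e}{e-1}\tfrac{x}{\eta+1}$), then to show $\epsilon_2-\epsilon_1 \ge 0$ by monotonicity in $x$, checking the boundary behaviour $\eta\to 1^+$ (where both gaps shrink to small positive values) and $\eta\to\infty$ (where both tend to zero at comparable rates). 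I expect this monotonicity analysis to be the most delicate part and may require a convexity argument on the auxiliary function $x\mapsto 1-x-e^{-ex/(e-1)}$.

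Finally, the inverse parametrization $\eta(t)$ is obtained by a controlled Stirling-type asymptotic: writing $(\eta/(\eta+1))^\eta = e^{-\eta/(\eta+1) + O(1/\eta^2)}$ and absorbing the subleading correction, one obtains $t(\eta) \approx 1/\big((e-1)\sqrt{\eta(\eta+1)}\big)$. Squaring yields the quadratic relation $t^2(e-1)^2\,\eta(\eta+1) \approx 1$, whose positive root is precisely $\eta(t) \approx 0.5\sqrt{1+4/(t^2(e-1)^2)}-0.5$; I would include a brief estimate of the approximation error to confirm the formula is tight in the regime $\eta\gg 1$, which is the regime relevant to the spectral \textsc{MaxExp} operator.
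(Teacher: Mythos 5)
Your architecture is the same as the paper's Appendix F: substitute $y=t/\lambda$, sandwich both curves with the chord $(\tfrac{1}{e}-1)y+1$ of $e^{-y}$ on $[0,1]$, fix $t(\eta)$ by the tangency of that line with MaxExp (which lands at $\lambda=1/(\eta+1)$ and gives exactly Eq.~\eqref{eq:eta_par}), read off $\epsilon_1,\epsilon_2$ at the two touching points, and invert $t(\eta)$ by a Stirling-type expansion to get Eq.~\eqref{eq:t_par}. Two of your sub-arguments are genuinely cleaner than the paper's. First, you prove the direction $L\le{}$MaxExp \emph{globally} by rearranging it into $\lambda(1-\lambda)^\eta\le\eta^\eta/(\eta+1)^{\eta+1}$ and invoking unimodality of $\lambda(1-\lambda)^\eta$; the paper only imposes the local tangency system and never checks that the line stays below MaxExp away from the tangency point. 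Second, you correctly observe that the chord dominates HDP only for $y\le1$, i.e.\ $\lambda\ge t(\eta)$, and you supply a separate argument on $[0,t(\eta)]$; the paper verifies only that the tangency ordinate $y(\eta)$ lies in $(0,1)$ and leaves $\lambda\le t(\eta)$ uncovered, so your extra step is actually required to obtain the claimed ``$\forall\lambda\in[0,1]$''.

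Two of your steps, however, are not yet proofs. On $[0,\lambda_0(\eta)]$ the phrase ``MaxExp$(\lambda)\ge\eta\lambda(1+o(1))$ near the origin versus the super-exponentially small HDP'' is an asymptotic statement at $\lambda\to0$, not a bound over a fixed interval; note also that the naive bound $1-(1-\lambda)^\eta\ge\lambda$ does \emph{not} close this case for large $\eta$, since it requires $t(\eta)\ge\lambda\ln(1/\lambda)$ on $[0,\lambda_0]$ and $\max_{\lambda\le\lambda_0}\lambda\ln(1/\lambda)\sim(\ln\eta)/\eta$ eventually exceeds $t(\eta)\sim 1/((e-1)\eta)$. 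You need the sharper bound $1-(1-\lambda)^\eta\ge1-e^{-\eta\lambda}\ge c\,\eta\lambda$ with an explicit constant $c$ valid up to $\eta\lambda\le\eta\lambda_0\le\ln\tfrac{e}{e-1}$, after which the comparison with $e^{-t/\lambda}$ does go through. Separately, your reduction of $\epsilon_1\le\epsilon_2$ to a one-variable monotonicity statement in $x=(\eta/(\eta+1))^\eta$ does not work as stated: $\epsilon_2=1-x-e^{-ex/(e-1)}$ is a function of $x$ alone, but $\epsilon_1=\tfrac{e-1}{e}-(1-t(\eta))^\eta$ retains genuine $\eta$-dependence beyond $x$, so you need either a two-variable argument or a further bound on $(1-t(\eta))^\eta$ before comparing. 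For what it is worth, the paper never proves $\epsilon_1\le\epsilon_2$ either---it only derives the two closed forms---so this is a gap you share with, rather than inherit from, the original.
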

\begin{proof}
\vspace{-0.1cm}
Appendix \hyperref[{app:pr_bound}]{F} is the proof. 
\vspace{-0.1cm}
\end{proof}

\begin{theorem}\label{th:gamma_bounds}
Gamma function is an upper bound of HDP, that is $\lambda^{\gamma(t)}\!\geq\!\exp(-t/\lambda)$, and there exist a direct point other than at $\lambda\!=\!0$ where Gamma and HDP touch. Moreover, the corresponding parametrizations are $\gamma(t)\!=\!et$ and $t(\gamma)\!=\!e^{-1}\gamma$.
\end{theorem}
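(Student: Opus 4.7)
\textbf{Proof plan for Theorem \ref{th:gamma_bounds}.} The plan is to reduce the desired inequality $\lambda^{\gamma}\geq \exp(-t/\lambda)$ on $\lambda\in(0,1]$ to a one-variable extremal problem and then read off both the tight parameter relation and the location of the tangent point. First I would take the natural logarithm of both sides (both are strictly positive for $\lambda\in(0,1]$) to obtain the equivalent inequality $\gamma\log\lambda\geq -t/\lambda$, and multiply through by $\lambda>0$ to get the symmetric form
\begin{equation}
-\gamma\,\lambda\log\lambda \;\leq\; t \quad \text{for all } \lambda\in(0,1].
\label{eq:gamma_plan_reduction}
\end{equation}
Thus the inequality $\lambda^{\gamma}\geq e^{-t/\lambda}$ holds on $(0,1]$ if and only if $t$ dominates the maximum of $h(\lambda)\!=\!-\gamma\lambda\log\lambda$ on $(0,1]$ (with $h(0^+)\!=\!0$ understood by continuity).

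The second step is a standard calculus computation. Setting $h'(\lambda)\!=\!-\gamma(\log\lambda+1)\!=\!0$ yields the unique interior critical point $\lambda^{*}\!=\!1/e$, and since $h(0^+)\!=\!h(1)\!=\!0$ with $h>0$ on $(0,1)$, this is a global maximum with value $h(\lambda^{*})\!=\!\gamma/e$. Substituting into \eqref{eq:gamma_plan_reduction}, the inequality is valid for every $\lambda\in(0,1]$ precisely when $t\geq \gamma/e$, and it is tight (upper bound touches HDP at an interior point) exactly when
\begin{equation}
t \;=\; \gamma/e, \qquad \text{i.e.}\quad \gamma(t)=et, \quad t(\gamma)=e^{-1}\gamma,
\label{eq:gamma_plan_param}
\end{equation}
which are the parametrizations claimed in the theorem.

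Finally I would verify the existence of the direct touching point at $\lambda\neq 0$. Under \eqref{eq:gamma_plan_param} and at $\lambda^{*}\!=\!1/e$ one has $\lambda^{\gamma}\!=\!e^{-\gamma}$, while $\exp(-t/\lambda)\!=\!\exp(-et\cdot 1)\!=\!e^{-\gamma}$, so the two curves meet at $\lambda^{*}\!=\!1/e$ with value $e^{-\gamma}$; strict inequality elsewhere follows since $\lambda^{*}$ is the unique maximizer of $h$. A brief check that this is genuinely a tangency (equal first derivatives at $\lambda^{*}$) can be made by differentiating $\log(\lambda^{\gamma})$ and $\log(e^{-t/\lambda})$ at $\lambda^{*}\!=\!1/e$: both give slope $\gamma\,e$, confirming the two curves are tangent rather than merely crossing. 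There is no real obstacle here; the only subtlety worth stating carefully is the continuous extension of the inequality to $\lambda\!=\!0$ (both sides vanish there), which follows from $\lim_{\lambda\to 0^+}\lambda\log\lambda=0$. The argument is in full analogy with Theorem \ref{th:param_bounds}, but simpler because the extremal problem admits a closed-form maximizer.
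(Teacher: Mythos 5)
Your proposal is correct, and it takes a somewhat different route from the paper's. The paper's proof (Appendix G) substitutes $y=t/\lambda$ and solves the two-equation tangency system $\{e^{-y}=(t/y)^{\gamma},\ \tfrac{\partial}{\partial y}e^{-y}=\tfrac{\partial}{\partial y}(t/y)^{\gamma}\}$, finding the candidate $y=\gamma$ and then $\gamma(t)=et$ from the value-matching condition; it establishes \emph{where} the curves touch but does not explicitly verify that the inequality $\lambda^{\gamma}\geq e^{-t/\lambda}$ holds globally on $(0,1]$ (a tangency alone does not rule out the bound failing elsewhere). You instead log-transform the inequality and reduce it to the one-variable extremal problem $t\geq\max_{\lambda\in(0,1]}\bigl(-\gamma\lambda\log\lambda\bigr)$, whose maximizer $\lambda^{*}=1/e$ and maximum $\gamma/e$ give both the tight parametrization $t=\gamma/e$ and the touching point in one stroke; the boundary behavior $\lim_{\lambda\to0^{+}}\lambda\log\lambda=0$ and the explicit first-derivative check at $\lambda^{*}$ then confirm the global upper-bound property and genuine tangency. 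What your approach buys is a complete proof of the ``upper bound'' claim rather than only the touching condition, at essentially no extra cost; what the paper's substitution $y=t/\lambda$ buys is uniformity with the (harder) MaxExp case of Theorem~\ref{th:param_bounds}, where the analogous extremal problem has no closed form and an intermediate linear bounding function is needed. Both derivations agree on $\gamma(t)=et$, $t(\gamma)=e^{-1}\gamma$, and the touching point $\lambda^{*}=1/e$ (equivalently $y=\gamma$ in the paper's variable).
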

\vspace{-0.1cm}
\begin{proof}
\vspace{-0.1cm}
Appendix \hyperref[{app:gamma_bound}]{G} is the proof. 
\vspace{-0.1cm}
\end{proof}

\Cref{th:param_bounds,th:gamma_bounds} give us a combined tighter bound $\min(1\!-\!(1\!-\!\lambda)^{\eta(t)},\lambda^{\gamma(t)})\!\geq\!\exp(-t/\lambda)$. \Cref{th:param_bounds,th:gamma_bounds} can now be used in the following theorems connecting MaxExp and Gamma to the Heat Diffusion Equation (HDE) \cite{heat_eqq} which is the system of the Ordinary Differential Equations describing HDP. The HDE is given as:
\vspace{-0.2cm}
\begin{equation}
\frac{\partial\vv(t)}{\partial t}+\mL\vv(t)=0,
\label{eq:heat_eq}
\end{equation}
where vector $\vv(t)\in\mbr{d}$ describes some heat quantity of graph nodes at a time $t$, where $\mL\!\in\!\semipd{d}$ (or $\spd{}$) is the graph Laplacian (or the loopy graph Laplacian).

\begin{theorem}\label{th:maxexp_ode}
MaxExp can be expressed as a modified Heat Diffusion Equation, where the largest eigenvalue of $\mM$ is assumed to be $\lambda_{max}\!\leq\!1$, and  $\mM\!=\!\mL^\dag$, thus we have:
\begin{equation}
\frac{\partial\vv(t)}{\partial t}+\frac{\partial\eta}{\partial t}\logm\left(\mIdent\!-\!\mM\right)\left(1-\vv(t)\right)=0.
\label{eq:heat_maxexp}
\end{equation}
\end{theorem}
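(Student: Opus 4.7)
The plan is to reduce the claim to a scalar calculation on the eigenvalues of $\mM$, where MaxExp acts diagonally, and then lift the resulting ODE back to matrix form. Concretely, set $\vv(t)\!=\!\mygthreeehat{MaxExp}{\mM;\eta(t)}\vv_0$ with the initial vector $\vv_0\!=\!\vOnes$, so that the ``$1$'' appearing on the right-hand side of the target equation is read as $\vOnes$ and all matrix functions of $\mM$ commute with one another (they share the eigenbasis $\mU$ from $\mM\!=\!\mU\mLambda\mU^T$).

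First I would verify the statement eigenvalue-wise. For each eigenvalue $\lambda_i$ of $\mM$, MaxExp produces $f_i(\eta)\!=\!1\!-\!(1\!-\!\lambda_i)^{\eta}$, so differentiating in $\eta$ gives $\partial f_i/\partial\eta\!=\!-(1\!-\!\lambda_i)^{\eta}\log(1\!-\!\lambda_i)\!=\!-(1\!-\!f_i)\log(1\!-\!\lambda_i)$, where I used the algebraic identity $(1\!-\!\lambda_i)^{\eta}\!=\!1\!-\!f_i$. Promoting $\eta$ to a smooth function $\eta(t)$ and applying the chain rule $\partial f_i/\partial t\!=\!(\partial f_i/\partial\eta)(\mathrm{d}\eta/\mathrm{d}t)$ yields the scalar ODE $\partial f_i/\partial t\!+\!(\mathrm{d}\eta/\mathrm{d}t)\log(1\!-\!\lambda_i)(1\!-\!f_i)\!=\!0$, which has precisely the structure sought in the theorem.

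Next I would assemble the diagonal ODEs into a matrix ODE. Since $\log(\mIdent\!-\!\mM)$, $(\mIdent\!-\!\mM)^{\eta(t)}$ and $\mygthreeehat{MaxExp}{\mM;\eta(t)}$ all admit the spectral representation $\mU\,\mathrm{diag}(\cdot)\,\mU^T$ with the same $\mU$, the componentwise identities aggregate into
\begin{equation*}
\tfrac{\partial\vv(t)}{\partial t}\!=\!-\tfrac{\mathrm{d}\eta}{\mathrm{d}t}\,\log(\mIdent\!-\!\mM)\,(\mIdent\!-\!\mM)^{\eta(t)}\vv_0\!=\!-\tfrac{\mathrm{d}\eta}{\mathrm{d}t}\,\log(\mIdent\!-\!\mM)\,(\vv_0\!-\!\vv(t)),
\end{equation*}
because $\vv_0\!-\!\vv(t)\!=\!(\mIdent\!-\!\mM)^{\eta(t)}\vv_0$ by the definition of $\mygthreeehat{MaxExp}{\mM;\eta(t)}$. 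Rewriting $\vv_0\!=\!\vOnes$ as ``$1$'' reproduces \eqref{eq:heat_maxexp} exactly. Comparison with the standard HDE \eqref{eq:heat_eq} shows that the modification consists in replacing $\mL\vv$ by $(\mathrm{d}\eta/\mathrm{d}t)\log(\mIdent\!-\!\mM)(\vOnes\!-\!\vv)$, \ie, the diffusion operator is $-\log(\mIdent\!-\!\mM)$ (which is positive semi-definite since the eigenvalues of $\mIdent\!-\!\mM$ lie in $[0,1]$) acting on the complementary state $\vOnes\!-\!\vv$, with a time-dependent rate $\mathrm{d}\eta/\mathrm{d}t$ that can be read off from the parametrizations \eqref{eq:eta_par}--\eqref{eq:t_par} of Theorem~\ref{th:param_bounds}. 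The connection to $\mL\!=\!\mM^{\dagger}$ is inherited from Theorem~\ref{th:heatdiff}, which already ties the covariance/precision duality to the loopy graph Laplacian.

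The only technical obstacle is the well-definedness of $\log(\mIdent\!-\!\mM)$: the assumption $\lambda_{\max}\!\leq\!1$ guarantees $\mIdent\!-\!\mM\!\succeq\!\mathbf{0}$, but equality $\lambda_i\!=\!1$ would produce a logarithmic singularity. I would therefore state the result under the strict condition $\lambda_{\max}\!<\!1$ (which is enforced in practice by the trace-normalization used throughout Section~\ref{sec:spns_main} combined with $d\!>\!1$), or alternatively work on the active subspace where $1\!-\!\lambda_i\!>\!0$ and extend by continuity; both routes are straightforward and do not affect the algebraic identities above.
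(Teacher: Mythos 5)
Your proposal is correct and follows essentially the same route as the paper's Appendix H: diagonalize, differentiate $1-(1-\lambda)^{\eta(t)}$ in $t$ via the chain rule through $\eta(t)$, use the identity $(1-\lambda)^{\eta(t)}=1-v(t)$ to obtain the scalar ODE, and lift it back to matrix form in the shared eigenbasis (the paper merely phrases the covariance eigenvalue as $\lambda^{-1}$ of the Laplacian before substituting $\lambda^*\!=\!\lambda^{-1}$, which is cosmetic). Your added remarks on $\vv_0\!=\!\vOnes$ and on requiring $\lambda_{\max}\!<\!1$ for $\logm(\mIdent\!-\!\mM)$ to be finite are sensible refinements not spelled out in the paper.
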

\begin{proof}
\vspace{-0.1cm}
Appendix \hyperref[{app:maxexp_ode}]{H} is the proof. 
\vspace{-0.1cm}
\end{proof}

\begin{theorem}\label{th:gamma_ode}
\vspace{-0.1cm}
Gamma can be expressed as a modified Heat Diffusion Equation:
\vspace{-0.5cm}
\begin{equation}
\frac{\partial\vv(t)}{\partial t}+e\logm(\mL)\vv(t)=0,
\label{eq:heat_gamma}
\end{equation}
where $\logm(\mL)$ is a Log-Euclidean map of the (loopy) graph Laplacian \ie, $\mL\!=\!\mM^\dag$. Thus, Gamma is equal to HDP on a Log-Euclidean map of the loopy graph Laplacian, a theoretical connection between Power-Euclidean and Log-Euclidean metrics.
\end{theorem}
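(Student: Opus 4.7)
My plan is to mimic the strategy used (presumably) for Theorem~\ref{th:maxexp_ode}: reverse-engineer the ODE by differentiating a candidate solution of the form $\vv(t)=\mM^{\gamma(t)}\vv_0$, then rewrite the resulting coefficient in terms of $\mL=\mM^\dag$ so that the equation takes the stated heat-diffusion form, and finally recognize the outcome as an honest HDE driven by the Log-Euclidean map $\logm(\mL)$.

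Concretely, I would first invoke Theorem~\ref{th:gamma_bounds} to fix the parametrization $\gamma(t)=et$, so that the Gamma operator at ``time'' $t$ acts on an initial datum as $\vv(t)\!=\!\mM^{\gamma(t)}\vv_0\!=\!\mM^{et}\vv_0$. Next I would differentiate in $t$ using the matrix identity $\tfrac{d}{dt}\mM^{\gamma(t)}\!=\!\gamma'(t)\logm(\mM)\mM^{\gamma(t)}$ (valid on commuting spectral calculus for an SPD $\mM$, since $\logm(\mM)$ and $\mM^{\gamma(t)}$ share eigenvectors), which yields
\begin{equation}
\frac{\partial\vv(t)}{\partial t}\;=\;e\,\logm(\mM)\,\vv(t).
\end{equation}
The last step is purely algebraic: because $\mM$ is assumed SPD (the loopy-Laplacian setting from Theorem~\ref{th:heatdiff}), the pseudoinverse coincides with the inverse, so $\mL\!=\!\mM^\dag\!=\!\mM^{-1}$ and consequently $\logm(\mL)\!=\!-\logm(\mM)$. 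Substituting this into the display above rearranges to
\begin{equation}
\frac{\partial\vv(t)}{\partial t}+e\,\logm(\mL)\,\vv(t)=0,
\end{equation}
which is exactly Eq.~\eqref{eq:heat_gamma}.

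To close with the geometric interpretation, I would compare this ODE to the standard HDE~\eqref{eq:heat_eq}: replacing the generator $\mL$ by $e\,\logm(\mL)$ turns HDP into $\expm(-et\logm(\mL))\vv_0=\mL^{-et}\vv_0=\mM^{et}\vv_0=\mM^{\gamma(t)}\vv_0$, confirming that the Gamma operator is precisely HDP on the Log-Euclidean map of the loopy Laplacian, which also justifies calling this bridge a connection between Power-Euclidean and Log-Euclidean geometry.

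The only subtle point (and the main potential obstacle) is the differentiation step: one must be sure that $\tfrac{d}{dt}\mM^{et}=e\logm(\mM)\mM^{et}$ without any ordering issues. This is clean here because $\mM$ (and hence $\mM^{et}$ and $\logm(\mM)$) are simultaneously diagonalizable, so the scalar identity $\tfrac{d}{dt}\lambda^{et}=e(\log\lambda)\lambda^{et}$ lifts eigenvalue-wise. One should also note the domain caveat: the Log-Euclidean map $\logm(\mL)$ requires $\mL$ (equivalently $\mM$) to be positive definite; in the rank-deficient case one restricts to the range of $\mM$, where $\mM^\dag$ again plays the role of the inverse. Once these technicalities are flagged, the derivation is a one-line differentiation plus the identity $\logm(\mM^\dag)=-\logm(\mM)$.
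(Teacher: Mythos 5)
Your proposal is correct and follows essentially the same route as the paper's Appendix I: fix $\gamma(t)=et$ from Theorem~\ref{th:gamma_bounds}, differentiate the Gamma-parametrized solution in $t$ (eigenvalue-wise, using $\tfrac{d}{dt}\lambda^{et}=e\log(\lambda)\lambda^{et}$), and identify the generator as $e\logm(\mL)$ via $\logm(\mM^\dag)=-\logm(\mM)$. The only cosmetic difference is that the paper works scalar-by-scalar in the Laplacian eigenbasis and solves for a correction factor $f(\lambda)$ in the modified HDE, whereas you differentiate the candidate solution directly in matrix form — the underlying computation is identical.
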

\begin{proof}
\vspace{-0.1cm}
Appendix \hyperref[{app:gamma_ode}]{I} is the proof. 
\vspace{-0.1cm}
\end{proof}

\revisedd{
\vspace{0.05cm}
\noindent{\textbf{Fast approximate HDP.}} For completeness of our theoretical discussions, we parametrize  MaxExp and Gamma \wrt time to devise HDP whose runtime scales sublinearly with $t$. MaxExp and Gamma are upper bounds of HDP and they both can approximately realize the time-reversed ($t\!\in\!(0,1)$) and time-forward ($t\!\geq\!1$) HDP. MaxExp in the time-reversed regime (and its derivative) can be evaluated very fast for integers $\eta\!\geq\!1$ via matrix-matrix multiplications (Sec. \ref{sec:fast_maxexp}, Alg. \ref{code:exp_sqr}). 
Moreover, Gamma $\mM^\gamma$ in the time-forward regime can be evaluated very fast for integers $\gamma\!\geq\!1$ via matrix-matrix multiplications. While the evaluation time of the derivative in Appendix \hyperref[{app:gamma_fast_spec}]{O} scales linearly \wrt $\gamma\!\geq\!1$, Gamma and its derivative can be computed in the sublinear time by modified Alg. \ref{code:exp_sqr} (modify  $\mM^*_1\!:=\!\mM$ (line 1), $\mygthreeehat{Gamma}{\,\mM}\!:=\!\mG_t$ and $\mygthreeephat{Gamma}{\,\mM}:=\frac{\partial\mG_{t}}{\partial\mM}$ (output), replace variable $\eta$ with $\gamma$). 
%
\begin{tcolorbox}[width=1.0\linewidth, colframe=blackish, colback=beaublue, boxsep=0mm, arc=3mm, left=1mm, right=1mm, right=1mm, top=1mm, bottom=1mm]
FAHDP uses MaxExp/Gamma for the fast approximation of the time-reversed/time-forward HDP, respectively. For the trace-normalized $\mM$, we have{\color{red}\footnotemark[2]}:
\vspace{-0.3cm}
\begin{align}
& \;\mygthreeehat{FAHDP}{\mM;t}\!=\!e^{-t}\!\cdot\!
\begin{cases}
\begin{array}{@{}cl}
\text{\fontsize{8}{9}\selectfont $\mIdent\!-\!\left(\mIdent\!-\!\mM\right)^{\hbar_t(\widetilde{\eta}(t))}$} & \!\!\!\!\!\text{\fontsize{8}{9}\selectfont if $t\!<\!1$}\\
\text{\fontsize{8}{9}\selectfont $\mM^{\hbar_t(\bar{\gamma}(t))}$} & \!\!\!\!\!\text{\fontsize{8}{9}\selectfont if $t\!\geq\!1$.}
\end{array}
\end{cases}\!\!\!\!\!\!\!\!\!\!\!\!
\label{eq:fahdp}
\end{align}
\end{tcolorbox}
\vspace{-0.1cm}
%
\footnotetext[2]{\label{foot:anypn}For the time-reverse regime of FAHDP, one could use $\mM^{\hat{\gamma}(t)},\,\hat{\gamma}(t)\!<\!1$, $0\!<\!t\!<\!1$ in place of $\mIdent\!-\!\left(\mIdent\!-\!\mM\right)^{\hbar_t(\widetilde{\eta}(t))}$ \eg, $\mM^{0.5}$ reverses $\mM^{2}$. However, Gamma and its back-propagation are slow to compute if $0\!<\!t\!<\!1$.}

\noindent{In} the above eq., the scaling factor $e^{-t}$ ensures the Fast Approximate HDP (FAHDP) and HDP have the same magnitude at $\lambda\!=\!1$ (trace-normalized $\mM$ has eigenvalues $0\!\leq\!\lambda\!\leq\!1$). Note that for $\lambda\!=\!1$ and $t\!\geq\!\log(10/9)\!\approx\!-0.105$, HDP ($e^{-t/\lambda}$) yields $e^{-t}\!\leq\!0.9$ (or less for larger $t$) while MaxExp ($1\!-\!(1\!-\!\lambda)^\eta$) and Gamma ($\lambda^\gamma$) yield $1$ for $\lambda\!=\!1$. 
The scaling factor has no impact on classification results as it is a constant that depends on $t$ fixed throughout an experiment. However, $e^{-t}$ makes FAHDP and HDP visually similar. Thus, we reparametrize $\eta$ and $\gamma$ of MaxExp and Gamma by $\widetilde{\eta}(t)\!=\!0.5\!+\!\sqrt{0.25\!+\!\left(1/(t(e\!-\!1))\!-\!1/e\right)^2}$ and $\bar{\gamma}(t)\!=\!t$. Appendix \hyperref[{app:fahdp_der}]{P} contains proofs and further expansions. 
Finally, $\hbar_t(\cdot)$ may be the `round' function to ensure that MaxExp/Gamma receive an integer parameter (for which forward/backward steps run fast). If $\hbar_t(x)\!=\!x$ or $\hbar_t(x)\!=\!\left\lceil{x}\right\rceil$ for $t\!<\!1$ and $\hbar_t(x)\!=\!\left\lfloor{x}\right\rfloor$ for $t\!\geq\!1$, FAHDP is an upper bound of HDP on $t\!\in\!(0,\infty)$.}

\vspace{0.05cm}
\noindent{\textbf{Discussion.}} We have shown in Theorems \ref{th:param_bounds} and \ref{th:gamma_bounds} that MaxExp and Gamma are tight upper bounds of time-reversed HDP thus having the same role as HDP. We note that MaxExp and Gamma do not require an inversion of spectrum thus they are natural choices for autocorrelation/covariance matrices while HDP is a natural choice for the graph Laplacian matrix. Moreover, in Theorems \ref{th:maxexp_ode} and \ref{th:gamma_ode}, we are the first to cast MaxExp and Gamma in the form of modified differential heat equations well-known for HDP. For instance, Gamma is equal to HDP on a Log-Euclidean map of the graph Laplacian, a first concrete result of this kind made in the literature showing how Gamma and HDP relate. The time reversal to a desired state is achieved by setting $t\!<\!1$ of HDP which simply redistributes the heat back to individual graph nodes \eg, all nodes become disconnected in the extreme case. Thus, applying MaxExp and Gamma to autocorrelation matrices has a similar effect, that is, it reduces the level of correlation between features. Thus, some features  will remain `untouched' or will be modified to a lesser degree during fine-tuning if they are not related to a new task. 
We believe this is a very useful property that reduces catastrophic forgetting during fine-tuning. As it reduces the correlation between features, it should also implicitly decorrelate CNN filters.
}



\begin{figure}[t]
\vspace{-0.3cm}
\centering
\begin{subfigure}[t]{0.62\linewidth}
\centering\includegraphics[trim=0 0 0 0, clip=true, height=2.7cm]{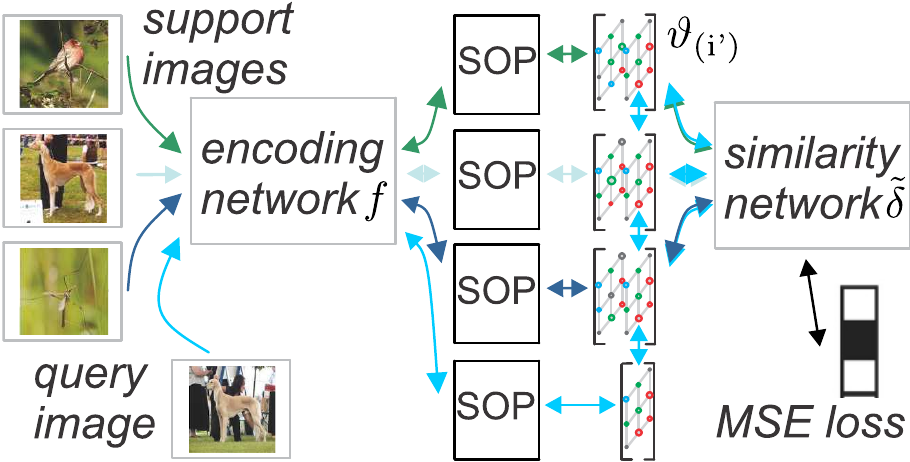}
\caption{\label{fig:princ2}}
\end{subfigure}
\begin{subfigure}[t]{0.36\linewidth}
\centering\includegraphics[trim=0 0 0 0, clip=true, height=2.7cm]{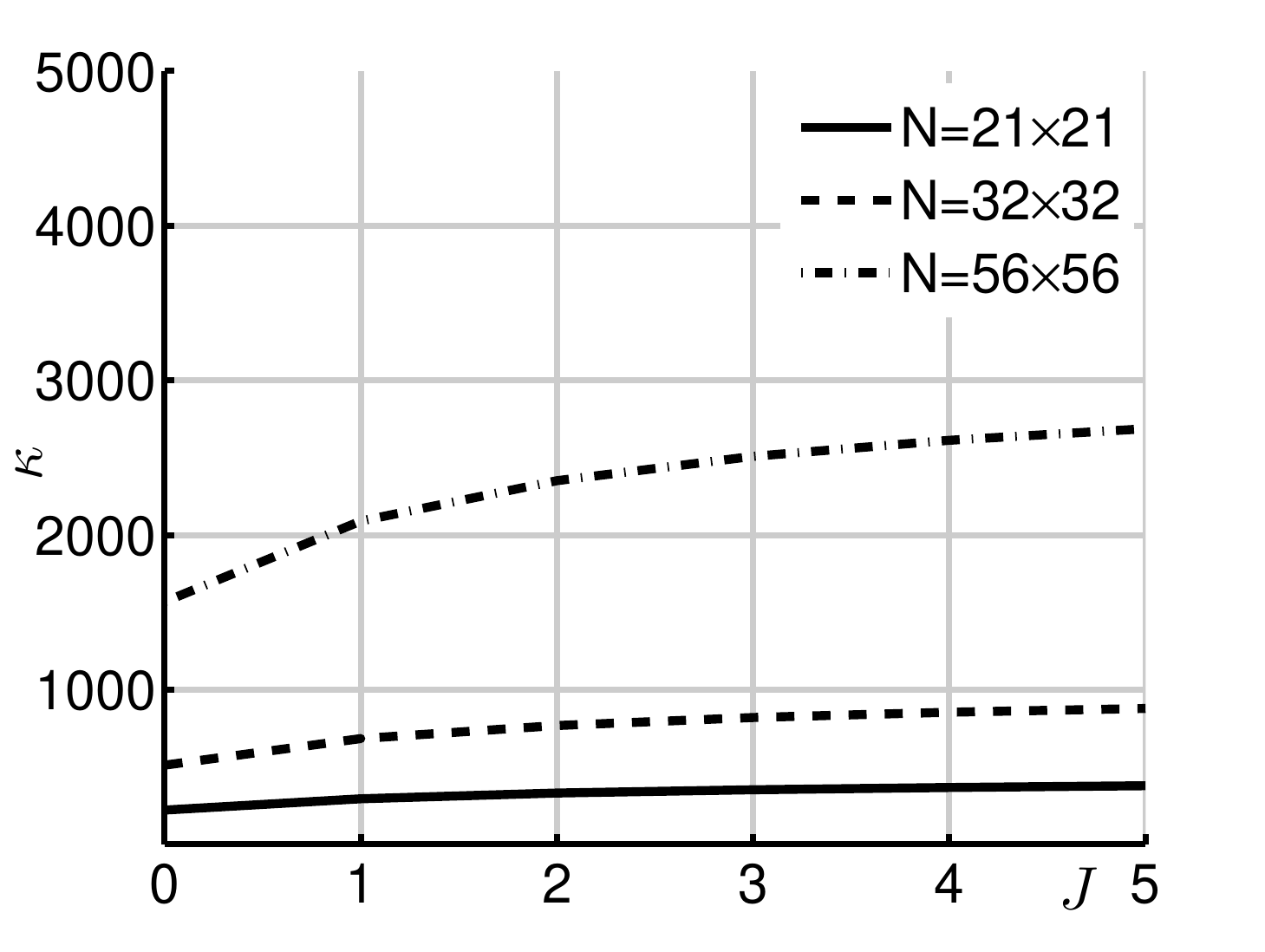}
\caption{\label{fig:poolunc}}
\end{subfigure}
\vspace{-0.2cm}
\caption{Figure \ref{fig:princ2} shows our few-shot pipeline. We use {\em feature encoding} and {\em similarity} networks. Power Normalized second-order ({\em SOP}) support-query pairs are formed and passed to the similarity net. Support-query pairs of the same class receive positive labels. 
Fig. \ref{fig:poolunc} is the $\kappa$ ratio \wrt the $J$-shot value (see Eq. \eqref{eq:support}). The  curves show that as $J$ grows (0 denotes the regular classification), the similarity learner has to memorize $\kappa\!\times$ more co-occurrence $(k,l)$ configurations if MaxExp is not used. The curves show for larger $N$ ($N\propto$ image size), not using MaxExp requires even more memorization.
}\vspace{-0.3cm}
\label{fig:fsp}
\end{figure}

\vspace{-0.1cm}
\section{Pipelines}
\label{sec:pipes}

\vspace{0.05cm}
{\noindent\textbf{Classification pipeline}} from Figure \ref{fig:princ1} has a straightforward implementation. We pass each image via ImageNet pre-trained ResNet-50, we extract feature vectors $\vPhi$ from the last conv. layer.
Formally, we have the feature encoding network  $f\!:(\mbr{3\!\times\!W\!\!\times\!H}; \mbr{|\tF|})\!\shortrightarrow\!\mbr{K\!\times\!N}$, where $W$ and $H$ are the width and height of an input image, $K$ is the length of feature vectors (number of filters), $N\!=\!N_W\!\cdot\!N_H$ is the total number of spatial locations in the last convolutional feature map. For brevity, we denote an image descriptor by $\mPhi\!\in\!\mbr{K\!\times\!N}$, where $\mPhi\!=\!f(\mX; \tF)$ for an image $\mX\!\in\!\mbr{3\!\times\!W\!\!\times\!H}$ and $\tF$ are the parameters-to-learn of the encoding network. Moreover, where stated, assume that spatial locations discussed in Section \ref{sec:cooc} are concatenated with $\mPhi$ to obtain $\bar{\mPhi}$. Subsequently, we form \revisedd{autocorrelation matrix} $\mM$ per image according to details of Section \ref{sec:cooc} which is then passed via pooling $\mygthreee{i}{\,\mM\,}$ or $\mygthreeehat{i}{\,\mM\,}$ to the classifier, in end-to-end setting.

\vspace{0.05cm}
{\noindent\textbf{Few-shot learning pipeline}}, called Second-order Similarity Network ({\em SoSN}), is shown in Figure \ref{fig:princ2}. It is  inspired by the end-to-end relationship-learning network \cite{sung2017learning} and consists of two major parts which are (i) feature encoding network and (ii) similarity network. The role of the feature encoding network is to generate convolutional feature vectors which are then used as image descriptors. The role of the similarity network is to learn the relation and compare so-called support and query image embeddings. Our work is different to the Relation Net \cite{sung2017learning} in that we apply second-order representations built from image descriptors followed by Power Normalizing functions. For instance, we construct the support and query second-order feature matrices followed by a non-linear Power Normalization unit. In SoSN, the feature encoding network remains the same as Relation Net \cite{sung2017learning}, however, the similarity network learns to compare from second- rather than the first-order statistics. %
%
%
SoSN is illustrated in Figures \ref{fig:princ1} and \ref{fig:blocks}.

We use the feature encoding network $f$ illustrated in Figure \ref{fig:blocks} (top) for which spatial locations may be concatenated with representations $\mPhi$ (see Section \ref{sec:cooc}) used by the similarity network. 

\revisedd{Figure \ref{fig:blocks} (bottom) shows the similarity network, which compares two datapoints encoded as $2\!\times\!K\!\times\!K$ dim. second-order representations, is denoted by $\tilde{\delta}\!:(\mbr{2\!\times\!K\!\times\!K}; \mbr{|\tS|})\!\shortrightarrow\!\mbr{}$, 
where $\tS$ are the parameters-to-learn of the similarity network.}


Next, let an operator $\vartheta_{\text{(i')}}\!:(\left\{\mbr{K\!\times\!N}\!\right\}^{J}\!\!\!,\,\mbr{K\!\times\!N})\!\shortrightarrow\!2\!\times\!\!K\!\times\!K$ encode a relationship between the descriptors built from the $J$-shot support images and a query image. This relationship is encoded via computing second-order statistics followed by Power Normalization and applying concatenation (inner-product, sum, subtraction, \etc are among other possible choices) to capture a relationship between features of two images. Finally, $i'\!\!\in\!\{\otimes, \otimes\text{+L}\}$ takes on one of specific operator variants defined below.  

For the $L$-way $J$-shot problem, assume some $J$ support images $\{\mX_n\}_{n\in\mathcal{J}}$ from some set $\mathcal{J}$ and their corresponding image descriptors $\{\mPhi_n\}_{n\in\mathcal{J}}$ which can be considered as a $J$-shot descriptor if stacked along the third mode. Moreover, we assume one query image $\mX^*\!$ with its image descriptor $\mPhi^*$. In general, we use `$^*\!$' to indicate query-related variables. Both the $J$-shot and the query descriptors belong to one of $L$ classes in the subset $\mathcal{C}^{\ddag}\!\equiv\!\{c_1,\cdots,c_L\}\!\subset\!\idx{C}\!\equiv\!\mathcal{C}$ chosen randomly per episode. Similarly to approach \cite{sung2017learning}, we employ the Mean Square Error (MSE) objective in our end-to-end SoSN model. Then, we perform the $L$-way $J$-shot learning by:  
%
%
%
%
%
\vspace{-0.3cm}
\begin{align}
&\!\!\argmin\limits_{\tF, \tS} \sum\limits_{c,c'\!\in\mathcal{C}^{\ddag}}
\!\left(\tilde{\delta}(\vartheta_{\text{(i')}}(\{\mPhi_n\}_{n\in\mathcal{J}_c},\mPhi^*_{q\in\mathcal{Q}: \ell(q)=c'});\tS)\! - \delta\!\left(c\!-\!c'\right)\right)^2\!\!\!,\nonumber\\
&\qquad\qquad\text{ where } \mPhi_n\!=\!f(\mX_n; \tF) \text{ and } \mPhi^*_q\!=\!f(\mX^*_q; \tF).
\end{align}
$\mathcal{J}_c$ is a randomly chosen set of $J$ support image descriptors of class $c\!\in\!\mathcal{C}^{\ddag}$, $\mathcal{Q}$ is a randomly chosen set of $L$ query image descriptors so that its consecutive elements belong to the consecutive classes in $\mathcal{C}^{\ddag}\!\equiv\!\{c_1,\cdots,c_L\}$. Lastly, $\ell(q)$ corresponds to the label of $q\!\in\!\mathcal{Q}$ \revisedd{while $\delta\!\left(x\right)\!=\!1$ if $x\!=\!0$, otherwise $\delta\!\left(x\right)\!=\!0$ (note that $c\!-\!c'\!=\!0$ if class labels $c$ and $c'$ are the same).}

\revisedd{
\vspace{0.05cm}
\noindent{\textbf{Relationship Descriptor (operator $\vartheta$). }} We consider two choices for the operator $\vartheta_{\text{(i')}}\!\left(\{\mPhi_n\}_{n\in\mathcal{J}},\mPhi^*\!\right)\!\in\!\mbr{2\!\times\!K\!\times\!K}\!$ whose role is to capture/summarize the information held in support/query image representations to pass it to the similarity network $\tilde{\delta}$ for learning similarity. Below we detail two  operators $\vartheta$ used by us.$\!\!$

\vspace{0.05cm}
\noindent{\textbf{Relationship Descriptor ($\otimes$)}} averages $J$ feature maps of support images per class followed by the outer product on the mean support and query vectors, Power Normalization and concatenation. This strategy, beneficial for $84\!\times\!84$ images, is defined as:
\begin{equation}
\!\!\!\!\!\!\!\fontsize{7}{8}\selectfont\vartheta_{\text{($\otimes$)}}\!\left(\{\mPhi_n\}_{n\in\mathcal{J}},\mPhi^*\!\right)\!=\!\left[\tG_{\text{i}}\left(\frac{1}{N}\bar{\mPhi}\bar{\mPhi}^T\!\right)\!;_1 \tG_{\text{i}}\!\left(\frac{1}{N}\mPhi^*\!\mPhi^{*T}\right)\right]\!
,\,\bar{\mPhi}\!\!=\!\!\frac{1}{J}\!\!\sum_{n\in\mathcal{J}}\!\!\mPhi_n\!,\!\!
\label{eq:concat_best}
\end{equation}
\vspace{-0.2cm}

\noindent{where} `$;_1$' is the concatenation along the channel mode, that is, $[\mX;_1\mY]\!\equiv\!\text{cat}(1,\mX,\mY)$ in the Matlab notation, $N\!=\!W\!H$, and $i\!\in\!\{\text{Gamma, MaxExp, AsinhE, SigmE, HDP}\}$.

\vspace{0.05cm}
\noindent{\textbf{Relationship Descriptor $\otimes$+L}} denotes the outer product of feature vectors per support image followed by Power Normalization of each matrix and then the average of $J$ such obtained support matrices and concatenation with the query matrix. This strategy, beneficial for large resolution images, is defined as:
\begin{equation}
\vspace{-0.1cm}
\!\!\!\!\fontsize{7}{8}\selectfont\vartheta_{\text{($\otimes$+L)}}\!\left(\{\mPhi_n\}_{n\in\mathcal{J}},\mPhi^*\!\right)\!=\!\left[\frac{1}{J}\!\sum_{n\in\mathcal{J}}\!\tG_{\text{i}}\left(\frac{1}{N}\mPhi_n\mPhi_n^T\right)\!;_1 \tG_{\text{i}}\left(\frac{1}{N}\mPhi^*\!\mPhi^{*T}\right)\right]\!.\!
\label{eq:concat_new}
\end{equation}
\vspace{-0.2cm}
}

\begin{figure}[t]
\vspace{-0.3cm}
	\centering
	\includegraphics[height=2.5cm]{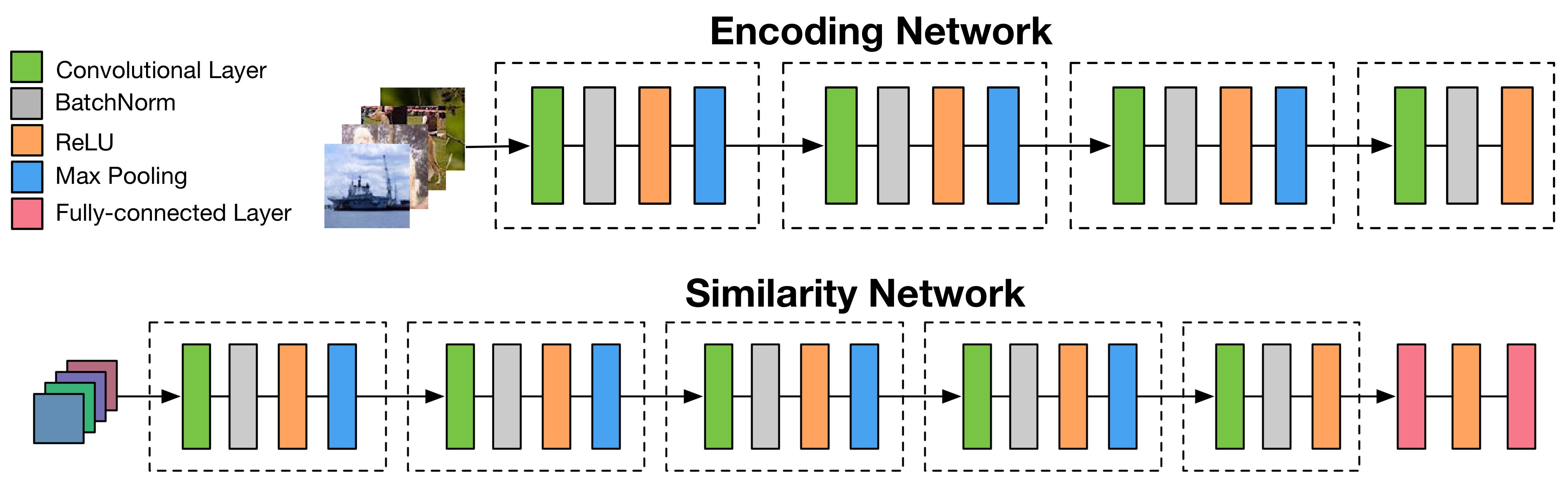}
    \vspace{-0.3cm}
	\caption{The network architecture used in our SoSN model.}
	\label{fig:blocks}
    \vspace{-0.4cm}
\end{figure}

\revisedd{
\vspace{0.05cm}
\noindent{\textbf{MaxExp in few-shot learning.}}\label{sec:max_exp_mot} Our final analysis shows that MaxExp (second-order element-wise) reduces the burstiness which is otherwise exacerbated in few-shot learning (compared to the regular classification) due to the concatenation operation in relation descriptors as explained below. 
Theorem \ref{pr:cooc} states that MaxExp (Power Normalizations in general) performs a co-occurrence detection rather than counting. For classification problems, assume a probability mass function $p_{X_{kl}}(x)\!=\!1/(N\!+\!1)$ if $x\!=\!0,\cdots,N$, $p_{X_{kl}}(x)\!=\!0$ otherwise, that tells the probability that co-occurrence between $\phi_{kn}$ and $\phi_{ln}$ given an image happened $x\!=\!0,\cdots,N$ times. Note that  classification often  depends on detecting a co-occurrence (\eg, is there a flower co-occurring with a pot?) rather than counts (\eg, how many flowers and pots co-occur?). Using second-order pooling without MaxExp requires a classifier to observe $N\!+\!1$ tr. samples of {\em flower and pot} co-occurring in quantities $0,\cdots,N$ to memorize all possible co-occurrence configurations. For $1$-shot learning, our $\vartheta$ stacks pairs of samples to compare, thus a similarity learner now has to deal with a probability mass function of ${R_{kl}}\!=\!{X_{kl}}\!+\!{Y_{kl}}$ capturing co-occurrence configurations of {\em flowers and pots} whose $\text{support}(p_{R_{kl}})\!=\!2N\!+\!1\!>\!\text{support}(p_{X_{kl}})\!=\!N\!+\!1$ as random variable $X\!=\!Y$ (same class). The same is reflected by variances \ie, $\text{var}(p_{R_{kl}})\!>\!\text{var}(p_{X_{kl}})$. For $J$-shot learning, ${R'_{kl}}\!=\!{X^{(1)}_{kl}}\!+\!\cdots\!+\!{X^{(J)}_{kl}}\!+\!{Y_{kl}}$, $X^{(j)}\!=\!Y, \forall j\!\in\!\idx{J}$, we have $\text{support}(p_{R'_{kl}})\!=\!(J\!+\!1)N\!+\!1$ and the variance grows further indicating that the similarity learner has to memorize more configurations of co-occurrence $(k,l)$ as $J$ grows. 
However, this situation is alleviated by MaxExp  whose  probability mass function yields $p_{X^\text{MaxExp}_{kl}}(x)\!=\!1/2$ if $x\!=\!\{0,1\}$, $p_{X^\text{MaxExp}_{kl}}(x)\!=\!0$ otherwise, as MaxExp detects a co-occurrence (or its lack). For $J$-shot learning, $\text{support}(p_{{R'}^\text{MaxExp}_{kl}})\!=\!J\!+\!2\!\ll\!\text{support}(p_{R'_{kl}})\!=\!(J\!+\!1)N\!+\!1$.

\begin{tcolorbox}[width=1.0\linewidth, colframe=blackish, colback=beaublue, boxsep=0mm, arc=3mm, left=1mm, right=1mm, right=1mm, top=1mm, bottom=1mm]
The  ratio
\vspace{-0.2cm}
\begin{align}
\kappa\!=\!\frac{\text{support}(p_{R'_{kl}})}{\text{support}(p_{{R'}^\text{MaxExp}_{kl}})}\!=\!\frac{(J\!+\!1)N\!+\!1}{J\!+\!2}
\label{eq:support}
\end{align}
\vspace{-0.2cm}

\noindent{shows} that the similarity learner has to memorize  more configurations for $(k,l)$ if no pooling is used relative to configurations if MaxExp is used. As $J$ and/or $N$ increase ($N\!=\!WH$ of encoder feature maps), this effect becomes more prominent.
\end{tcolorbox}
\vspace{-0.1cm}

Figure \ref{fig:poolunc} shows how $\kappa$ varies \wrt $J$ and $N$. Our modeling assumptions are very basic \eg, we use mass functions with uniform probabilities and their set support rather than variances to model the variability of co-occurrences $(k,l)$. More sophisticated choices \ie, Binomial PMF and variance-based modeling in Appendix \hyperref[{app:maxexp_fsl}]{M} lead to the same theoretical conclusions that: (i) MaxExp (and PN) benefits few-shot learning ($J\!\geq\!1$) even more than it benefits the regular classification ($J\!=\!0$) in terms of reducing possible configurations of $(k,l)$ to memorize, and (ii) for larger images (large $N$), MaxExp (and PN) must reduce a larger number of configurations of $(k,l)$ than for smaller images (smaller $N$). While classifiers and similarity learners do not memorize all configurations of $(k,l)$ thanks to their generalization ability, they learn quicker  if the number of configurations of $(k,l)$  is reduced. 
}

\vspace{-0.2cm}
\section{Experiments}
\label{sec:expts}

Below we demonstrate experimentally merits of our second-order pooling via Power Normalization functions.

\vspace{0.05cm}
\noindent{\textbf{Datasets.}} For the standard classification setting, we use five publicly available datasets and report the mean top-$1$ accuracy on them. The Flower102 dataset \cite{nilsback_flower102} is a fine-grained category recognition dataset that contains 102 categories of various flowers. 
Each class consists of between 40 and 258 images. 
The MIT67 dataset \cite{quattoni_mitindoors} contains a total of 15620 images belonging to 67 indoor scene classes. 
We follow the standard evaluation protocol, which uses a train and test split of 80\% and 20\% of images per class.
The FMD dataset contains in total 100 images per category belonging to 10 categories of materials (\eg, glass, plastic, leather) collected from the Flickr website. 
The Food-101 dataset \cite{food101}, a fine-grained collection of food images from 101 classes, has 101000 images in total and 1000 images per category. 
Finally, we report top-$1$ and -$5$ error on the ImageNet 2012 dataset \cite{ILSVRC15} with 1000 object categories. The dataset contains 1.28M images for training, 50K images for validation and 100K images for testing. As testing labels are withheld,  we follow the common practice  \cite{resnet,peihua_fast} and report the results on the validation set.

\ifdefined\arxiv
\newcommand{\SrcImgWW}{0.13}
\newcommand{\SrcImgHHH}{1.8cm}
\newcommand{\SrcImgWWW}{1.8cm}
\else
\newcommand{\SrcImgWW}{0.215}
\newcommand{\SrcImgHHH}{1.8cm}
\newcommand{\SrcImgWWW}{1.8cm}
\fi

\begin{figure}[t]
\centering
%
\comment{
\begin{subfigure}[b]{\SrcImgWW\linewidth}
\centering\includegraphics[trim=0 0 0 0, clip=true,width=\SrcImgWWW, height=\SrcImgHHH]{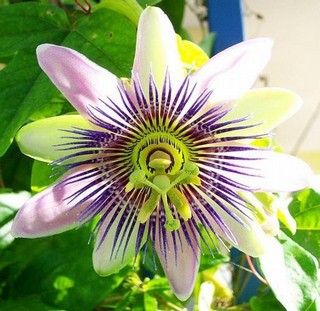}
\end{subfigure}
\begin{subfigure}[b]{\SrcImgWW\linewidth}
\centering\includegraphics[trim=0 0 0 0, clip=true,width=\SrcImgWWW, height=\SrcImgHHH]{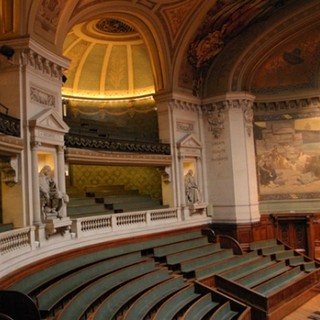}
\end{subfigure}
\begin{subfigure}[b]{\SrcImgWW\linewidth}
\centering\includegraphics[trim=0 0 0 0, clip=true,width=\SrcImgWWW, height=\SrcImgHHH]{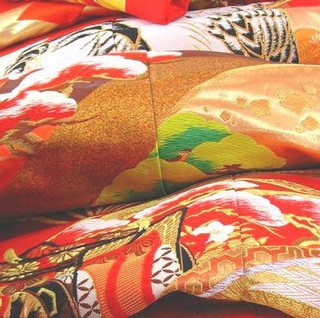}
\end{subfigure}
\begin{subfigure}[b]{\SrcImgWW\linewidth}
\centering\includegraphics[trim=0 0 0 0, clip=true,width=\SrcImgWWW, height=\SrcImgHHH]{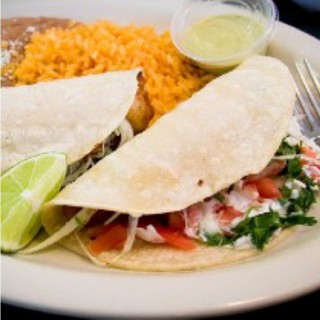}
\end{subfigure}
\\
}
\vspace{0.052cm}
\begin{subfigure}[b]{\SrcImgWW\linewidth}
\centering\includegraphics[trim=0 0 0 0, clip=true,width=\SrcImgWWW, height=\SrcImgHHH]{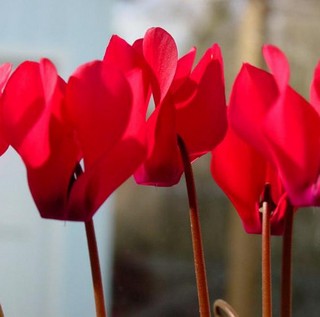}
\end{subfigure}
\begin{subfigure}[b]{\SrcImgWW\linewidth}
\centering\includegraphics[trim=0 0 0 0, clip=true,width=\SrcImgWWW, height=\SrcImgHHH]{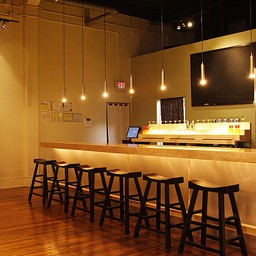}
\end{subfigure}
\begin{subfigure}[b]{\SrcImgWW\linewidth}
\centering\includegraphics[trim=0 0 0 0, clip=true,width=\SrcImgWWW, height=\SrcImgHHH]{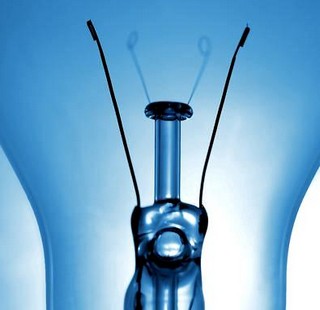}
\end{subfigure}
\begin{subfigure}[b]{\SrcImgWW\linewidth}
\centering\includegraphics[trim=0 0 0 0, clip=true,width=\SrcImgWWW, height=\SrcImgHHH]{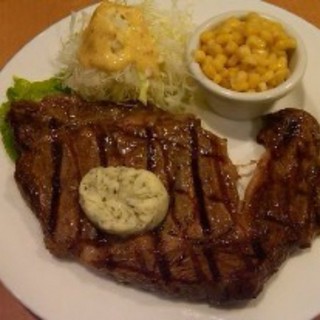}
\end{subfigure}
\\
\vspace{0.052cm}
\begin{subfigure}[b]{\SrcImgWW\linewidth}
\centering\includegraphics[trim=0 0 0 0, clip=true,width=\SrcImgWWW, height=\SrcImgHHH]{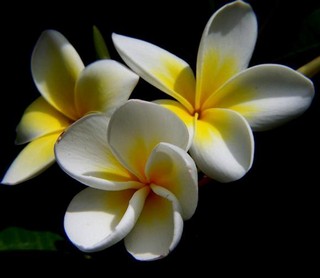}
\end{subfigure}
\begin{subfigure}[b]{\SrcImgWW\linewidth}
\centering\includegraphics[trim=0 0 0 0, clip=true,width=\SrcImgWWW, height=\SrcImgHHH]{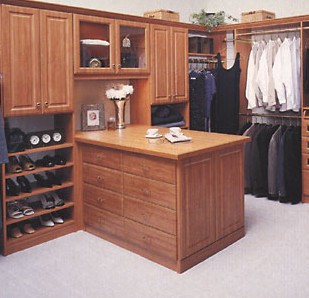}
\end{subfigure}
\begin{subfigure}[b]{\SrcImgWW\linewidth}
\centering\includegraphics[trim=0 0 0 0, clip=true,width=\SrcImgWWW, height=\SrcImgHHH]{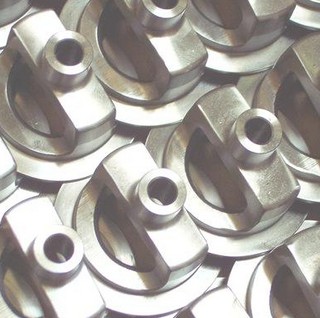}
\end{subfigure}
\begin{subfigure}[b]{\SrcImgWW\linewidth}
\centering\includegraphics[trim=0 0 0 0, clip=true,width=\SrcImgWWW, height=\SrcImgHHH]{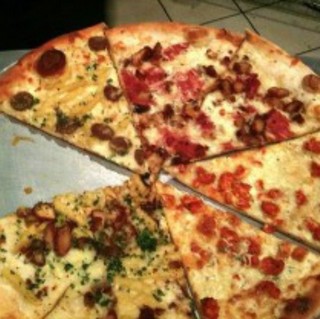}
\end{subfigure}
%
%
\caption{Each column shows examples of images from the Flower102, MIT67, FMD and Food-101 dataset, respectively.
}\vspace{-0.2cm}
\label{fig:datasets}
\end{figure}

\begin{figure}[t]
\centering
%
\begin{subfigure}[b]{\SrcImgWW\linewidth}
\centering\includegraphics[trim=0 0 0 0, clip=true,width=\SrcImgWWW, height=\SrcImgHHH]{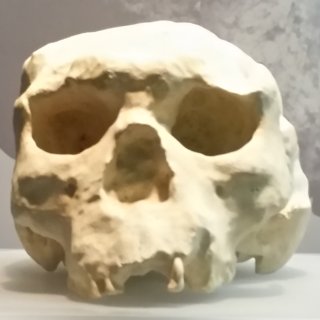}
\end{subfigure}
\begin{subfigure}[b]{\SrcImgWW\linewidth}
\centering\includegraphics[trim=0 0 0 0, clip=true,width=\SrcImgWWW, height=\SrcImgHHH]{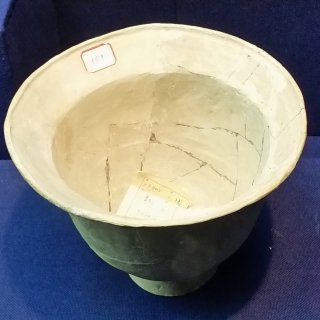}
\end{subfigure}
\begin{subfigure}[b]{\SrcImgWW\linewidth}
\centering\includegraphics[trim=0 0 0 0, clip=true,width=\SrcImgWWW, height=\SrcImgHHH]{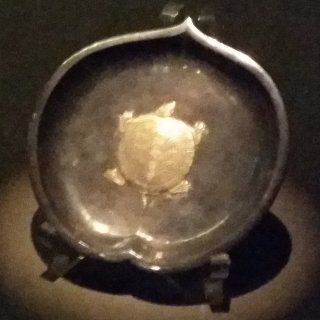}
\end{subfigure}
\begin{subfigure}[b]{\SrcImgWW\linewidth}
\centering\includegraphics[trim=0 0 0 0, clip=true,width=\SrcImgWWW, height=\SrcImgHHH]{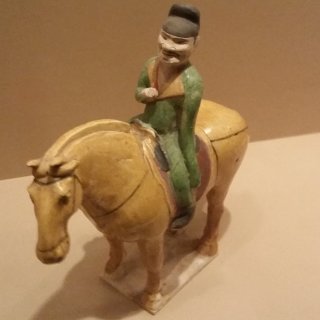}
\end{subfigure}
\\
\vspace{0.052cm}
\begin{subfigure}[b]{\SrcImgWW\linewidth}
\centering\includegraphics[trim=0 0 0 0, clip=true,width=\SrcImgWWW, height=\SrcImgHHH]{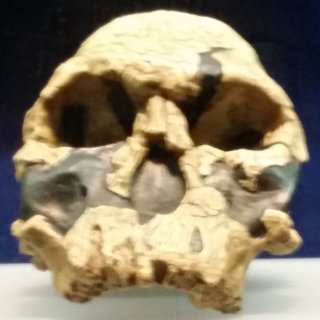}
\end{subfigure}
\begin{subfigure}[b]{\SrcImgWW\linewidth}
\centering\includegraphics[trim=0 0 0 0, clip=true,width=\SrcImgWWW, height=\SrcImgHHH]{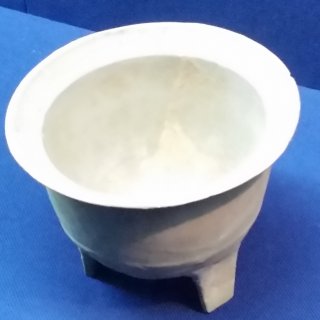}
\end{subfigure}
\begin{subfigure}[b]{\SrcImgWW\linewidth}
\centering\includegraphics[trim=0 0 0 0, clip=true,width=\SrcImgWWW, height=\SrcImgHHH]{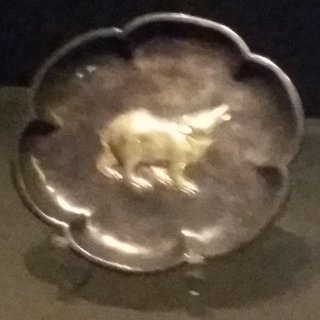}
\end{subfigure}
\begin{subfigure}[b]{\SrcImgWW\linewidth}
\centering\includegraphics[trim=0 0 0 0, clip=true,width=\SrcImgWWW, height=\SrcImgHHH]{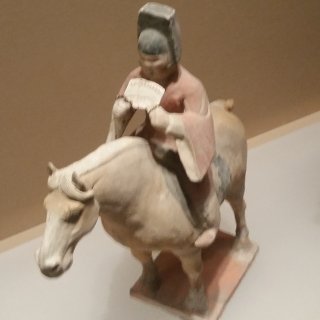}
\end{subfigure}
%
%
\caption{Each column shows examples of fine-grained objects from the Open MIC dataset which look similar but belong to  different classes.
}\vspace{-0.3cm}
\label{fig:openmicfg}
\end{figure}

For few-shot classification problems,  we also use four publicly available datasets and report the mean top-$1$ accuracy for so-called $L$-way $J$-shot problems \cite{sung2017learning}. The \textit{mini}ImageNet dataset \cite{vinyals2016matching} is a standard benchmark for evaluating few-shot learning approaches. It consists of 60000 RGB images from 100 classes. 
We follow 
\cite{vinyals2016matching} and use 64 classes for training, 16 classes for validation, remaining 20 classes for testing, and we use images of size $84\!\times\!84$. 
We also investigate larger sizes, \eg~$224\!\times\!224$, as our few-shot learning SoSN model can use richer spatial information from larger images to obtain high-rank autocorrelation matrices without a need to modify the similarity network to larger feature inputs. Moreover, we investigate the fine-grained Flower102 and Food-101 datasets in the few-shot learning scenario. We take the first 80 categories of each dataset for training/validation and the remaining 21 and 22 for testing, respectively. Lastly, we introduce few-shot learning protocols on a fine-grained Open MIC dataset \cite{me_museum} detailed next.

\vspace{0.05cm}
\noindent\textbf{Open MIC}. The Open Museum Identification Challenge \cite{me_museum} contains photos of various exhibits \eg, paintings, timepieces, sculptures, glassware, relics, science exhibits, natural history pieces, ceramics, pottery, tools and indigenous crafts, captured from 10 museum exhibition spaces according to which this dataset is divided into 10 sub-problems. In total, it has 1--20 images per class and 866 diverse classes, many of which are fine-grained \eg, fossils, jewelery, cultural relics, as shown in Figure \ref{fig:openmicfg}. The within-class images undergo various geometric and photometric distortions as the data was captured by wearable cameras. Thus, Open MIC challenges one-shot learning algorithms. We combine ({\em shn+hon+clv}), ({\em clk+gls+scl}), ({\em sci+nat}) and ({\em shx+rlc}) into sub-problems {\em p1}, $\!\cdots$, {\em p4}. We form 12 possible pairs in which sub-problem $x$ is used for training and $y$ for testing (x$\rightarrow$y). Our first protocol aims at the generalization from one task to another task, thus we use the target part of Open MIC with the 12 above sub-problems. Our second protocol aims at the generalization from one domain to another domain, thus we use the source and target parts of Open MIC for training/testing on 10 original sub-problems.

\begin{table}[t]
\centering
\begin{tabular}{l l|c|c|}
Method & & \multicolumn{2}{c|}{top-$1$ accuracy}  \\ 
\hline
{\em Second-order Bag-of-Words}\kern-0.6em  &\cite{me_tensor} & \multicolumn{2}{c|}{90.2} \\
{\em Factors of Transferability}\kern-0.6em &\cite{carlson_cnn} & \multicolumn{2}{c|}{91.3} \\
{\em Reversal-inv. Image Repr.}\kern-0.6em  &\cite{rrir} & \multicolumn{2}{c|}{94.0} \\
{\em Optimal two-stream fusion}\kern-0.6em  &\cite{two-stream} & \multicolumn{2}{c|}{94.5} \\
{\em Neural act. constellations}\kern-0.6em &\cite{nacc} & \multicolumn{2}{c|}{95.3} \\
\end{tabular}\\
%
\vspace{0.2cm}
\begin{tabular}{l|c|c|}
Method       & \kern-0.3em AlexNet\kern-0.3em & \kern-0.3em ResNet-50\kern-0.3em \\
\hline
{\em Baseline}      & 82.00  & 94.06   \\ 
{\em FOP}         	& 85.40  & 94.08   \\ 
{\em FOP+AsinhE}    & 85.64  & 94.60   \\ 
{\em SOP}         	& 87.20  & 94.70   \\ 
{\em SOP+AsinhE}    & 88.40  & 95.12   \\ 
{\em SOP+SC+AsinhE} & 90.70  & 95.74   \\ 
{\em SOP+SC+SigmE}  & 91.71  & \textbf{96.78}\\
\hline
{\em SOP+SC+Spec. Gamma} & -  & 96.88   \\ 
{\em SOP+SC+Spec. HDP}  & -  & 97.05\\
{\em SOP+SC+Spec. MaxExp}  & -  & \textbf{97.28}\\
{\em SOP+SC+Spec. MaxExp(F)}  & -  & \textbf{97.62}
\end{tabular}
\caption{The Flower102 dataset. The bottom part shows our results for AlexNet and ResNet-50. The horizontal lines separate first- and second-order element-wise pooling, and the spectral pooling. The top part of the table lists state-of-the-art results from the literature.}
\label{tab:flower102}
\end{table}
\begin{table}[t]
\centering
\begin{tabular}{l l c || c c|}
Method  && acc. & Method  & acc.\\
\hline
{\em Baseline} 											&									 & 81.9  & {\em SOP+SC+SigmE} & 87.5 \\
{\em SOP} 													&       					& 83.0  & {\em SOP+SC+Spec. MaxExp} & 87.8 \\
{\em Kern. Pool.}\kern-0.6em&\cite{cui2017kernel}       & 85.5 & \kern-0.3em{\em SOP+SC+Spec. MaxExp(F)}\kern-0.3em & \textbf{88.4}\\
\end{tabular}
\caption{The Food-101 dataset. Our (right) \vs other methods (left).}
\label{tab:food-101}
\vspace{-0.35cm}
\end{table}

\revised{
\vspace{0.05cm}
\noindent\textbf{Graph datasets.} We use seven popular graph benchmarks MUTAG, PTC, PROTEINS, NCI1, COLLAB, REDDIT-BINARY and REDDIT-MULTI-5K \cite{pytorch_geom}. 
MUTAG contains mutable molecules, 188 chemical compounds, and 7 node labels. PTC includes a number of carcinogenicity  tasks  for  toxicology  prediction, it contains 417 compounds from four species, and 18 node labels. PROTEINS are sets  of  proteins from  the  BRENDA database \cite{brenda} with 3 node labels. NCI is a collection of datasets for anticancer activity prediction with 37 node labels. Finally,  COLLAB, REDDIT-BINARY and REDDIT-MULTI-5K represent social networks.
}

\vspace{0.05cm}
\noindent{\textbf{Experimental setup (classification setting).}} 
For Flower102 \cite{nilsback_flower102}, we extract 12 cropped 224$\times$224 patches per image and use mini-batch of size 5 to fine-tune the ResNet-50 model \cite{resnet} pre-trained on ImageNet 2012 \cite{ILSVRC15}. We obtain 2048 dim. $12\!\times\!7\!\times\!7$ conv. feature vectors from the last conv. layer for our second-order pooling layer.
For MIT67 \cite{quattoni_mitindoors}, we resize original images to 336$\times$336 and use mini-batch of size 32, then fine-tune it on the ResNet-50 model \cite{resnet} pre-trained on the Places-205 dataset \cite{places_dataset}. With $336\!\times\!336$ image size, we obtain 2048 dim. $11\!\times\!11$ conv. feature vectors from the last conv. layer for our second-order pooling layer. 
For FMD \cite{fmd} and Food-101 \cite{food101}, we resize images to $448\!\times\!448$, use mini-batch of size 32 and fine-tune ResNet-50 \cite{resnet} pre-trained on ImageNet 2012 \cite{ILSVRC15}. We use the 2048 dim. $14\!\times\!14$ conv. feature vectors from the last conv. layer. 
For ImageNet 2012 \cite{ILSVRC15}, we crop $224\!\times\!224$ patches and allow left-right flip. We obtain 2048 dim. $12\!\times\!7\!\times\!7$ conv. feature vectors. 
For ResNet-50, we fine-tune all layers for $\sim$20 epochs with learning rates 1e-6--1e-4. We use  
RMSprop \cite{rmsprop} with the moving average $0.99$. 
%
 Where stated, we use AlexNet \cite{krizhevsky_alexnet} with fine-tuned last two conv. layers. We use $256$ dim. $6\!\times\!6$ conv. feature vectors from the last conv. layer.

\vspace{0.05cm}
\noindent{\textbf{Experimental setup (few-shot setting).}} For {\em mini}ImageNet, we use standard 5-way 1-shot and 5-way 5-shot protocols. For every training/testing episode, we randomly select 5/3 query samples per class. We average over 600 episodes to obtain results. 
We use the initial learning rate $1e\!-\!3$ and train the model with $200K$ episodes. For Flower102 and Food-101, we follow the same setting and train models with $40K$ and $200K$ episodes.
For Open MIC, we mean-center images per sub-problem. As some classes have less than 5 images, we use the 5- to 90-way 1-shot learning protocol. During training, to form an episode, we select 1 image for the support set and another 2 images for the query set per class. During testing, we use the same number of support/query samples in every episode and compute the accuracy over 1000 episodes. 
We use the initial learning rate $1e\!-\!4$ and train over $15K$ episodes. 
For all datasets, we resize images to $84\!\times\!84$ or $224\!\times\!224$ where stated.

\revised{
\vspace{0.05cm}
\noindent{\textbf{Experimental setup (graph classification).}} We use 
the Graph Isomorphism Network ({\em GIN0}) from package \cite{pytorch_geom}.
We remove the classifier to produce covariances. We tune the neighborhood size, hidden units and the number of layers between 10--50, 16--128 and  2--5. We use the Adam optimizer with learning rate $1e\!-\!2$. 
}

\vspace{0.05cm}
\noindent{\textbf{Our methods.}} We evaluate  the generalizations of MaxExp and Gamma, 
that is Sigmoid ({\em SigmE}) and  Arcsin hyperbolic ({\em AsinhE}) pooling functions. We focus mainly on our second-order representation ({\em SOP}) but we also occasionally report results for the first-order approach ({\em FOP}). For the baseline, we use the classifier on top of the {\em fc} layer ({\em Baseline}). The hyperparameters 
are selected via cross-validation. The use of spatial coordinates and spectral operators is indicated by ({\em SC}) and ({\em Spec.}), resp. For few-shot setting, we evaluate second-order similarity network variants ({\em SoSN}($\otimes$)) and ({\em SoSN}($\otimes$+L)) defined in Eq. \eqref{eq:concat_best} and \eqref{eq:concat_new}.

\vspace{-0.1cm}
\subsection{Evaluations}
\label{sec:eval}

Below we investigate  element-wise and spectral pooling in fine-grained classification, few-shot learning and graph classification.

\vspace{0.05cm}
\noindent{\textbf{Fine-grained datasets.}} The Flower102 dataset is evaluated in Table \ref{tab:flower102} which shows that AlexNet performs worse than ResNet-50, which is consistent with the literature. For the standard ResNet-50 fine-tuned on Flower102, we obtain 94.06\% accuracy. The first-order Average and AsinhE pooling ({\em FOP}) and ({\em FOP+AsinhE}) score 94.08 and 94.6\% accuracy. The second-order pooling ({\em SOP+AsinhE}) outperforms ({\em FOP+AsinhE}). For element-wise operators, we obtain the best result of \textbf{96.78}\% for the second-order representation combined with spatial coordinates and SigmE ({\em SOP+SC+SigmE}), which is $\sim$2.7\% higher than our baseline. In contrast, a recent more complex method  \cite{nacc} obtained 95.3\% accuracy. Our scores highlight that capturing co-occurrences of visual features and passing them via a well-defined Power Normalization function such as SigmE works well for our fine-grained problem. We attribute the good performance of SigmE to its ability to act as a detector of co-occurrences. The role of the Hyperbolic Tangent non-linearity (popular in deep learning) may be explained by its similarity to SigmE.

\begin{table}[t]
\centering
\begin{tabular}{l l|c|}
Method && top-$1$ accuracy\\
\hline
{\em CNNs with Deep Supervision} & \cite{places_mit_more} & 76.1\\
{\em Places-205}\kern-0.6em&\cite{places_mit} & 80.9 \\
{\em Deep Filter Banks}\kern-0.6em &\cite{cimpoi2015deep} & 81.0 \\
{\em Spectral Features}\kern-0.6em&\cite{khan2017scene} & 84.3 \\
\hline
{\em Baseline}  && 84.0\\
\hline
{\em SOP+AsinhE} && 85.3 \\
{\em SOP+SigmE} && 85.6\\ 
{\em SOP+SC+AsinhE} && 85.9 \\
{\em SOP+SC+SigmE} && \textbf{86.3}\\
\hline
{\em SOP+SC+Spec. Gamma} && 86.4\\
{\em SOP+SC+Spec. HDP} && 86.3\\
{\em SOP+SC+Spec. MaxExp} && 86.5\\
{\em SOP+SC+Spec. MaxExp(F)} && \textbf{86.8}
\end{tabular}
\caption{The MIT67 dataset. The bottom part shows our results for ResNet-50 pre-trained on the Places-205 dataset. The horizontal lines separate first- and second-order element-wise pooling, and the spectral pooling. The top part are state-of-the-art results from the literature. }
\label{tab:mit67}
\vspace{-0.35cm}
\end{table}

\ifdefined\arxiv
\newcommand{\PlotWW}{0.245}
\newcommand{\PlotHHH}{4cm}
\newcommand{\PlotWWW}{3.8cm}
\else
\newcommand{\PlotWW}{0.494}
\newcommand{\PlotHHH}{4cm}
\newcommand{\PlotWWW}{4.45cm}
\fi

\ifdefined\arxiv
\begin{figure}[!b]
\else
\begin{figure}[b]
\fi
\vspace{-0.4cm}
\centering
\begin{subfigure}[t]{\PlotWW\linewidth}
\centering\includegraphics[trim=0 0 0 0, clip=true,width=\PlotWWW]{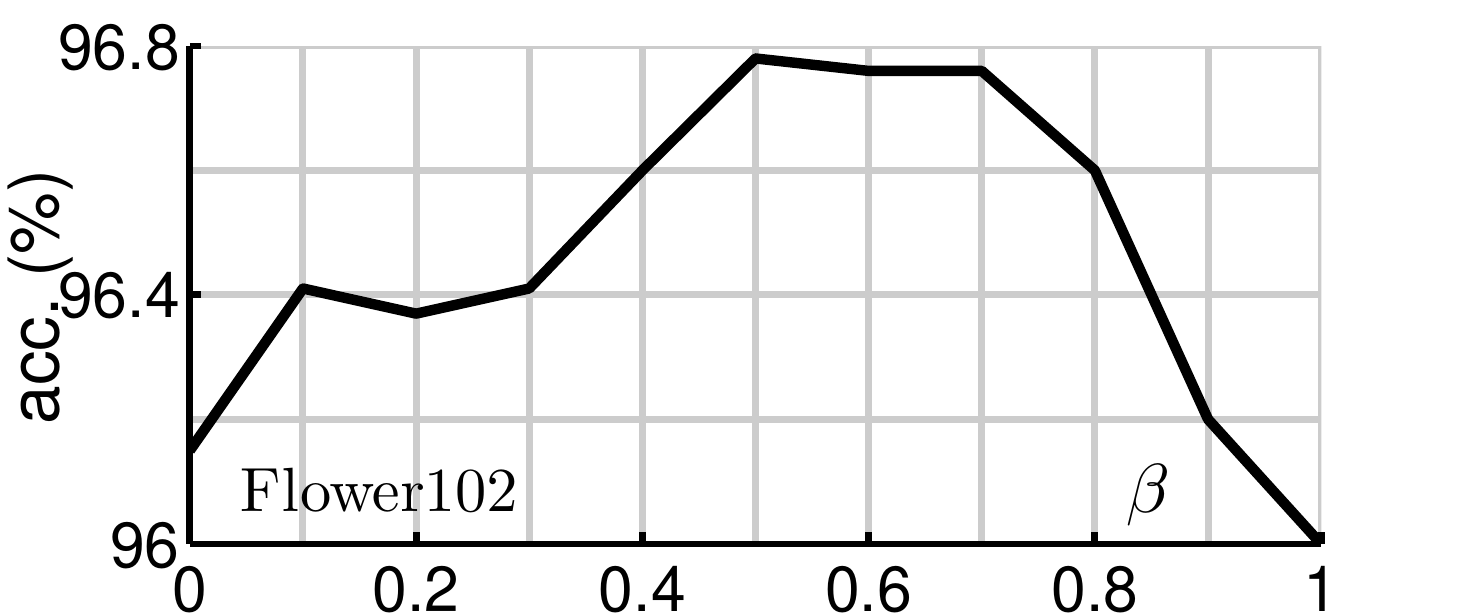}\vspace{-0.2cm}
\caption{\label{fig:eval1}}
\end{subfigure}
\begin{subfigure}[t]{\PlotWW\linewidth}
\centering\includegraphics[trim=0 0 0 0, clip=true,width=\PlotWWW]{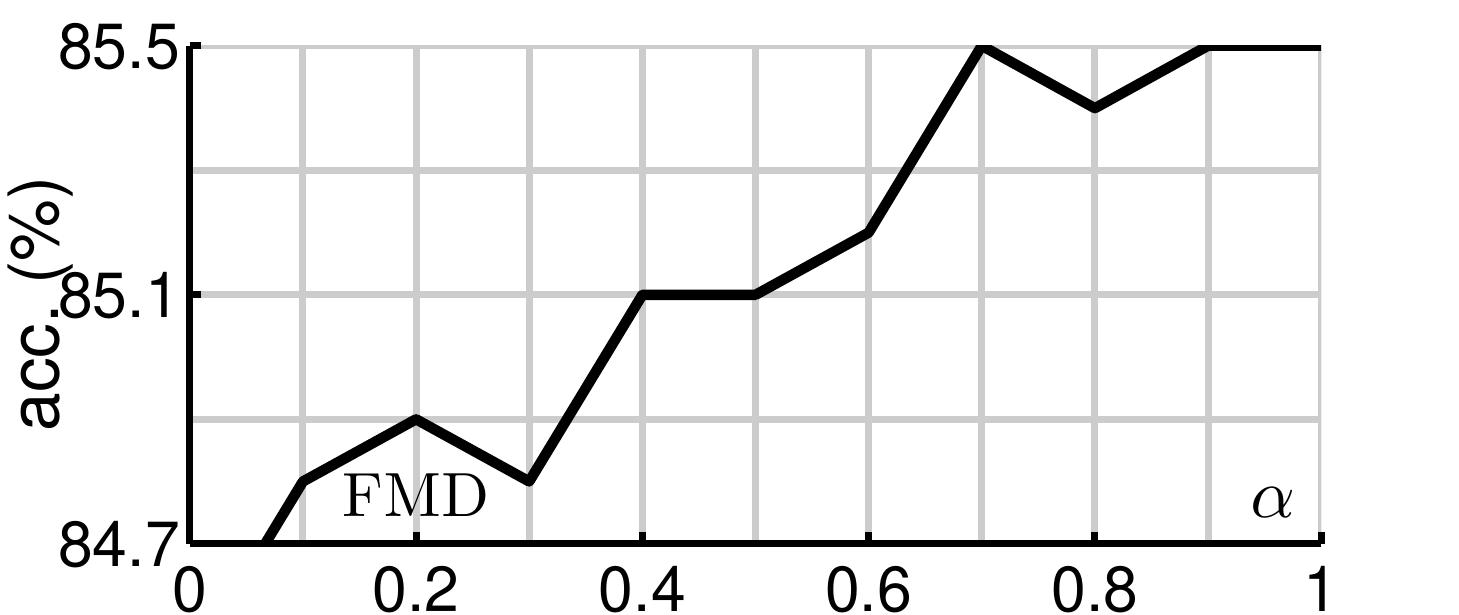}\vspace{-0.2cm}
\caption{\label{fig:eval2}}
\end{subfigure}
\ifdefined\arxiv\else\\\fi
\begin{subfigure}[t]{\PlotWW\linewidth}
\centering\includegraphics[trim=0 0 0 0, clip=true,width=\PlotWWW]{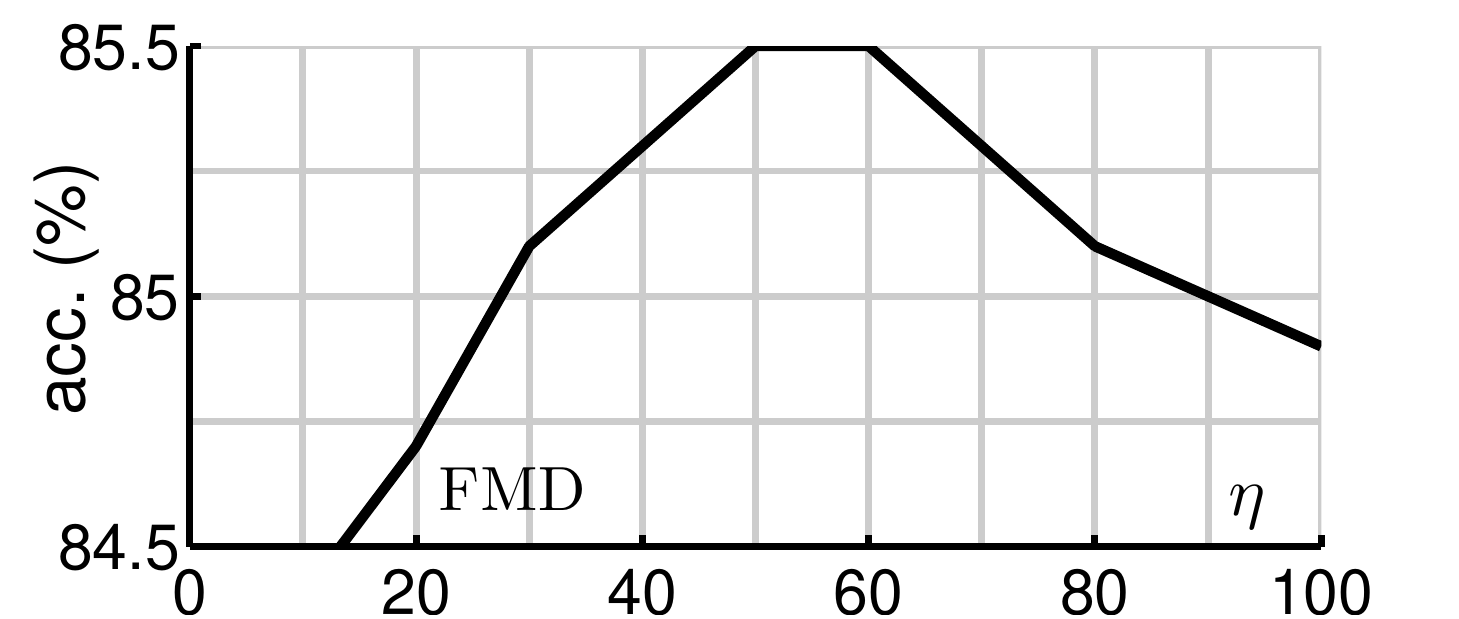}\vspace{-0.2cm}
\caption{\label{fig:eval3}}
\end{subfigure}
\begin{subfigure}[t]{\PlotWW\linewidth}
\centering\includegraphics[trim=0 0 0 0, clip=true,width=\PlotWWW]{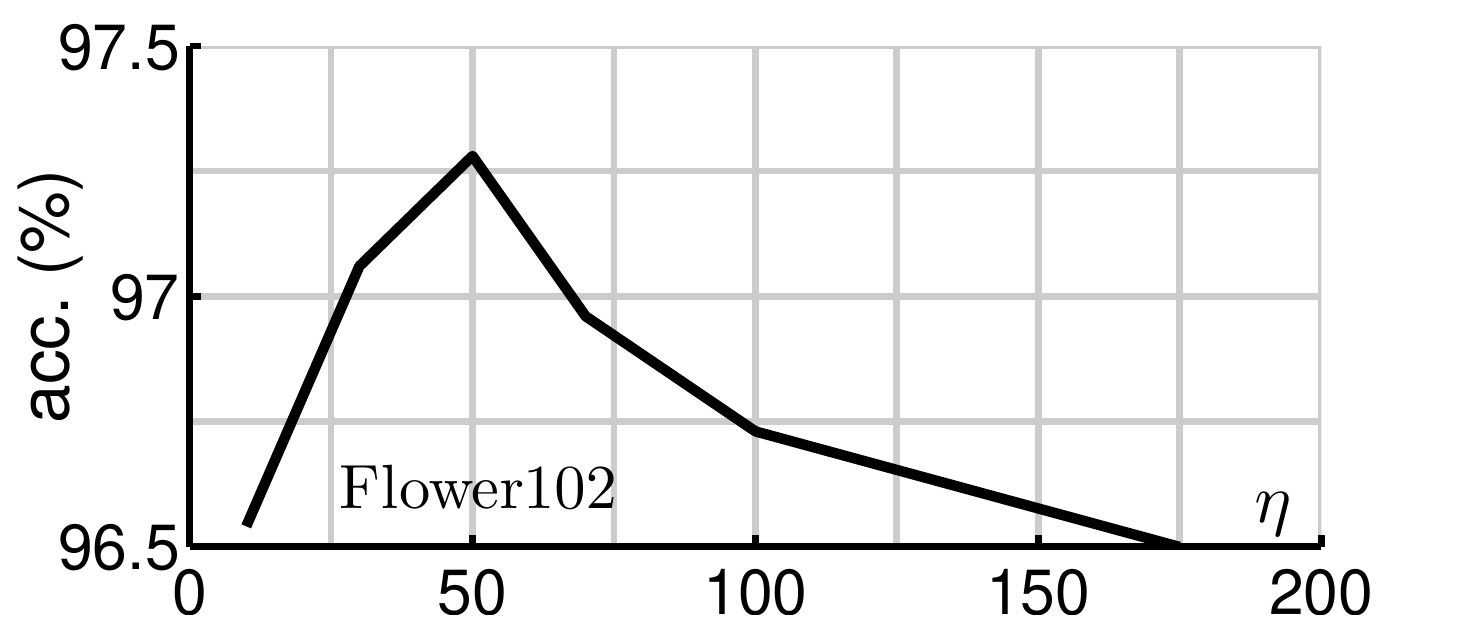}\vspace{-0.2cm}
\caption{\label{fig:eval4}}
\end{subfigure}
\vspace{-0.3cm}
\caption{Performance w.r.t. hyperparameters. Figures \ref{fig:eval1} and \ref{fig:eval2}: $\beta$-centering on Flower102 and $\alpha$ for spatial coordinate encoding on FMD. Figures \ref{fig:eval3} and \ref{fig:eval4}: the accuracy w.r.t. the $\eta'\!$ and $\eta\!$ parameters given SigmE and the spectral MaxExp.
}
\label{fig:sens_pars}
\vspace{-0.3cm}
\end{figure}

\revised{
Furthermore, we note that the spectral approaches ({\em Spec.}) outperform element-wise second-order pooling. However, the differences are not drastically large.
Spectral Gamma and MaxExp, both with spatial coordinates, denoted as ({\em SOP+SC+Spec. Gamma}) and ({\em SOP+SC+Spec. MaxExp}), perform similarly to the time-reversed Heat Diffusion Process (HDP) which experimentally validates their similarity to HDP.
Finally,  our fast spectral MaxExp ({\em SOP+SC+Spec. MaxExp(F)}) yields \textbf{97.62}\% accuracy. We expect that the back-prop. through the fast spectral MaxExp is more stable than the back-prop. via SVD which we use for other methods.
}

For Food-101, we apply our  second-order representations ({\em SOP+SC+SigmE}) and ({\em SOP+SC+Spec. MaxExp}) 
and obtain \textbf{87.5}\% and \textbf{87.8}\% accuracy. \revised{Furthermore, the fast spectral MaxExp ({\em SOP+SC+Spec. MaxExp(F)}) yields \textbf{88.4}\% accuracy.} In contrast, a recent more involved kernel pooling \cite{cui2017kernel} reports 85.5\% accuracy while the baseline approach scores only 81.9\% in the same testbed. This shows the strength of our approach on fine-grained problems.

\noindent{\textbf{Scene recognition.}} Next, we validate our approach on MIT67--a larger dataset for scene recognition.  Table \ref{tab:mit67} shows that all second-order approaches ({\em SOP}) outperform the standard ResNet-50 network ({\em Baseline}) pre-trained on the Places-205 dataset and fine-tuned on MIT67. Moreover, ({\em SigmE})  yields marginally better results than ({\em AsinhE}). Using spatial coordinates ({\em SC}) also results in additional gain in the classification performance. The second-order representation combined with  spatial coordinates and SigmE pooling ({\em SOP+SC+SigmE}) yields \textbf{86.3}\% accuracy and outperforms our baseline and \cite{khan2017scene} by 2.3\% and 2\%, respectively. 

\revised{
For spectral operators, we observe a similar trend to results on Flower-102. Spectral Gamma with spatial coordinates ({\em SOP+SC+Spec. Gamma}) marginally outperforms HDP ({\em SOP+SC+Spec. HDP}). We expect this is due to the inversion of eigenvalues in HDP which makes it unstable for rank-deficient autocorrelation matrices. We note that the fast spectral MaxExp ({\em SOP+SC+Spec. MaxExp}) outperforms other spectral operators  due to a more stable back-propagation it enjoys.
}

\noindent{\textbf{Material classification.}} For the FMD dataset (material/texture recognition), 
 Table \ref{tab:fmd} demonstrates that our second-order representation ({\em SOP+SC+SigmE}) scores \textbf{85.5}\% accuracy and outperforms our baseline approach by 2.1\%. \revised{Moreover, using the fast spectral MaxExp ({\em SOP+SC+Spec. MaxExp}) yields \textbf{86.4}\% accuracy.} We note that our approach and the baseline use the same testbed, that is, the only difference is the addition of our second-order representations, spatial coordinates and Power Normalization. 

\revised{\noindent{\textbf{ImageNet 2012.}} Our fast spectral MaxExp ({\em SOP+SC+Spec. MaxExp(F)}) is shown to outperform ({\em SOP+SC+Spec. Gamma}) in Table \ref{tab:in}. This is expected due to instabilities in back-propagation through SVD. Our method is also comparable with the recent approaches while enjoying strong theoretical connections to HDP.}

\begin{table}[t]
\centering
\begin{tabular}{l l c || c c|}
Method  && acc. & Method  & acc.\\
\hline
{\em IFV+DeCAF}\kern-0.6em&\cite{cimpoi2014describing} & 65.5  & {\em Baseline} & 83.4 \\
{\em FV+FC+CNN}\kern-0.6em&\cite{cimpoi2015deep}       & 82.2  & {\em SOP+SC+AsinhE} & 85.0 \\
{\em SMO Task}\kern-0.6em&\cite{zhang2016integrating}  & 82.3  & {\em SOP+SC+SigmE} & \textbf{85.5}\\
													&													   &       & {\em \fontsize{7}{8}\selectfont SOP+SC+Spec. MaxExp(F)} & \textbf{86.4}
\end{tabular}
\vspace{-0.2cm}
\caption{The FMD dataset. Our (right) \vs other methods (left).}
\label{tab:fmd}
\vspace{-0.1cm}
\end{table}
\begin{table}[t]
\centering
\begin{tabular}{l l|c|c|}
Method && top-$1$ err& top-$5$ err\\
\hline
{\em ResNet-50} & \cite{resnet} & 24.7& 7.8 \\
{\em MPN-COV} & \cite{peihua_fast} & 22.73& 6.54 \\
{\em Newton-Schulz}\kern-0.9em&\kern-0.8em\cite{lin2017improved,peihua_fast}\kern-0.3em & 22.14 & 6.22 \\
\hline
{\em Baseline}  && 25.0 & 8.1\\
\hline
{\em SOP+SC+Spec. Gamma} && 22.51 & 6.85\\
{\em SOP+SC+Spec. MaxExp(F)}\kern-0.6em && \textbf{22.05} & \textbf{6.04}
\end{tabular}
\caption{\revised{The ImageNet 2012 dataset. The top part of the table lists state-of-the-art results from the literature.}}
\label{tab:in}
\vspace{-0.35cm}
\end{table}
\begin{table}[t]
\vspace{-0.1cm}
\centering
\hspace{-0.3cm}
\setlength{\tabcolsep}{0.10em}
\renewcommand{\arraystretch}{0.70}
\fontsize{8.5}{9}\selectfont
\begin{tabular}{l c|c|c|c|c}
\multirow{2}{*}{Model} & & Image & Fine & \multirow{2}{*}{1-shot} & \multirow{2}{*}{5-shot} \\
 & & Res. & Tune & &  \\ \hline
\textit{Meta-Learn LSTM} & \kern-3.6em\cite{METALEARNLSTM}\kern-0.6em & \multirow{4}{*}{$84\!\times\!84$} & N & $43.44 \!\pm\! 0.7$ & $60.60 \!\pm\! 0.7$ \\
\textit{Prototypical Net} & \kern-3.6em\cite{snell2017prototypical}\kern-0.6em & & N & $49.42 \!\pm\! 0.7$ & $68.20 \!\pm\! 0.6$ \\ 
\textit{MAML} & \kern-3.6em\cite{finn2017model}\kern-0.6em & & Y & $48.70 \!\pm\! 1.8$ & $63.11 \!\pm\! 0.9$ \\ 
\textit{Relation Net} & \kern-3.6em\cite{sung2017learning}\kern-0.6em & & N & $50.44 \!\pm\! 0.8$ & $65.32 \!\pm\! 0.7$  \\ 
\hline
\multicolumn{2}{l|}{\textit{SoSN($\otimes$)} (no Power Norm.)} & \multirow{8}{*}{$84\!\times\!84$} & \multirow{8}{*}{N} & $50.88 \!\pm\! 0.8$ & $66.71 \!\pm\! 0.6$  \\
\multicolumn{2}{l|}{\textit{SoSN($\otimes$)+AsinhE}} &  &  & $52.10 \!\pm\! 0.8$ & $67.79 \!\pm\! 0.6$ \\
\multicolumn{2}{l|}{\textit{SoSN($\otimes$)+SigmE}} &  &  & ${ 52.96\!\pm\! 0.8}$ & $68.63 \!\pm\! 0.6$  \\ 
\multicolumn{2}{l|}{\textit{SoSN($\otimes$+L)+AsinhE}} &  &  & $54.36\!\pm\! 0.7$ & ${ 70.80 \!\pm\! 0.6}$  \\ 
\multicolumn{2}{l|}{\textit{SoSN($\otimes$+L)+SC+AsinhE}} &  &  & $55.01\!\pm\! 0.6$ & ${ 71.23 \!\pm\! 0.6}$  \\ 
\multicolumn{2}{l|}{\textit{SoSN($\otimes$+L)+SigmE}} &  &  & $54.19\!\pm\! 0.6$ & ${ 70.94 \!\pm\! 0.6}$  \\ 
\multicolumn{2}{l|}{\textit{SoSN($\otimes$+L)+SC+SigmE}} &  &  & $55.36\!\pm\! 0.7$ & ${ 71.23 \!\pm\! 0.6}$  \\ 
%
%
\hline
\multicolumn{2}{l|}{\textit{SoSN($\otimes$)+SigmE}} & \multirow{11}{*}{$224\!\times\!224$} & \multirow{8}{*}{N} & $60.35 \!\pm\! 0.7$ & $74.01 \!\pm\! 0.6$  \\
\multicolumn{2}{l|}{\textit{SoSN($\otimes$+L)+AsinhE}} &   &  & $60.32 \!\pm\! 0.6$ & $75.10 \!\pm\! 0.5$  \\
\multicolumn{2}{l|}{\textit{SoSN($\otimes$+L)+SC+AsinhE}} &   &  & $60.42 \!\pm\! 0.6$ & $75.89 \!\pm\! 0.6$  \\
\multicolumn{2}{l|}{\textit{SoSN($\otimes$+L)+SigmE}} &  &  & $60.35 \!\pm\! 0.7$ & $75.02 \!\pm\! 0.5$  \\
\multicolumn{2}{l|}{\textit{SoSN($\otimes$+L)+SC+SigmE}} &   &  & $60.49 \!\pm\! 0.6$ & $75.54 \!\pm\! 0.5$  \\\cline{1-2}\cline{5-6}
\multicolumn{2}{l|}{{\fontsize{7}{8}\selectfont\textit{SoSN($\otimes$+L)+SC+Spec. Gamma}}} &   &  & $60.38 \!\pm\! 0.5$ & $75.70 \!\pm\! 0.5$  \\
\multicolumn{2}{l|}{{\fontsize{7}{8}\selectfont\textit{SoSN($\otimes$+L)+SC+Spec. MaxExp(F)}}} &   &  & $60.40 \!\pm\! 0.6$ & $75.75 \!\pm\! 0.5$  \\\cline{4-4}
\multicolumn{2}{l|}{{\fontsize{7}{8}\selectfont\textit{SoSN($\otimes$+L)+Pretr.+SC+SigmE}}} &   & \multirow{3}{*}{\kern-0.4em Y \kern-0.4em} & $60.51 \!\pm\! 0.5$ & $75.56 \!\pm\! 0.6$  \\
\multicolumn{2}{l|}{{\fontsize{7}{8}\selectfont\textit{SoSN($\otimes$+L)+Pretr.+SC+Spec. Gamma}}} &   &  & $60.95 \!\pm\! 0.6$ & $76.20 \!\pm\! 0.5$  \\
\multicolumn{2}{l|}{{\fontsize{7}{8}\selectfont\textit{SoSN($\otimes$+L)+Pretr.+SC+Spec. MaxExp(F)}}} &   &  & $61.32 \!\pm\! 0.6$ & $76.45 \!\pm\! 0.5$  \\
%
\hline
\end{tabular}
\caption{Evaluations on the \textit{mini}ImageNet dataset (5-way acc. given). Refer to \cite{sung2017learning} for references to baselines.}
\label{table2}
\vspace{-0.35cm}
\end{table}

\vspace{0.05cm}
\noindent{\textbf{Performance w.r.t. hyperparameters.}} Figure \ref{fig:eval1} demonstrates that $\beta$-centering has a positive impact on image classification with ResNet-50. This strategy, detailed in Section \ref{sec:cooc}, is trivial to combine with our pooling. Figure \ref{fig:eval2} shows that setting non-zero $\alpha$, which lets encode spatial coordinates according to Eq. \eqref{eq:encode_sc}, brings additional gain in accuracy at no extra cost. Figure \ref{fig:eval3} demonstrates that over 1\% accuracy can be gained by tuning our SigmE pooling. Moreover, Figure \ref{fig:eval4} shows that the spectral MaxExp can yield further gains over element-wise SigmE and MaxExp for carefully chosen $\eta$. Lastly, we observed that our spectral and element-wise MaxExp converged in 3--12 and 15--25 iterations, respectively. This shows that both spectral and element-wise pooling have their strong and weak points.

\revisedd{
\vspace{0.05cm}
\noindent{\textbf{Timing and variance in SPN.}} Below we present timing experiments performed on a TitanX GPU with the use of autograd profiler of PyTorch. To evaluate the forward runtime $t_{fw\!+SPN}$ of Fast Spectral MaxExp, Newton-Schulz iter. (the approximate matrix square root) and the Generalized Spectral Power Normalization, we applied the {\em record\_function()} subroutine of the profiler. To time the autograd-based back-propagation runtime through each of these operators, we firstly recorded the total GPU time $t_{tot\!+SPN}$ per method before removing these operators from the code and recording the total GPU time $t_{tot}$. Thus, we obtain the backward runtime $t_{bw\!+SPN}\!=\!t_{tot\!+SPN}\!-\!t_{tot}\!-\!t_{fw\!+SPN}$. We normalize results by the number of batches and datapoints per mini-batch.

\vspace{-0.1cm}
\begin{tcolorbox}[width=1.0\linewidth, colframe=blackish, colback=beaublue, boxsep=0mm, arc=3mm, left=1mm, right=1mm, right=1mm, top=1mm, bottom=1mm]
Figure \ref{fig:tim1} shows the speed of forward/backward passes of Fast Spectral MaxExp from Alg. \ref{code:exp_sqr} \wrt $\eta$. The plot shows that the runtime grows sublinearly \wrt $\eta$ which is the major advantage over the Generalized Spectral Power Normalization ({\em GSPN}) that uses SVD (runtime scales with $d^\omega$ where $2\!<\!\omega\!<\!2.376$) and the Newton-Schulz iterations which realize only the approximate matrix square root ($\gamma\!=\!0.5$) whose quality depends on the number of iterations $k$ (runtime scales linearly \wrt $k$). 
\end{tcolorbox}
\vspace{-0.2cm}

Figures \ref{fig:tim2} and \ref{fig:tim3} compare the forward and backward speeds which show that our Fast Spectral MaxExp is faster than the Newton-Schulz iter. and GSPN. Notably, the backward pass appears $\sim\!2\!\times$ more costly than the forward pass in all cases. We suspect this is due to the autograd recomputing intermediate variables from the forward pass in the backward pass. Thus, the runtime of optimized backward pass can be halved. Another downside of the Newton-Schulz iter. and GSPN compared to the Fast Spectral MaxExp was their larger memory footprint.}

\begin{table}[t]
\centering
\setlength{\tabcolsep}{0.10em}
\renewcommand{\arraystretch}{0.70}
\begin{tabular}{l|c|c}
Model & 1-shot & 5-shot \\ \hline
\textit{Relation Net}  & $68.52 \pm 0.94\%$ & $81.11 \pm 0.66\%$  \\ 
\hline
%
\textit{SoSN+SigmE}   & $77.62 \pm 0.88\%$ & $88.60 \pm 0.53\%$  \\
\textit{SoSN+SC+SigmE}   & ${\bf 78.50 \pm 0.75\%}$ & ${\bf 89.95 \pm 0.62\%}$  \\
\textit{\fontsize{7}{8}\selectfont SoSN+SC+Spec. Gamma}  & $78.45 \pm 0.72\%$ & $89.75 \pm 0.55\%$  \\
\textit{\fontsize{7}{8}\selectfont SoSN+SC+Spec. MaxExp(F)}  & $78.51 \pm 0.68\%$ & $89.70 \pm 0.58\%$  \\
\textit{\fontsize{7}{8}\selectfont SoSN+Pretr.+SC+ SigmE}   & $78.55 \pm 0.70\%$ & $89.99 \pm 0.80\%$  \\
\textit{\fontsize{7}{8}\selectfont SoSN+Pretr.+SC+Spec. Gamma}   & $79.35 \pm 0.72\%$ & $90.65 \pm 0.84\%$  \\
\textit{\fontsize{7}{8}\selectfont SoSN+Pretr.+SC+Spec. MaxExp(F)}   & ${\bf 80.01 \pm 0.74\%}$ & ${\bf 91.50 \pm 0.65\%}$  \\

\hline
\end{tabular}
\caption{Evaluations on the Flower102 dataset (5-way acc. given). For SoSN, we evaluate only our ({\em ($\otimes$+L)+SigmE}) aggregator (equiv. to ({\em ($\otimes$)+SigmE}) for 1-shot problems) given images of res. $224\!\times\!224$.}
\label{tab:fewshot-flower}
\end{table}
\begin{table}[t]
\centering
\setlength{\tabcolsep}{0.10em}
\renewcommand{\arraystretch}{0.70}
\begin{tabular}{l|c|c}
Model & 1-shot & 5-shot \\ \hline
\textit{Relation Net}  & $36.89 \pm 0.72\%$ & $49.07 \pm 0.65\%$  \\ 
\hline
%
\textit{SoSN+SigmE}   & $42.44 \pm 0.75\%$ & $60.70 \pm 0.65\%$  \\
\textit{SoSN+SC+SigmE}   & $42.80 \pm 0.72\%$ & $60.95 \pm 0.68\%$  \\
\textit{\fontsize{7}{8}\selectfont SoSN+SC+Spec. Gamma}  & $42.30 \pm 0.72\%$ & $60.71 \pm 0.60\%$  \\
\textit{\fontsize{7}{8}\selectfont SoSN+SC+Spec. MaxExp(F)}  & $42.40 \pm 0.75\%$ & $60.70 \pm 0.59\%$  \\
\textit{\fontsize{7}{8}\selectfont SoSN+Pretr.+SC+SigmE}   & $42.60 \pm 0.69\%$ & $60.85 \pm 0.70\%$  \\
\textit{\fontsize{7}{8}\selectfont SoSN+Pretr.+SC+Spec. Gamma}   & $43.72 \pm 0.52\%$ & $62.02 \pm 0.68\%$  \\
\textit{\fontsize{7}{8}\selectfont SoSN+Pretr.+SC+Spec. MaxExp(F)}   & ${\bf 45.21 \pm 0.62\%}$ & ${\bf 64.50 \pm 0.61\%}$  \\
\hline
\end{tabular}
\caption{Evaluations on the Food-101 dataset (5-way acc. given). For SoSN, we evaluate only our ({\em ($\otimes$+L)+SigmE}) aggregator (equiv. to ({\em ($\otimes$)+SigmE}) for 1-shot problems) given images of res. $224\!\times\!224$.}
\label{tab:fewshot-food}
\vspace{-0.35cm}
\end{table}
\begin{table}[t]
\hspace{-0.3cm}
\setlength{\tabcolsep}{0.10em}
\renewcommand{\arraystretch}{0.70}
\begin{tabular}{l l|c|c|c|c|}
Method && \kern-0.6em MUTAG\kern-0.6em & \kern-0.6em PTC\kern-0.6em & \kern-0.0em PROTEINS \kern-0.3em & \kern-0.6emNCI1\kern-0.6em\\
\hline
{\em S\textsuperscript{2}GC} \kern-0.6em&\cite{zhu2021simple} & 85.1$\!\pm\!$7.4 &  - & 75.5$\!\pm\!$4.1 & - \\
{\em DGCNN} \kern-0.6em&\cite{DGCNN} & 85.8$\!\pm\!$1.7 & 58.6$\!\pm\!$2.5& 75.5$\!\pm\!$0.9 & 74.4$\!\pm\!$0.5 \\
{\em GCAPS-CNN}\kern-0.6em&\cite{capsule0} & - & 66.0$\!\pm\!$5.9 & 76.4$\!\pm\!$4.2 & 82.7$\!\pm\!$2.4 \\
{\em BC+CAPS} & \cite{capsule1} & 88.9$\!\pm\!$5.5 & 69.0$\!\pm\!$5.0 & 74.1$\!\pm\!$3.2 & 65.9$\!\pm\!$1.1 \\
\hline
\hline
{\em GIN0}  &\cite{pytorch_geom}& 86.1$\!\pm\!$5.8 & 56.5$\!\pm\!$6.8 & 72.2$\!\pm\!$4.9 & 77.9$\!\pm\!$2.5 \\
\hline
{\em SOP} && 86.2$\!\pm\!$5.2 & 57.5$\!\pm\!$10.1 & 71.2$\!\pm\!$4.9 & 78.3$\!\pm\!$3.0\\
{\em SOP+AsinhE} && 86.2$\!\pm\!$6.2 & 58.2$\!\pm\!$5.8 & 72.0$\!\pm\!$3.8 & 79.5$\!\pm\!$2.0 \\
{\em SOP+SigmE} && 87.8$\!\pm\!$6.1 & 58.4$\!\pm\!$5.5 & 71.8$\!\pm\!$3.6 & 79.6$\!\pm\!$1.9\\
\kern-0.6em{\em \fontsize{8}{9}\selectfont SOP+Newton-Schulz}\kern-0.6em & \kern-1.5em\cite{lin2017improved,peihua_fast}\kern-0.6em& 86.2$\!\pm\!$6.1  & 59.3$\!\pm\!$5.8 & 75.3$\!\pm\!$2.8 & 79.9$\!\pm\!$2.3 \\
{\em SOP+Spec. Gamma}\kern-2.2em && 86.5$\!\pm\!$6.1 & 61.5$\!\pm\!$3.8 & 75.7$\!\pm\!$4.0 & 80.0$\!\pm\!$2.1 \\
{\em SOP+Spec. HDP}\kern-2.2em && 86.2$\!\pm\!$7.9 & 61.2$\!\pm\!$6.4 & 75.5$\!\pm\!$2.8 & 79.6$\!\pm\!$2.0 \\
{\em SOP+Spec. MaxExp}\kern-2.2em && 86.8$\!\pm\!$6.6 & 61.9$\!\pm\!$2.4 & \textbf{76.8}$\!\pm\!$2.9 & 79.8$\!\pm\!$2.4 \\
{\em SOP+Spec. MaxExp(F)}\kern-1.8em && \textbf{88.9}$\!\pm\!$5.8 & \textbf{68.3}$\!\pm\!$9.3 & 76.2$\!\pm\!$2.8 & \textbf{80.3}$\!\pm\!$2.4 \\
\end{tabular}

\vspace{0.1cm}
%
%
\begin{tabular}{l l|c|c|c|c|}
Method && \kern-0.1em MUTAG\kern-0.1em & \kern-0.04em COLLAB\kern-0.04em & \kern0.00em {\fontsize{8}{9}\selectfont REDDIT-B}\kern-0.0em & \kern0.00em {\fontsize{8}{9}\selectfont REDDIT-5K}\kern0.0em\\
\hline
{\em S\textsuperscript{2}GC} \kern-0.6em&\cite{zhu2021simple} & 85.1$\!\pm\!$7.4 & 80.2$\!\pm\!$1.2 & - & - \\
{\em DGCNN} \kern-0.6em&\cite{DGCNN} & 85.8$\!\pm\!$1.7 & 73.8$\!\pm\!$0.5 & 76.0$\!\pm\!$1.7 & 48.7$\!\pm\!$4.5 \\
{\em GCAPS-CNN}\kern-0.6em&\cite{capsule0} & -          & 77.7$\!\pm\!$2.5 & 87.6$\!\pm\!$2.5 & 50.1$\!\pm\!$1.7 \\
{\em AWE} & \cite{pmlr-v80-ivanov18a} & - & 71.0$\!\pm\!$1.5 & 83.0$\!\pm\!$2.7 & 54.7$\!\pm\!$2.9 \\
\hline
\hline
{\em GIN0}  &\cite{pytorch_geom}& 89.2$\!\pm\!$4.8 & 79.9$\!\pm\!$1.7 & 92.1$\!\pm\!$1.9 & 55.5$\!\pm\!$2.1 \\
\hline
{\em SOP} && 89.9$\!\pm\!$9.5 & 80.6$\!\pm\!$1.2 & 91.8$\!\pm\!$2.1 & 55.6$\!\pm\!$3.0\\
{\em SOP+AsinhE} && 92.0$\!\pm\!$4.9 & 81.2$\!\pm\!$1.6 & 92.5$\!\pm\!$1.7 & 56.8$\!\pm\!$2.1 \\
{\em SOP+SigmE} && 91.5$\!\pm\!$5.6 & 81.4$\!\pm\!$1.8 & 92.4$\!\pm\!$1.9 & \textbf{57.1}$\!\pm\!$1.8\\
\kern-0.6em{\em \fontsize{8}{9}\selectfont SOP+Newton-Schulz}\kern-0.6em & \kern-1.5em\cite{lin2017improved,peihua_fast}\kern-0.6em& 93.7$\!\pm\!$8.5  & 81.2$\!\pm\!$8.9 & 92.3$\!\pm\!$1.8 & \textbf{57.1}$\!\pm\!$1.9 \\
{\em SOP+Spec. MaxExp(F)}\kern-1.8em && \textbf{94.7}$\!\pm\!$5.0 & \textbf{81.7}$\!\pm\!$1.7 & \textbf{92.6}$\!\pm\!$1.6 & \textbf{57.1}$\!\pm\!$1.9 \\
\end{tabular}
\caption{\revisedd{Classification with GIN0 and various pooling methods on ({\em top}) MUTAG, PTC, PROTEINS and NCI1 (the validation split was used only for validation) and ({\em bottom}) MUTAG, COLLAB, REDDIT-BINARY and REDDIT-MULTI-5K (after obtaining hyperparameters, the train and validation splits were combined for retraining). State-of-the-art results from the literature are listed directly below `Method'.}}
\label{tab:graph}
\vspace{-0.35cm}
\end{table}
%

\revisedd{Finally, we investigate if SPN reduces the correlation between features that $\mM$ represents. It is known that an isotropic Gaussian can be thought of as being constructed from uncorrelated features. Thus, we measure the variance of $j\!=\!5$ leading eigenvalues of $\mM$ passed via MaxExp. Figure \ref{fig:var1} shows that as $\eta$ increases, the variance of the leading eigenvalues decreases. However, while leading eigenvalues for smaller $\eta$ become equalized and pulled towards the value of one, non-leading eigenvalues may remain unaffected which increases the variance. This behavior is desired as only a certain $j$ leading eigenvalues correspond to the signal and the remaining non-leading eigenvalues represent the noise (by analogy to the Principal Component Analysis).}

\begin{figure}[t]
\centering
\hspace{-0.3cm}
\begin{subfigure}[t]{0.495\linewidth}
\centering\includegraphics[trim=0 0 0 0, clip=true, height=\PowH]{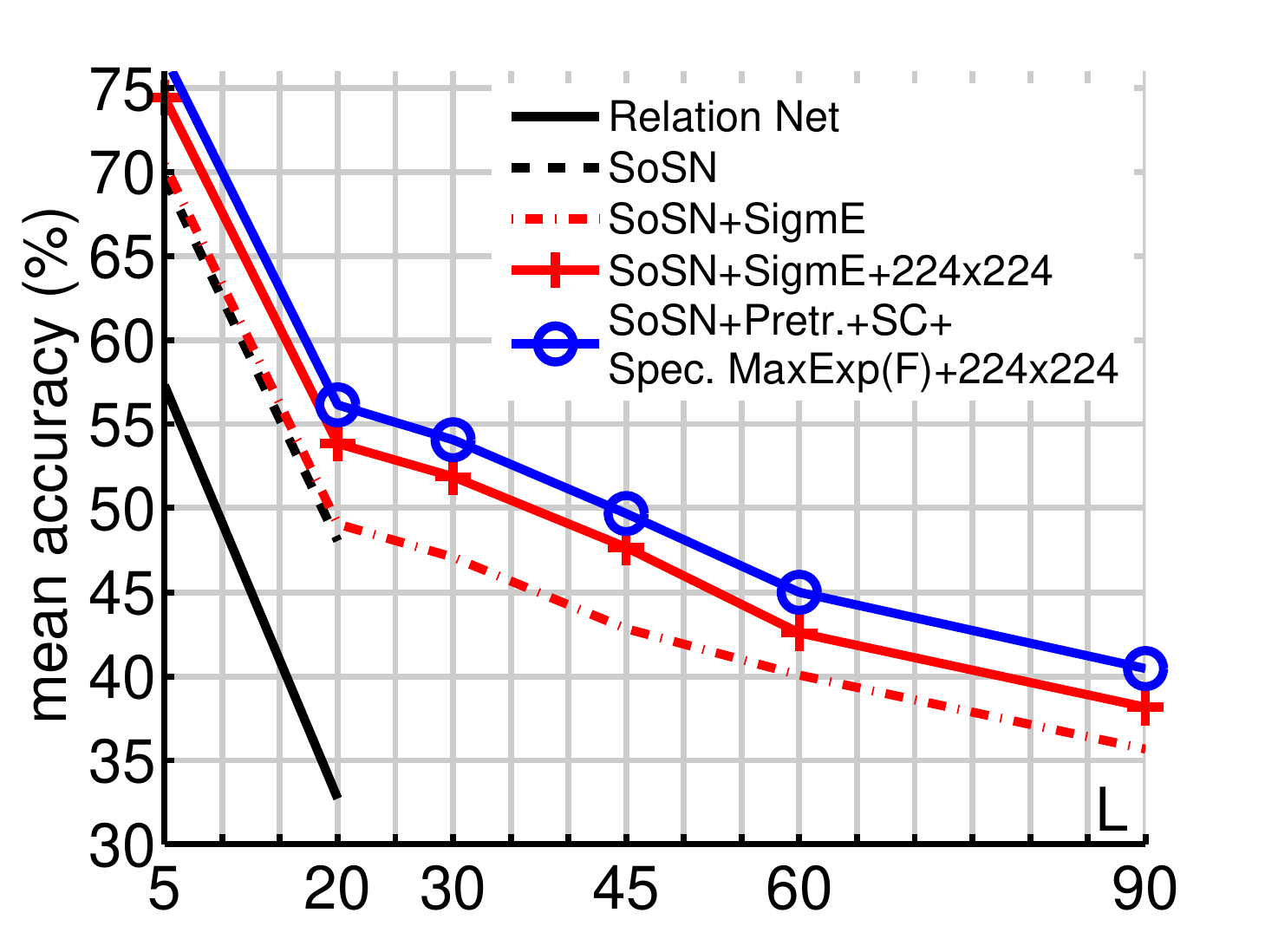}
\caption{Protocol I$\!\!\!\!\!\!\!\!$}\label{fig:openmic1}
\end{subfigure}
\begin{subfigure}[t]{0.495\linewidth}
\centering\includegraphics[trim=0 0 0 0, clip=true, height=\PowH]{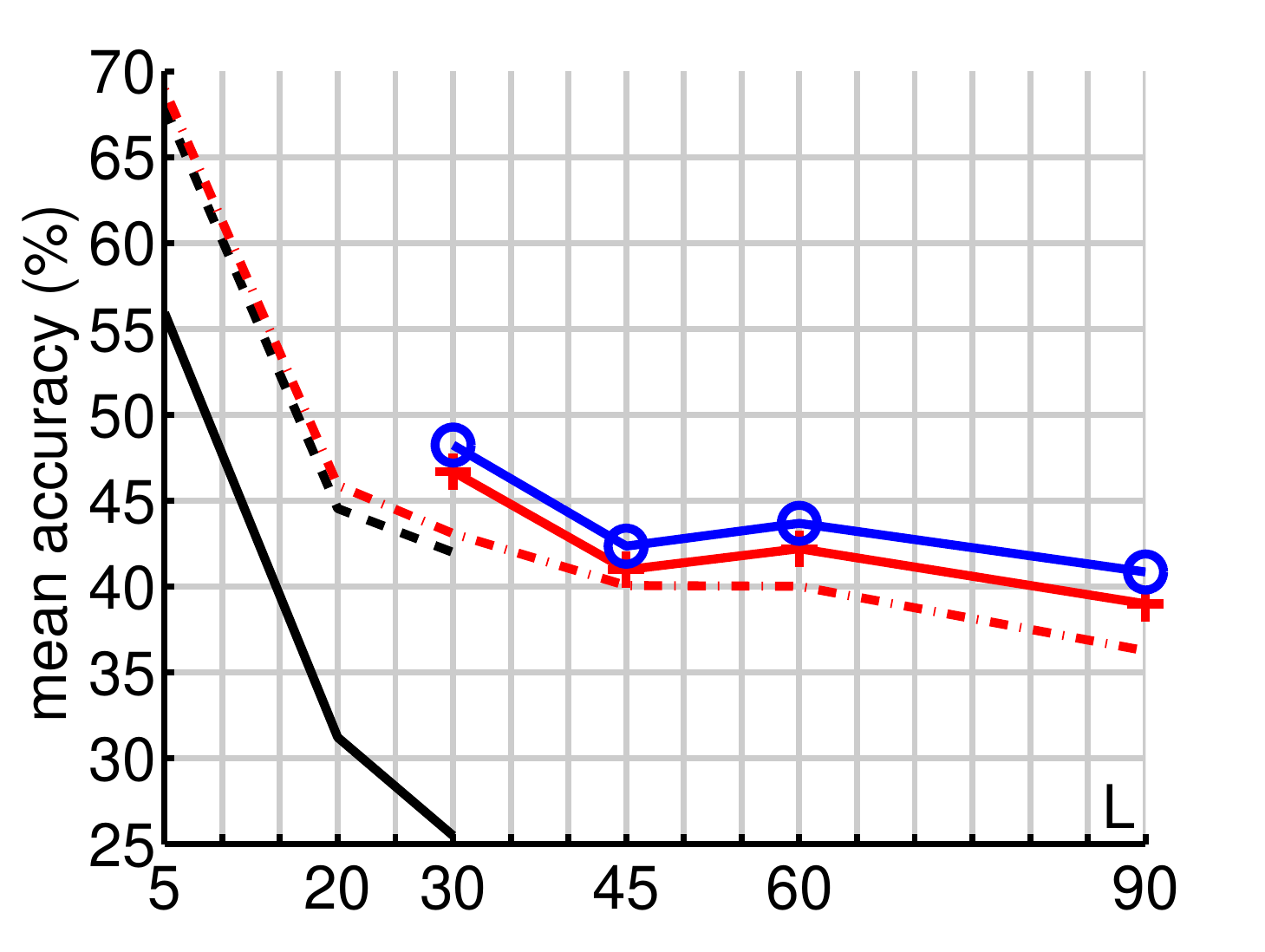}
\caption{Protocol II$\!\!\!\!\!\!\!\!$}\label{fig:openmic2}
\end{subfigure}
\caption{Evaluations on the Open MIC dataset. Fig. \ref{fig:openmic1} shows 1-shot mean accuracy on Protocol I. Each point is averaged over 12 possible testing results from train-test pairs {\em x$\!\rightarrow$y} where we use p1: shn+hon+clv, p2: clk+gls+scl, p3: sci+nat, p4: shx+rlc. Fig. \ref{fig:openmic2} shows 1-shot mean accuracy on Protocol II. Training is performed on source images and testing on target images for every exhibition. Then every point in the plot is average over results on 10 exhibitions.}
\vspace{-0.35cm}
\label{fig:openmic}
\end{figure}

\vspace{0.05cm}
\noindent{\textbf{Few-shot learning.}} Below we evaluate our SoSN model and compare it against state-of-the-art models \eg, Relation Net \cite{sung2017learning}.

For {\em mini}ImageNet, 
Table \ref{table2} shows that our method outperforms others on 5-way 1- and 5-shot learning. For experiments with image size of $84\!\times\!84$, our SoSN model ({\em SoSN($\otimes$)+SigmE}) achieved $\sim\!2.5\%$ and $\sim\!3.3\%$  higher accuracy than Relation Net \cite{sung2017learning}. 
Our SoSN models also outperformed Prototypical Net by $0.43$--$0.75\%$ accuracy on the 5-way 5-shot protocol. For $224\!\times\!224$ images on ({\em SoSN($\otimes$+L)+SigmE}), the accuracies on both protocols increase by $\sim\!5.5\%$ and $\sim\!6\%$ over $84\!\times\!84$ counterpart, which shows that SoSN benefits from larger image sizes as second-order matrices used in Eq. \eqref{eq:concat_new} become full-rank (the similarity network needs no modifications). \revised{Finally, Table \ref{table2} shows gains on spectral methods ({\em Spec.})  pre-trained on Food-101 ({\em Pretr.}) \ie, ({\em Pretr.+SC+Spec. Gamma}) and ({\em Pretr.+SC+Spec. MaxExp(F)}) outperforms non-spectral ({\em SC+SigmE}). However, spectral methods without pre-training ({\em SC+Spec. Gamma}) and ({\em SC+Spec. MaxExp(F)}) and even pre-trained non-spectral ({\em Pretr.+SC+SigmE}) fail to bring any benefits over non-spectral ({\em SC+SigmE}). 

\vspace{-0.1cm}
\begin{tcolorbox}[width=1.0\linewidth, colframe=blackish, colback=beaublue, boxsep=0mm, arc=3mm, left=1mm, right=1mm, right=1mm, top=1mm, bottom=1mm]
We believe this is an important finding--we benefit from pre-training only when we use spectral Power Norms. As spectral Power Norms act as time-reversed HDP, the level of correlation between co-occurring features is reduced. This reduces so-called catastrophic forgetting on the dataset used for pre-training. 
\end{tcolorbox}
\vspace{-0.2cm}

Our best results for ({\em Pretr.+SC+Spec. MaxExp(F)}) gave \textbf{61.32}\% and \textbf{76.45}\% accuracy on 5-way 1- and 5-shot learning.

\vspace{0.05cm}
\noindent{\textbf{Fine-grained few-shot learning.}} 
%
For Open MIC, Figure \ref{fig:openmic1} introduces results for the protocol that tests the generalization from
one task to another task ({\em Protocol I}). As only 1-shot protocol can be applied to this dataset, we use ({\em SoSN($\otimes$)}) which is equivalent to ({\em SoSN($\otimes$+L)}) for 1-shot problems and we denote it by ({\em SoSN}). All our methods outperform the Relation Net \cite{sung2017learning}. 
For $5$- and $20$-way, Relation Net scores 55.45 and 31.58\%. In contrast, our ({\em SoSN}) scores 68.23 and 45.31\%, resp. Our ({\em SoSN+SigmE}) scores 69.06 and 46.53\%, resp. Increasing resolution to $224\!\times\!224$ on ({\em SoSN+SigmE}) yields $\bf{74.43\%}$ and $\bf{53.84\%}$ accuracy. 

\ifdefined\arxiv
\begin{figure}[!t]
\else
\begin{figure}[t]
\fi
\vspace{-0.2cm}
\centering
\begin{subfigure}[t]{\PlotWW\linewidth}
\centering\includegraphics[trim=0 0 0 0, clip=true,width=\PlotWWW]{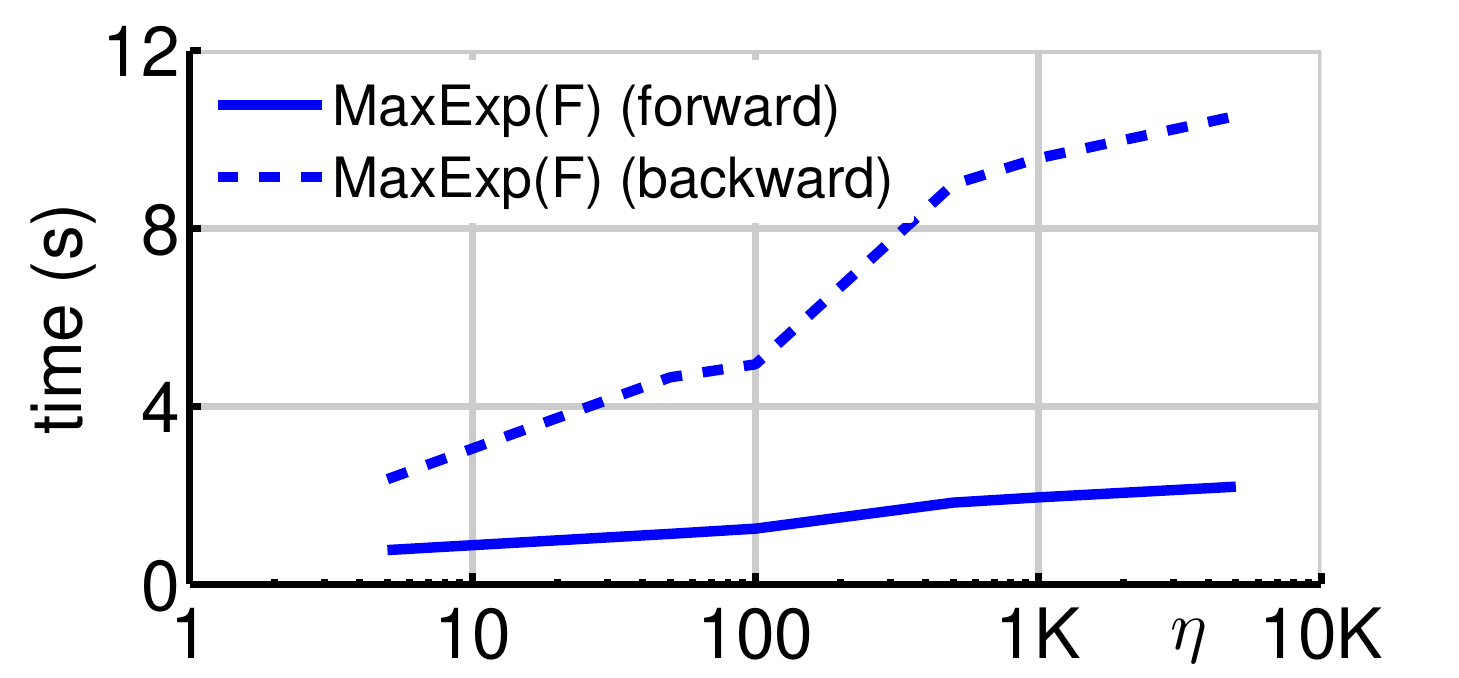}\vspace{-0.2cm}
\caption{\label{fig:tim1}}
\end{subfigure}
\begin{subfigure}[t]{\PlotWW\linewidth}
\centering\includegraphics[trim=0 0 0 0, clip=true,width=\PlotWWW]{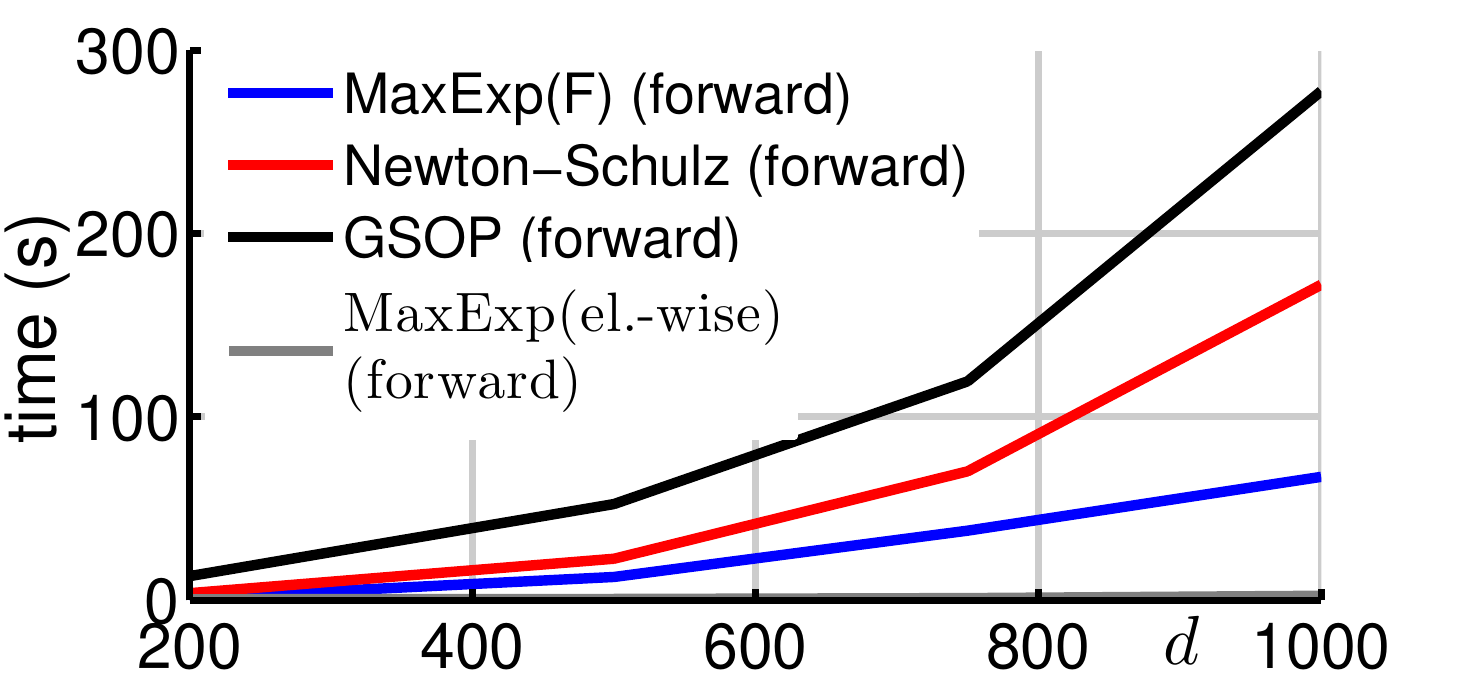}\vspace{-0.2cm}
\caption{\label{fig:tim2}}
\end{subfigure}
\ifdefined\arxiv\else\\\fi
\begin{subfigure}[t]{\PlotWW\linewidth}
\centering\includegraphics[trim=0 0 0 0, clip=true,width=\PlotWWW]{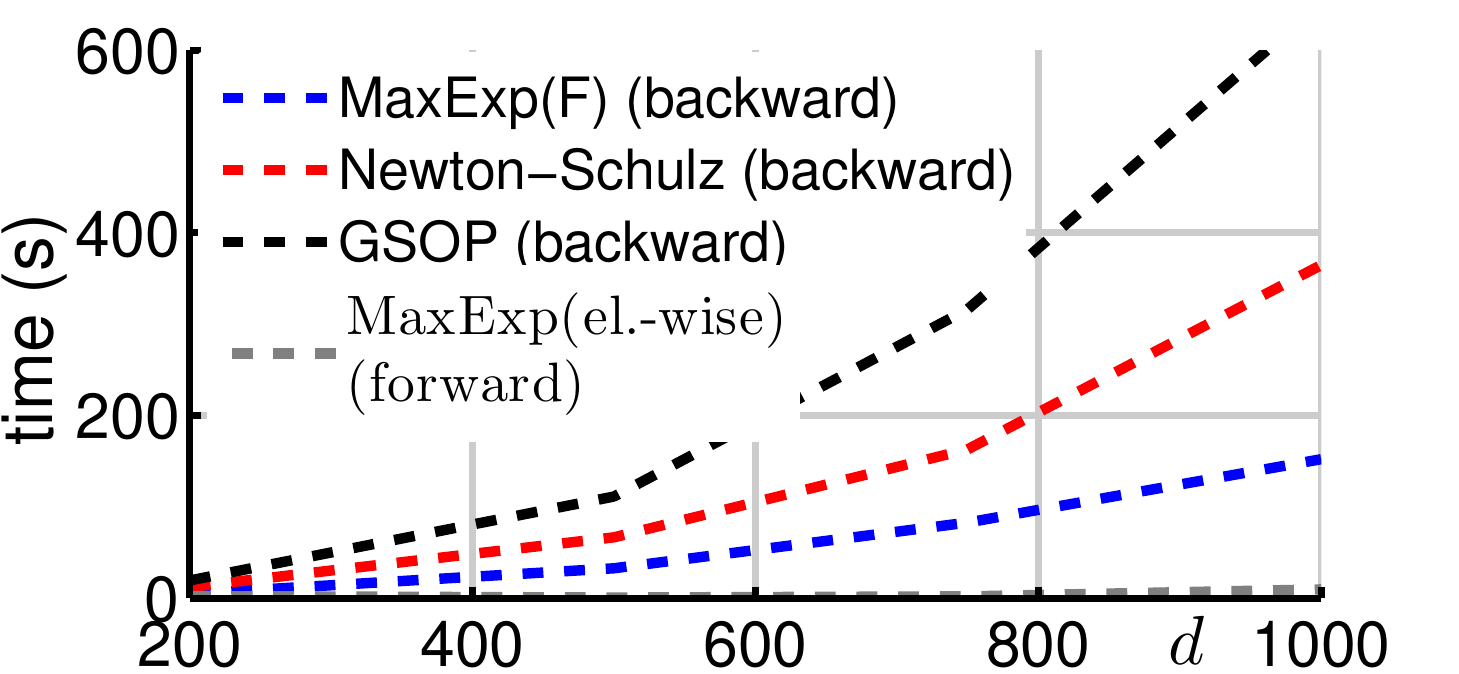}\vspace{-0.2cm}
\caption{\label{fig:tim3}}
\end{subfigure}
\begin{subfigure}[t]{\PlotWW\linewidth}
\centering\includegraphics[trim=0 0 0 0, clip=true,width=\PlotWWW]{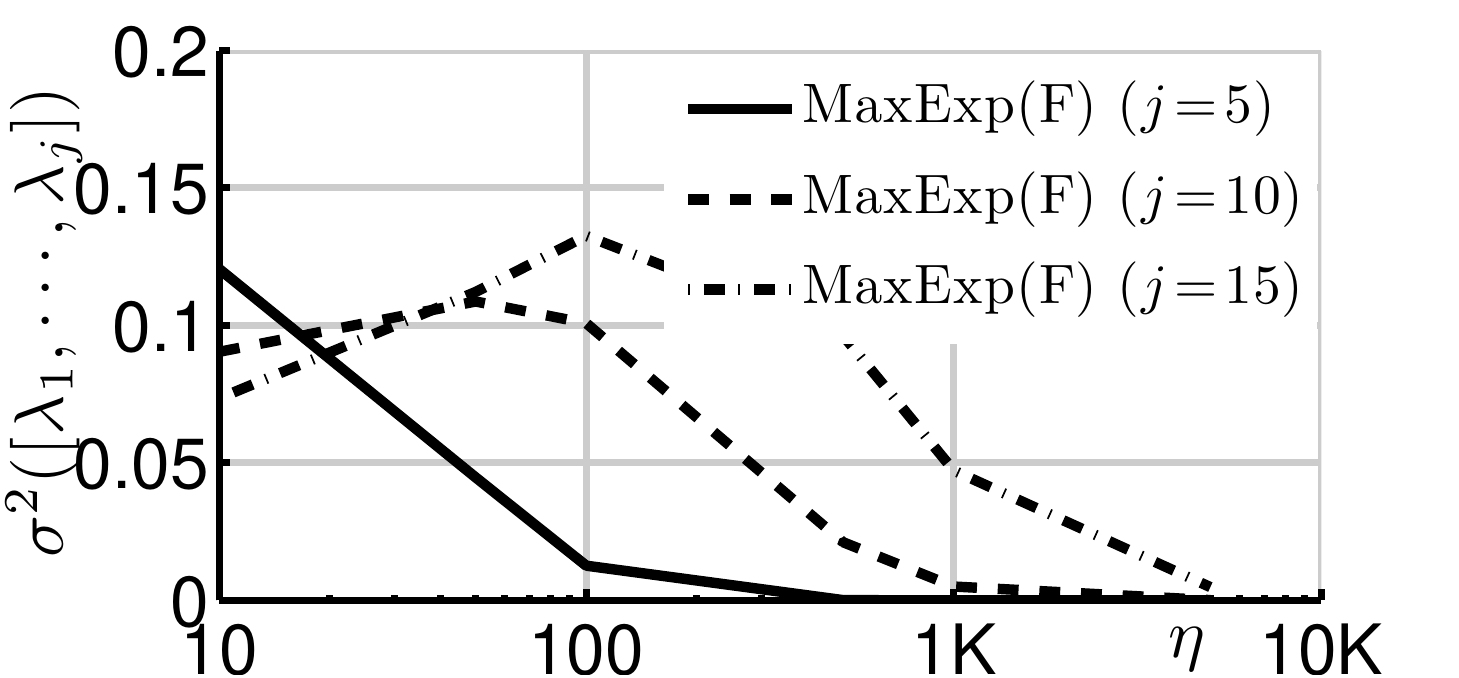}\vspace{-0.2cm}
\caption{\label{fig:var1}}
\end{subfigure}
\vspace{-0.3cm}
\caption{\revisedd{Timing and variance of operators. Figure \ref{fig:tim1} evaluates the speed (per 1000 images) of forward and backward passes our Fast Spectral MaxExp (Alg. \ref{code:exp_sqr}) as a function of $\eta$ given the side size $d\!=\!200$ of $\mM$. Figures \ref{fig:tim2} and \ref{fig:tim3} compare the speed of forward and backward passes of the proposed Fast Spectral MaxExp ({\em MaxExp(F)}), the Newton-Schulz iterations algorithm, the Generalized Spectral Power Normalization ({\em GSPN}) that uses SVD and the Element-wise MaxExp ({\em MaxExp(el.-wise)}). We vary the side size $200\!\leq\!d\!\leq\!1000$ of $\mM$. Figure \ref{fig:var1} shows how the variance over $5\!\leq\!j\!\leq\!15$ leading eigenvalues passed via MaxExp(F) varies as a function of $\eta$. As $\eta$ increases the variance of the leading eigenvalues decreases. 
}}
\label{fig:timing}
\vspace{-0.35cm}
\end{figure}

}

\revised{
Moreover, our ({\em SoSN+Pretr.+SC+Spec. MaxExp(F)}) with $224\!\times\!224$ res. pre-trained  on \textit{mini}ImageNet yielded a $\sim\!2.0\%$ increase over non-spectral ({\em SoSN+SigmE}). A similar trend is observed on 5- to 90-way protocols which test the stability of our idea across matching testing queries each with support images from 90 distinct classes. A realistic setting with a large `way' number is frequently avoided in few-shot learning community.
}

%

Figure \ref{fig:openmic2} introduces results for the protocol that tests the
generalization from one domain to another domain ({\em Protocol II}) which is also often avoided in the few-shot learning community but helps ascertain how well the algorithm generalizes between different domains.
Our ({\em SoSN+SigmE}) outperforms Relation Net by up to $\sim\!\bf{35\%}$. As this protocol measures the generalization ability of few-shot learning methods under a large domain shift, the average scores are $\sim$20\% below scores from Figure \ref{fig:openmic1}. However, our second-order relationship descriptor with SigmE pooling is beneficial for similarity learning. \revised{Moreover, our ({\em SoSN+Pretr.+SC+Spec. MaxExp(F)}) with $224\!\times\!224$ res. pre-trained  on \textit{mini}ImageNet obtained the best performance.}

To conclude our experiments on fine-grained few-shot learning, Tables \ref{tab:fewshot-flower} and \ref{tab:fewshot-food} introduce results on 5-way 1-shot and 5-way 5-shot evaluation protocols on Flower102 and Food-101. Both tables demonstrate that our ({\em SoSN+SigmE}) model outperforms Relation Net by $\sim\!\!\bf{5\%}$ to $\sim\!\!\bf{11\%}$ accuracy.

\revised{Spectral operators on SoSN pre-trained with \textit{mini}ImageNet performed well. On 1-/5-shot protocols, our ({\em SoSN+Pretr.+SC+Spec. MaxExp(F)}) scored $1.5\%$/$1.55\%$ and $2.4\%$/$3.5\%$ accuracy over ({\em SoSN+SC+SigmE}) on Flower102 and Food-101, resp. In contrast, pre-trained non-spectral ({\em SoSN+Pretr.+SC+SigmE}) and spectral ({\em SoSN.+SC+Spec. Gamma}) and ({\em SoSN+SC+Spec. MaxExp(F)}) without pre-training fail to bring further benefits over ({\em SoSN+SC+SigmE}) without pre-training. This supports our hypothesis about the connection of spectral operators to time-reversed HDP which reduces correlation between co-occurrences thus limiting catastrophic forgetting.}

%

\revisedd{
\vspace{0.05cm}
\noindent{\textbf{Graph classification.}} Table \ref{tab:graph} shows results for spectral and element-wise PN operators on covariance matrices employed on top of the Graph Isomorphism Network ({\em GIN0}) \cite{pytorch_geom}. Element-wise PN ({\em SOP+SigmE}) outperforms ({\em SOP}) and the first-order average pooling ({\em GIN0}), one of the strongest baselines. 
Moreover, the Fast Spectral MaxExp ({\em SOP+MaxExp(F)}) typically outperforms the Newton-Schulz inter. (approx. matrix square root) and element-wise operators. 
As package \cite{pytorch_geom} uses the validation split only for validation (in contrast to other packages), we retrain on train+validation splits (see the bottom of  Table \ref{tab:graph}).  
}

\vspace{-0.2cm}
\section{Conclusions}
\label{sec:conclude}

We have studied Power Normalizations in the context of element-wise co-occurrence representations and demonstrated their theoretical role which is to `detect' co-occurrences. We have shown that different assumptions on distributions from which features are drawn result in similar non-linearities \eg, MaxExp \vs SigmE. Thus, we have proposed surrogate functions SigmE and AsinhE with well-behaved derivatives for end-to-end training which can handle so-called negative evidence.  Moreover, we have proposed a fast spectral MaxExp which can be computed as faster than the matrix square root via iterative Newton-Schulz while enjoying the adjustable parameter. Finally, we have shown that Spectral Power Normalizations fulfill a similar role to the time-reversed Heat Diffusion Process well-known from the graph theory, thus paving a strong theoretical foundation for further studies of SPNs. 

\vspace{0.05cm}
\noindent{\textbf{Acknowledgements.}} We thank Dr. Ke Sun for brainstorming, Ondrej Hlinka/Garry Swan for help with HPC, Hao Zhu for checks of some SOP codes, and Lei Wang for quick checks of text.



{\small
\vspace{-0.3cm}
\bibliographystyle{IEEEtran}
\bibliography{pn}

\begin{thebibliography}{63}
\providecommand{\natexlab}[1]{#1}
\providecommand{\url}[1]{\texttt{#1}}
\expandafter\ifx\csname urlstyle\endcsname\relax
  \providecommand{\doi}[1]{doi: #1}\else
  \providecommand{\doi}{doi: \begingroup \urlstyle{rm}\Url}\fi

\bibitem[Arandjelovi\'c et~al.(2016)Arandjelovi\'c, Gronat, Torii, Pajdla, and
  Sivic]{netvlad}
R.~Arandjelovi\'c, P.~Gronat, A.~Torii, T.~Pajdla, and J.~Sivic.
\newblock {NetVLAD}: {CNN} architecture for weakly supervised place
  recognition.
\newblock \emph{CVPR}, 2016.

\bibitem[Arsigny et~al.(2006)Arsigny, Fillard, Pennec, and
  Ayache]{arsigny2006log}
V.~Arsigny, P.~Fillard, X.~Pennec, and N.~Ayache.
\newblock Log-euclidean metrics for fast and simple calculus on diffusion
  tensors.
\newblock \emph{Magnetic resonance in medicine}, 56\penalty0 (2):\penalty0
  411--421, 2006.

\bibitem[Azizpour et~al.(2015)Azizpour, Razavian, Sullivan, Maki, and
  Carlsson]{carlson_cnn}
Hossein Azizpour, Ali~Sharif Razavian, Josephine Sullivan, Atsuto Maki, and
  Stefan Carlsson.
\newblock Factors of transferability for a generic convnet representation.
\newblock \emph{CoRR}, abs/1406.5774, 2015.

\bibitem[Bhatia(2007)]{bhatia_pdm}
R.~Bhatia.
\newblock Positive definite matrices.
\newblock \emph{Princeton Univ Press}, 2007.

\bibitem[Bossard et~al.(2014)Bossard, Guillaumin, and Gool]{food101}
Lukas Bossard, Matthieu Guillaumin, and Luc J.~Van Gool.
\newblock Food-101 - mining discriminative components with random forests.
\newblock \emph{ECCV}, pages 446--461, 2014.
\newblock \doi{10.1007/978-3-319-10599-4_29}.

\bibitem[Boughorbel et~al.(2005)Boughorbel, Tarel, and
  Boujemaa]{boughorbel_intersect}
S.~Boughorbel, J-P. Tarel, and N.~Boujemaa.
\newblock {Generalized Histogram Intersection Kernel for Image Recognition}.
\newblock \emph{ICIP}, 2005.

\bibitem[Boureau et~al.(2010{\natexlab{a}})Boureau, Bach, LeCun, and
  Ponce]{boureau_midlevel}
Y.~Boureau, F.~Bach, Y.~LeCun, and J.~Ponce.
\newblock {Learning Mid-Level Features for Recognition}.
\newblock \emph{CVPR}, 2010{\natexlab{a}}.

\bibitem[Boureau et~al.(2010{\natexlab{b}})Boureau, Ponce, and
  LeCun]{boureau_pooling}
{Y-Lan} Boureau, Jean Ponce, and Yann LeCun.
\newblock {A Theoretical Analysis of Feature Pooling in Vision Algorithms}.
\newblock \emph{ICML}, 2010{\natexlab{b}}.

\bibitem[Carreira et~al.(2012)Carreira, Caseiro, Batista, and
  Sminchisescu]{carreira_secord}
J.~Carreira, R.~Caseiro, J.~Batista, and C.~Sminchisescu.
\newblock {Semantic Segmentation with Second-Order Pooling.}
\newblock \emph{ECCV}, 2012.

\bibitem[Cherian et~al.(2013)Cherian, Sra, Banerjee, and
  Papanikolopoulos]{anoop_logdet}
Arun Cherian, Suvrit Sra, Adrish Banerjee, and Nikolaos Papanikolopoulos.
\newblock {Jensen-Bregman LogDet Divergence with Application to Efficient
  Similarity Search for Covariance Matrices}.
\newblock \emph{TPAMI}, 35\penalty0 (9):\penalty0 2161--2174, 2013.
\newblock ISSN 0162-8828.
\newblock \doi{10.1109/tpami.2012.259}.
\newblock URL \url{http://dx.doi.org/10.1109/tpami.2012.259}.

\bibitem[Cimpoi et~al.(2014)Cimpoi, Maji, Kokkinos, Mohamed, and
  Vedaldi]{cimpoi2014describing}
Mircea Cimpoi, Subhransu Maji, Iasonas Kokkinos, Sammy Mohamed, and Andrea
  Vedaldi.
\newblock Describing textures in the wild.
\newblock \emph{CVPR}, pages 3606--3613, 2014.

\bibitem[Cimpoi et~al.(2015)Cimpoi, Maji, and Vedaldi]{cimpoi2015deep}
Mircea Cimpoi, Subhransu Maji, and Andrea Vedaldi.
\newblock Deep filter banks for texture recognition and segmentation.
\newblock \emph{CVPR}, pages 3828--3836, 2015.

\bibitem[Cui et~al.(2017)Cui, Zhou, Wang, Liu, Lin, and
  Belongie]{cui2017kernel}
Yin Cui, Feng Zhou, Jiang Wang, Xiao Liu, Yuanqing Lin, and Serge Belongie.
\newblock Kernel pooling for convolutional neural networks.
\newblock \emph{CVPR}, 2017.

\bibitem[Dryden et~al.(2009)Dryden, Koloydenko, and Zhou]{dryden_powereuclid}
Ian~L. Dryden, Alexey Koloydenko, and Diwei Zhou.
\newblock Non-euclidean statistics for covariance matrices, with applications
  to diffusion tensor imaging.
\newblock \emph{The Annals of Applied Statistics}, 3\penalty0 (3):\penalty0
  1102--1123, 2009.

\bibitem[Gong et~al.(2014)Gong, Wang, Guo, and Lazebnik]{orderless_pooling}
Yunchao Gong, Liwei Wang, Ruiqi Guo, and Svetlana Lazebnik.
\newblock Multi-scale orderless pooling of deep convolutional activation
  features.
\newblock \emph{ECCV}, pages 392--407, 2014.

\bibitem[Guo et~al.(2013)Guo, Ishwar, and Konrad]{guo2013action}
Kai Guo, Prakash Ishwar, and Janusz Konrad.
\newblock Action recognition from video using feature covariance matrices.
\newblock \emph{Trans. Img. Proc.}, 22\penalty0 (6):\penalty0 2479--2494, 2013.
\newblock ISSN 1057-7149.

\bibitem[Harandi et~al.(2015{\natexlab{a}})Harandi, Hartley, Shen, Lovell, and
  Sanderson]{mehrtash_dict_manifold}
M.~Harandi, R.~Hartley, C.~Shen, B.~Lovell, and C.~Sanderson.
\newblock Extrinsic methods for coding and dictionary learning on grassmann
  manifolds.
\newblock \emph{IJCV}, 2015{\natexlab{a}}.

\bibitem[Harandi and Salzmann(2015)]{mehrtash_kernels2}
Mehrtash Harandi and Mathieu Salzmann.
\newblock Riemannian coding and dictionary learning: Kernels to the rescue.
\newblock \emph{CVPR}, 2015.

\bibitem[Harandi et~al.(2015{\natexlab{b}})Harandi, Salzmann, and
  Baktashmotlagh]{beyond_gauss}
Mehrtash Harandi, Mathieu Salzmann, and Mahsa Baktashmotlagh.
\newblock Beyond gauss: Image-set matching on the riemannian manifold of pdfs.
\newblock \emph{ICCV}, 2015{\natexlab{b}}.

\bibitem[Harandi et~al.(2017)Harandi, Salzmann, and Hartley]{Kumar_CVPR_2018}
Mehrtash Harandi, Mathieu Salzmann, and Richard Hartley.
\newblock Joint dimensionality reduction and metric learning: A geometric take.
\newblock \emph{ICML}, page 1404–1413, 2017.

\bibitem[He et~al.(2016)He, Zhang, Ren, and Sun]{resnet}
Kaiming He, Xiangyu Zhang, Shaoqing Ren, and Jian Sun.
\newblock Deep residual learning for image recognition.
\newblock \emph{CVPR}, June 2016.

\bibitem[Hinton(2017)]{rmsprop}
Geoffrey Hinton.
\newblock {Neural Networks for Machine Learning Lecture 6a: Overview of
  mini-batch gradient descent Reminder: The error surface for a linear neuron}.
\newblock Lecture notes,
  \url{https://www.cs.toronto.edu/~tijmen/csc321/slides/lecture_slides_lec6.pdf},
  2017.
\newblock Accessed: 10-11-2017.

\bibitem[Hu et~al.(2017)Hu, Hua, Yuan, Zhang, Lu, Mukherjee, Hospedales,
  Robertson, and Yang]{face_cooc}
Guosheng Hu, Yang Hua, Yang Yuan, Zhihong Zhang, Zheng Lu, Sankha~S. Mukherjee,
  Timothy~M. Hospedales, Neil~M. Robertson, and Yongxin Yang.
\newblock Attribute-enhanced face recognition with neural tensor fusion
  networks.
\newblock \emph{ICCV}, 2017.

\bibitem[Huang and Gool(2017)]{vangol_riem_net}
Zhiwu Huang and Luc~Van Gool.
\newblock A riemannian network for spd matrix learning.
\newblock \emph{AAAI}, pages 2036--2042, 2017.

\bibitem[Ionescu et~al.(2015)Ionescu, Vantzos, and
  Sminchisescu]{sminchisescu_matrix}
Catalin Ionescu, Orestis Vantzos, and Cristian Sminchisescu.
\newblock Matrix backpropagation for deep networks with structured layers.
\newblock \emph{ICCV}, 2015.

\bibitem[Jebara et~al.(2004)Jebara, Kondor, and Howard]{jebara_prodkers}
T.~Jebara, R.~Kondor, and A.~Howard.
\newblock Probability product kernels.
\newblock \emph{JMLR}, 5:\penalty0 819--844, 2004.

\bibitem[J\'egou et~al.(2009)J\'egou, Douze, and Schmid]{jegou_bursts}
H.~J\'egou, M.~Douze, and C.~Schmid.
\newblock {On the Burstiness of Visual Elements}.
\newblock \emph{CVPR}, pages 1169--1176, 2009.

\bibitem[Jegou and Chum(2012)]{negoccur}
Herve Jegou and Ondrej Chum.
\newblock Negative evidences and co-occurrences in image retrieval: the benefit
  of pca and whitening.
\newblock \emph{ECCV}, 2012.

\bibitem[Khan et~al.(2017)Khan, Hayat, and Porikli]{khan2017scene}
Salman~H Khan, Munawar Hayat, and Fatih Porikli.
\newblock Scene categorization with spectral features.
\newblock \emph{ICCV}, pages 5638--5648, 2017.

\bibitem[Koniusz and Cherian(2016)]{sparse_tensor_cvpr}
P.~Koniusz and A.~Cherian.
\newblock Sparse coding for third-order super-symmetric tensor descriptors with
  application to texture recognition.
\newblock \emph{CVPR}, 2016.

\bibitem[Koniusz and Mikolajczyk(2011)]{me_SCC}
P.~Koniusz and K.~Mikolajczyk.
\newblock Spatial coordinate coding to reduce histogram representations,
  dominant angle and colour pyramid match.
\newblock \emph{ICIP}, 2011.

\bibitem[Koniusz et~al.(2012)Koniusz, Yan, and Mikolajczyk]{me_ATN}
P.~Koniusz, F.~Yan, and K.~Mikolajczyk.
\newblock {Comparison of Mid-Level Feature Coding Approaches And Pooling
  Strategies in Visual Concept Detection}.
\newblock \emph{CVIU}, 2012.

\bibitem[Koniusz et~al.(2013)Koniusz, Yan, Gosselin, and
  Mikolajczyk]{me_tensor_tech_rep}
P.~Koniusz, F.~Yan, P.~Gosselin, and K.~Mikolajczyk.
\newblock {Higher-order Occurrence Pooling on Mid- and Low-level Features:
  Visual Concept Detection}.
\newblock \emph{Technical Report}, 2013.

\bibitem[Koniusz et~al.(2016{\natexlab{a}})Koniusz, Yan, Gosselin, and
  Mikolajczyk]{me_tensor}
P.~Koniusz, F.~Yan, P.~Gosselin, and K.~Mikolajczyk.
\newblock Higher-order occurrence pooling for bags-of-words: {Visual} concept
  detection.
\newblock \emph{PAMI}, 2016{\natexlab{a}}.

\bibitem[Koniusz et~al.(2016{\natexlab{b}})Koniusz, Tas, and
  Porikli]{me_domain}
Piotr Koniusz, Yusuf Tas, and Fatih Porikli.
\newblock Domain adaptation by mixture of alignments of second- or higher-order
  scatter tensors.
\newblock \emph{CoRR}, abs/1409.1556, 2016{\natexlab{b}}.

\bibitem[Krizhevsky et~al.(2012)Krizhevsky, Sutskever, and
  Hinton]{krizhevsky_alexnet}
Alex Krizhevsky, Ilya Sutskever, and Geoffrey~E. Hinton.
\newblock {ImageNet} classification with deep convolutional neural networks.
\newblock \emph{NIPS}, pages 1106--1114, 2012.

\bibitem[Kumar~Roy et~al.(2018)Kumar~Roy, Mhammedi, and Harandi]{Roy_CVPR_2018}
Soumava Kumar~Roy, Zakaria Mhammedi, and Mehrtash Harandi.
\newblock Geometry aware constrained optimization techniques for deep learning.
\newblock \emph{CVPR}, 2018.

\bibitem[Li and Wang(2012)]{locallog-euclidean}
P.~Li and Q.~Wang.
\newblock Local log-euclidean covariance matrix (l$^2$ ecm) for image
  representation and its applications.
\newblock \emph{ECCV}, 2012.

\bibitem[Li et~al.(2017)Li, Xie, Wang, and Zuo]{secord_peihua_li}
Peihua Li, Jiangtao Xie, Qilong Wang, and Wangmeng Zuo.
\newblock Is second-order information helpful for large-scale visual
  recognition?
\newblock \emph{ICCV}, 2017.

\bibitem[Lin and Maji(2017)]{lin2017improved}
Tsung-Yu Lin and Subhransu Maji.
\newblock {Improved Bilinear Pooling with CNNs}.
\newblock \emph{BMVC}, 2017.

\bibitem[Lin et~al.(2017)Lin, Chowdhury, and Maji]{bilinear_finegrained}
Tsung-Yu Lin, Aruni~Roy Chowdhury, and Subhransu Maji.
\newblock Bilinear cnn models for fine-grained visual recognition.
\newblock \emph{ICCV}, 2017.

\bibitem[Lingqiao et~al.(2011)Lingqiao, Wang, and Liu]{liu_sadefense}
Liu Lingqiao, Lei Wang, and Xinwang Liu.
\newblock {In Defence of Soft-assignment Coding}.
\newblock \emph{ICCV}, 2011.

\bibitem[Liu et~al.(2016)Liu, Gao, Meng, and Zuo]{two-stream}
Jiang Liu, Chenqiang Gao, Deyu Meng, and Wangmeng Zuo.
\newblock Two-stream contextualized cnn for fine-grained image classification.
\newblock \emph{AAAI}, 2016.

\bibitem[Nilsback and Zisserman(2008)]{nilsback_flower102}
M-E. Nilsback and A.~Zisserman.
\newblock {Automated Flower Classification over a Large Number of Classes}.
\newblock \emph{ICVGIP}, Dec 2008.

\bibitem[Pennec et~al.(2006)Pennec, Fillard, and Ayache]{PEN06}
Xavier Pennec, Pierre Fillard, and Nicholas Ayache.
\newblock {A Riemannian Framework for Tensor Computing}.
\newblock \emph{IJCV}, 66\penalty0 (1):\penalty0 41--66, 2006.
\newblock ISSN 0920-5691.
\newblock \doi{10.1007/s11263-005-3222-z}.
\newblock URL \url{http://dx.doi.org/10.1007/s11263-005-3222-z}.

\bibitem[Perronnin et~al.(2010)Perronnin, S\'anchez, and
  Mensink]{perronnin_fisherimpr}
F.~Perronnin, J.~S\'anchez, and T.~Mensink.
\newblock {Improving the Fisher Kernel for Large-Scale Image Classification}.
\newblock \emph{ECCV}, pages 143--156, 2010.

\bibitem[Porikli and Tuzel(2006)]{porikli2006tracker}
F.~Porikli and O.~Tuzel.
\newblock Covariance tracker.
\newblock \emph{CVPR}, 2006.

\bibitem[Quattoni and Torralba(2009)]{quattoni_mitindoors}
A~Quattoni and A~Torralba.
\newblock {Recognizing indoor scenes}.
\newblock \emph{CVPR}, 2009.

\bibitem[Romero et~al.(2013)Romero, Ter{\'{a}}n, Gouiff{\`{e}}s, and
  Lacassagne]{elbcm_brod}
A.~Romero, M.~Y. Ter{\'{a}}n, M.~Gouiff{\`{e}}s, and L.~Lacassagne.
\newblock Enhanced local binary covariance matrices {(ELBCM)} for texture
  analysis and object tracking.
\newblock \emph{MIRAGE}, pages 10:1--10:8, 2013.

\bibitem[Russakovsky et~al.(2015)Russakovsky, Deng, Su, Krause, Satheesh, Ma,
  Huang, Karpathy, Khosla, Bernstein, Berg, and Fei-Fei]{ILSVRC15}
Olga Russakovsky, Jia Deng, Hao Su, Jonathan Krause, Sanjeev Satheesh, Sean Ma,
  Zhiheng Huang, Andrej Karpathy, Aditya Khosla, Michael Bernstein,
  Alexander~C. Berg, and Li~Fei-Fei.
\newblock {ImageNet} large scale visual recognition challenge.
\newblock \emph{IJCV}, 115\penalty0 (3):\penalty0 211--252, 2015.
\newblock \doi{10.1007/s11263-015-0816-y}.

\bibitem[Sharan et~al.(2014)Sharan, Rosenholtz, and Adelson]{fmd}
L.~Sharan, R.~Rosenholtz, and E.~H. Adelson.
\newblock Material perception: What can you see in a brief glance?
\newblock \emph{Journal of Vision}, 14\penalty0 (9), 2014.

\bibitem[Shih et~al.(2017)Shih, Yeh, Lin, Weng, Lu, and Chuang]{deep_cooc}
Ya-Fang Shih, Yang-Ming Yeh, Yen-Yu Lin, Ming-Fang Weng, Yi-Chang Lu, and
  Yung-Yu Chuang.
\newblock Deep co-occurrence feature learning for visual object recognition.
\newblock \emph{CVPR}, 2017.

\bibitem[Simon and Rodner(2015)]{nacc}
Marcel Simon and Erik Rodner.
\newblock Neural activation constellations: Unsupervised part model discovery
  with convolutional networks.
\newblock \emph{ICCV}, pages 1143--–1151, 2015.

\bibitem[Tuzel et~al.(2006)Tuzel, Porikli, and Meer]{tuzel_rc}
O.~Tuzel, F.~Porikli, and P.~Meer.
\newblock Region covariance: {A} fast descriptor for detection and
  classification.
\newblock \emph{ECCV}, 2006.

\bibitem[Tuzel et~al.(2008)Tuzel, Porikli, and Meer]{tuzel2008detection}
O.~Tuzel, F.~Porikli, and P.~Meer.
\newblock Pedestrian detection via classification on riemannian manifolds.
\newblock \emph{PAMI}, 30\penalty0 (10):\penalty0 1713--1727, 2008.

\bibitem[Wang et~al.(2015{\natexlab{a}})Wang, Guo, Huang, and Qiao]{places_mit}
L.~Wang, S.~Guo, W.~Huang, and Y.~Qiao.
\newblock Places205-vggnet models for scene recognition.
\newblock \emph{CoRR}, abs/1508.01667, 2015{\natexlab{a}}.

\bibitem[Wang et~al.(2015{\natexlab{b}})Wang, Lee, Tu, and
  Lazebnik]{places_mit_more}
L.~Wang, C.-Y. Lee, Z.~Tu, and S.~Lazebnik.
\newblock Training deeper convolutional networks with deep supervision.
\newblock \emph{CoRR}, abs/1505.02496, 2015{\natexlab{b}}.

\bibitem[Wang et~al.(2011)Wang, Chen, and Xu]{wang2011tracking}
Qing Wang, Feng Chen, and Wenli Xu.
\newblock Tracking by third-order tensor representation.
\newblock \emph{Systems, Man, and Cybernetics, Part B: Cybernetics, IEEE
  Transactions on}, 41\penalty0 (2):\penalty0 385--396, 2011.

\bibitem[Wang and Vemuri(2004)]{wang_jeffreys}
Z.~Wang and B.~C. Vemuri.
\newblock An affine invariant tensor dissimilarity measure and its applications
  to tensor-valued image segmentation.
\newblock \emph{CVPR}, 2004.

\bibitem[Xie et~al.(2017)Xie, Wang, Lin, Zhang, and Tian]{rrir}
Lingxi Xie, Jingdong Wang, Weiyao Lin, Bo~Zhang, and Qi~Tian.
\newblock Towards reversal-invariant image representation.
\newblock \emph{IJCV}, 123\penalty0 (2):\penalty0 226--–250, 2017.

\bibitem[Yandex and Lempitsky(2015)]{deep_aggreg}
Artem~Babenko Yandex and Victor Lempitsky.
\newblock Aggregating local deep features for image retrieval.
\newblock \emph{ICCV}, pages 1269--1277, 2015.
\newblock \doi{10.1109/ICCV.2015.150}.

\bibitem[Zhang et~al.(2016)Zhang, Ozay, Liu, and Okatani]{zhang2016integrating}
Yan Zhang, Mete Ozay, Xing Liu, and Takayuki Okatani.
\newblock Integrating deep features for material recognition.
\newblock \emph{ICPR}, pages 3697--3702, 2016.

\bibitem[Zhou et~al.(2014)Zhou, Lapedriza, Xiao, Torralba, and
  Oliva]{places_dataset}
B.~Zhou, A.~Lapedriza, J.~Xiao, A.~Torralba, and A.~Oliva.
\newblock Learning deep features for scene recognition using places database.
\newblock \emph{NIPS}, 2014.

\end{thebibliography}


\begin{thebibliography}{10}
\providecommand{\url}[1]{#1}
\csname url@samestyle\endcsname
\providecommand{\newblock}{\relax}
\providecommand{\bibinfo}[2]{#2}
\providecommand{\BIBentrySTDinterwordspacing}{\spaceskip=0pt\relax}
\providecommand{\BIBentryALTinterwordstretchfactor}{4}
\providecommand{\BIBentryALTinterwordspacing}{\spaceskip=\fontdimen2\font plus
\BIBentryALTinterwordstretchfactor\fontdimen3\font minus
  \fontdimen4\font\relax}
\providecommand{\BIBforeignlanguage}[2]{{%
\expandafter\ifx\csname l@#1\endcsname\relax
\typeout{** WARNING: IEEEtran.bst: No hyphenation pattern has been}%
\typeout{** loaded for the language `#1'. Using the pattern for}%
\typeout{** the default language instead.}%
\else
\language=\csname l@#1\endcsname
\fi
#2}}
\providecommand{\BIBdecl}{\relax}
\BIBdecl

\bibitem{tuzel_rc}
O.~Tuzel, F.~Porikli, and P.~Meer, ``Region covariance: {A} fast descriptor for
  detection and classification,'' \emph{ECCV}, 2006.

\bibitem{porikli2006tracker}
F.~Porikli and O.~Tuzel, ``Covariance tracker,'' \emph{CVPR}, 2006.

\bibitem{guo2013action}
K.~Guo, P.~Ishwar, and J.~Konrad, ``Action recognition from video using feature
  covariance matrices,'' \emph{TIP}, vol.~22, no.~6, pp. 2479--2494, 2013.

\bibitem{carreira_secord}
J.~Carreira, R.~Caseiro, J.~Batista, and C.~Sminchisescu, ``{Semantic
  Segmentation with Second-Order Pooling.}'' \emph{ECCV}, 2012.

\bibitem{me_tensor}
P.~Koniusz, F.~Yan, P.-H. Gosselin, and K.~Mikolajczyk, ``Higher-order
  occurrence pooling for bags-of-words: Visual concept detection,''
  \emph{TPAMI}, vol.~39, no.~2, pp. 313--326, 2017.

\bibitem{me_tensor_tech_rep}
\BIBentryALTinterwordspacing
P.~Koniusz, F.~Yan, P.~Gosselin, and K.~Mikolajczyk, ``{Higher-order Occurrence
  Pooling on Mid- and Low-level Features: Visual Concept Detection},'' INRIA,
  Tech. Rep. hal-00922524, 2013. [Online]. Available:
  \url{https://hal.inria.fr/hal-00922524/}
\BIBentrySTDinterwordspacing

\bibitem{sparse_tensor_cvpr}
P.~Koniusz and A.~Cherian, ``Sparse coding for third-order super-symmetric
  tensor descriptors with application to texture recognition,'' \emph{CVPR},
  2016.

\bibitem{me_ATN}
P.~Koniusz, F.~Yan, and K.~Mikolajczyk, ``{Comparison of Mid-Level Feature
  Coding Approaches And Pooling Strategies in Visual Concept Detection},''
  \emph{CVIU}, 2012.

\bibitem{boureau_midlevel}
Y.~Boureau, F.~Bach, Y.~LeCun, and J.~Ponce, ``{Learning Mid-Level Features for
  Recognition},'' \emph{CVPR}, 2010.

\bibitem{boureau_pooling}
Y.~Boureau, J.~Ponce, and Y.~LeCun, ``{A Theoretical Analysis of Feature
  Pooling in Vision Algorithms},'' \emph{ICML}, 2010.

\bibitem{liu_sadefense}
L.~Lingqiao, L.~Wang, and X.~Liu, ``{In Defence of Soft-assignment Coding},''
  \emph{ICCV}, 2011.

\bibitem{perronnin_fisherimpr}
F.~Perronnin, J.~S\'anchez, and T.~Mensink, ``{Improving the Fisher Kernel for
  Large-Scale Image Classification},'' \emph{ECCV}, 2010.

\bibitem{me_SCC}
P.~Koniusz and K.~Mikolajczyk, ``Spatial coordinate coding to reduce histogram
  representations, dominant angle and colour pyramid match,'' \emph{ICIP},
  2011.

\bibitem{smola_graph}
A.~J. Smola and R.~Kondor, ``Kernels and regularization on graphs,''
  \emph{Learning Theory and Kernel Machines}, pp. 144--158, 2003.

\bibitem{koniusz2018deeper}
P.~Koniusz, H.~Zhang, and F.~Porikli, ``A deeper look at power
  normalizations,'' \emph{CVPR}, pp. 5774--5783, 2018.

\bibitem{sosn}
H.~Zhang and P.~Koniusz, ``Power normalizing second-order similarity network
  for few-shot learning,'' \emph{WACV}, 2019.

\bibitem{secordcv_tutorial_2019}
P.~Koniusz, M.~Harandi, L.~Wang, and R.~Wang, ``Second- and higher-order
  representations in computer vision,'' ICCV Tutorial,
  \url{http://users.cecs.anu.edu.au/~koniusz/secordcv-iccv19}, accessed:
  02-11-2019.

\bibitem{tuzel2008detection}
O.~Tuzel, F.~Porikli, and P.~Meer, ``Pedestrian detection via classification on
  riemannian manifolds,'' \emph{TPAMI}, vol.~30, no.~10, pp. 1713--1727, 2008.

\bibitem{dryden_powereuclid}
I.~L. Dryden, A.~Koloydenko, and D.~Zhou, ``Non-euclidean statistics for
  covariance matrices, with applications to diffusion tensor imaging,''
  \emph{The Annals of Applied Statistics}, vol.~3, no.~3, pp. 1102--1123, 2009.

\bibitem{PEN06}
X.~Pennec, P.~Fillard, and N.~Ayache, ``{A Riemannian Framework for Tensor
  Computing},'' \emph{IJCV}, vol.~66, no.~1, pp. 41--66, 2006.

\bibitem{bhatia_pdm}
R.~Bhatia, \emph{Positive definite matrices}.\hskip 1em plus 0.5em minus
  0.4em\relax Princeton University Press, 2007.

\bibitem{wang_jeffreys}
Z.~Wang and B.~C. Vemuri, ``An affine invariant tensor dissimilarity measure
  and its applications to tensor-valued image segmentation,'' \emph{CVPR},
  2004.

\bibitem{anoop_logdet}
A.~Cherian, S.~Sra, A.~Banerjee, and N.~Papanikolopoulos, ``{Jensen-Bregman
  LogDet Divergence with Application to Efficient Similarity Search for
  Covariance Matrices},'' \emph{TPAMI}, vol.~35, no.~9, pp. 2161--2174, 2013.

\bibitem{arsigny2006log}
V.~Arsigny, P.~Fillard, X.~Pennec, and N.~Ayache, ``Log-euclidean metrics for
  fast and simple calculus on diffusion tensors,'' \emph{Magnetic resonance in
  medicine}, vol.~56, no.~2, pp. 411--421, 2006.

\bibitem{mehrtash_dict_manifold}
M.~Harandi, R.~Hartley, C.~Shen, B.~Lovell, and C.~Sanderson, ``Extrinsic
  methods for coding and dictionary learning on grassmann manifolds,''
  \emph{IJCV}, 2015.

\bibitem{Roy_CVPR_2018}
S.~Kumar~Roy, Z.~Mhammedi, and M.~Harandi, ``Geometry aware constrained
  optimization techniques for deep learning,'' \emph{CVPR}, 2018.

\bibitem{Kumar_CVPR_2018}
M.~Harandi, M.~Salzmann, and R.~Hartley, ``Joint dimensionality reduction and
  metric learning: A geometric take,'' \emph{ICML}, p. 1404–1413, 2017.

\bibitem{bilinear_finegrained}
T.-Y. Lin, A.~R. Chowdhury, and S.~Maji, ``Bilinear cnn models for fine-grained
  visual recognition,'' \emph{ICCV}, 2017.

\bibitem{bilinear_pami}
------, ``Bilinear convolutional neural networks for fine-grained visual
  recognition,'' \emph{TPAMI}, 2017.

\bibitem{deep_cooc}
Y.-F. Shih, Y.-M. Yeh, Y.-Y. Lin, M.-F. Weng, Y.-C. Lu, and Y.-Y. Chuang,
  ``Deep co-occurrence feature learning for visual object recognition,''
  \emph{CVPR}, 2017.

\bibitem{face_cooc}
G.~Hu, Y.~Hua, Y.~Yuan, Z.~Zhang, Z.~Lu, S.~S. Mukherjee, T.~M. Hospedales,
  N.~M. Robertson, and Y.~Yang, ``Attribute-enhanced face recognition with
  neural tensor fusion networks,'' \emph{ICCV}, 2017.

\bibitem{sminchisescu_matrix}
C.~Ionescu, O.~Vantzos, and C.~Sminchisescu, ``Matrix backpropagation for deep
  networks with structured layers,'' \emph{ICCV}, 2015.

\bibitem{vangol_riem_net}
Z.~Huang and L.~V. Gool, ``A riemannian network for spd matrix learning,''
  \emph{AAAI}, pp. 2036--2042, 2017.

\bibitem{peihua_fast}
P.~Li, J.~Xie, Q.~Wang, and Z.~Gao, ``Towards faster training of global
  covariance pooling networks by iterativematrix square root normalization,''
  \emph{CVPR}, 2018.

\bibitem{lin2017improved}
T.-Y. Lin and S.~Maji, ``{Improved Bilinear Pooling with CNNs},'' \emph{BMVC},
  2017.

\bibitem{the_whole}
M.~Simon, E.~Rodner, T.~Darrell, and J.~Denzler, ``The whole is more than its
  parts? from explicit to implicit pose normalization,'' \emph{TPAMI}, 2018.

\bibitem{jegou_bursts}
H.~J\'egou, M.~Douze, and C.~Schmid, ``{On the Burstiness of Visual
  Elements},'' \emph{CVPR}, pp. 1169--1176, 2009.

\bibitem{boughorbel_intersect}
S.~Boughorbel, J.-P. Tarel, and N.~Boujemaa, ``{Generalized Histogram
  Intersection Kernel for Image Recognition},'' \emph{ICIP}, 2005.

\bibitem{miller_one_example}
E.~G. Miller, N.~E. Matsakis, and P.~A. Viola, ``Learning from one example
  through shared densities on transforms,'' \emph{CVPR}, vol.~1, pp. 464--471,
  2000.

\bibitem{Li9596}
F.~F. Li, R.~VanRullen, C.~Koch, and P.~Perona, ``Rapid natural scene
  categorization in the near absence of attention,'' \emph{Proceedings of the
  National Academy of Sciences}, vol.~99, no.~14, pp. 9596--9601, 2002.

\bibitem{NIPS2004_2576}
M.~Fink, ``Object classification from a single example utilizing class
  relevance metrics,'' \emph{NIPS}, pp. 449--456, 2005.

\bibitem{vinyals2016matching}
O.~Vinyals, C.~Blundell, T.~Lillicrap, D.~Wierstra \emph{et~al.}, ``Matching
  networks for one shot learning,'' \emph{NIPS}, 2016.

\bibitem{snell2017prototypical}
J.~Snell, K.~Swersky, and R.~Zemel, ``Prototypical networks for few-shot
  learning,'' \emph{NIPS}, pp. 4077--4087, 2017.

\bibitem{finn2017model}
C.~Finn, P.~Abbeel, and S.~Levine, ``Model-agnostic meta-learning for fast
  adaptation of deep networks,'' \emph{ICML}, pp. 1126--1135, 2017.

\bibitem{sung2017learning}
F.~Sung, Y.~Yang, L.~Zhang, T.~Xiang, P.~H. Torr, and T.~M. Hospedales,
  ``Learning to compare: Relation network for few-shot learning,'' \emph{CVPR},
  2018.

\bibitem{wertheimer2019few}
D.~Wertheimer and B.~Hariharan, ``Few-shot learning with localization in
  realistic settings,'' \emph{CVPR}, pp. 6558--6567, 2019.

\bibitem{zhang2019few}
H.~Zhang, J.~Zhang, and P.~Koniusz, ``Few-shot learning via saliency-guided
  hallucination of samples,'' \emph{CVPR}, 2019.

\bibitem{Zhang_2020_ACCV}
S.~Zhang, D.~Luo, L.~Wang, and P.~Koniusz, ``Few-shot object detection by
  second-order pooling,'' \emph{ACCV}, 2020.

\bibitem{christian_subs}
C.~Simon, P.~Koniusz, R.~Nock, and M.~Harandi, ``Adaptive subspaces for
  few-shot learning,'' \emph{CVPR}, 2020.

\bibitem{koniusz2017domain}
P.~Koniusz, Y.~Tas, and F.~Porikli, ``Domain adaptation by mixture of
  alignments of second-or higher-order scatter tensors,'' \emph{CVPR}, vol.~2,
  2017.

\bibitem{negoccur}
H.~Jegou and O.~Chum, ``Negative evidences and co-occurrences in image
  retrieval: the benefit of pca and whitening,'' \emph{ECCV}, 2012.

\bibitem{rogers_der}
L.~C. Rogers, ``Derivatives of eigenvalues and eigenvectors,'' \emph{AIAA
  Journal}, vol.~8, no.~5, pp. 943--944, 1970.

\bibitem{rudsil_der}
C.~S. Rudisill, ``Derivatives of eigenvalues and eigenvectors for a general
  matrix,'' \emph{AIAA Journal}, vol.~12, no.~5, pp. 721--722, 1974.

\bibitem{magnus_der}
J.~R. Magnus, ``On differentiating eigenvalues and eigenvectors,''
  \emph{Econometric Theory}, 1985.

\bibitem{pk_maji}
T.-Y. Lin, S.~Maji, and P.~Koniusz, ``Second-order democratic aggregation,''
  \emph{ECCV}, 2018.

\bibitem{expsquare}
``Exponentiation by squaring,'' Wikipedia,
  \url{https://en.wikipedia.org/wiki/Exponentiation_by_squaring}, accessed:
  27-11-2019.

\bibitem{GMRF-graph}
C.~Zhang, D.~Florencio, and P.~A. Chou, ``Graph signal processing - a
  probabilistic framework,'' Microsoft, Tech. Rep. MSR-TR-2015-31, 2015.

\bibitem{heat_eqq}
L.~E. Zhukov, ``Diffusion and random walks on graphs,''
  \url{www.leonidzhukov.net/hse/2015/networks/lectures/lecture11.pdf}, 2015.

\bibitem{nilsback_flower102}
M.-E. Nilsback and A.~Zisserman, ``{Automated Flower Classification over a
  Large Number of Classes},'' \emph{ICVGIP}, 2008.

\bibitem{quattoni_mitindoors}
A.~Quattoni and A.~Torralba, ``{Recognizing indoor scenes},'' \emph{CVPR},
  2009.

\bibitem{food101}
L.~Bossard, M.~Guillaumin, and L.~J.~V. Gool, ``Food-101 - mining
  discriminative components with random forests,'' \emph{ECCV}, pp. 446--461,
  2014.

\bibitem{ILSVRC15}
O.~Russakovsky, J.~Deng, H.~Su, J.~Krause, S.~Satheesh, S.~Ma, Z.~Huang,
  A.~Karpathy, A.~Khosla, M.~Bernstein, A.~C. Berg, and L.~Fei-Fei,
  ``{ImageNet} large scale visual recognition challenge,'' \emph{IJCV}, vol.
  115, no.~3, pp. 211--252, 2015.

\bibitem{resnet}
K.~He, X.~Zhang, S.~Ren, and J.~Sun, ``Deep residual learning for image
  recognition,'' \emph{CVPR}, 2016.

\bibitem{me_museum}
P.~Koniusz, Y.~Tas, H.~Zhang, M.~Harandi, F.~Porikli, and R.~Zhang, ``Museum
  exhibit identification challenge for the supervised domain adaptation and
  beyond,'' \emph{ECCV}, 2018.

\bibitem{carlson_cnn}
H.~Azizpour, A.~S. Razavian, J.~Sullivan, A.~Maki, and S.~Carlsson, ``Factors
  of transferability for a generic convnet representation,'' \emph{CoRR}, vol.
  abs/1406.5774, 2015.

\bibitem{rrir}
L.~Xie, J.~Wang, W.~Lin, B.~Zhang, and Q.~Tian, ``Towards reversal-invariant
  image representation,'' \emph{IJCV}, vol. 123, no.~2, pp. 226--–250, 2017.

\bibitem{two-stream}
J.~Liu, C.~Gao, D.~Meng, and W.~Zuo, ``Two-stream contextualized cnn for
  fine-grained image classification,'' \emph{AAAI}, 2016.

\bibitem{nacc}
M.~Simon and E.~Rodner, ``Neural activation constellations: Unsupervised part
  model discovery with convolutional networks,'' \emph{ICCV}, pp.
  1143--–1151, 2015.

\bibitem{cui2017kernel}
Y.~Cui, F.~Zhou, J.~Wang, X.~Liu, Y.~Lin, and S.~Belongie, ``Kernel pooling for
  convolutional neural networks,'' \emph{CVPR}, 2017.

\bibitem{pytorch_geom}
M.~Fey and J.~E. Lenssen, ``Fast graph representation learning with pytorch
  geometric,'' \emph{ICLR}, 2019.

\bibitem{brenda}
I.~Schomburg, A.~Chang, C.~Ebeling, M.~Gremse, C.~Heldt, G.~Huhn, and
  D.~Schomburg, ``Brenda, the enzyme database: updates and major new
  developments,'' \emph{Nucleic Acids Research}, vol.~32, pp. D431--D433, 2004.

\bibitem{places_dataset}
B.~Zhou, A.~Lapedriza, J.~Xiao, A.~Torralba, and A.~Oliva, ``Learning deep
  features for scene recognition using places database,'' \emph{NIPS}, 2014.

\bibitem{fmd}
L.~Sharan, R.~Rosenholtz, and E.~H. Adelson, ``Material perception: What can
  you see in a brief glance?'' \emph{Journal of Vision}, vol.~14, no.~9, 2014.

\bibitem{rmsprop}
G.~Hinton, ``{Neural Networks for Machine Learning--Lecture 6e},''
  \url{www.cs.toronto.edu/~tijmen/csc321/slides/lecture_slides_lec6.pdf},
  accessed: 10-11-2017.

\bibitem{krizhevsky_alexnet}
A.~Krizhevsky, I.~Sutskever, and G.~E. Hinton, ``{ImageNet} classification with
  deep convolutional neural networks,'' \emph{NIPS}, pp. 1106--1114, 2012.

\bibitem{places_mit_more}
L.~Wang, C.-Y. Lee, Z.~Tu, and S.~Lazebnik, ``Training deeper convolutional
  networks with deep supervision,'' \emph{CoRR}, vol. abs/1505.02496, 2015.

\bibitem{places_mit}
L.~Wang, S.~Guo, W.~Huang, and Y.~Qiao, ``Places205-vggnet models for scene
  recognition,'' \emph{CoRR}, vol. abs/1508.01667, 2015.

\bibitem{cimpoi2015deep}
M.~Cimpoi, S.~Maji, and A.~Vedaldi, ``Deep filter banks for texture recognition
  and segmentation,'' \emph{CVPR}, 2015.

\bibitem{khan2017scene}
S.~H. Khan, M.~Hayat, and F.~Porikli, ``Scene categorization with spectral
  features,'' \emph{ICCV}, pp. 5638--5648, 2017.

\bibitem{cimpoi2014describing}
M.~Cimpoi, S.~Maji, I.~Kokkinos, S.~Mohamed, and A.~Vedaldi, ``Describing
  textures in the wild,'' \emph{CVPR}, 2014.

\bibitem{zhang2016integrating}
Y.~Zhang, M.~Ozay, X.~Liu, and T.~Okatani, ``Integrating deep features for
  material recognition,'' \emph{ICPR}, 2016.

\bibitem{METALEARNLSTM}
S.~Ravi and H.~Larochelle, ``Optimization as a model for fewshot learning,''
  \emph{ICLR}, 2017.

\bibitem{zhu2021simple}
H.~Zhu and P.~Koniusz, ``Simple spectral graph convolution,'' \emph{ICLR},
  2021.

\bibitem{DGCNN}
M.~Zhang, Z.~Cui, M.~Neumann, and Y.~Chen, ``An end-to-end deep
  learningarchitecture for graph classification,'' \emph{AAAI}, 2018.

\bibitem{capsule0}
S.~Verma and Z.-L. Zhang, ``Graph capsule convolutional neural networks,''
  \emph{CoRR}, vol. abs/1805.08090, 2018.

\bibitem{capsule1}
M.~D.~G. Mallea, P.~Meltzer, and P.~J. Bentley, ``Capsule neural networks for
  graph classificationusing explicit tensorial graph representations,''
  \emph{CoRR}, vol. abs/1902.08399, 2019.

\bibitem{pmlr-v80-ivanov18a}
S.~Ivanov and E.~Burnaev, ``Anonymous walk embeddings,'' \emph{ICML}, pp.
  2186--2195, 2018.

\end{thebibliography}
}

\vspace{-1.2cm}
\begin{IEEEbiography}[{\includegraphics[width=1in,height=1.25in,clip,keepaspectratio]{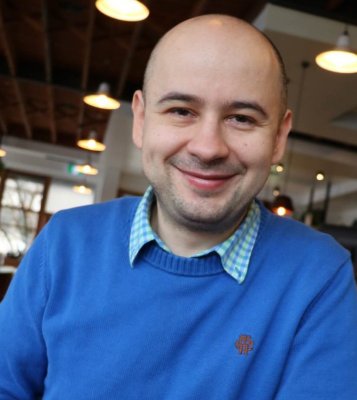}}]{Piotr Koniusz.}
A Senior Researcher in Machine Learning Research Group at Data61/CSIRO (NICTA), and a Senior Honorary Lecturer at the Australian National University (ANU). He was a postdoctoral researcher in the team LEAR, INRIA, France. He received his BSc in Telecommunications and Software Engineering in 2004 from the Warsaw University of Technology, Poland, and completed his PhD in Computer Vision in 2013 at CVSSP, University of Surrey, UK. 
\end{IEEEbiography}

\vspace{-1.5cm}
\begin{IEEEbiography}[{\includegraphics[width=1in,height=1.25in,clip,keepaspectratio]{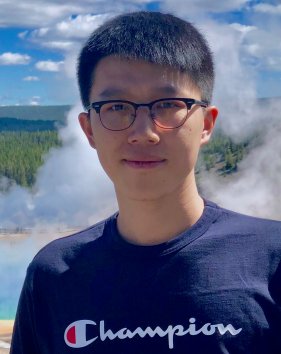}}]{Hongguang Zhang.}
From 2016, Hongguang Zhang is a PhD student in computer vision and machine learning at the Australian National University and Data61/CSIRO, Canberra, Australia. He received the BSc degree in electrical engineering and automation from Shanghai Jiao Tong University, Shanghai, China in 2014. He received his MSc degree in electronics science and technology from National University of Defense Technology, Changsha, China in 2016. His interests include fine-grained image classification, zero-shot learning, few-shot learning and deep learning methods.
\end{IEEEbiography}

\clearpage

\newcommand{\widsup}{-0.2cm}

\renewcommand{\appendixname}{Supplementary Material}
\appendix


Below we provide  proofs and derivations.

\vspace{\widsup}
\vspace{-0.1cm}
\subsection*{A. Proof of Eq. \eqref{eq:my_maxexp33}}
\label{app:eq14}
Let the following  difference of probabilities:
\begin{align}
&\psi\!=\!(1\!-\!q)^{N}\!-\!(1\!-\!p)^{N}.
\label{eq:my_maxexp33}
\end{align}

\begin{proof}
One can derive Eq. \eqref{eq:my_maxexp33} by directly applying the Multinomial calculus as follows:
%
\begin{align}
& \!\!\!\!\textstyle\sum\limits_{n=1}^{N}\sum\limits_{n'=0}^{N\!-n}\!\!\binom{N}{n,n'\!,N\!-n-n'\!-n''\!}\!\!\left(p^nq^{n'\!}\!-\!p^{n'\!}q^{n}\right)\!(1\!\!-\!\!p\!\!-\!\!q)^{N\!-n-n'}\!.
\label{eq:my_maxexppr22}
\end{align}
%
One can verify algebraically/numerically that Eq. \eqref{eq:my_maxexppr22} and \eqref{eq:my_maxexp33} are equivalent.
\end{proof}

\vspace{\widsup}
\vspace{-0.3cm}
\subsection*{B. Deriv. of Average, Gamma and MaxExp}
\label{app:der}
{\noindent Let} $\mPhi\!=\![\vphi_1,\cdots,\vphi_N]\!\in\!\mbr{d\times N}$\!, $\mC\!=\![\vc_1,\cdots,\vc_N]\!\in\!\mbr{Z'\!\times N}$\!, $\vphibar_n\!=\![\vphi_n; \vc_n], n\!\in\!\idx{N}$. 
 Let some class. loss $\ell(\mPsi,\mW), \mPsi\!\in\!\semipd{d+Z'}\!$ (or $\spd{}$) and $\mW$ are our descriptor and a hyperplane. Observe that:
%
\begin{align}
& \frac{\partial\sum_n\!\vphibar_n\vphibar_n^T}{\partial \phi_{kl}}\!=\!
\left[\begin{array}{cc}
\vj_{k}\vphi_{l}^T\!+\!\vphi_{l}\vj_{k}^T & \vj_{k}\vc_{l}^T \\
\vc_{l}\vj_{k}^T & [0]_{Z'\!\times Z'\!}  \\
\end{array}\right],
\label{eq:der_auto}
\end{align}
where $[0]_{Z'\!\times Z'\!}$ denotes an array of size $Z'\!\times Z'\!$ filled with zeros. 

\vspace{0.05cm}
{\noindent\textbf{Average pooling}} is set by $\mPsi\!=\!\mygthreee{Avg}{\mM}\!=\!\mM$ and $\mD\!=\!\vOnes\vOnes^T$ so that $\mPsi\!=\!\frac{1}{N}\sum_n\!\vphibar_n\vphibar_n^T$. Thus, the full derivative becomes:
\begin{align}
& \!\!\!\text{\scriptsize $\sum\limits_{k,l}\frac{\partial \ell(\mPsi,\mW)}{\partial  \Psi_{kl}}\frac{\partial \Psi_{kl}}{\partial \mPhi}=\frac{2}{N}\sym\Big(\frac{\partial \ell(\mPsi,\mW)}{\partial  \mPsi}\!\odot\!\mD\Big)_{(1:d,:)}
\left[\!\!\begin{array}{c}
\mPhi\\
\mC
\end{array}\!\!\right]$}.
\label{eq:der_auto2}
\end{align}
$\mX_{(1:d,:)}$ returns $1,\cdots,d$ rows/all col. as Matlab oper. $\mX(1\!:\!d,:)$.

\vspace{0.05cm}
{\noindent\textbf{Gamma pooling}} is set by $\mPsi\!=\!\mygthreee{Gamma}{\mM;\gamma}\!=\!(\mM\!+\!\varepsilon)^\gamma$, where rising $\mM$ to the power of $\gamma$ is element-wise and $\varepsilon$ is a reg. constant. Thus, we obtain:
\begin{align}
&\!\!\!\frac{\partial\mPsi}{\partial\phi_{kl}} =\frac{1}{N}\gamma\big(\mM\!+\!\varepsilon\big)^{\gamma-1}\!\odot\frac{\partial\sum_n\!\vphibar_n\vphibar_n^T}{\partial \phi_{kl}}.\!\!
\end{align}
The derivative is given by Eq. \eqref{eq:der_auto2} if $\mD\!=\!\gamma\big(\mM\!+\!\varepsilon\big)^{\!\gamma-1}$. 
\comment{
\begin{align}
&\!\!\!\!\!\!\sum\limits_{k,l}\frac{\partial \ell(\mPsi,\mW)}{\partial  \Psi_{kl}}\frac{\partial \Psi_{kl}}{\partial \mPhi}=\\
&\quad\frac{2\gamma}{N}\sym\left(\frac{\partial \ell(\mPsi,\mW)}{\partial  \mPsi}\!\odot\!\big(\mM\!+\!\varepsilon\big)^{\!\gamma-1}\right)_{(1:d,:)}\!
\left[\!\!\begin{array}{c}
\mPhi\\
\mC
\end{array}\!\!\right],\nonumber
\end{align}
where $\mM_{(1:d,1:n)}$ denotes a MATLAB style operator selecting sub-matrix $\mM'\!\in\!\mbr{d\times n}$ from $\mM$ such that $\mM'_{d'n'}\!=\!\mM_{d'n'}, \forall d'\!\!=\!1,\cdots,d,\;n'\!\!=\!1,\cdots,n$.
}

\vspace{0.05cm}
{\noindent\textbf{MaxExp pooling}} $\mPsi\!=\!\mygthreee{MaxExp}{\mM;\eta}\!=\!1\!-\!(1\!-\!\mM/(\trace(\mM)+\varepsilon))^\eta$ has the derivative given by  Eq. \eqref{eq:der_auto2} with the following $\mD$:
%
\comment{
\begin{align}
& \!\!\!\!\frac{\partial\mPsi}{\partial \eta}\!=\!-\!\left(1\!-\!\frac{\mM}{\trace(\mM)+\varepsilon}\right)^\eta\!\!\odot\left(\log\left(1\!-\!\frac{\mM}{\trace(\mM)+\varepsilon}\right)\right)^{-1}\!\!\!\!,\\
& \!\!\!\!\left[\frac{\partial\Psi_{kl}}{\partial M_{kl}}\right]_{
\begin{array}{c}
\!\!\!\scriptscriptstyle(k\!,l)\in\!\!\!\!\\[-5pt]
\!\!\scriptscriptstyle\idx{d}\times\idx{d}\!\!\!\!
\end{array}
}\!\!\!\!\!\!\!\!=\eta\left(1\!-\!\frac{\mM}{\trace(\mM)+\varepsilon}\right)^{\eta-1}\nonumber\\
&\qquad\qquad\qquad\odot\left(\frac{1}{\trace(\mM)\!+\!\varepsilon}\!-\!\frac{\mM\!\odot\!\mIdent}{\left(\trace(\mM)\!+\!\varepsilon\right)^2}\right),
\end{align}
while the matrix form of this derivative becomes:
}
\comment{
\begin{align}
& \!\!\!\!\frac{\partial\mPsi}{\partial\phi_{kl}}\!=\!-\frac{\eta}{N}\left(1\!-\!\frac{\mM}{\trace(\mM)+\varepsilon}\right)^{\eta-1}\\
&\odot\left(\frac{1}{\trace(\mM)\!+\!\varepsilon}\!-\!\frac{\mM\!\odot\!\mIdent}{\left(\trace(\mM)\!+\!\varepsilon\right)^2}\right)\!\odot\!\,\frac{\partial\sum_n\!\vphibar_n\vphibar_n^T}{\partial \phi_{kl}},\nonumber
\end{align}
}
\begin{align}
& \!\!\!\!\!\!\!\!\!\text{\scriptsize $\mD\!=\!\eta\left(1\!-\!\frac{\mM}{\trace(\mM)+\varepsilon}\right)^{\eta-1}\!\!\!\!\!\!\!\odot\mT\text{ and }\; \mT\!=\!\left(\frac{1}{\trace(\mM)\!+\!\varepsilon}\!-\!\frac{\mM\!\odot\!\mIdent}{\left(\trace(\mM)\!+\!\varepsilon\right)^2}\right)$},\!\!\!
\end{align}
%
%
\comment{
\begin{align}
&\!\!\!\!\!\!\!\!\sum\limits_{k,l}\frac{\partial \ell(\mPsi,\mW)}{\partial  \Psi_{kl}}\frac{\partial \Psi_{kl}}{\partial \mPhi}\!=\!-\frac{2\eta}{N}\sym\!\left(\frac{\partial \ell(\mPsi,\mW)}{\partial  \mPsi}\!\odot\! \Big(\!1\!-\!\frac{\mM}{\trace(\mM)\!+\!\varepsilon}\!\Big)^{\eta-1} \right.\nonumber\\
&\quad\left.\odot\Big(\frac{1}{\trace(\mM)\!+\!\varepsilon}\!-\!\frac{\mM\!\odot\!\mIdent}{\left(\trace(\mM)\!+\!\varepsilon\right)^2}\Big)\right)_{(1:d,:)}\!
\left[\!\!\begin{array}{c}
\mPhi\\
\mC
\end{array}\!\!\right],
\end{align}
}
where multiplication $\odot$, division, rising to the power {\em etc.} are all element-wise operations.

\vspace{\widsup}
\subsection*{C. Derivatives of SigmE and AsinhE pooling}
\label{app:der2}

\vspace{0.05cm}
{\noindent\textbf{SigmE pooling}} is set by $\mPsi\!=\!\mygthreee{SigmE}{\mM;\eta'}\!=\!\frac{2}{1\!+\!\expl{-\eta'\mM}}\!-\!1$ or trace-normalized $\frac{2}{1\!+\!\expl{\frac{-\eta'\mM}{\trace(\mM)+\varepsilon}}}\!-\!1$. The first expression yields: 
\begin{align}
& \!\!\!\!\frac{\partial\mPsi}{\partial\phi_{kl}}\!=\!\frac{1}{N}\frac{2\eta'\expl{-\eta'\mM}}{(1+\expl{-\eta'\mM})^2}\odot(\vj_{k}\vphi_{l}^T\!+\!\vphi_{l}\vj_{k}^T),
\end{align}
where multiplication $\odot$, division, and exponentiation are all element-wise operations.

\vspace{0.05cm}
{\noindent\textbf{AsinhE pooling}} is set by $\mPsi\!=\!\mygthreee{AsinhE}{\mM;\gamma'}\!=\!\arcsinh(\gamma'\!\mM)\!=\!\log(\gamma'\!\mM+\sqrt{1+{\gamma'}^2\!\mM^2})$ which yields the following:
\begin{align}
& \!\!\!\!\!\frac{\partial\mPsi}{\partial\phi_{kl}}\!=\!\frac{1}{N}\frac{\gamma'}{\sqrt{{\gamma'}^2\mM^2+1}}\odot(\vj_{k}\vphi_{l}^T\!+\!\vphi_{l}\vj_{k}^T),
\end{align}
where multiplication $\odot$, division, square root and the square are all element-wise operations.

For SigmE, trace-normalized SigmE and AsinhE pooling, 
\comment{
Then, for $\mPhi=[\vphi_1,\cdots,\vphi_N]\!\in\!\mbr{d\times N}$ and $\mC=[\vc_1,\cdots,\vc_N]\!\in\!\mbr{Z'\!\times N}$, we obtain the following expression:
\begin{align}
& \!\!\sum\limits_{k,l}\frac{\partial \ell(\mPsi,\mW)}{\partial  \Psi_{kl}}\frac{\partial \Psi_{kl}}{\partial \mPhi}=\frac{2}{N}\sym\left(\frac{\partial \ell(\mPsi,\mW)}{\partial  \mPsi}\!\odot\! 
\mD
\right)_{(1:d,:)}
\left[\!\!\begin{array}{c}
\mPhi\\
\mC
\end{array}\!\!\right],
\label{eq:general_der_add_pn}
\end{align}
}
final derivatives are given by Eq. \eqref{eq:der_auto2} with the following $\mD$, resp.: 
\begin{align}
&\!\!\!\!\!\!\!\!\!\!\!\!\!\!\text{\scriptsize $\mD\!=\!\frac{2\eta'\expl{-\eta'\mM}}{(1\!+\!\expl{-\eta'\mM})^2}$}\text{ or } \text{\scriptsize $\mD\!=\!\frac{2\eta'\expl{\frac{-\eta'\mM}{\trace(\mM)+\varepsilon}}}{\big(1\!+\!\expl{\frac{-\eta'\mM}{\trace(\mM)+\varepsilon}}\big)^2}\!\odot\mT$} \text{ and } \text{\scriptsize $\mD\!=\!\frac{\gamma'}{\sqrt{{\gamma'}^2\mM^2\!+\!1}}$}.\!\!
\label{eq:additional_pn_matrices_d}
\end{align}
Moreover, for  SigmE and AsinhE we allow $\beta$-centering so 
its derivative has to be included in the chain rule.

\vspace{\widsup}
\vspace{-0.05cm}
\subsection*{D. Derivative of Spectral Gamma}
\label{app:der3}

\vspace{0.05cm}
{\noindent\textbf{Gamma pooling}} has derivative which can be solved by the SVD back-propagation or the Sylvester equation if $\gamma\!=0.5$:
\begin{align}
&\!\!\text{\scriptsize $2\res\!\Big(\vectorise\Big(\sym\Big(\frac{\partial \ell(\mPsi,\mW)}{\partial  \mPsi}\Big)\Big)^T\!\!\mM^*\!\Big)_{d\!+\!Z'\!\times d\!+\!Z'\!} \text{ and } \mM^*\!\!\!\!=\!(\mIdent\!\otimes\!\mM^{\frac{1}{2}}\!\!+\!\!\mM^{\frac{1}{2}}\!\otimes\!\mIdent)^{\dagger}$},
\label{eq:sylv}
\end{align}
where $\otimes$ and $\dagger$ are the Kronecker product and the pseudo-inverse. Matrix reshaping to the size $m\!\times\!n$ is by  $\res(\mX)_{m\!\times\!n}$.

\comment{
\vspace{0.05cm}
{\noindent\textbf{MaxExp}} has a closed-form derivative which requires the following chain rule:
\begin{align}
& \!\!\!\!\text{\scriptsize $\frac{\partial\mygthree{\mM}}{\partial\ M_{kl}}=\!\frac{1}{\trace(\mM)}\sum\limits_{n\!=\!0}^{{\eta}-1}\left(\mIdent\!-\!\frac{\mM}{\trace(\mM)}\right)^n\!\left(\mJ_{kl}-\frac{\mM}{\trace(\mM)}\sIdent_{kl}\right)\left(\mIdent\!-\!\frac{\mM}{\trace(\mM)}\right)^{{\eta}-1-n}\!\!\!\!\!\!\!\!\!\!\!\!\!$}.
\label{eq:binom_mat_der}
\end{align}
}


\vspace{\widsup}
\vspace{-0.05cm}
\subsection*{E. Outline Proof of \Cref{th:heatdiff}}
\label{app:graph}
\begin{proof}
The connection of GMRF and $\mathcal{E}$ follows \cite{GMRF-graph}. For the reminder of the proof, a simple visual inspection of profiles $g_{\text{MaxExp}}(\lambda)\!=\!1\!-\!(1\!-\!\lambda)^\eta$ and $g_{\text{HDP}}(\lambda)\!=\!\exp(-t/\lambda)$ shows that $g_{\text{MaxExp}}(\lambda)\!\approx\!g_{\text{HDP}}(\lambda)$, or even $0\!\leq\!g_{\text{MaxExp}}(\lambda)\!-\!g_{\text{HDP}}(\lambda)\!<\!\epsilon$ for some sufficiently small $\epsilon\!>\!0$, which shows that $g_{\text{MaxExp}}(\lambda)$ is an upper bound of $g_{\text{HDP}}(\lambda)$ on the interval $\lambda\!\in\![0,1]$. \Cref{{fig:pow5}} shows $g_{\text{MaxExp}}(\lambda)$, $g_{\text{Gamma}}(\lambda)$ and $g_{\text{HDP}}(\lambda)$ for MaxExp, Gamma and HDP. The plot also shows that $g_{\text{Gamma}}(\lambda)\!\approx\!g_{\text{HDP}}(\lambda)$.
\end{proof}

\vspace{\widsup}
\vspace{-0.2cm}
\subsection*{F. Proof of \Cref{th:param_bounds}}
\label{app:pr_bound}
Working with MaxExp and HDP according to their original parametrization as used in  \Cref{fig:bounds-zoom} is difficult/intractable. Thus,
for this bound, we start by a parametrization $y\!=\!\frac{t}{\lambda}$ and we note that $t$ and $\eta$ can be tied together, that is $t\eta\!=\!\alpha$. We obtain:
\begin{equation}
    e^{-y}\leq\left(\frac{1}{e}\!-\!1\right)y\!+\!1\leq1\!-\!\left(1\!-\!\frac{t}{y}\right)^\eta\!=1\!-\!\left(1\!-\!\frac{\alpha}{\eta y}\right)^\eta,
\end{equation}
where $\left(\frac{1}{e}\!-\!1\right)y\!+\!1$ is an upper bound of $e^{-y}$ on $y\in(0,1)$ and a lower bound of $1\!-\!\left(1\!-\!\frac{\alpha}{\eta y}\right)^\eta$. Moreover, the latter equation can be tightened (as in `lowered down') to touch $\left(\frac{1}{e}\!-\!1\right)y\!+\!1$ on $y\in(0,1)$. This process is illustrated in \Cref{fig:bounds2}. To this end, we need to solve for the system of equations to obtain $(y,\alpha)$:

\ifdefined\arxiv
\renewcommand{\PowH}{3.0cm}
\renewcommand{\PowHB}{2.875cm}
\renewcommand{\PowW}{3.65cm}
\else
\renewcommand{\PowH}{4.5cm}
\renewcommand{\PowHB}{3cm}
\renewcommand{\PowW}{4.1cm}
\fi
\begin{figure*}[!b]
\centering
\vspace{-0.3cm}
\hspace{-0.6cm}
\begin{subfigure}[t]{0.325\linewidth}
\centering\includegraphics[trim=0 0 0 0, clip=true, height=\PowH]{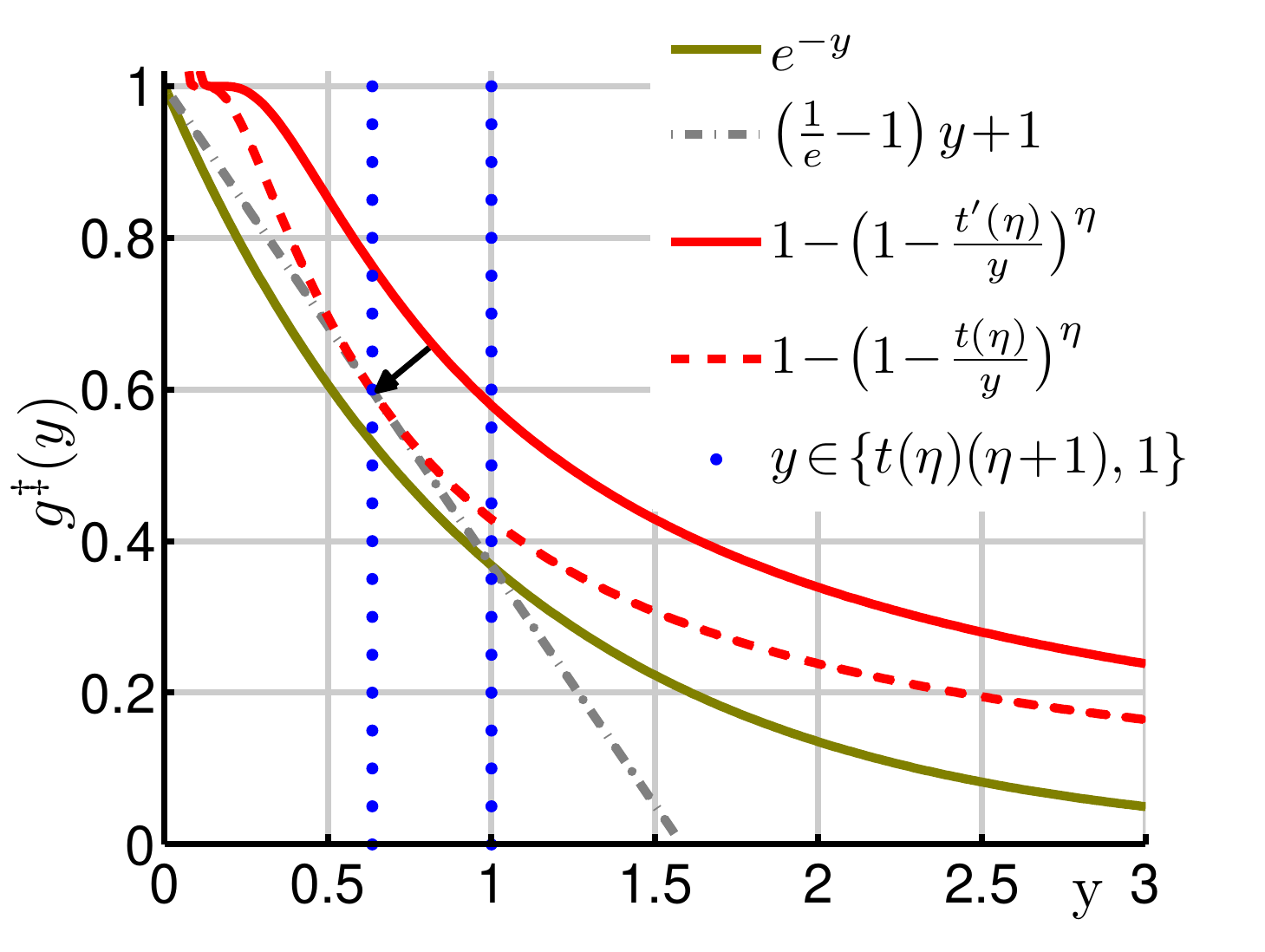}
\caption{$\!\!\!\!\!\!\!\!$}\label{fig:bounds2}
\end{subfigure}
\begin{subfigure}[t]{0.325\linewidth}
\centering\includegraphics[trim=0 0 0 0, clip=true, height=\PowH]{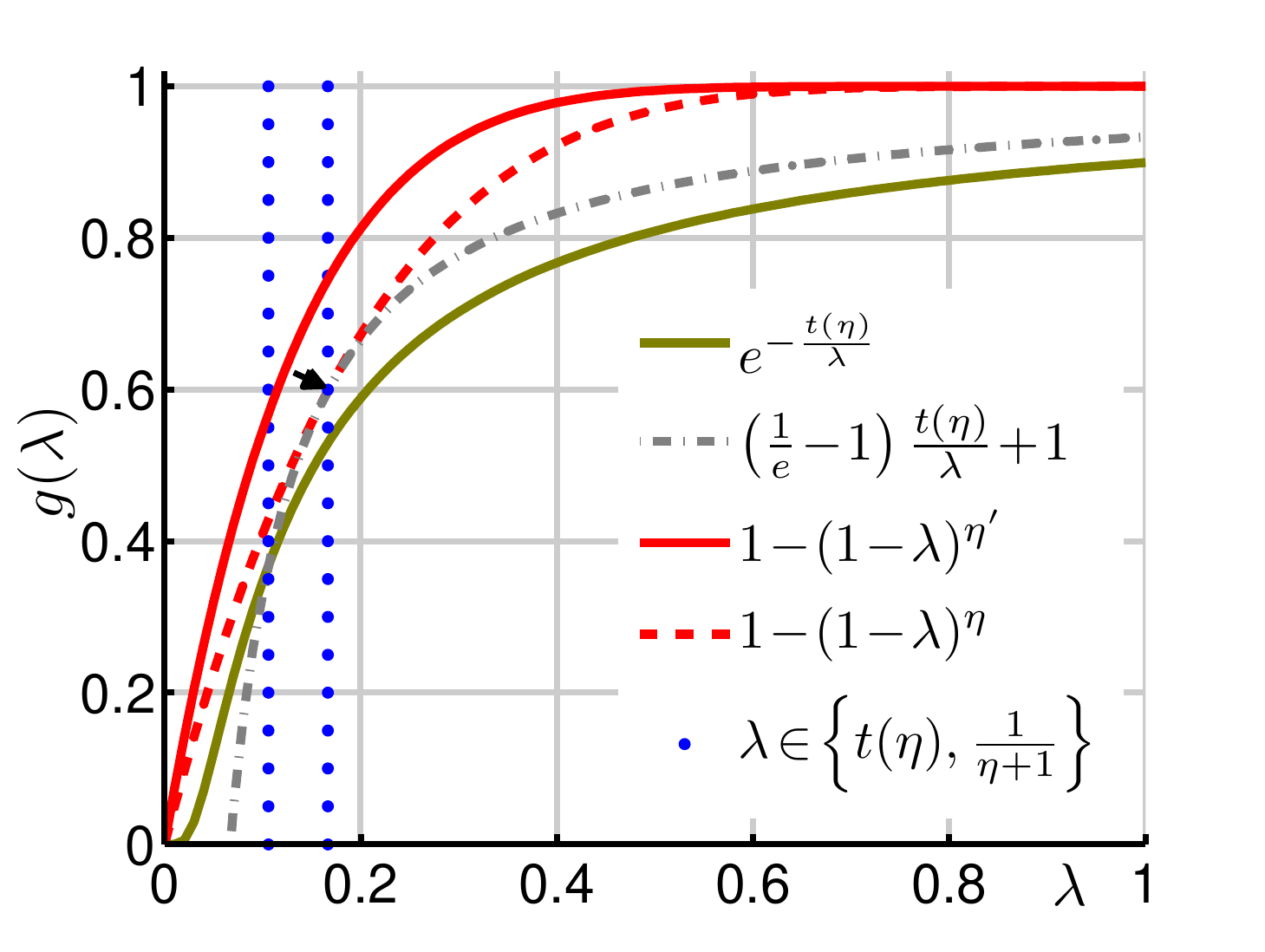}
\caption{$\!\!\!\!\!\!\!\!$}\label{fig:bounds-zoom}
\end{subfigure}
\begin{subfigure}[t]{0.325\linewidth}
\centering\includegraphics[trim=0 0 0 0, clip=true, height=\PowH]{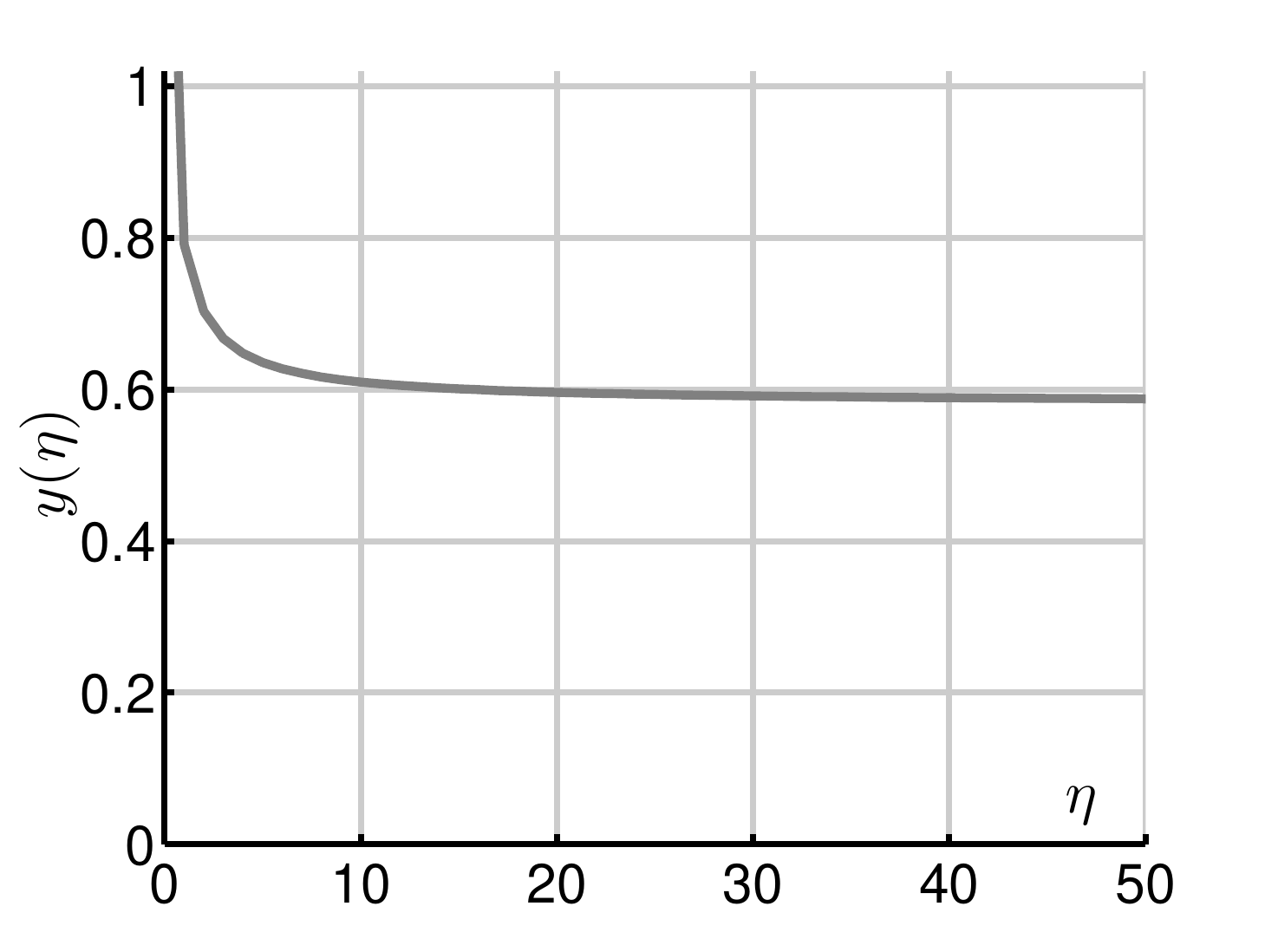}
\caption{$\!\!\!\!\!\!\!\!$}\label{fig:y-eta}
\end{subfigure}
%
\caption{In Fig. \ref{fig:bounds2}, we show pooling functions reparametrized according to $y\!=\!\frac{t}{\lambda}$. Specifically, we have HDP given by $e^{-y}$, its upper bound $\left(\frac{1}{e}\!-\!1\right)y\!+\!1$ for $y\!\in\!(0,1)$, and MaxExp given by $1\!-\!\left(1\!-\!\frac{t'(\eta)}{y}\right)^\eta$ which we `lower down' onto $\left(\frac{1}{e}\!-\!1\right)y\!+\!1$ as illustrated by the black arrow. As we tighten the bound, some initial MaxExp with $t'(\eta)$ becomes MaxExp with $t(\eta)$. Blue vertical lines indicate $y$ at which we measure $\epsilon_1$ and $\epsilon_2$. Fig. \ref{fig:bounds-zoom} illustrates the same pooling operations as in Fig. \ref{fig:bounds2} but without the reparametrization, that is, we show $g(\lambda)$ rather than $g^\ddag(y)$. Note the corresponding $y$ and $\lambda$ ranges in both figures indicated by the blue dashed lines. Fig. \ref{fig:y-eta} shows that $y(\eta)$ is monotonically decreasing on $\eta\!\in\!(0,\infty)$.}
\vspace{-0.3cm}
\label{fig:repar}
\end{figure*}

\vspace{-0.3cm}
\begin{equation}
\!\!\!\!\begin{cases}
\left(\frac{1}{e}\!-\!1\right)y\!+\!1=1\!-\!\left(1\!-\!\frac{\alpha}{\eta y}\right)^\eta\\
\frac{\partial \left(\frac{1}{e}\!-\!1\right)y\!+\!1}{\partial y}=-\frac{\partial\left(1\!-\!\frac{\alpha}{\eta y}\right)^\eta}{\partial y}\Rightarrow-\frac{\alpha\eta\left(1-\frac{\alpha}{\eta y}\right)^\eta}{y\left(\alpha-\eta y\right)} = \left(\frac{1}{e}\!-\!1\right),\!
\label{eq:syst1}
\end{cases}
\end{equation}
which simply says that we search for $(y,\alpha)$ for which both functions on the left- and right-hand side touch and their slopes/tangents (thus derivatives \wrt $y$) are equal. We could try directly `lower down' $1\!-\!\left(1\!-\!\frac{\alpha}{\eta y}\right)^\eta$ onto $e^{-y}$ but such an approach yields an intractable system of equations requiring numerical approximations and the use of the LambertW function. 
By solving Eq. \eqref{eq:syst1} we get:
%
\vspace{-0.5cm}
\begin{equation}
\begin{cases}
\alpha(\eta)=\frac{e}{e\!-\!1}\left(\frac{\eta}{\eta\!+\!1}\right)^{\eta+1}\\
y(\eta)=\frac{e}{e\!-\!1}\left(\frac{\eta}{\eta\!+\!1}\right)^{\eta}\!\!.
\label{eq:yalpha}
\end{cases}
\vspace{-0.1cm}
\end{equation}
Recall that we have assumed parametrization $t(\eta)\!=\!\frac{\alpha(\eta)}{\eta}$, thus $t(\eta)\!=\!\frac{e}{e\!-\!1}\frac{\eta^\eta}{(\eta\!+\!1)^{\eta+1}}$ (Eq. \eqref{eq:eta_par}). Furthermore, we have to check if $y\!\in(0,1)$ for $1\!\leq\!\eta\!\leq\!\infty$ in order for the bound to hold as $\left(\frac{1}{e}\!-\!1\right)y\!+\!1$ is an upper bound of $e^{-y}$ only for $y\!\in\!(0,1)$. To this end, we firstly notice that $y(\eta)$ is monotonically decreasing on $0\leq\eta\leq\infty$ as shown in \Cref{fig:y-eta}. Therefore, it suffices to check extremes of $\eta$ for $1\!\leq\!\eta\!\leq\!\infty$, that is $y(1)\!=\!\frac{1}{2}\frac{e}{e-1}$ and $\lim_{\eta\rightarrow\infty}y(\eta)\!=\!\frac{1}{e-1}$ which verifies that $y(1\!\leq\!\eta\!\leq\!\infty)\!\subset\!(0,1)$.
%
%
%
%
This completes the first part of the proof.

The next part of the proof requires solving:
    $\epsilon_2(\eta)=1\!-\!\left(1\!-\!\frac{\alpha(\eta)}{\eta y}\right)^\eta\!\!\!-\!e^{-y}$
%
which can be solved by plugging $(y,\alpha)$ from Eq. \eqref{eq:yalpha} into it. After a few of algebraic manipulations we have $\epsilon_2(\eta)=1\!-\! \left(\frac{\eta}{\eta\!+\!1}\right)^\eta \!\!-\! e^{{\textstyle-\frac{e}{e\!-\!1}\left(\frac{\eta}{\eta\!+\!1}\right)^\eta}}$ (the right part of Eq. \eqref{eq:bounds1}).
 
 We also note that, by design, $\epsilon_1(\eta)=1\!-\!\left(1\!-\!\frac{\alpha}{\eta y}\right)^\eta\!\!\!-\!\left(\frac{1}{e}\!-\!1\right)y\!+\!1$ where $\left(\frac{1}{e}\!-\!1\right)y\!+\!1$ touches $e^{-y}$ at  $y\!=\!1$. Thus, we readily obtain  $\epsilon_1(\eta)=\frac{e\!-\!1}{e}\!-\!\left(1\!-\!\frac{e}{e\!-\!1}\frac{\eta^\eta}{(\eta\!+\!1)^{\eta+1}}\right)^\eta$ (the left part of Eq. \eqref{eq:bounds1}).

Finally, obtaining  $\eta(t)$ (an inverse of $t(\eta)$) follows from simple algebraic manipulations based on the Stirling approximation.

\vspace{\widsup}
\vspace{-0.1cm}
\subsection*{G. Proof of \Cref{th:gamma_bounds}}
\label{app:gamma_bound}
As in \hyperref[{app:pr_bound}]{F}, let parametrization $y\!=\!\frac{t}{\lambda}$ which yields a set of following equations:
\vspace{-0.5cm}
\begin{equation}
\label{eq:touch_gamma}
\begin{cases}
e^{-y}=(\frac{t}{y})^\gamma\\
\frac{\partial e^{-y}}{\partial y}=\frac{\partial (\frac{t}{y})^\gamma}{\partial y}  \Rightarrow -e^{-y}=-\frac{\gamma}{\lambda}\left(\frac{t}{y}\right)^\gamma\!\!\!.
\end{cases}
\end{equation}
We again seek a parametrization $t(\gamma)$ for which the Gamma and HDP functions touch. However, this time the strict bound can be achieved analytically (\eg, the analytical solution to \eqref{eq:touch_gamma} exists) and thus an intermediate bounding function is not needed. After a few of algebraic manipulations, we obtain a candidate solution $y\!=\gamma$, which, if combined with the intermediate equation $e^{y}\left(\frac{t}{y}\right)^\gamma\!\!=\!1$, readily yields $\gamma(t)\!=\!et$, which completes the  proof.

\vspace{\widsup}
\vspace{-0.1cm}
\subsection*{H. Proof of \Cref{th:maxexp_ode}}
\label{app:maxexp_ode}
To obtain the proof, we note that the system of ODE from Eq. \eqref{eq:heat_eq} and thus also Eq. \eqref{eq:heat_maxexp} can be written in the span of eigenvectors of graph Laplacian \cite{heat_eqq}. Thus, we write the standard Heat Diffusion Equation as:
\vspace{-0.2cm}
\begin{equation}
    \frac{\partial\vv'(t)}{\partial t}+\lambda\vv'(t)=0,
    \label{eq:gam_eig}
\end{equation}
where $\vv'(t)$ is now expressed in the span of new bases ($\vv'(t)$ is not a derivative).

Now simply write MaxExp as $1\!-\!\left(1\!-\!\lambda^{-1}\right)^{\eta(t)}$. In this parametrization, we use $\lambda^{-1}$ as we start from the eigenvectors of the graph Laplacian rather than an autocorrelation/covariance matrix, and we use the parametrization $\eta(t)$ derived earlier.
Thus $\frac{\partial\vv'(t)}{\partial t}\!=\!-\log\left(1\!-\!\lambda^{-1}\right)\left(1\!-\!\lambda^{-1}\right)^{\eta(t)}\frac{\partial \eta(t)}{\partial t}$. Plugging this result into Eq. \eqref{eq:gam_eig}, we obtain: 
\begin{equation}
    -\log\left(1\!-\!\lambda^{-1}\right)\frac{\partial \eta(t)}{\partial t}\left(1\!-\!\lambda^{-1}\right)^{\eta(t)}+f(\lambda,t)\lambda\vv'(t)-h(\lambda,t)=0.
    \label{eq:dermaxheat}
\end{equation}
After simple algebraic manipulations we find $f(\lambda,t)\!=\!-\lambda^{-1}\log\left(1\!-\!\lambda^{-1}\right)\frac{\partial \eta(t)}{\partial t}$ and $h(\lambda,t)\!=\!-\log\left(1\!-\!\lambda^{-1}\right)\frac{\partial \eta(t)}{\partial t}$ such that Eq. \eqref{eq:dermaxheat} holds. Putting these results together we obtain  the set of ODE given as:
\begin{equation}
    \frac{\partial\vv'(t)}{\partial t}+\frac{\partial \eta(t)}{\partial t}\log\left(1\!-\!\lambda^{*}\right)\left(1-\vv'(t)\right)=0.
\end{equation}
which is equivalent to Eq. \eqref{eq:heat_maxexp} as, in the above equation, eigenvalues $\lambda^*\!\!=\!\lambda^{-1}$ correspond to the autocorrelation/covariance and graph Laplacian, respectively, which completes the proof.

\vspace{\widsup}
\vspace{-0.0cm}
\subsection*{I. Proof of \Cref{th:gamma_ode}}
\label{app:gamma_ode}
As above, we note that the system of ODE from Eq. \eqref{eq:heat_eq} and thus also Eq. \eqref{eq:heat_gamma} can be rewritten in the span of eigenvectors of the graph Laplacian \cite{heat_eqq}, that is Eq. \eqref{eq:gam_eig}. 
%

Now simply write Gamma as $\left(\lambda^{-1}\right)^{et}$. In this parametrization, we use $\lambda^{-1}$ as we start from the eigenvectors of the graph Laplacian rather than an autocorrelation/covariance matrix, and we use the previous result stating that $\gamma\!=\!et$. Thus $\frac{\partial\vv'(t)}{\partial t}\!=\!e\left(\lambda^{-1}\right)^{et}\log\left(\lambda^{-1}\right)$. Plugging this result into Eq. \eqref{eq:gam_eig}, we obtain:
\begin{equation}
    -e\log\left(\lambda\right)\left(\lambda^{-1}\right)^{et} +f(\lambda)\lambda\vv'(t)=0,
    \label{eq:gam_eig2}
\end{equation}
where $f(\lambda)$ must be equal $e\lambda^{-1}\!\log(\lambda)$ for Eq. \eqref{eq:gam_eig2} to hold. Putting together these results we obtain:
\begin{equation}
    -e\log\left(\lambda\right)\left(\lambda^{-1}\right)^{et} +e\log(\lambda)\lambda^{-1}\!\lambda\vv'(t)=0,
    \label{eq:gam_eig3}
\end{equation}
which simply tells us that the desired result is yielded by the set of ODE:
\vspace{-0.1cm}
\begin{equation}
    \frac{\partial\vv'(t)}{\partial t}+e\log(\lambda)\vv'(t)=0.
    \label{eq:gam_eig4}
\end{equation}
which is equivalent to Eq. \eqref{eq:heat_gamma} which completes the proof.

\vspace{\widsup}
\subsection*{J. Proof of \Cref{pr:sigmoids}}
\label{app:mix_proof}
Probabilities $\psi^{(-)}$ and $\psi^{(+)}$ follow a simple calculus for the probability of selecting a component in a mixture model given a sample. The rest follows simple algebraic manipulations.

\vspace{\widsup}
\vspace{-0.1cm}
\subsection*{K. Proof of \Cref{re:maxexp_sigmoid}}
\label{app:sigme_align_maxexp}
We note that the concavity of SigmE on interval $[0,1]$ is at its maximum for a point $p''(\eta')\!=\!\log(\sqrt{3}\!+\!2)/\eta'$ which we obtain as a solution to:
\begin{equation}
    \frac{\partial^3}{\partial p^3}\frac{2}{1\!+\!e^{-\eta'p}}\!-\!1=0.
    \label{eq:der_sigme_maxexp}
\end{equation}

Subsequently, we formulate the square loss between SigmE and MaxExp at $p''\!$, and take its derivative to solve it for $\eta$ (or $\eta'\!$):
\vspace{-0.1cm}
\begin{equation}
\!\!\!\!\!\!\!\frac{\partial}{\partial\eta}\bigg(\frac{2}{1\!+\!\expl{-\eta'\frac{\log(\sqrt{3}\!+\!2)}{\eta'}\!}}\!-\!1\!-\!\bigg(1\!-\!\Big(1\!-\!\frac{\log(\sqrt{3}\!+\!2)}{\eta'}\Big)^\eta\bigg)\bigg)^2\!\!\!\!=\!0,\!
\end{equation}
%

Regarding the maximum error between SigmE and MaxExp, one should technically find the maximum square difference between SigmE and MaxExp parametrized by $\eta'(\eta)$. However, such an equation has no closed form. Thus, an easier approximate measure is to consider the difference between SigmE and MaxExp at $2p''(\eta')$ or $p'''(\eta')\!=\!\log(\sqrt{26\sqrt{105} + 270}/2 + \sqrt{105}/2 + 13/2)/\eta'$, which is the solution to the fifth derivative of MaxExp: 
\begin{equation}
    \frac{\partial^5}{\partial p^5}\frac{2}{1\!+\!e^{-\eta'p}}\!-\!1=0.
    \label{eq:sigme_maxexp_bound}
\end{equation}

\revisedd{
\vspace{-0.2cm}
\vspace{\widsup}
\subsection*{L. Derivation of the kernel linearization in Eq. \eqref{eq:gauss_lin2}}
\label{app:kern_linear}

Let $G_{\sigma}(\vx\!-\!\vy)\!=\!\exp(-\!\enorm{\vx\!-\!\vy}^2/{2\sigma^2})$ be a Gaussian RBF kernel with a bandwidth $\sigma$. Kernel linearization refers to rewriting $G_{\sigma}$ as an inner-product of two (in)finite-dimensional feature maps. 
Specifically, we employ the inner product of $d'$-dimensional isotropic Gaussians cantered at $\vx,\vy\!\in\!\mbr{d'}\!$: 
\begin{align}
\vspace{-0.8cm}
&\!\!\!\!\!\!\!G_{\sigma}\!\left(\vx\!-\!\vy\right)\!\!=\!\!\left(\frac{2}{\pi\sigma^2}\right)^{\!\!\frac{d'}{2}}\!\!\!\!\!\!\int\limits_{\vzeta\in\mbr{d'}}\!\!\!\!G_{\sigma/\sqrt{2}}\!\!\left(\vx\!-\!\vzeta\right)G_{\sigma/\sqrt{2}}(\vy\!\!-\!\vzeta)\,\mathrm{d}\vzeta,
\label{eq:gauss_integral}
\vspace{-0.5cm}
\end{align}

Eq. \eqref{eq:gauss_integral} can be thought of as a convolution of function $G_{\sigma/\sqrt{2}}\!\!\left(\vx\!-\!\vzeta\right)$ with $G_{\sigma/\sqrt{2}}(\vy\!\!-\!\vzeta)$ centered at $\vx$ and $\vy$, respectively. Both functions  are isometric multivariate Normal distributions if normalized by $\left(\frac{2}{\pi\sigma^2}\right)^{\!\!\frac{d'}{4}}$. To prove that Eq. \eqref{eq:gauss_integral} holds, we consider the sum (denoted as $S$) of two i.i.d. random variables distributed according to two Normal distributions (variance $\sigma/\sqrt{2}$). We note that $S$ is then also distributed according to the Normal distribution (variance $\sigma$). 

Eq. \eqref{eq:gauss_integral} is approximated by replacing the integral with the so-called Riemann sum over $Z$ pivots $\vzeta_1,\cdots,\vzeta_Z$ which represent centers of so-called approximating rectangles in the sum: 
\begin{align}
\vspace{-0.3cm}
&\!\!\!\!\!\!\!G_{\sigma}\!\left(\vx\!-\!\vy\right)\!\approx\!c\!\sum\limits_{i\in\idx{Z}}\varphi_i(\vx; \{\vzeta_i\}_{i\in\idx{Z}})\!\cdot\!\varphi_i(\vy; \{\vzeta_i\}_{i\in\idx{Z}}),
\label{eq:gauss_sumint}
\end{align}
\vspace{-0.2cm}

\noindent{where} $\vvarphi\left(\vx; \{\vzeta_i\}_{i\in\idx{Z}}\right)\!=\!\left[{G}_{\sigma/\sqrt{2}}(\vx-\vzeta_1),\cdots,{G}_{\sigma/\sqrt{2}}(\vx-\vzeta_Z)\right]^T\!\!\!\!,$ and $c$ is a normalization constant related to the normalization in Eq. \eqref{eq:gauss_integral} and the width of approximating rectangles. 

Finally, one dim. input features (Cartesian coordinates normalized in range 0--1) from which we form feature maps correspond to spatial locations in the conv. layer, and so are distributed uniformly. Thus, to cover the entire support set ($\{s\!: f(s)\!>\!0\}$) of random variable $s\!\sim\!S$ and obtain roughly a uniform approximation quality across the support (size of support set), we select $Z$ pivots at equally spaced intervals, that is $[\zeta_1;\cdots;\zeta_Z]\!=\![-0.2:1.4/(Z\!-\!1):1.2]$. The range exceeds $[0,1]$ as the arms ($2\!\times$ standard deviation) of Gaussians ($\sigma/\sqrt{2}$) at extreme locations $0$ and $1$ require the support of $S$ to be roughly $[-2\sigma/\sqrt{2}; 1\!+\!2\sigma/\sqrt{2}]$ to cover $\sim\!95.0$ of the support set. In practice, we found that the support $[-0.2; 1.2]$ and $Z$ in range 3--10 are sufficient. As $0\!<\!c\!<\!\infty$ is a constant, its exact value does not influence the information captured by the maps, thus we set $c\!=\!1$.
}

\revisedd{
\vspace{\widsup}
\subsection*{M. MaxExp in few-shot learning.}
\label{app:maxexp_fsl}
Below we present a motivation similar to one presented in Section \ref{sec:max_exp_mot}. However, we employ the Binomial PMF and the variance-based modeling in contrast to the uniform PMF and its set support modeling from Section \ref{sec:max_exp_mot}. 
Theorem \ref{pr:cooc} states that MaxExp performs a co-occurrence detection rather than counting. For classification problems, let a probability mass function $p_{X_{kl}}(x)\!=\!\text{Binom}(x; N,p)$ for $x\!=\!0,\cdots,N$ and some $p$  tell the probability that co-occurrence $(k,l)$ between $\phi_{kn}$ and $\phi_{ln}$ happened $x\!=\!0,\cdots,N$ times given an image, where $N$ is the number of feature vectors to aggregate. 
Using second-order pooling without MaxExp requires a classifier to observe $N\!+\!1$ training samples of two features co-occurring in quantities $0,\cdots,N$ to memorize their possible co-occurrence counts. For similarity learning, our $\vartheta$ stacks pairs of samples to compare, thus a similarity learner has to deal with a probability mass function ${R_{kl}}\!=\!{X_{kl}}\!+\!{Y_{kl}}$ describing configurations of two features co-occurring whose  $\text{var}(p_{R_{kl}})\!=\!2Np(p\!-\!1)\!>\!\text{var}(p_{X_{kl}})\!=\!Np(p\!-\!1)$ as random variable $X\!=\!Y$ (same class). For $J$-shot learning which stacks one query and $J$ support matrices (per class) in relation descriptor, ${R'_{kl}}\!=\!{X^{(1)}_{kl}}\!+\!\cdots\!+\!{X^{(J)}_{kl}}\!+\!{Y_{kl}}$, $X^{(j)}\!=\!Y, \forall j\!\in\!\idx{J}$ and we have  $\text{var}(p_{R'_{kl}})\!=\!(J\!+\!1)Np(p\!-\!1)$ indicating that the similarity learner has to memorize more configurations as $J$ and/or $N$ grow.

However, this situation is alleviated by MaxExp whose  probability mass function can be modeled as $p_{X^\text{MaxExp}_{kl}}(x)\!=\!p$ if $x\!=\!1$, $p_{X^\text{MaxExp}_{kl}}(x)\!=\!1\!-\!p$ if $x\!=\!0$, as MaxExp detects a co-occurrence (or its lack). Thus, for $J$-shot learning, $\text{var}(p_{{R'}^{\text{MaxExp}}_{kl}})\!=(J\!+\!1)p(p\!-\!1)\!\ll\!\text{var}(p_{R'_{kl}})\!=\!(J\!+\!1)Np(p\!-\!1)$. The ratio of variances of  MaxExp normalized to non-normalized few-shot learning equals $\kappa'\!=\!1/N$ which shows that if MaxExp is used, the similarity learner has to memorize representations which have $N\!\times$ less variance 
compared to the case without MaxExp. 

In our work, we considered an intuitive relation descriptor:
\vspace{-0.2cm}
\begin{equation}
\!\!\!\!\fontsize{7}{8}\selectfont\vartheta_{\text{($\otimes$+P)}}\!\left(\{\mPhi_n\}_{n\in\mathcal{J}},\mPhi^*\!\right)\!=\!\left[\tG_{\text{i}}\left(\frac{1}{J}\!\sum_{n\in\mathcal{J}}\!\frac{1}{N}\mPhi_n\mPhi_n^T\right)\!;_1 \tG_{\text{i}}\left(\frac{1}{N}\mPhi^*\!\mPhi^{*T}\right)\right]\!.\!\!
\label{eq:concat_intuitive}
\end{equation}
\vspace{-0.3cm}

Despite its intuitive nature, $\vartheta_{\text{($\otimes$+P)}}$ in Eq. \eqref{eq:concat_intuitive} performed $\sim$1--2\% worse than $\vartheta_{\text{($\otimes$+L)}}$ in Eq. \eqref{eq:concat_new} for $J\!>\!1$,  and thus was deemed not fit for presentation. Analyzing variance, one can notice that query and support parts of Eq. \eqref{eq:concat_intuitive} are described by $p_{X^\text{MaxExp}_{kl}}(x)$ each, thus $\text{var}(p_{{X^\text{MaxExp}_{kl}}+{Y^\text{MaxExp}_{kl}}})\!=\!2p(p\!-\!1)$, $Y\!=\!X$. Due to a very low variance, the representational power of this relation descriptor is simply insufficient if $J\!>\!1$. Detecting if two co-occurring features $(k,l)$ between $\phi_{kn}$ and $\phi_{ln}$ are detected at least once in $JN$ trials can be thought of as creating one simplified image representing such detections which deprives the similarity learner the individual per-image co-occurrence statistics.
}

\begin{figure*}[!b]
\centering
\vspace{-0.3cm}
\hspace{-0.6cm}
\begin{subfigure}[t]{0.325\linewidth}
\centering\includegraphics[trim=0 0 0 0, clip=true, height=\PowH]{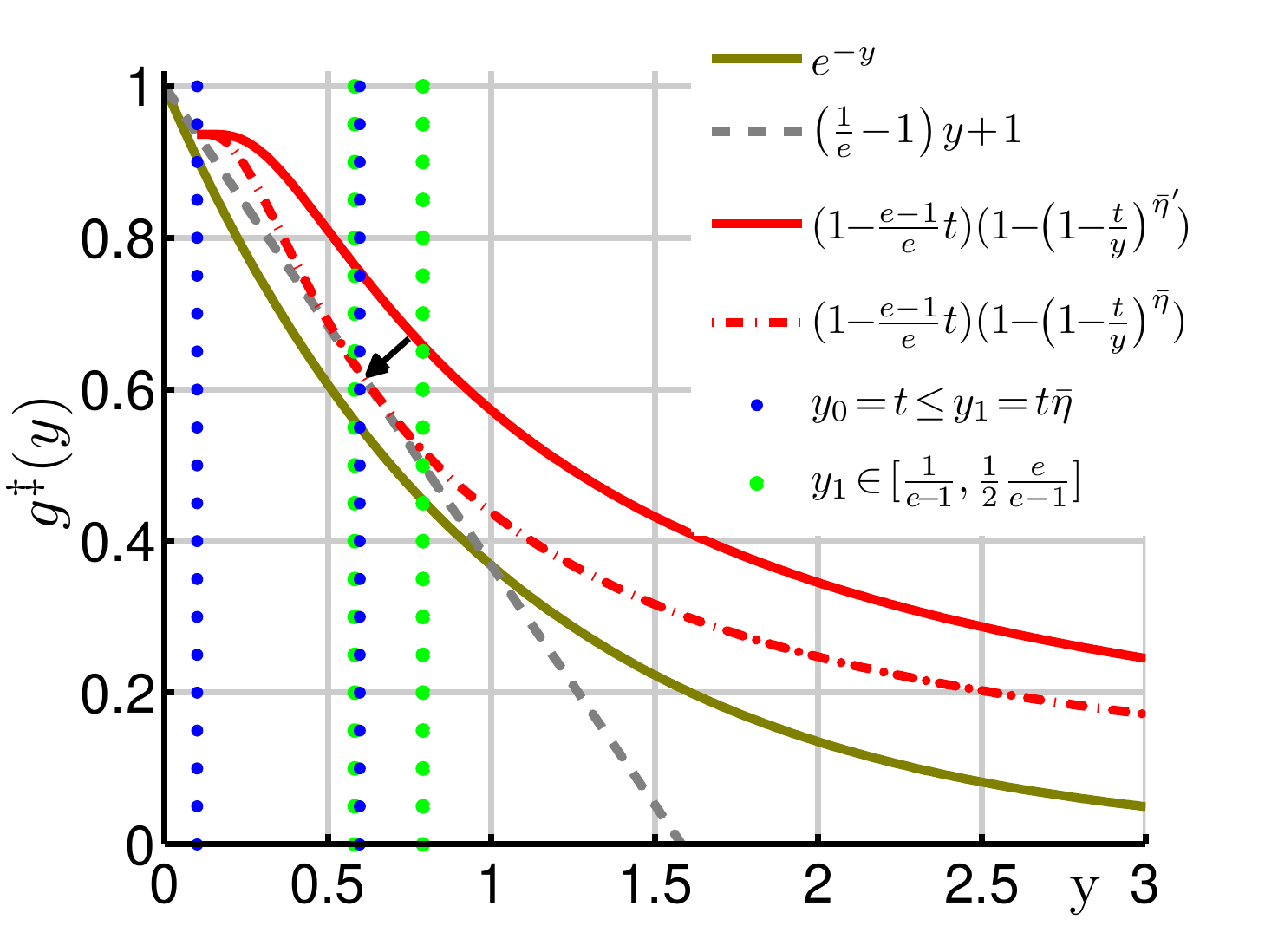}
\caption{$\!\!\!\!\!\!\!\!$}\label{fig:bounds4}
\end{subfigure}
\begin{subfigure}[t]{0.325\linewidth}
\centering\includegraphics[trim=0 0 0 0, clip=true, height=\PowH]{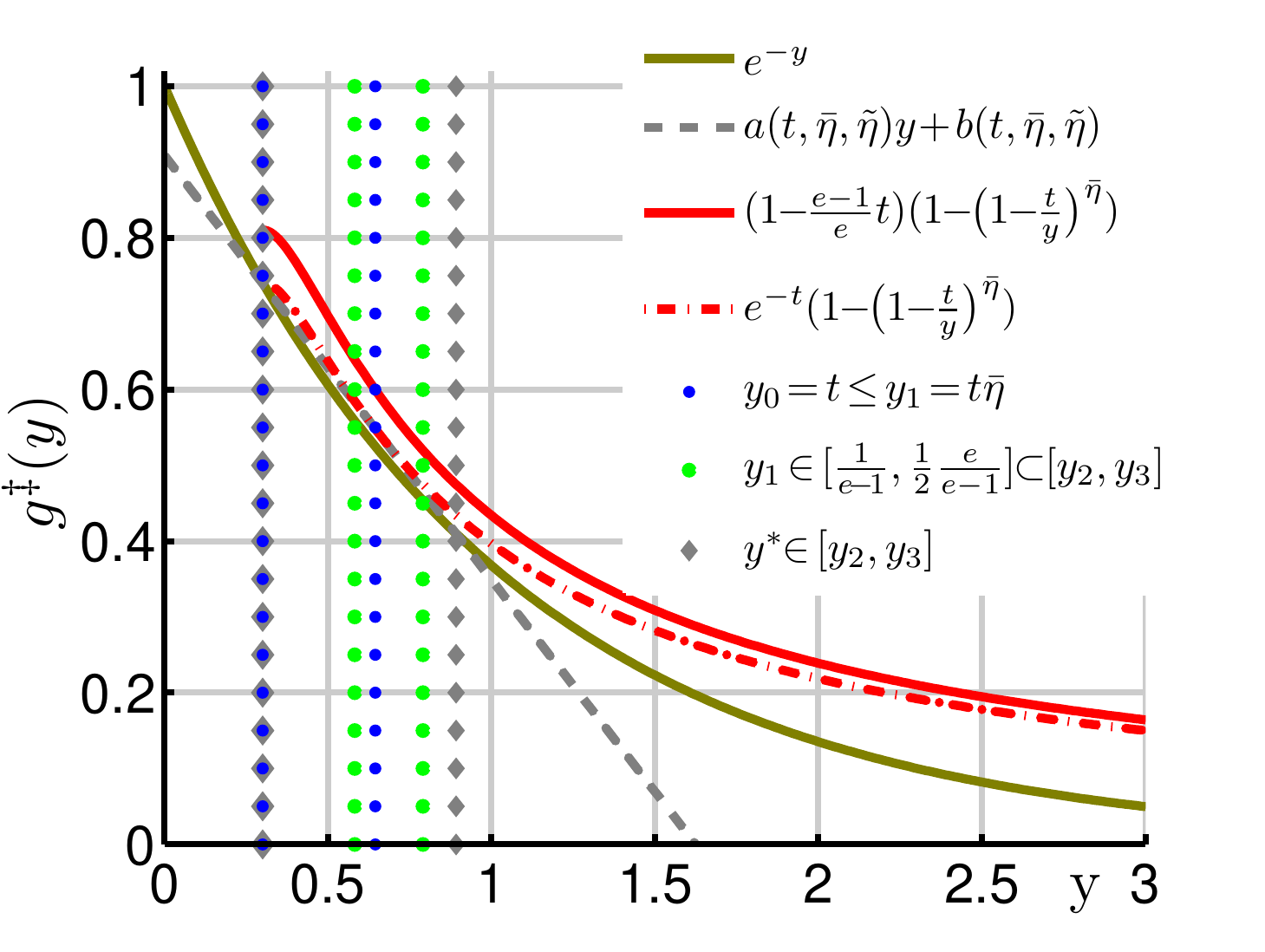}
\caption{$\!\!\!\!\!\!\!\!$}\label{fig:bounds5}
\end{subfigure}
\begin{subfigure}[t]{0.325\linewidth}
\centering\includegraphics[trim=0 0 0 0, clip=true, height=\PowH]{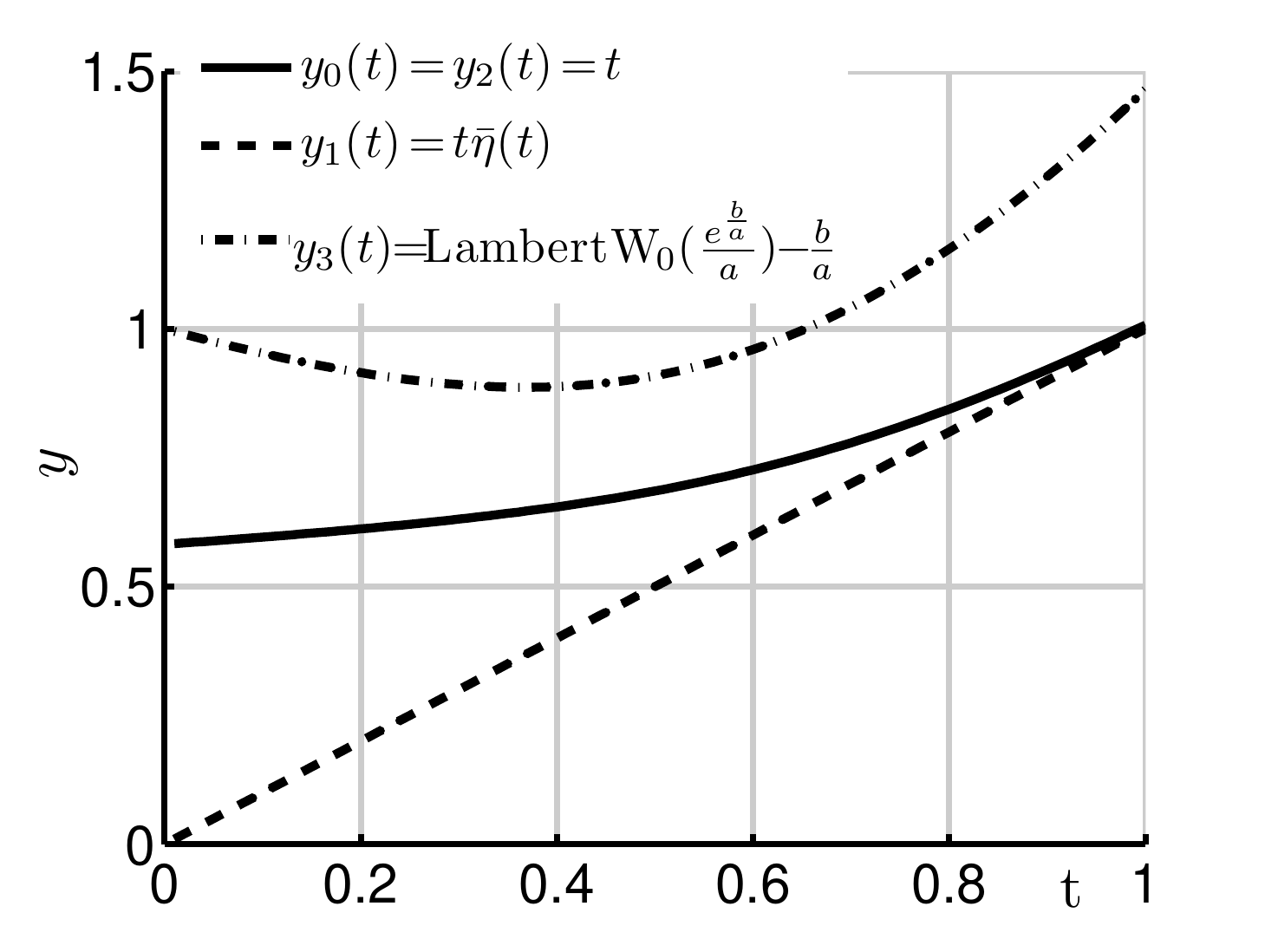}
\caption{$\!\!\!\!\!\!\!\!$}\label{fig:bounds6}
\end{subfigure}
%
\caption{In Fig. \ref{fig:bounds4}, we show pooling functions reparametrized according to $y\!=\!\frac{t}{\lambda}$ and scaled where appropriate to match the value of HDP at $\lambda\!=\!1$. Specifically, we have HDP given by $e^{-y}$, its upper bound $\left(\frac{1}{e}\!-\!1\right)y\!+\!1$ for $y\!\in\!(0,1)$, and a scaled by $1\!-\!\frac{e\!-\!1}{e}t$ MaxExp given as $\left(1\!-\!\frac{e\!-\!1}{e}t\right)\big(1\!-\!\big(1\!-\!\frac{t}{y}\big)^{\bar{\eta}'}\!\big)$ which we `lower down' onto $\left(\frac{1}{e}\!-\!1\right)y\!+\!1$ as indicated by the black arrow. As we tighten the bound, some initial MaxExp with $\bar{\eta}'\!(t)$ becomes MaxExp with $\bar{\eta}(t)$. Blue vertical lines are $y_0\!=\!t$ and $y_1\!=\!t\bar{\eta}(t)$ at which the scaled MaxExp touches $\left(\frac{1}{e}\!-\!1\right)y\!+\!1$. The green lines indicate the range of $y_1\!\in\!\big[\frac{1}{e-1},\frac{1}{2}\frac{e}{e-1}\big]$ given by the image of $y(1\!\leq\!\bar{\eta}\!\leq\!\infty)$. Fig. \ref{fig:bounds5} that the scaled MaxExp given as $\left(1\!-\!\frac{e\!-\!1}{e}t\right)\big(1\!-\!\big(1\!-\!\frac{t}{y}\big)^{\bar{\eta}'}\!\big)$ is lower-bounded by a tightly-scaled MaxExp $e^{-t}\big(1\!-\!\big(1\!-\!\frac{t}{y}\big)^{\bar{\eta}'}\!\big)$ which is lower-bounded by a linear function $a(y(t,\bar{\eta}(t),\widetilde{\eta}(t))\!+\!b(y(t,\bar{\eta}(t),\widetilde{\eta}(t))$ (the blue lines are where the latter two functions touch). Again, the green lines indicate the range of $y_1$ for $1\!\leq\!\bar{\eta}\!\leq\!\infty$ and the gray lines indicate $y_2\!=\!y_0\!=t$ and $y_3$ at which $ay\!+\!b$ touches $r^{-y}$. Finally, \ref{fig:bounds6} verifies that $y_2(t)\!\leq\!y_1(t)\!\leq\!y_3(t)$ for $t\!\in\![0,1]$ which is necessary to prove that the tightly-scaled MaxExp is an upper bound of HDP.}
\vspace{-0.4cm}
\label{fig:repar_new}
\end{figure*}

\revisedd{
\vspace{\widsup}
\subsection*{N. Deriving derivative of fast spectral MaxExp.}
\label{app:maxexp_fast_spec}
%

Let $\ell(\mPsi,\mW)$ be some classification loss (or any layer with param. $\mW$) where $\mPsi\!=\!\mygthreeehat{MaxExp}{\mM}\!\in\!\semipd{d}\!$ (or $\spd{}$) are our feature maps. Then, we obtain a versatile equation from Eq. \eqref{eq:binom_mat_der}:
\begin{align}
& \!\!\!\!\!\!\!\!\!\text{\fontsize{8}{9}\selectfont $\sum\limits_{k,l}\frac{\partial \ell(\mPsi,\mW)}{\partial  \Psi_{kl}}\frac{\partial \Psi_{kl}}{\partial \mM}\!=\!-\!\sum\limits_{n\!=\!0}^{{\eta}-1}\left(\mIdent\!-\!\mM\right)^n\!\frac{\partial \ell(\mPsi,\mW)}{\partial \mPsi}\left(\mIdent\!-\!\mM\right)^{{\eta}-1-n}\!$},\!\!\!
\label{eq:der_maxexp0}
\end{align}
which simplifies to:
\begin{align}
& \!\!\!\!\!\!\text{\fontsize{8}{9}\selectfont $
\sum\limits_{k,l}\frac{\partial \ell(\mPsi,\mW)}{\partial  \Psi_{kl}}\frac{\partial \Psi_{kl}}{\partial \mM}\!=\!-\!2\sym\!\left(\sum\limits_{n\!=\!0}^{\;\lfloor\frac{\eta}{2}\rfloor\!-\!1\!}\!\!\left(\mIdent\!-\!\mM\right)^n\!\frac{\partial \ell(\mPsi,\mW)}{\partial \mPsi}\left(\mIdent\!-\!\mM\right)^{{\eta}-1-n}\!\right)\!$}\!\!\!\nonumber\\
& \qquad\qquad\quad\;\;\text{\fontsize{8}{9}\selectfont$-$}\begin{cases}
\begin{array}{@{}cl}
\text{\fontsize{8}{9}\selectfont $\left(\mIdent\!-\!\mM\right)^{\lfloor\frac{\eta}{2}\rfloor}\!\frac{\partial \ell(\mPsi,\mW)}{\partial \mPsi}\left(\mIdent\!-\!\mM\right)^{\lfloor\frac{\eta}{2}\rfloor}$} & \!\!\!\!\!\text{\fontsize{8}{9}\selectfont if $\eta$  is odd}\\
0 & \!\!\!\!\!\text{\fontsize{8}{9}\selectfont otherwise.}
\end{array}
\end{cases}\!\!\!\!\!\!\!\!\!\!\!\!
\label{eq:der_maxexp}
\end{align}
\vspace{-0.1cm}

\noindent{For} the second half of indexes $0,\cdots,\eta\!-\!1$ (even $\eta$), Eq. \eqref{eq:der_maxexp} uses transposed summation terms corresponding to the first half of indexes instead of recomputing them as in Eq. \eqref{eq:der_maxexp0}. For odd $\eta$, only the term corresponding to index $\lfloor\frac{\eta}{2}\rfloor$ is not aggregated twice.

\vspace{\widsup}
\subsection*{O. Derivative of fast spectral Gamma for integers $\gamma\!\geq\!1$.}
\label{app:gamma_fast_spec}
Let $\ell(\mPsi,\mW)$ be some classification loss/layer with param. $\mW$ (as in Eq. \eqref{eq:der_maxexp}) and $\mPsi\!=\!\mygthreeehat{Gamma}{\mM}\!\in\!\semipd{d}\!$ (or $\spd{}$) be our feature maps. Then, the derivative of Gamma for integers $\gamma(t)\!\geq\!1$ is:
\begin{align}
& \!\!\!\!\!\!\!\!\!\!\!\text{\fontsize{8}{9}\selectfont $\sum\limits_{k,l}\frac{\partial \ell(\mPsi,\mW)}{\partial  \Psi_{kl}}\frac{\partial \Psi_{kl}}{\partial \mM}\!=\!\sum\limits_{n\!=\!0}^{{\gamma}-1}\mM^n\!\frac{\partial \ell(\mPsi,\mW)}{\partial \mPsi}\mM^{{\gamma}-1-n}\!=\!$}\label{eq:der_gamma}\\
& \!\!\!\!\!\!\!\!\!\!\!\text{\fontsize{8}{9}\selectfont $2\sym\!\left(\sum\limits_{n\!=\!0}^{\;\lfloor\frac{\gamma}{2}\rfloor\!-\!1\!}\!\!\mM^n\!\frac{\partial \ell(\mPsi,\mW)}{\partial \mPsi}\mM^{{\gamma}-1-n}\!\right)\!+\!$}
\begin{cases}
\begin{array}{@{}cl}
\text{\fontsize{8}{9}\selectfont $\mM^{\lfloor\frac{\gamma}{2}\rfloor}\!\frac{\partial \ell(\mPsi,\mW)}{\partial \mPsi}\mM^{\lfloor\frac{\gamma}{2}\rfloor}$} & \!\!\!\!\!\text{\fontsize{8}{9}\selectfont if $\gamma$  is odd}\\
0 & \!\!\!\!\!\text{\fontsize{8}{9}\selectfont otherwise.}
\end{array}
\end{cases}\!\!\!\!\!\!\!\!\!\!\!\!\nonumber
\end{align}
\vspace{-0.1cm}
%

\vspace{-0.3cm}
\vspace{\widsup}
\subsection*{P. Derivations of Fast Approximate HDP (FAHDP).}
\label{app:fahdp_der}
Let $0\!\leq\!\lambda\!\leq\!1$. Note that for $\lambda\!=\!1$ and $t\!\geq\!\log(10/9)\!\approx\!-0.105$, HDP given by $e^{-t/\lambda}$ yields $e^{-t}\!\leq\!0.9$ (it drops more for larger $t$) while MaxExp given by $1\!-\!(1\!-\!\lambda)^\eta$ and Gamma given by $\lambda^\gamma$ yield $1$ for $\lambda\!=\!1$ and $t\!\geq\!0$. For this reason, we reparametrize MaxExp and Gamma as $e^{-t}(1\!-\!(1\!-\!\lambda)^{\bar{\eta}(t)})$ and $e^{-t}\lambda^{\bar{\gamma}(t)}$, respectively, where $e^{-t}$ is the scaling factor ensuring that MaxExp/Gamma (and thus FAHDP) and HDP have the same magnitude at $\lambda\!=\!1$.

\vspace{0.05cm}
\noindent{\textbf{Time-reversed HDP.}} To obtain a good approximation of time-reversed HDP ($t\!<\!1$) by a scaled MaxExp, we take steps as those in Appendix \hyperref[{app:pr_bound}]{F}, that is we use the substitution $y\!=\!\frac{t}{\lambda}$, we tie together $t$ and $\bar{\eta}$ as $t\bar{\eta}\!=\!\alpha$. 
Subsequently, we seek to  `lowered down' MaxExp given now as $e^{-t}\!\big(1\!-\!\big(1\!-\!\frac{t}{y}\big)^{\bar{\eta}}\big)$ onto $\left(\frac{1}{e}\!-\!1\right)y\!+\!1$, an upper bound of reparametrized HDP given as $e^{-y}$. We obtain: 
\vspace{-0.1cm}
\begin{equation}
\!\!\!\!\begin{cases}
\left(\frac{1}{e}\!-\!1\right)y\!+\!1=e^{-t}\big(1\!-\!\big(1\!-\!\frac{t}{y}\big)^{\bar{\eta}}\big)\\
\frac{\partial \left(\frac{1}{e}\!-\!1\right)y\!+\!1}{\partial y}=-\frac{\partial e^{-t}\left(1\!-\!\frac{t}{y}\right)^{\bar{\eta}}}{\partial y}.\!
\label{eq:touch_maxexp_scaled1}
\end{cases}
\end{equation}
We notice that \eqref{eq:touch_maxexp_scaled1} may not have a closed form solution. 

\vspace{0.05cm}
\noindent{\textbf{Scaled MaxExp (Fig. \ref{fig:bounds4}).}} 
We notice that $1\!-\!\frac{e-1}{e}t$ is an upper bound of $e^t$ on interval $t\!\in\!(0,1)$, thus we use scaling $1\!-\!\frac{e-1}{e}t$ 
and we solve the following set of equations instead of Eq. \eqref{eq:touch_maxexp_scaled1}:
\vspace{-0.2cm}
\begin{equation}
\!\!\!\!\begin{cases}
\left(\frac{1}{e}\!-\!1\right)y\!+\!1=\big(1\!-\!\frac{e-1}{e}t\big)\big(1\!-\!\big(1\!-\!\frac{t}{y}\big)^{\bar{\eta}}\big)\\
\frac{\partial \left(\frac{1}{e}\!-\!1\right)y\!+\!1}{\partial y}=-\big(1\!-\!\frac{e-1}{e}t\big)\frac{\partial\left(1\!-\!\frac{t}{y}\right)^{\bar{\eta}}}{\partial y}.\!
\label{eq:touch_maxexp_scaled2}
\end{cases}
\end{equation}
By solving Eq. \eqref{eq:touch_maxexp_scaled2}, we get $y\!=\!\alpha\!=\!t\bar{\eta}\vee y\!=\!t\!=\!\frac{\alpha}{\bar{\eta}}$. We denote the first result as $y_1\!=\!\alpha\!=\!t\bar{\eta}$ and the later one as $y_0\!=\!t\!=\!\frac{\alpha}{\bar{\eta}}$. We notice that $y_0\!\leq\!y_1$ for $\bar{\eta}\!\geq\!1$ and that $y_0$ corresponds to the solution where $\lambda\!=\!\frac{t}{y_0}\!=\!\frac{t}{t}\!=\!1$ which is one of the two points for which the set of equations in \eqref{eq:touch_maxexp_scaled2} is fulfilled. The other point for which the above set of eq. is fulfilled is $y\!=\!\alpha$. Thus, we have:
%
\vspace{-0.1cm}
\begin{equation}
\alpha(\bar{\eta})\!=\!y(\bar{\eta})\!=\!t\bar{\eta}
\;\text{ and }\; t(\bar{\eta})\!=\!\frac{\frac{e}{e\!-\!1}\!\left(\frac{\bar{\eta}\!-\!1}{\bar{\eta}}\right)^{\bar{\eta}}}{\left(\frac{\bar{\eta}\!-\!1}{\bar{\eta}}\right)^{\bar{\eta}}\!\!\!\!+\!\bar{\eta}\!-\!1},
\label{eq:yalpha_scaled}
\end{equation}
where $\bar{\eta}\!>\!1$. For $\bar{\eta}\!=\!1$, $t(\bar{\eta})$ and $\alpha(\bar{\eta})$ are undefined but  $\lim\limits_{\bar{\eta}\!\rightarrow\!1}t(\bar{\eta})\!=\!\lim\limits_{\bar{\eta}\!\rightarrow\!1}\alpha(\bar{\eta})\!=\!\frac{e}{2(e\!-\!1)}$ from the L'Hospital's rule.

\begin{figure*}[!b]
\centering
\vspace{-0.3cm}
\hspace{-0.6cm}
\begin{subfigure}[t]{0.325\linewidth}
\centering\includegraphics[trim=0 0 0 0, clip=true, height=\PowH]{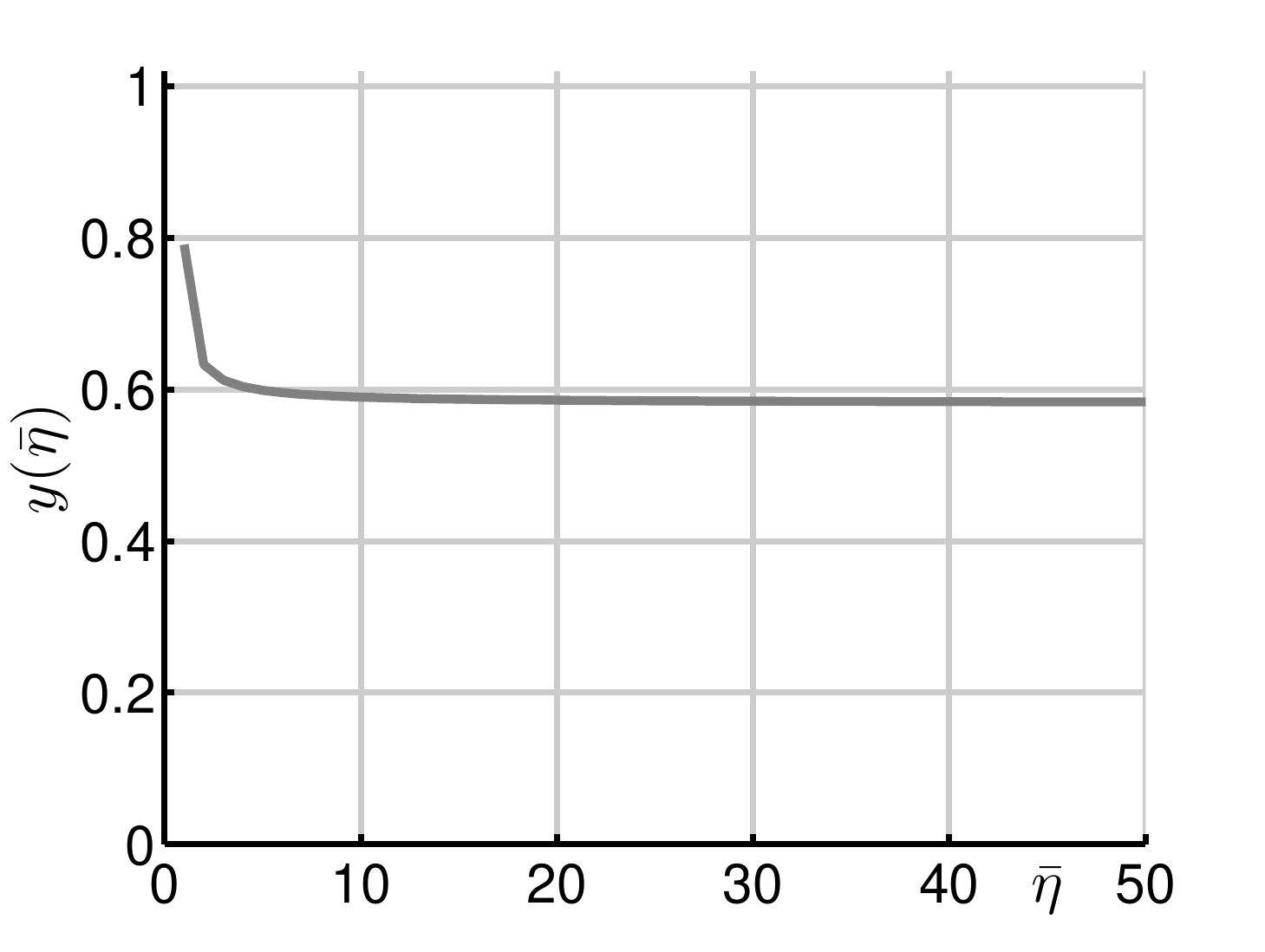}
\caption{$\!\!\!\!\!\!\!\!$}\label{fig:bounds7}
\end{subfigure}
\begin{subfigure}[t]{0.325\linewidth}
\centering\includegraphics[trim=0 0 0 0, clip=true, height=\PowH]{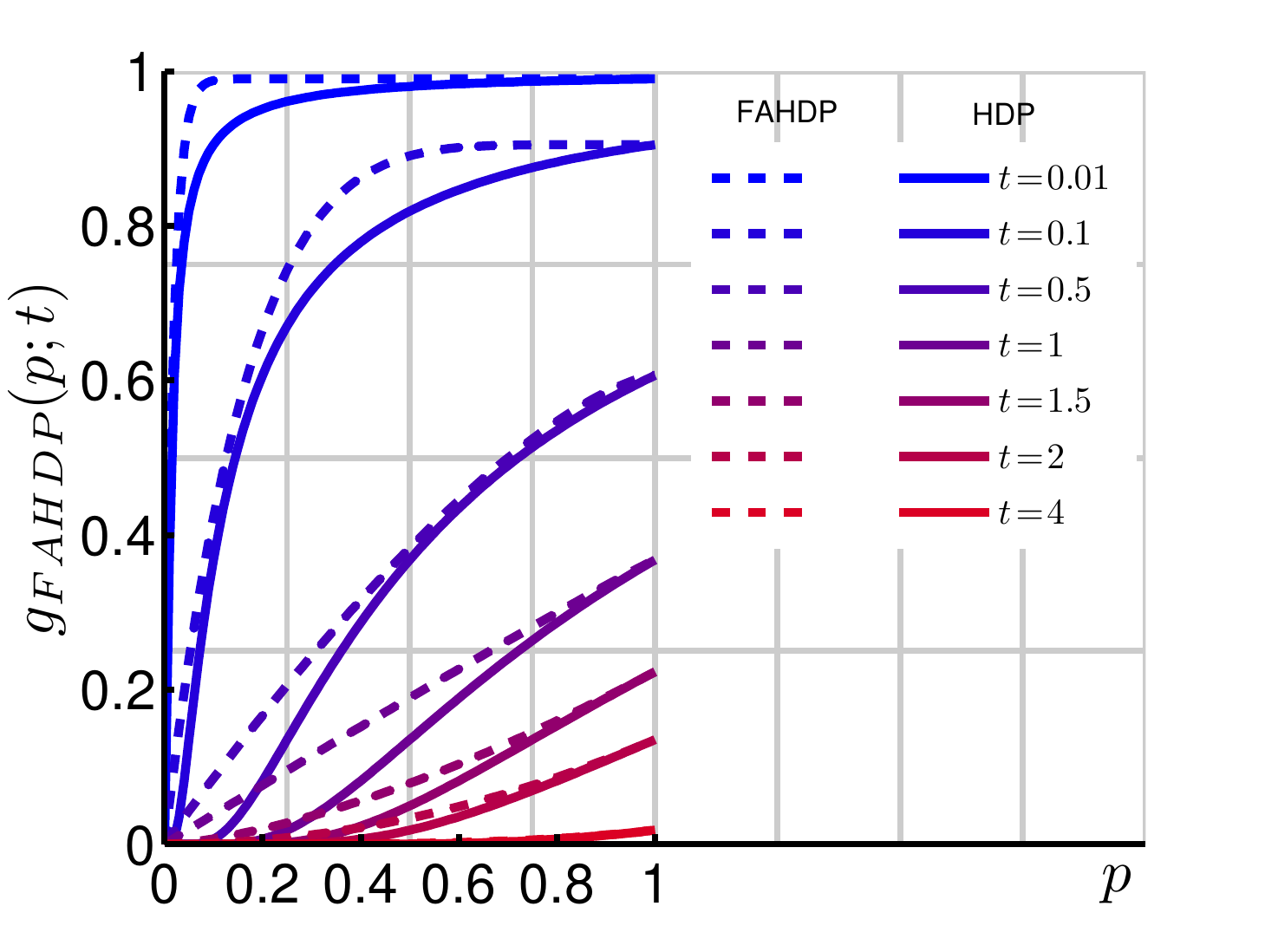}
\caption{$\!\!\!\!\!\!\!\!$}\label{fig:bounds8}
\end{subfigure}
%
\caption{In Fig. \ref{fig:bounds7}, we show that $y(\bar{\eta})$ decreases monotonically as $\bar{\eta}\!\rightarrow\!\infty$. Fig. \ref{fig:bounds8} illustrates that the FAHDP (combination of the tightly-scaled MaxExp and the scaled Gamma) is a tight upper bound of HDP.}
\vspace{-0.4cm}
\label{fig:repar_new2}
\end{figure*}

To obtain $\bar{\eta}(t)$, the inverse of $t(\bar{\eta})$, we firstly apply a substitution $\bar{\eta}\!=\!\hat{\eta}\!+\!1$ and evaluate $t(\hat{\eta}\!+\!1)$:
\vspace{-0.2cm}
\begin{equation}
t(\hat{\eta}\!+\!1)\!=\!\frac{e}{e\!-\!1}\frac{\hat{\eta}^{\hat{\eta}}}{\hat{\eta}^{\hat{\eta}}\!+\!(\hat{\eta}\!+\!1)^{\hat{\eta}\!+\!1}}.
\label{eq:t_hat}
\end{equation}
\vspace{-0.3cm}

\noindent{Next}, we employ the Stirling approximation $\hat{\eta}^{\hat{\eta}}\!\approx\!\hat{\eta}!\,e^{\hat{\eta}}/\sqrt{2\pi\hat{\eta}}\!=\!\widetilde{\eta}^{\widetilde{\eta}}$. We note that $\hat{\eta}^{\hat{\eta}}\!\leq\!\widetilde{\eta}^{\widetilde{\eta}}, \forall\hat{\eta},\widetilde{\eta}\!\geq\!1$ but the gap decreases monotonically as $\hat{\eta},\widetilde{\eta}\!\rightarrow\!\infty$. Subsequently, we obtain:
\vspace{-0.2cm}
\begin{equation}
\!\!\!\!\!\!\!\!t(\hat{\eta}\!+\!1)\!\approx\!\frac{\frac{e}{e\!-\!1}\sqrt{\hat{\eta}\!+\!1}}{ \sqrt{\hat{\eta}\!+\!1}\!+\!e(\hat{\eta}\!+\!1)\sqrt{\hat{\eta}}} \Rightarrow
t(\bar{\eta})\!\approx\!\frac{\frac{e}{e\!-\!1}\sqrt{\bar{\eta}}}{\sqrt{\bar{\eta}}\!+\!e\bar{\eta}\sqrt{\bar{\eta}\!-\!1}}\!=\!\widetilde{t}(\bar{\eta}).\!\!
\label{eq:t_hat_approx}
\end{equation}
From Eq. \eqref{eq:t_hat_approx}, it is straightforward to recover $\bar{\eta}(t)$ which is:
%
\begin{equation}
\bar{\eta}(t)\!\approx\!0.5\!+\!\sqrt{0.25\!+\!\left(1/(t(e\!-\!1))\!-\!1/e\right)^2}\!=\!\widetilde{\eta}(t).
\label{eq:eta_hat_approx}
\end{equation}

For Eq. \eqref{eq:touch_maxexp_scaled2},
%
%
%
we have to check if $y\!\in(0,1)$ for $1\!\leq\!\bar{\eta}\!\leq\!\infty$ in order for the bound to hold as $\left(\frac{1}{e}\!-\!1\right)y\!+\!1$ is an upper bound of $e^{-y}$ only for $y\!\in\!(0,1)$. To this end, we firstly notice that $y(\bar{\eta})$ is monotonically decreasing on $1\leq\bar{\eta}\leq\infty$ as \Cref{fig:bounds7} shows. 
Therefore, it suffices to check extremes of $\bar{\eta}$ for $1\!\leq\!\bar{\eta}\!\leq\!\infty$, that is $y(1)\!=\!\frac{1}{2}\frac{e}{e-1}$ and $\lim_{\bar{\eta}\rightarrow\infty}y(\bar{\eta})\!=\!\frac{1}{e-1}$ which verifies that $y(1\!\leq\!\bar{\eta}\!\leq\!\infty)\!\subset\!(0,1)$.
%
Thus, for $y\!\in(0,1)$ we have:
\begin{equation}
\text{\fontsize{8}{9}\selectfont $e^{-y}\!\leq\!\Big(\frac{1}{e}\!-\!1\Big)y\!+\!1\!\leq\!\Big(1\!-\!\frac{e-1}{e}t\Big)\Big(1\!-\!\big(1\!-\!\frac{t}{y}\big)^{\bar{\eta}}\Big)$}.
\end{equation}

\vspace{0.05cm}
\noindent{\textbf{Tightly-scaled MaxExp (Fig. \ref{fig:bounds5}).}} We tighten the bound by replacing $1\!-\!\frac{e-1}{e}t$ with $e^{-t}$ as $e^{-t}\!\leq\!1\!-\!\frac{e-1}{e}t$ for $t\!\in\![0,1]$. Thus, we seek $ay\!+\!b$ in:
\vspace{-0.2cm}
\begin{equation}
\text{\fontsize{8}{9}\selectfont $e^{-y}\!\leq\!ay\!+\!b\!\leq\!e^{-t}\Big(1\!-\!\big(1\!-\!\frac{t}{y}\big)^{\widetilde{\eta}}\Big)$}.
\label{eq:max_exp_final_set}
\end{equation}
Firstly, we notice that $\widetilde{\eta}(t)\!\geq\!\bar{\eta}(t)$ whose gap decreases rapidly and monotonically as $t\!\rightarrow\!0$. 
We can recover $\epsilon_{\widetilde{\eta}}(\bar{\eta})\!=\!\widetilde{\eta}\!-\!\bar{\eta}\!=\!\widetilde{\eta}(t(\bar{\eta}))\!-\!\bar{\eta}$ and $\epsilon_{\widetilde{t}}(t)\!=\!t\!-\!t(\widetilde{\eta}(t))$. This upper bound $\widetilde{\eta}(t)\!\geq\!\bar{\eta}(t)$ assures that $1\!-\!\big(1\!-\!\frac{t}{y}\big)^{\widetilde{\eta}}\!\geq\!1\!-\!\big(1\!-\!\frac{t}{y}\big)^{\bar{\eta}}$ which turns out to be a sufficient upper bound to fulfill \eqref{eq:max_exp_final_set}. 
To this end, we note that as {\fontsize{8}{9}\selectfont $\Big(\frac{1}{e}\!-\!1\Big)y\!+\!1\!=\!\Big(1\!-\!\frac{e-1}{e}t\Big)\Big(1\!-\!\big(1\!-\!\frac{t}{y}\big)^{\bar{\eta}}\Big)$} for $y_0$ and $y_1$  (defined under Eq. \eqref{eq:touch_maxexp_scaled2}), that is the nonlinear function touches the linear one only at $y_0$ and $y_1$, then { $ay\!+\!b\!=\!e^{-t}\big(1\!-\!\big(1\!-\!\frac{t}{y}\big)^{\bar{\eta}}\big)$} at $y_0$ and $y_1$ because $\big(\frac{1}{e}\!-\!1\big)y\!+\!1$ and $e^{-t}$  perform just some scaling of $1\!-\!\big(1\!-\!\frac{t}{y}\big)^{\bar{\eta}}$. Thus, we define $\bar{\eta}(t)\!=\!\widetilde{\eta}(2t\!-\!t(\widetilde{\eta}(t)))$ and we readily obtain:
\vspace{-0.2cm}
\begin{equation}
\!\!\!\!\!\text{\fontsize{8}{9}\selectfont $a(t,\bar{\eta},\widetilde{\eta})\!=\!\frac{e^{-t}}{t}\frac{\left(\frac{\bar{\eta}(t)\!-\!1}{\bar{\eta}(t)}\right)^{\widetilde{\eta}}}{1\!-\!\bar{\eta}(t)}\;$ and $\;b(t,\bar{\eta},\widetilde{\eta})\!=\!e^{-t}\bigg(1-\frac{\left(\frac{\bar{\eta}(t)\!-\!1}{\bar{\eta}(t)}\right)^{\widetilde{\eta}}}{1\!-\!\bar{\eta}(t)}\bigg)$}.
\label{eq:max_exp_final_set2}
\end{equation}

Eq. $a(t,\bar{\eta},\widetilde{\eta})y\!+\!b(t,\bar{\eta},\widetilde{\eta})$ ($a$ and $b$ are defined in \eqref{eq:max_exp_final_set2}) is a lower bound of $e^{-t}\big(1\!-\!\big(1\!-\!\frac{t}{y}\big)^{\widetilde{\eta}}\big)$ on interval $y\!\in\!(t,\infty)$ by design. Now, it suffices to check if $a(t,\bar{\eta}(t),\widetilde{\eta}(t))y\!+\!b(t,\bar{\eta}(t),\widetilde{\eta}(t))$ is also an upper bound of $e^{-y}$ on interval $y\!\in\!(y_0,y_1)\!\equiv\!(t,t\bar{\eta}(t))$. Thus, we substitute $a$ and $b$ from Eq. \eqref{eq:max_exp_final_set2} into following:
\vspace{-0.1cm}
\begin{equation}
e^{-y^*}\!\!\leq\!a(t,\bar{\eta}(t),\widetilde{\eta}(t))y^*\!\!+\!b(t,\bar{\eta}(t),\widetilde{\eta}(t)).
\label{eq:max_exp_lambert1}
\end{equation}
After several manipulations we arrive at the solution $y^*\!\!\in\![y_2,y_3]$:
\vspace{-0.2cm}
\begin{equation}
\!\!\!\!\!\!\!\text{
\fontsize{8}{9}\selectfont 
$y_2\!=\!\text{LambertW}_{\!-1}\Big(\frac{e^{\frac{b}{a}}}{a}\Big)\!-\!\frac{b}{a} \;\text{ and }\; y_3\!=\!\text{LambertW}_{0}\Big(\frac{e^{\frac{b}{a}}}{a}\Big)\!-\!\frac{b}{a}$},\!\!\!\!
\label{eq:max_exp_lambert2}
\end{equation}
where the $\text{LambertW}$ solves $W(z)e^{W(z)}\!=\!z$. We observe that $y_2\!=\!y_0$ and $y_2\!\leq\!y_1\!\leq\!y_3$ (Fig. \ref{fig:bounds6}), and thus we have:
\vspace{-0.1cm}
\begin{equation}
\left[t,t\bar{\eta}(t)\right]\!\subset\!\Big[t,\text{LambertW}_{0}\Big(\frac{e^{\frac{b}{a}}}{a}\Big)\!-\!\frac{b}{a}\Big],
\label{eq:tight_verify}
\end{equation}
where $a(t,\bar{\eta}(t),\widetilde{\eta}(t))$ and $b(t,\bar{\eta}(t),\widetilde{\eta}(t))$ depend explicitly on time $t$. 
Eq. \eqref{eq:tight_verify} verifies the tighter bound achieved by Eq. \eqref{eq:max_exp_final_set}.

Finally, bounds $\epsilon_3$ and $\epsilon_4$  at $y_1$ and $y_3$ may be evaluated by plugging them respectively into $e^{-t}\big(1\!-\!\big(1\!-\!\frac{t}{y}\big)^{\widetilde{\eta}}\big)\!-\!y^{-y}$:

\vspace{-0.3cm}
\begin{equation}
{\fontsize{8}{9}\selectfont\text{
$\!\!\!\!\!\!\!\!\!\!\!\!\!\!\!\!e^{-t}\!\!\!-\!e^{-t}\Big(\frac{\bar{\eta}(t)\!-\!1}{\bar{\eta}(t)}\Big)^{\widetilde{\eta}(t)}\!\!\!\!\!\!\!-\!e^{-t\bar{\eta}(t)}\!\!=\!\epsilon_3\!\leq\!\epsilon_4\!=\!e^{-t}\!\!\!-\!e^{-t}\Big(\frac{y_3(t)\!-\!t}{y_3(t)}\Big)^{\widetilde{\eta}(t)}\!\!\!\!-\!e^{-y_3(t)} \!\!\!,\!\!$}}
\label{eq:bounds_new}
\end{equation}
\vspace{-0.3cm}

\vspace{0.05cm}
\noindent{\textbf{Time-forward HDP.}} To obtain a good approximation of time-forward HDP ($t\!\geq\!1$) by Gamma, we reparametrize and scale Gamma given as $e^{-t}\lambda^{\bar{\gamma}}$. We solve the following set of equations:
\vspace{-0.1cm}
\begin{equation}
\label{eq:touch_gamma_scaled1}
\begin{cases}
e^{-t/\lambda}=e^{-t}\lambda^{\bar{\gamma}}\\
\frac{\partial e^{-t/\lambda}}{\partial \lambda}=\frac{\partial e^{-t}\lambda^{\bar{\gamma}}}{\partial \lambda}  \Rightarrow -\frac{t}{\lambda^2}e^{-t/\lambda}={\bar{\gamma}} e^{-t}\lambda^{\bar{\gamma}-1}\!\!,
\end{cases}
\end{equation}
which yields the following set of equations:
%
\begin{equation}
\begin{cases}
\bar{\gamma}e^{t/\lambda}\lambda^{\bar{\gamma}+1}=te^t\\
e^{t/\lambda}\lambda^{\bar{\gamma}}=e^t\!.
\label{eq:touch_gamma_scaled2}
\end{cases}
\end{equation}
After some manipulations, the above set of equations further reduces to:
\vspace{-0.2cm}
\begin{equation}
\bar{\gamma}\!-\!\bar{\gamma}\!\log(\gamma)\!=\!t\!-\!\gamma\log(t).
\label{eq:touch_gamma_scaled3}
\end{equation}
To solve this equation, it suffices to use an approximation of logarithm $\log(x)\!\approx\!rx^{1/r}\!-\!r$ for order $r\!=\!2$ which is also an upper bound of $\log(x)$, that is $\log(x)\!\leq\!rx^{1/r}\!-\!r$. Solving Eq. \eqref{eq:touch_gamma_scaled3}  yields $\bar{\gamma}(t)\!=\!t$.

\vspace{0.05cm}
\noindent{\textbf{Fast Approximate HDP (FAHDP).}} We combine the right-hand side of Eq. \eqref{eq:max_exp_final_set} given as $e^{-t}\big(1\!-\!\big(1\!-\!\lambda\big)^{\widetilde{\eta}}\big)$ with $\widetilde{\eta}$ in Eq. \eqref{eq:eta_hat_approx} for $t\!\in\!(0,1)$ and $e^{-t}\lambda^{\bar{\gamma}}$ defined just above Eq. \eqref{eq:touch_gamma_scaled1} with $\bar{\gamma}(t)\!=\!t$ for $t\!\geq\!0$. We assume that we operate on an SVD of an autocorrelation matrix $\mM$ that is normalized by its trace. Thus, we have $p_i$ which correspond to trace-normalized $\lambda_i$, and we define: 
\vspace{-0.1cm}
\begin{align}
& \;g_{\text{FAHDP}}{(p;t)}\!=\!e^{-t}\!\cdot\!
\begin{cases}
\begin{array}{@{}cl}
\text{\fontsize{8}{9}\selectfont $1\!-\!\left(1\!-\!p\right)^{\hbar_t(\widetilde{\eta}(t))}$} & \!\!\!\!\!\text{\fontsize{8}{9}\selectfont if $t\!<\!1$}\\
\text{\fontsize{8}{9}\selectfont $p^{\hbar_t(\bar{\gamma}(t))}$} & \!\!\!\!\!\text{\fontsize{8}{9}\selectfont if $t\!\geq\!1$,}
\end{array}
\end{cases}\!\!\!\!\!\!\!\!\!\!\!\!
\label{eq:fahdp2},
\end{align}
\vspace{-0.2cm}

\noindent{where} $\hbar_t(x)\!=\!x$ or $\hbar_t(\cdot)$ is defined in the same way as for Eq. \eqref{eq:fahdp}. 
Eq. \eqref{eq:fahdp2} operates on the trace-normalized eigenvalues $0\!\leq\!p\!\leq\!1$ and it can be combined with Algorithm \ref{code:algorithm}. 
Moreover, Eq. \eqref{eq:fahdp2} has its closed form given in Eq. \eqref{eq:fahdp} which enjoys a fast back-propagation. FAHDP and HDP are illustrated together in Fig. \ref{fig:bounds8}.

\vspace{\widsup}
\subsection*{Q. Additional results on GIN0 with SOP+PN.}
\label{app:graph-extra}

\begin{table}[h]
\hspace{-0.3cm}
\setlength{\tabcolsep}{0.10em}
\renewcommand{\arraystretch}{0.70}
\vspace{0.1cm}
\hspace{-0.105cm}
\begin{tabular}{l l|c|c|c|c|}
Method && \kern-0.1em IMDB-BIN\kern-0.1em & \kern-0.0em IMDB-MULTI\kern-0.0em & \kern0.05em PTC\kern-0.0em & \kern0.05em NCI1\kern0.1em\\
\hline
{\em GIN0}  & & 75.5$\!\pm\!$4.0 & 51.3$\!\pm\!$3.4 & 66.1$\!\pm\!$6.7 & 82.1$\!\pm\!$1.8 \\
\hline
{\em \fontsize{8}{9}\selectfont SOP+Newton-Schulz} & & 77.3$\!\pm\!$4.4  & 52.2$\!\pm\!$3.2 & 68.8$\!\pm\!$6.4 & 82.3$\!\pm\!$2.2 \\
{\em \fontsize{8}{9}\selectfont SOP+Spec. MaxExp(F)} && \textbf{77.8}$\!\pm\!$3.6 & \textbf{53.5}$\!\pm\!$2.1 & \textbf{72.2}$\!\pm\!$7.0 & \textbf{82.5}$\!\pm\!$2.5 \\
\end{tabular}
\caption{Classification with GIN0 and various pooling methods on IMDB-BIN, IMDB-MULTI, PTC and NCI1 (after obtaining hyperparameters, the train and validation splits were combined for retraining).}
\label{tab:graph2}
\vspace{-0.35cm}
\end{table}
}

\end{document}